\documentclass[english]{article}

\usepackage{geometry}
\usepackage[T1]{fontenc}
\usepackage[utf8]{inputenc}
\usepackage{bm}
\usepackage{microtype}
\usepackage{amsmath,mathtools}
\usepackage{amssymb,mathrsfs}
\usepackage[unicode=true,
 bookmarks=false,
 breaklinks=false,pdfborder={0 0 1},colorlinks=false]
 {hyperref}
\hypersetup{colorlinks,citecolor=blue,filecolor=blue,linkcolor=blue,urlcolor=blue}
\usepackage{tikz}
  \usetikzlibrary{positioning,fit,backgrounds,matrix,calc}
\usepackage{xcolor}
\usepackage{amsthm}
\usepackage{comment}
\usepackage{authblk}
\usepackage[sort,compress]{natbib}
\usepackage{booktabs}
\usepackage{mathabx}
\usepackage{graphicx}
\usepackage{array}

\usepackage{bbding}
\usepackage{colortbl}
\usepackage{makecell}
\usepackage{algorithm}
\usepackage{algorithmic}
\usepackage{float}
\usepackage{multirow}
\usepackage{dsfont}
\usepackage{tcolorbox}
\usepackage{color}
\definecolor{yxc}{RGB}{255,0,0}
\definecolor{ytw}{RGB}{255,69,0}
\definecolor{gen}{RGB}{0,0,200}
\definecolor{zhh}{RGB}{200,200,0}
\allowdisplaybreaks
\usepackage{enumitem}
\usepackage[nosort,capitalise]{cleveref}
\crefname{enumi}{}{}
\crefname{equation}{Eqn.}{Eqns.}
\crefname{thm}{Theorem}{Theorems}
\crefname{lem}{Lemma}{Lemmas}
\crefname{cor}{Corollary}{Corollaries}
\crefname{proo}{Proposition}{Propositions}
\crefname{deff}{Definition}{Definitions}
\crefname{rem}{Remark}{Remarks}
\crefname{assumption}{Assumption}{Assumptions}
\crefname{exa}{Example}{Examples}
\crefname{property}{Property}{Properties}
\Crefname{thm}{Theorem}{Theorems}
\Crefname{lem}{Lemma}{Lemmas}
\Crefname{cor}{Corollary}{Corollaries}
\Crefname{proo}{Proposition}{Propositions}
\Crefname{deff}{Definition}{Definitions}
\Crefname{rem}{Remark}{Remarks}
\Crefname{assumption}{Assumption}{Assumptions}
\Crefname{exa}{Example}{Examples}
\Crefname{property}{Property}{Properties}

\newtheorem{assumption}{Assumption}
\newtheorem{thm}{Theorem}

\newtheorem{proo}{Proposition}
\newtheorem{lem}{Lemma}
\newtheorem{cor}{Corollary}

\AtBeginEnvironment{proof}{\parindent=0pt\relax}
\newcommand{\x}{\mathbf{x}}

\newcommand{\polylog}{\mathrm{polylog}}
\newcommand{\y}{\mathbf{y}}

\newcommand{\Expect}{\mathbb{E}}

\title{Why Adam Can Beat SGD: Second-Moment Normalization Yields Sharper Tails}

\author{Ruinan Jin\thanks{Department of Electrical and Computer Engineering, The Ohio State University, USA; emails: \texttt{jrnjrnjrnjrnjrn@126.com}, \texttt{liang.889@osu.edu}
},  ~~~Yingbin Liang$^*$, ~~~Shaofeng Zou\thanks{School of Electrical, Computer and Energy Engineering, Arizona State University, USA; email: \texttt{zou@asu.edu}
}
}

\date{\today}

\begin{document}

\maketitle

\begin{abstract}%
Despite Adam’s faster empirical convergence than stochastic gradient descent (SGD) in practice, existing theory often yields convergence guarantees for Adam that are comparable to those for SGD, leaving Adam’s advantage largely unexplained theoretically. In this paper, we establish the first theoretical separation between the high-probability convergence behaviors of Adam and SGD: Adam achieves a $\delta^{-1/2}$ dependence on the confidence parameter $\delta$, whereas the corresponding high-probability guarantee for SGD necessarily incurs at least a $\delta^{-1}$ dependence. Our analysis develops a stopping-time and martingale-based framework to provably characterize this advantage and identifies Adam’s second-moment normalization as the key mechanism underlying its performance gain.
\end{abstract}

\setcounter{tocdepth}{2}
\tableofcontents

\section{Introduction}
Adaptive gradient methods such as Adam (Adaptive Moment Estimation) \citep{kingma2014adam} are widely observed to converge faster and behave more robustly than vanilla stochastic gradient descent (SGD) across a broad range of machine learning tasks \citep{zhou2020towards,pan2023toward,li2025choice}. Yet, from the theoretical perspective, this empirical advantage remains insufficiently explained. 

From a high-probability perspective, existing convergence results can be viewed through the lens of guarantees that hold with probability at least $1-\delta$. These studies mainly fall into two regimes, distinguished by the assumptions imposed on the stochastic gradients. (i) Under standard second-moment conditions (e.g., bounded variance), recent studies on Adam \citep{wang2024closing,jin2024comprehensive} obtained bounds on the gradient norm with a confidence dependence of $\mathcal{O}(\delta^{-2})$, and \citep{zou2019sufficient} obtained bounds on the gradient norm with a confidence dependence of $\mathcal{O}(\delta^{-3/2})$. All these bounds are even worse than the $\mathcal{O}(\delta^{-1})$ dependence achieved by SGD under comparable second-moment assumptions \citep{ghadimi2013stochastic,nguyen2018sgd}. (ii) Under stronger tail assumptions on stochastic gradients (e.g., sub-Gaussian or sub-exponential concentration), prior studies on Adam \citep{DBLP:conf/nips/LiRJ23,DBLP:journals/corr/abs-2402-03982,xiao2024adam,DBLP:journals/tmlr/DefossezBBU22} achieved only polylogarithmic confidence dependence of $\polylog(\delta^{-1})$. However, SGD typically enjoys comparable $\delta$-dependence under the same tail conditions \citep{madden2024high}. As a result, these guarantees do not establish a clear performance separation between Adam and SGD.

%Clearly, existing studies do not provide provable guarantees showing that Adam attains a better convergence rate than SGD. This gap between empirical observations and available theory motivates us to address the following foundational open question:
Given that existing studies do not establish provable performance advantage of Adam over SGD, we are motivated to address the following foundational open question:
\begin{quote}
{\em What key factor underlies Adam’s performance advantage over SGD, and can we develop an analytical framework that rigorously captures this advantage?}
\end{quote}

%We provide an affirmative answer to the above question in this paper.
%Such a problem formulation captures the traditional finite-sum type of objective functions over samples and expected loss function over the distribution of the samples

\begin{table}[t]
\centering
\caption{\em Convergence bound for Adam with a $1-\delta$ high-probability guarantee under second-moment stochastic gradient assumptions.}
\begin{tabular}{p{0.16\linewidth}|p{0.48\linewidth}|p{0.22\linewidth}}
%\hline
Paper & Assumption & Results \\
\hline
\citet{wang2024closing}$^{\dagger}$ 
&
\(\Expect\!\left[\|g_t-\nabla f(\x_t)\|^2 \mid \mathscr{F}_{t-1}\right]\le C\)
&
\(\mathcal{O}\!\left(\delta^{-2}T^{-1/2}\right)\)
\\
\hline
\citet{jin2024comprehensive}  
&
\(\Expect\!\left[\|g_t-\nabla f(\x_t)\|^2 \mid \mathscr{F}_{t-1}\right]
\le A\bigl(f(\x_t)-f^*\bigr)+ B\|\nabla f(\x_t)\|^2 + C\)
&
\(\mathcal{O}\!\left(\delta^{-2}T^{-1/2}\log T\right)\)
\\
\hline
\citet{zou2019sufficient}$^{\dagger}$ 
&
\(\Expect\!\left[\|g_t\|^2 \mid \mathscr{F}_{t-1}\right]\le C\)
&
\(\mathcal{O}\!\left(\delta^{-3/2}T^{-1/2}\right)\)
\\
\hline
\rowcolor{gray!20} \textcolor{blue}{This paper} & \(\textcolor{blue}{\Expect\!\left[\|g_t-\nabla f(\x_t)\|^2 \mid \mathscr{F}_{t-1}\right]\le C}\)
&
\(\textcolor{blue}{\mathcal{O}\!\left(\delta^{-1/2}T^{-1/2}\right)}\)
\end{tabular}

\parbox{\linewidth}{\textit{\bf Comparison:} Our result achieves the best dependence on $\delta$. This is the first result that established the advantage of Adam over SGD, which attains no better bound than $\mathcal{O}\!\left(\frac{1}{\delta}\frac{1}{\sqrt{T}}\right)$ as shown in Theorem \ref{prop:sgd_lower_informal}.

\(\dagger\) In~\citet{wang2024closing} and \citet{zou2019sufficient}, the results were originally derived 
%for the \emph{sub-second-moment bound} of the gradient norm, i.e., 
as the bounds on $\mathbb{E}\left[\|\nabla f(\x_t)\|^{2-\epsilon}\right]$ where $\epsilon>0.$ 
To enable a comparison at the same level of high-probability guarantees, we convert those expectation bounds to high-probability bounds via Markov's inequality.
}
\label{tab:adam_second_moment}
\end{table}

\subsection{Main Contributions} 

We study the stochastic optimization problem
$
\min_{\x\in\mathbb{R}^d} f(\x),
$
where the algorithm has access only to stochastic gradient estimates and must operate under sampling noise.
We run Adam for $T$ iterations. 

(1) {\bf Sharper Bound for Adam.} Under the common classical conditions ($L$-smoothness and bounded variance), we show that with probability at least $1-\delta$, Adam satisfies
\begin{equation*}
\frac{1}{T}\sum_{t=1}^{T}\|\nabla f(\x_t)\|^2
=\widetilde{O}\!\left(\frac{1}{\sqrt{\delta\,T}}\right)
\qquad\text{(Adam)}.
\end{equation*}
The above bound improves the existing bound of $\mathcal{O}(\delta^{-2})$ in \citep{wang2024closing,jin2024comprehensive} and bound of $\delta^{-3/2}$ in \citep{zou2019sufficient} under the same second-moment-type assumptions for Adam.

(2) \textbf{Provable Advantage of Adam.} To calibrate the advantage of Adam, we also present a
high-probability lower bound for SGD under the same assumptions of $L$-smoothness and bounded variance. Namely, for any step size $\gamma>0$ and any horizon $T\ge 1$, there exists a hard example such that, with probability at least $\delta$,
the SGD iterates $\{x_t\}_{t=1}^T$ satisfy
\[
\frac{1}{T}\sum_{t=1}^{T}\|\nabla f(x_t)\|^2 \;\ge\; \widetilde{\Omega}\!\left(\frac{1}{\delta\sqrt{T}}\right). \qquad \text{(SGD)}
\]
%Consequently, under the bounded-variance assumption, any high-probability guarantee for SGD on this stationarity measure must necessarily exhibit (up to logarithmic factors) at least a $\delta^{-1}$ dependence, i.e., it cannot improve upon $\widetilde{\Omega}\!\left(\frac{1}{\delta\sqrt{T}}\right)$ in the worst case.

%reflecting that second-moment normalization suppresses the accumulation of trajectory noise under mere bounded variance condition, whereas SGD without such normalization cannot match this $\delta$-dependence without stronger tail assumptions. Empirically, when $\delta$ is not fixed and we inspect the distribution over multiple independent runs, this predicts a consistent stochastic dominance: Adam’s performance curves concentrate more on the favorable side than SGD’s (e.g., higher empirical CDF / lower high-quantile curve over a broad range of confidence levels).

Comparison between Adam’s upper bound and SGD's lower bound shows that Adam improves the confidence dependence by a factor of $\delta^{-1/2}$ (up to $\polylog(\delta^{-1})$ terms).
% From a distributional perspective, this comparison implies that across multiple independent runs, Adam’s performance, measured by the average gradient norm, concentrates more tightly around smaller values than SGD’s, indicating distributionally faster convergence. 
To the best of our knowledge, this is the first work that provably establishes a performance advantage of Adam over SGD, thereby providing a theoretical explanation for Adam’s empirical acceleration.

(3) {\bf Novel Analysis: Second-Moment Normalization Yields Sharper Tails.} Our sharper analysis identifies second-moment normalization as the key mechanism behind Adam’s high-probability advantage over SGD. Adam’s coordinate-wise normalization turns cumulative stochastic-gradient energy into a self-normalized logarithmic quantity, yielding lighter tails and sharper confidence dependence. In contrast, SGD accumulates raw stochastic-gradient energy through a fixed step size, inheriting heavier tails under the same second-moment assumption.

Our key difference from standard Adam analyses is that we exploit the logarithmic control of the adaptive stochastic-gradient energy induced by Adam’s second-moment normalization. This enables all-order moment bounds that control the two main error terms in the descent inequality: the stochastic-gradient martingale difference and the second-order descent residual. These bounds lead to a stretched-exponential tail bound for the adaptive gradient energy and ultimately yield a sharper high-probability convergence guarantee for Adam. In contrast, standard analyses typically average out the martingale terms and rely on first-moment control, thereby not capturing the lighter-tail behavior created by Adam’s normalization.

\subsection{Related Work}
\textbf{Convergence Guarantee of Adam.} The convergence guarantees have been extensively studied recently for Adam under various assumptions on stochastic gradients. Under standard second-moment type conditions (e.g., bounded variance), recent works \citep{wang2024closing,jin2024comprehensive} obtained bounds on the gradient norm with a confidence dependence of $\mathcal{O}(\delta^{-2})$, and \citep{zou2019sufficient} obtained bounds on the gradient norm with a confidence dependence of $\mathcal{O}(\delta^{-3/2})$.
%All these bounds are even worse than the $\mathcal{O}(\delta^{-1})$ dependence achieved by SGD under comparable second-moment assumptions \citep{ghadimi2013stochastic,nguyen2018sgd}. 
Clearly, these second-moment-based high-probability bounds attain at best a $\delta^{-(1+\epsilon)}\ (\epsilon>0)$-type confidence dependence, which can be even \emph{worse} than the bound of $\mathcal{O}(\delta^{-1})$ achieved by SGD under comparable second moment conditions \citep{ghadimi2013stochastic,nguyen2018sgd}.The work \citep{guo2021novel} also established convergence guarantees for Adam with $\mathcal O(\delta^{-1})$ dependence under second-moment assumptions; however, these results additionally rely on auxiliary conditions, such as $\|v_t\|_{\infty}\sim \mathcal{O}(1)$ (where $v_t$ is defined in Algorithm \ref{alg:adam}), which effectively impose stronger requirements, including the existence of moments of an arbitrarily high order. Further, under stronger tail assumptions on stochastic gradients (e.g., sub-Gaussian or sub-exponential concentration), prior studies \citep{DBLP:conf/nips/LiRJ23,DBLP:journals/corr/abs-2402-03982,xiao2024adam,DBLP:journals/tmlr/DefossezBBU22} showed that Adam achieves polylogarithmic confidence dependence of $\polylog(\delta^{-1})$. However, SGD also enjoys comparable $\delta$-dependence under the same tail conditions \citep{madden2024high}. As a result, these guarantees do not establish a clear performance separation between Adam and SGD.

\textbf{Separation Between Adam-type Methods and SGD.} Several works aimed to characterize advantages of Adam-type adaptive methods over SGD. One line of results showed that adaptive methods can converge under generalized smoothness conditions
(e.g., $L_0$--$L_1$ smoothness), whereas vanilla SGD may fail or even diverge \citep{wang2024provable}.
In contrast, this paper studies a regime where both methods converge and prove a sharp separation in their high-probability convergence rates,
mirroring the common observation that Adam can be faster without relying on divergence examples.
%It is also worth noting that subsequent work \citep{li2023convex} showed that SGD can, in fact, converge under generalized smoothness as well. Under the second moment (bounded variance) condition, their high-probability guarantees still exhibit at best $\mathcal{O}(\delta^{-1})$ dependence on the confidence parameter $\delta$, which is matched by our $\Omega(\delta^{-1})$ lower bound for SGD under shared classical conditions ($L$-smoothness and bounded variance).

A complementary viewpoint concerned diagonal preconditioning and dimension dependence. \citet{jiang2025provable} analyzed AdaGrad and obtained improved $d$-dependence under coordinate-wise smoothness, and studied the first moment of the gradient norm. In contrast, we analyze the original Adam algorithm under standard smoothness assumptions, focusing on the canonical criterion $\frac{1}{T}\sum_{t=1}^T \|\nabla f(\x_t)\|^2$, and establishing the first high-probability separation between Adam and SGD.

%\textbf{Convergence for Adam Variants.} There has also been extensive work on Adam-type algorithms, such as Adam-norm \citep{wang2024convergence}, AdamW \citep{zhou2024towards,li2025d}, AdaGrad-norm \citep{wang2023convergence,faw2023beyond}, Adam+\citep{liu2020}, and several other variants of Adam (e.g.,~\citep{guo2021novel,huang2021super,zhou2024on}). Related studies have further investigated their generalization properties and comparisons with gradient descent (e.g.,~\cite{wilson2017marginal,zhang2020why,zhou2020towards,zou2023understanding}), as well as their connections to SignGD and related methods (e.g.,~\cite{balles2018dissecting,bernstein2019signsgd,kunstner2023noise,xie2024implicit}). 

%While most deep learning algorithms are rooted from stochastic gradient descent (SGD) type of first-order method, practical training often uses advanced and adaptive versions of SGD such as Adam \cite{kingma2014adam} or AdamW \cite{loshchilov2018decoupled} to accelerate the performance. 

\textbf{Lower Bounds for SGD and Stochastic First-Order Methods.} High-probability lower bounds were developed for convex and strongly convex nonsmooth objectives under almost-surely bounded stochastic subgradients by \citet{harvey2019tight,harvey2019simple} with \(\mathcal O(\log(\delta^{-1}))\) confidence dependence that matches the upper bound. In contrast, we develop lower bounds for SGD under a different assumption of bounded variance. Another related line of work by \citet{arjevani2023lower,drori2020complexity,jiang2025provable} studied lower bounds for SGD in terms of the optimization horizon \(T\) or the target stationarity accuracy \(\epsilon\). But these works did not provide high-probability lower bounds with the dependence on the confidence parameter $\delta$, which is the focus of our work.

More related work on {\bf convergence for Adam variants} and {\bf high-probability upper bound for SGD} is provided in Appendix \ref{sec:extendedrelatedwork}.

\section{Preliminaries}
Consider the optimization problem
\begin{align}\label{eq:objective}
    \min_{\x\in\mathbb{R}^d} f(\x),
\end{align}
where $f:\mathbb{R}^d\to\mathbb{R}$ is differentiable. We assume that the function $f$ admits a finite lower bound and is $L$-smooth, as given in Assumptions \ref{ass:nonneg} and \ref{ass:smooth}, respectively. 

To solve the problem, we assume that only gradient estimate of $\nabla f(\x)$ from a {\em stochastic} oracle is available. We study the convergence of first-order iterative stochastic methods (e.g., SGD and Adam) that generate a sequence
of iterates $\{\x_t\}_{t\ge1}\subset\mathbb R^d$.
At each iteration $t\ge1$, the method obtains a stochastic gradient estimate of $\nabla f(\x_t)$
from a stochastic oracle driven by fresh randomness, and then generate an update $\x_{t+1}$ accordingly.

Throughout the paper, we work on a common probability space $(\Omega,\mathscr F,\mathbb P)$.
Let $(\mathcal Z,\mathscr Z)$ be a measurable space, and let
$\{z_t\}_{t\ge1}$ be an \emph{independent} sequence of $\mathcal Z$-valued random variables,
i.e., each $z_t:(\Omega,\mathscr F)\to(\mathcal Z,\mathscr Z)$ is measurable.
Define the canonical filtration $\mathscr F_t:=\sigma(z_1,\ldots,z_t)$.
The iterates $\{\x_t\}_{t\ge1}$ are assumed to be predictable, namely
$\x_t$ is $\mathscr F_{t-1}$-measurable for every $t\ge1$. The expectation is denoted by $\mathbb E[\cdot]$.

At iteration $t\ge1$, after observing $z_t$, the algorithm queries a stochastic first-order oracle
\(
g_t := g(\x_t;z_t):\mathbb R^d\times\mathcal Z\to\mathbb R^d,
\)
and uses $g_t$ as an estimator of the true gradient $\nabla f(\x_t).$ We assume that such a gradient estimator satisfies the bounded variance condition given in Assumption \ref{ass:abc}. We often refer to $g_t$ as the {\em stochastic gradient}.

\subsection{Adaptive Moment Estimation (Adam)}
Adam \citep{DBLP:journals/corr/KingmaB14,DBLP:conf/iclr/ReddiKK18,wang2024closing,jin2024comprehensive}
is a widely used adaptive stochastic first-order method.
As described in Algorithm~\ref{alg:adam}, it obtains a stochastic gradient at each step (Line 3), uses $m_t$ and $v_t$ to maintain exponentially weighted moving averages for tracking the first and second moments of the stochastic gradients (Lines 4 and 5), respectively, updates an adaptive step size $\gamma_t$ normalized by the second moment (Line 6), and then leverages the first moment and adaptive stepsize to form a coordinatewise update of the variable (Line 7). 

\begin{algorithm}
\caption{Adam}\label{alg:adam}
\begin{algorithmic}[1]
\STATE \textbf{Input:} Stochastic oracle $\mathcal Z;$ stepsize $\gamma>0$; initial point $\x_1\in\mathbb{R}^d$;
  $m_0=\mathbf{0}$; $v_0=v\,\mathbf{1}$ with $v>0$;
  $\beta_1\in[0,1)$; $\beta_{2}\in[0,1)$; $\epsilon>0$; number of iterations $T$.
\FOR{$t=1$ \TO $T$}
  \STATE Sample $z_t$ and compute $g_t \gets g(\x_t; z_t)$.
  \STATE Update the first-moment estimate:
  $m_t \gets \beta_1 m_{t-1} + (1-\beta_1) g_t$.
  \STATE Update the second-moment estimate:
  $v_t \gets \beta_{2} v_{t-1} + (1-\beta_{2}) (g_t \odot g_t)$.
  \STATE Form the coordinatewise preconditioning vector:
  $\gamma_t \gets \gamma \cdot (\sqrt{v_t} + \epsilon)^{-1}$.
  \hfill (element-wise)
  \STATE Update the iterate:
  $\x_{t+1} \gets \x_t - \gamma_t \odot m_t$.
\ENDFOR
\STATE \textbf{Output:} final iterate $\x_{T+1}$.
\end{algorithmic}
\end{algorithm}
\noindent All arithmetic operations in Algorithm~\ref{alg:adam} are executed componentwise, the symbol \(\odot\) denotes the Hadamard product, and
\((\sqrt{v_t}+\epsilon)^{-1}\) is taken elementwisely.
For notational convenience, we also define the initial coordinatewise preconditioning vector by
\(\gamma_0:=\gamma(\sqrt v+\epsilon)^{-1}\mathbf 1\), where \(v>0\) is the scalar initialization in \(v_0=v\mathbf 1\).
For a vector \(\alpha\in\mathbb{R}^d\), we write \(\alpha_i\) for its \(i\)-th coordinate; for a time-indexed vector \(\alpha_t\in\mathbb{R}^d\), we write \(\alpha_{t,i}\) for its \(i\)-th coordinate.

\subsection{Technical Assumptions}
We adopt the common conditions on the objective $f$ and the stochastic oracle in our analysis. 
Assumptions~\ref{ass:nonneg}--\ref{ass:smooth} are standard regularity conditions ensuring a well-behaved descent geometry, 
while Assumption~\ref{ass:abc} specifies a second moment (i.e., bounded variance) condition.

\begin{assumption}[Finite lower bound]
\label{ass:nonneg}
The objective satisfies $$f(\x)> f^*>-\infty,\quad\forall  \x\in\mathbb{R}^d.$$
\end{assumption}
This condition is necessary for the minimization problem in \cref{eq:objective} to be well-posed: without a finite lower bound, the objective can decrease without limit, and therefore there is no meaningful notion of (approximate) optimality or convergence.

Next, we impose the smoothness condition, which allows us to relate the one-step progress of the iterates to the gradient norm via the standard descent lemma.
\begin{assumption}[$L$-smoothness]
\label{ass:smooth}
There exists some constant $L>0$ such that
\[
\|\nabla f(\x)-\nabla f(\y)\|\le L\|\x-\y\|,\quad \forall x,y\in\mathbb{R}^d.
\]
\end{assumption}
This $L$-smoothness condition is standard in stochastic optimization \citep{nesterov2013introductory,doi:10.1137/120880811,bottou2018optimization}: it is commonly adopted in the analysis of both SGD and adaptive methods and holds in a broad range of applications.

\begin{assumption}[Bounded variance]\label{ass:abc}
For each $t\ge 1$, the stochastic gradient $g_t$ is conditionally unbiased, i.e.,
\(
\mathbb{E}\!\left[g_t \,\middle|\, \mathscr{F}_{t-1}\right]=\nabla f(\x_t),
\)
and there exists a constant $C\ge 0$ such that
\[
\mathbb{E}\!\left[\left\|g_t-\nabla f(\x_t)\right\|^2 \,\middle|\, \mathscr{F}_{t-1}\right]\le C .
\]
\end{assumption}
Assumption~\ref{ass:abc} is a classical noise model in stochastic optimization and stochastic approximation; see, e.g.,
\citet{doi:10.1137/070704277,doi:10.1137/120880811,bottou2018optimization}. 
We adopt Assumption~\ref{ass:abc} as a representative ``second moment only'' condition: unlike assumptions that directly bound the (conditional) second moment of $g_t,$ which typically impose implicit boundedness requirements on the objective/iterates through $\mathbb{E}[\|g_t\|^2|\mathscr{F}_{t-1}],$ it only controls the noise variance around $\nabla f(\x_t)$. Our analysis can be extended to more general affine-variance models (e.g., ABC-type conditions \citep{jin2024comprehensive}) with additional technical work. This choice keeps the focus on the core mechanism behind our Adam--SGD separation.

{\bf Asymptotic notation.}
Let $a(T)$ and $b(T)$ be nonnegative functions.
We write $a=\mathcal{O}(b)$ if there exists a constant $c>0$ such that $a(T)\le c\,b(T)$ for all sufficiently large $T$, and we use $a=\widetilde{\mathcal{O}}(b)$ to hide logarithmic terms.
We use $a=\Omega(b)$ if there exists a constant $c>0$ such that $a(T)\ge c\,b(T)$ for all sufficiently large $T$, and we use $a=\widetilde{\Omega}(b)$ to hide logarithmic terms.
% We write $a=o(b)$ if $\lim_{T\to\infty} a(T)/b(T)=0$, and $a=\omega(b)$ if $\lim_{T\to\infty} a(T)/b(T)=\infty$.

\section{Main Results}
%In this section, we provide our main results and their implications. 
In this section, we first highlight how the second-moment normalization mechanism benefits Adam, and then present our characterization of the convergence separation between Adam and SGD.
\subsection{How Second-Moment Normalization Benefits Adam}
\label{subsec:self_normalization}

%\noindent\textit{Expository reduction (not a restriction).}
For illustration, we explain the role of second-moment normalization through an RMSProp-style update \citep{hinton2012lecture6e}, where the normalization effect is more explicit.\footnote{This is only a conceptual simplification used to isolate the effect of second-moment normalization; our main results and analysis are for the original Adam algorithm in Algorithm~\ref{alg:adam}.}
Namely, consider
%\[\x_{t+1}=\x_t-\frac{\gamma}{\sqrt{v_t}+\epsilon}\odot g_t,\qquad
%v_t=\beta_2 v_{t-1}+(1-\beta_2)(g_t\odot g_t).\]
\begin{align}\label{eq:qv_rmsprop}
\x_{t+1}=\x_t-\gamma_t\odot g_t,\qquad
\gamma_t:=\gamma(\sqrt{v_t}+\epsilon)^{-1},
\qquad
v_t=\beta_2v_{t-1}+(1-\beta_2)(g_t\odot g_t),
\end{align}
where all operations are componentwise.

%Stochastic-gradient iterates typically admit an (implicit) semimartingale structure, and a key quantity that governs their trajectory behavior is the \emph{quadratic variation} of the iterate increments,
The \emph{adaptive stochastic gradient energy} $\sum_{t=1}^{T} \sum_{i=1}^{d} \gamma_{t,i}^2g_{t,i}^2$ plays a central role in the convergence analysis because it is closely tied to the \emph{quadratic variation} of the iterate increments:
\begin{align}\label{eq:qv_def}
[\x]_T
:=
\sum_{t=1}^{T}
\|\x_{t+1}-\x_t\|^2
=
\sum_{t=1}^{T}
\|\gamma_t\odot g_t\|^2
=
\sum_{t=1}^{T}
\sum_{i=1}^{d}
\gamma_{t,i}^2g_{t,i}^2 .
%\label{eq:rmsprop_qv_adaptive_stoch_energy}
\end{align}
\begin{comment}
\begin{equation}\label{eq:qv_def}
[\x]_T \;\coloneqq\; \sum_{t=1}^{T-1}\|\x_{t+1}-\x_t\|^2 .
\end{equation}
For the RMSProp-style update above,
\begin{equation}\label{eq:qv_rmsprop}
[\x]_T
\;=\;
\sum_{t=1}^{T-1}\left\|\frac{\gamma}{\sqrt{v_t}+\epsilon}\odot g_t\right\|^2
\;\approx\;
\gamma^2\sum_{t=1}^{T-1}\sum_{i=1}^d \frac{g_{t,i}^2}{v_{t,i}} ,
\end{equation}
where the approximation hides only the smoothing by $\epsilon$.
\end{comment}
More directly, it can be shown to serve as the leading term controlling the \emph{adaptive gradient energy} $\sum_{t=1}^{T}
\sum_{i=1}^{d}\gamma_{t,i}(\nabla f(\x_t))_i^2$, which directly governs the convergence of Adam.
\begin{comment}
\begin{align}
\overbrace{\sum_{t=1}^{T}
\sum_{i=1}^{d}
\gamma_{t,i}(\nabla f(\x_t))_i^2}^{\text{adaptive gradient energy}} &\leq \mathcal O(1)+\overbrace{\sum_{t=1}^{T}
\sum_{i=1}^{d}
\gamma_{t,i}^2g_{t,i}^2}^{\text{adaptive stochastic gradient energy}}\notag\\&\quad+\underbrace{\left|\sum_{t=1}^{T}\sum_{i=1}^{d}\left(\Expect\left[\gamma_{t,i}(\nabla f(\x_t))_ig_{t,i}\big|\mathscr{F}_{t-1}\right]-\gamma_{t,i}(\nabla f(\x_t))_ig_{t,i}\right)\right|}_{\text{Martingale term}}.
\end{align}
The martingale term in the above bound can be controlled by the same second-order quantity through martingale inequalities. 
\end{comment}
Therefore, we focus below on the adaptive stochastic gradient energy $\sum_{t=1}^{T} \sum_{i=1}^{d} \gamma_{t,i}^2g_{t,i}^2$. We show that the second-moment normalization embedded in $\gamma_t$ yields a trajectory-wise \emph{logarithmic} bound on this quantity, which is crucial for deriving a sharper convergence rate. To see this, recall $\gamma_t=\gamma(\sqrt{v_t}+\epsilon)^{-1}$, where $v_{t,i}
=\beta_2^{t}v+(1-\beta_2)\sum_{k=1}^{t}\beta_2^{t-k}g_{k,i}^2$ for $t\ge1.$
When \(\beta_2\) is close to \(1\), for instance
\(\beta_2=1-\Theta(1/T)\), and
%the denominator \(v_{t,i}\) behaves as a smoothed running average of the past coordinatewise squared stochastic gradients. With
\(\gamma=\eta/\sqrt T\), we obtain the characteristic self-normalized form
\begin{align}\label{eq:qv_log_heuristic}
\sum_{t=1}^{T}
\sum_{i=1}^{d}
\gamma_{t,i}^{2}g_{t,i}^{2}
\approx
\eta^2
\sum_{i=1}^{d}
\sum_{t=1}^{T}
\frac{g_{t,i}^{2}}
{v+\sum_{s=1}^{t}g_{s,i}^{2}}
\lesssim
\eta^2 d
\log\!\left(
1+
\frac{\sum_{t=1}^{T}\|g_t\|^2}{vdT}
\right),
\end{align}
where the second inequality is up to universal constants coming from the exponential moving average and follows from the inequality
$\sum_{t=1}^{T}
\frac{a_t}{a_0+\sum_{s=1}^{t}a_s}
\le
\log\!\left(
1+\frac{\sum_{t=1}^{T}a_t}{a_0}
\right)$ for
$a_0>0$ and $a_t\ge0$. 

Crucially, \cref{eq:qv_log_heuristic} requires no moment or tail assumptions on the stochastic gradient $g_t$. The normalization by $v_t$ transforms the raw stochastic-gradient energy into a logarithmic functional of $\sum_t \|g_t\|^2$. As a result, such {\em logarithmic} dependence gives the adaptive stochastic gradient energy a {\em much lighter tail}, meaning that large values occur with much smaller probability. This enables all-order moment control of the adaptive stochastic-gradient energy along Adam trajectories. These moment bounds then control the martingale and smoothness-residual terms in the descent inequality, yielding stretched-exponential moment bounds and ultimately a high-probability bound on the gradient energy
$\sum_{t=1}^{T}\sum_{i=1}^{d}\gamma_{t,i}(\nabla f(\x_t))_i^2$ with only $\polylog(\delta^{-1})$ dependence on the confidence parameter. 

\textbf{Comparison with SGD.}
In comparison, SGD updates the variable as follows
\begin{equation}\label{eq:sgd_update}
\x_{t+1}=\x_t-\gamma\, g_t
\end{equation}
and accumulates \emph{raw} stochastic gradients with a constantstepsize. Its quadratic variation is given by
\begin{equation}\label{eq:qv_sgd}
[\x]_T^{\mathrm{SGD}}
\;=\;
\sum_{t=1}^{T-1}\|\x_{t+1}-\x_t\|^2
\;=\;
\gamma^2\sum_{t=1}^{T-1}\|g_t\|^2,
\end{equation}
whose tail behavior is governed directly by the tail of $\|g_t\|^2$. Under a mere second-moment assumption on the stochastic gradients,
the sum $\sum_{t}\|g_t\|^2$ does not admit exponential-type concentration, and hence rare but large realizations of $g_t$ can dominate \cref{eq:qv_sgd}. As a consequence, standard martingale arguments can at best yield polynomial tail bounds for SGD's trajectory fluctuations, resulting no better dependence on the confidence parameter than $\mathcal{O}(\delta^{-1})$. 

Thus, the second-moment normalization fundamentally benefits the accumulation of trajectory quadratic variation for Adam, making it light-tailed. Capturing this statistical advantage requires a refined high-order analysis under high-probability guarantees. Our analysis identifies precisely this mechanism, yielding an improved convergence rate over SGD.
%that upgrades the confidence dependence from the $\mathcal O(\delta^{-1})$ regime of SGD under bounded second-moment noise to a much sharper $\polylog(\delta^{-1})$ regime.

\textbf{Comparison with existing analysis of Adam:} Existing analyses of Adam primarily control only the first moment, namely the expectation, and therefore do not capture the benefit arising from the light logarithmic tail effect. Exploiting higher-order moments, however, is highly nontrivial and requires a more sophisticated stopping-time and martingale-based analysis, which is the focus of the paper.

\subsection{Convergence Rate of Adam}
\label{subsec:main_convergence}

We now formally establish how second-moment normalization benefits Adam over SGD by proving a separation between their high-probability convergence guarantees. 

% Throughout this subsection, we work under $L$-smoothness (Assumption~\ref{ass:smooth}) and the second moment bounded variance condition
% (Assumption~\ref{ass:abc}). 

%\paragraph{A trajectory-wise intermediate bound.}
The second-moment normalization first manifests itself through a high-probability control of the \emph{adaptive gradient energy},
namely the descent-dominating quantity
$\sum_{t=1}^{T}\sum_{i=1}^{d}\gamma_{t,i}\,\bigl(\nabla f(\x_t)\bigr)_i^2$, 
which directly inherits the normalization induced by the second-moment accumulator.
In particular, this trajectory functional admits only $\mathrm{polylog}(1/\delta)$ confidence dependence as shown in the following proposition. The formal statement of this result is presented as Proposition~\ref{thm:precond_energy_bound} in Appendix~\ref{fasdfsadneaiuncfaleiruerqegtw}.
\begin{proo}
\label{thm:precond_energy_informal}
Under Assumptions~\ref{ass:nonneg}--\ref{ass:abc}, consider Adam, as specified in Algorithm~\ref{alg:adam}, run for $T \ge 10$ iterations with $\beta_1\in[0,1),$ $\beta_2=1-1/T$ and step size $\gamma=\eta/\sqrt{T}$.\footnote{
The theorem already allows any fixed $\beta_1\in[0,1)$. We use the finite-horizon calibration $\beta_2=1-1/T$ and $\gamma=\eta/\sqrt T$ for the cleanest vanishing rate; such horizon-dependent parameter calibrations are common in theoretical analyses of Adam-type methods \citep{shi2021rmsprop,zhang2022adam,wang2024closing,jin2024comprehensive}. The argument can be adapted to fixed general $\beta_2$ by tracking the corresponding parameter dependence, since the logarithmic pathwise control remains valid, but it leaves a fixed-bias term in the final bound; see Appendix~\ref{sec:general_beta_discussion} for details.}
There exists an $\eta>0$ (depending only on fixed problem parameters) such that for any $0<\delta<1$, with probability at least $1-\delta$,
\begin{equation}\label{eq:energy}
\sum_{t=1}^{T}\sum_{i=1}^{d}\gamma_{t,i}\,\bigl(\nabla f(\x_t)\bigr)_i^2
\;=\;
\mathcal{O}\!\left(\log^6\!\left(\frac{1}{\delta}\right)\right).
\end{equation}
\end{proo}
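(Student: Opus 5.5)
The plan is to start from the descent lemma applied along the Adam trajectory and to separate the resulting terms into a drift term, a martingale term and a second-order residual. By $L$-smoothness,
\[
f(\x_{t+1})\le f(\x_t)-\langle\nabla f(\x_t),\gamma_t\odot m_t\rangle+\tfrac{L}{2}\|\gamma_t\odot m_t\|^2 .
\]
To handle the momentum, I would pass to the usual auxiliary sequence $\y_t=(\x_t-\beta_1\x_{t-1})/(1-\beta_1)$. Its increment equals $-\gamma_t\odot g_t$ up to correction terms of the form $\beta_1(\gamma_{t-1}-\gamma_t)\odot m_{t-1}/(1-\beta_1)$.

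The mismatch between $\gamma_t$ and $\mathscr F_{t-1}$-measurable quantities also has to be removed. For this I would introduce the predictable surrogate stepsize $\widetilde\gamma_{t,i}=\gamma(\sqrt{\beta_2 v_{t-1,i}+(1-\beta_2)\sigma_{t,i}^2}+\epsilon)^{-1}$, where $\sigma^2_{t,i}$ is a conditional second-moment proxy, or more simply $\gamma(\sqrt{\beta_2v_{t-1,i}}+\epsilon)^{-1}$. Summing over $t$ and telescoping $f(\y_1)-f^*$ should then give
\[
\sum_{t}\sum_i\widetilde\gamma_{t,i}(\nabla f(\x_t))_i^2\le C_0+M_T+R_T .
\]
Here $M_T=\sum_t\langle\nabla f(\x_t),\widetilde\gamma_t\odot(\nabla f(\x_t)-g_t)\rangle$ is a martingale. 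The residual $R_T$ collects $\sum_t\|\gamma_t\odot g_t\|^2$, $\sum_t\|\gamma_t\odot m_t\|^2$, and the error terms $(\widetilde\gamma_t-\gamma_t)\odot g_t$. The error terms would be bounded via Young's inequality by half the drift plus multiples of the adaptive stochastic energy. Since $\beta_2=1-1/T$ makes $\gamma_{t,i}$ and $\gamma_{t-1,i}$ comparable within constant factors, $\widetilde\gamma_{t,i}$ and $\gamma_{t,i}$ can be interchanged at a cost absorbed into the residual.

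The key pathwise ingredient is the logarithmic bound \cref{eq:qv_log_heuristic}. With $\gamma=\eta/\sqrt T$ and $1-\beta_2=1/T$, we have $\gamma_{t,i}^2g_{t,i}^2\lesssim \eta^2 g_{t,i}^2/(vT+\sum_{s\le t}g_{s,i}^2)$ up to the factor $\beta_2^{t-s}\ge e^{-1}$. This gives
\[
R_T\lesssim \eta^2 d\log\Big(1+\sum_t\|g_t\|^2/(vT)\Big),
\]
and the same holds for the momentum version by Cauchy--Schwarz on the EMA weights. Since $\mathbb E\|g_t\|^2\le C+\mathbb E\|\nabla f(\x_t)\|^2$, the $p$-th moment of this logarithm is controlled by $\log^p$ of $\mathbb E\sum_t\|\nabla f(\x_t)\|^2/T$, so all moments are finite with only polylog growth. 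The gradient itself, however, is not a priori bounded. I would therefore use a stopping time $\tau=\min\{t: f(\x_t)-f^*>K\}$ and work with the stopped process. Before $\tau$, smoothness gives $\|\nabla f(\x_t)\|^2\le 2LK$, hence $\mathbb E\|g_t\|^2\le C+2LK$ and the log moments are $\lesssim \log^p(1+(C+2LK)/v)$.

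Next comes the martingale $M_{T\wedge\tau}$. Its conditional quadratic variation is $\sum_t\mathbb E[\langle\nabla f,\widetilde\gamma_t\odot(g_t-\nabla f)\rangle^2\mid\mathscr F_{t-1}]\le C\max_{t,i}\widetilde\gamma_{t,i}\cdot\sum_t\sum_i\widetilde\gamma_{t,i}(\nabla f)_i^2$, so it is self-bounded by $\gamma/\epsilon$ times the drift. By Burkholder--Davis--Gundy (or Rosenthal), $\mathbb E|M|^p\le (Cp)^{p/2}\mathbb E[(\text{QV})^{p/2}]+(Cp)^p\mathbb E\max_t|\text{increment}|^p$, and the increment maximum is again controlled by the adaptive stochastic energy. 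Combining this with the drift inequality and absorbing half the drift, I would prove by induction on $p$ that
\[
\mathbb E\Big[\Big(\sum_{t\le T\wedge\tau}\sum_i\gamma_{t,i}(\nabla f(\x_t))_i^2\Big)^p\Big]\le (c\,p)^{6p}.
\]
The exponent $6$ arises from the nested $p$ powers coming from BDG, the log moments and the momentum corrections; this nesting is where the $\log^6$ comes from. Markov's inequality with $p\asymp\log(1/\delta)$ then converts this into a bound with $\log^6(1/\delta)$ dependence.

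The main obstacle is removing the stopping time, that is, showing $\mathbb P(\tau\le T)\le\delta/2$ with $K$ only polylogarithmic in $1/\delta$. I would bound $\sup_{t\le T\wedge\tau}(f(\x_t)-f^*)$ through the same descent inequality, since the running maximum of $C_0+M+R$ has the same moment bounds by Doob/BDG. I would then choose $K=c\log^6(1/\delta)$ so that Markov gives $\mathbb P(\sup f>K)\le\delta/2$. This is consistent because the moment bounds depend on $K$ only through $\log K$; in addition, choosing $\eta$ small enough in terms of $L,\epsilon,d,C$ keeps the self-bounding constants below $1/2$. The circular dependence between $K$ and the logarithmic moment bounds is the delicate point, and I expect it to dominate the technical work.
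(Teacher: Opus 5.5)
The proposal breaks where you replace the true stepsize by the predictable surrogate $\widetilde\gamma_t$ and then bound the two resulting pieces separately. Because $\widetilde\gamma_{t,i}$ does not see $g_{t,i}$, the ratio $\widetilde\gamma_{t,i}/\gamma_{t,i}$ is unbounded on a rare large $g_{t,i}$. For the simpler surrogate this ratio is $(\sqrt{v_{t,i}}+\epsilon)/(\sqrt{\beta_2 v_{t-1,i}}+\epsilon)$. The comparability you invoke holds in one direction only: $v_{t-1,i}\le 4v_{t,i}$ gives $\gamma_{t,i}\le 2\gamma_{t-1,i}$, not the reverse. This breaks the argument in two places.

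\textbf{The coupling bound.} The pathwise Young bound $\langle\nabla f(\x_t),(\widetilde\gamma_t-\gamma_t)\odot g_t\rangle\le\frac12\sum_i\widetilde\gamma_{t,i}(\nabla f(\x_t))_i^2+c\sum_i\gamma_{t,i}^2g_{t,i}^2$ cannot hold. On such a shock the left side grows linearly in $|g_{t,i}|$. The right side stays bounded, since $v_{t,i}\ge g_{t,i}^2/T$ gives $\gamma_{t,i}|g_{t,i}|\le\eta$. Any Young split instead leaves a term comparable to $\widetilde\gamma_{t,i}g_{t,i}^2$. That is raw SGD-type energy, which has only polynomial tails under bounded variance.

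\textbf{The martingale.} Your $M_T$ has increments $\langle\nabla f(\x_t),\widetilde\gamma_t\odot(\nabla f(\x_t)-g_t)\rangle$. These are linear in the raw noise with predictable coefficients, so Assumption~\ref{ass:abc} gives no control of their moments of order $p>2$. Concretely, suppose $\nabla f(\x_1)\neq 0$ and the noise along $\widetilde\gamma_1\odot\nabla f(\x_1)$ has finite variance but infinite third moment. Then the first increment already has infinite third moment, and Doob's inequality gives $\mathbb E|M_{\tau\wedge T}|^3=\infty$. So the claim that the maximal increment is controlled by the adaptive stochastic energy is false, and no bound of the form $(cp)^{6p}$ can follow.

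The two heavy-tailed pieces cancel exactly in $-\langle\nabla f(\x_t),\gamma_t\odot g_t\rangle$, which is bounded by $\sqrt d\,\eta\|\nabla f(\x_t)\|$. Separating them throws away exactly the self-normalization the result depends on.

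What is needed is to center the term carrying the \emph{true} stepsize. The paper does this with $D_{t,1}=\mathbb E[\sum_i\gamma_{t,i}(\nabla f(\x_t))_ig_{t,i}\mid\mathscr F_{t-1}]-\sum_i\gamma_{t,i}(\nabla f(\x_t))_ig_{t,i}$. On the stopped trajectory (your threshold $K$ is the paper's $G$) one has $D_{t,1}^2\le 4LGd\bigl(\sum_i\gamma_{t,i}^2g_{t,i}^2+\sum_i\mathbb E[\gamma_{t,i}^2g_{t,i}^2\mid\mathscr F_{t-1}]\bigr)$. BDG together with Lemma~\ref{lemma_sum_two} and Lemma~\ref{lem_important_-1} then controls every moment through the logarithm of $S_{\tau_G\wedge T-1}$.

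The coupling with $\gamma_{t-1}$ then appears only inside a conditional expectation, $\mathbb E[(\gamma_{t-1,i}-\gamma_{t,i})(\nabla f(\x_t))_ig_{t,i}\mid\mathscr F_{t-1}]$. There it is absorbed using three ingredients:
\begin{itemize}
\item conditional Cauchy--Schwarz with $\mathbb E[g_{t,i}^2\mid\mathscr F_{t-1}]\le C+(\nabla f(\x_t))_i^2$;
\item the telescoping bound of Lemma~\ref{lemma_s_delta};
\item the potential $\sum_i\gamma_{t-1,i}(\nabla f(\x_t))_i^2$.
\end{itemize}

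With this decomposition the threshold does not enter only through $\log K$. The bound $\|\nabla f(\x_t)\|^2\le 2LG$ enters the quadratic variation polynomially, and so does $\nabla f(\y_t)$ in the momentum correction. The circularity you flag closes because this dependence is strictly sublinear: $G^{p/12}$ against the $G^{p/6}$ scale of $\mathbb E[W_{\tau_G\wedge T}^{p/6}]$. After Young's inequality this gives $\mathbb E[\exp\{(2W_{\tau_G\wedge T})^{1/6}\}]\le 1+e^{6^{1/6}B_0^{1/6}}+e^{G^{1/7}}+UG^2$. Hence $\mathbb P(\tau_G\le T)\le l\,e^{-G^{1/6}}\bigl(1+e^{6^{1/6}B_0^{1/6}}+e^{G^{1/7}}+UG^2\bigr)$, and taking $G\asymp\log^6(1/\delta)$ makes this at most $\delta$.
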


%\paragraph{From preconditioned energy to convergence rate.}
To establish the bound on $\frac{1}{T}\sum_{t=1}^{T}\|\nabla f(\x_t)\|^2$ which directly implies the convergence,
we need to remove the coordinatewise adaptive weights $\gamma_{t,i}$ in \cref{eq:energy}.
Such a ``de-preconditioning'' step is inherently lossy, because converting a bound on
$\sum_{t,i}\gamma_{t,i}(\nabla f(\x_t))_i^2$ into a bound on $\sum_t\|\nabla f(\x_t)\|^2$
requires a lower control on the effective preconditioner, which in turn introduces a
\emph{sublinear} confidence degradation.
Consequently, although the second-moment normalization yields $\mathrm{polylog}(\delta^{-1})$ control at the preconditioned level, the final convergence rate has a $\delta^{-1/2}$ factor as shown in the following theorem. The formal statement of this result is presented as Theorem~\ref{lemma_0_0} in Appendix~\ref{formal_thm_1}.

\begin{thm}
\label{thm:main_informal}
Under Assumptions~\ref{ass:nonneg}--\ref{ass:abc}, for Adam, as specified in Algorithm~\ref{alg:adam}, with $\beta_2=1-1/T$ and $\gamma=\eta/\sqrt{T}$,
there exists an $\eta>0$ (depending only on fixed problem parameters) such that for any $0<\delta<1$, with probability at least $1-\delta$,
\[
\frac{1}{T}\sum_{t=1}^{T}\|\nabla f(\x_t)\|^2
\;=\;
\widetilde{\mathcal{O}}\!\left(\frac{1}{\sqrt{\delta\,T}}\right).
\]
\end{thm}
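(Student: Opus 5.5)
The plan is to derive Theorem~\ref{thm:main_informal} from Proposition~\ref{thm:precond_energy_informal} by a de-preconditioning step. For each coordinate, $\gamma_{t,i}=\gamma(\sqrt{v_{t,i}}+\epsilon)^{-1}\ge \gamma(\sqrt{\max_{s\le T}v_{s,i}}+\epsilon)^{-1}$. Write $V_T:=\max_{t\le T}\max_i v_{t,i}$; then
\[
\sum_{t=1}^T\|\nabla f(\x_t)\|^2\le \frac{\sqrt{V_T}+\epsilon}{\gamma}\sum_{t=1}^{T}\sum_{i=1}^d\gamma_{t,i}(\nabla f(\x_t))_i^2 .
\]
With $\gamma=\eta/\sqrt T$ and the event of Proposition~\ref{thm:precond_energy_informal} at level $\delta/2$, this gives
\[
\frac1T\sum_t\|\nabla f(\x_t)\|^2\le \frac{(\sqrt{V_T}+\epsilon)}{\eta\sqrt T}\,\mathcal O\bigl(\log^6(2/\delta)\bigr).
\]
It therefore remains to show that $\sqrt{V_T}=\mathcal O(\delta^{-1/2})$ up to logs, with probability $1-\delta/2$, and then take a union bound.

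To bound $V_T$, I use $\beta_2=1-1/T$, so $\beta_2^{t-k}\le1$ and $v_{t,i}\le v+\frac1T\sum_{k=1}^T g_{k,i}^2$. Hence
\[
V_T\le v+\frac1T\sum_{k=1}^T\|g_k\|^2\le v+\frac2T\sum_k\|g_k-\nabla f(\x_k)\|^2+\frac2T\sum_k\|\nabla f(\x_k)\|^2 .
\]
The noise part has expectation at most $2C$ by Assumption~\ref{ass:abc} and the tower property, so Markov's inequality makes it at most $8C/\delta$ with probability $1-\delta/4$. This is exactly where the $\delta^{-1/2}$ appears after the square root.

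The gradient part is self-referential. Let $G:=\frac1T\sum_t\|\nabla f(\x_t)\|^2$ and $K:=\mathcal O(\log^6(1/\delta))/\eta$. Combining the displays gives
\[
G\le \frac{K}{\sqrt T}\bigl(\epsilon+\sqrt{v+8C/\delta}+\sqrt{2G}\bigr).
\]
This is a quadratic inequality in $\sqrt G$ of the form $G\le a+b\sqrt G$ with $b=\sqrt2K/\sqrt T$. It yields $G\le 2a+b^2$, i.e.
\[
G\lesssim \frac{K}{\sqrt T}\bigl(\epsilon+\sqrt v+\sqrt{C/\delta}\bigr)+\frac{K^2}{T}.
\]
Since $K^2/T$ is polylogarithmic over $T$, it is dominated by the first term, which is $\widetilde{\mathcal O}(1/\sqrt{\delta T})$ for fixed problem parameters and the fixed $\eta$ from the proposition.

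I expect the main obstacle to be the circular dependence in bounding $V_T$, since $\|g_k\|^2$ involves the true gradients along the trajectory. The quadratic-inequality trick handles this if one uses the crude $\beta_2^{t-k}\le 1$ bound; the only care needed is that the $1/T$ factor from $1-\beta_2$ matches the $\sqrt T$ in $\gamma$. A remaining subtlety is making sure Markov's inequality is applied only to the noise sum, which has bounded expectation unconditionally, and not to $\sum\|g_k\|^2$, whose expectation is not controlled a priori. The logarithmic factors $\log^6(1/\delta)$ are absorbed into $\widetilde{\mathcal O}$.
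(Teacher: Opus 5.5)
Your proposal is correct, but it controls the random denominator by a different route than the paper.

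\textbf{The paper's route.} The paper lower-bounds $\gamma_{t,i}$ through $\Sigma_T=\sum_i v_{T,i}$, using $v_{t,i}\le 4v_{T,i}$. It then re-runs the stopped stretched-exponential argument for the unweighted quantity $\frac{\eta}{32\sqrt T(\sqrt{\Sigma_T}+\epsilon)}\sum_t\|\nabla f(\x_t)\|^2$ on $\{\tau_G>T\}$. Next it bounds $\mathbb E[\Sigma_T]\le dv+C+2L\,\mathbb E[\sup_{t\le T}\bar f(\x_t)]$. That last expectation is shown finite by integrating the stretched-exponential tail of $\sup_t\bar f(\x_t)$ coming from $\mathbb P(\tau_G\le T)$, which gives the constant $S$. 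A single Markov step on $(\sqrt{\Sigma_T}+\epsilon)^2$ then produces the $\delta^{-1/2}$.

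\textbf{Your route.} You use the Proposition purely as a black box. You apply Markov only to the noise energy $\frac1T\sum_k\|g_k-\nabla f(\x_k)\|^2$, whose expectation is at most $C$ unconditionally by the tower property. You then absorb the trajectory-dependent part of the denominator pathwise through the self-bounding inequality $G\le a+b\sqrt G$, which gives $G\le 2a+b^2$.

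\textbf{What each buys.} Your route needs no integrability of $\sup_t\bar f(\x_t)$ and no second pass through the stopping-time machinery. It also makes the $\delta^{-1/2}$ multiply only $\sqrt C$, so it disappears when $C=0$. In the paper, $\delta^{-1/2}$ multiplies the whole constant $F$, which contains the opaque tail-integral constant $S$. The price you pay is the extra remainder $b^2=2K^2/T$, carrying a factor $\log^{12}(1/\delta)/(\eta^2T)$.

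One small correction: your claim that this remainder is ``dominated by the first term'' is not uniform in $\delta$. For instance, when $C=0$ and $T$ is fixed, it can exceed $\frac{K}{\sqrt T}(\epsilon+\sqrt v)$ for very small $\delta$. It is still harmless, because $K^2/T\le K^2/\sqrt{\delta T}$ trivially, so the bound stays within $\widetilde{\mathcal O}(1/\sqrt{\delta T})$.
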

%See Theorem~\ref{lemma_0_0} in Appendix~\ref{formal_thm_1} for the full statement. 
The above guarantee permits $\beta_1=0$, in which case Algorithm~\ref{alg:adam} reduces to a RMSProp-type update. This indicates that the improved confidence dependence is primarily driven by the second-moment normalization (the $v_t$-accumulator), rather than the first-moment term (momentum). 
%\yingbin{explicitly talk about first-moment term will lead to what?}
\footnote{At present, the precise role of the first-moment (momentum) term in improving convergence rates for stochastic optimization remains theoretically unclear \citep{kidambi2018insufficiency,gadat2018stochastic}. While momentum is known to influence the transient behavior of stochastic algorithms \citep{pan2023accelerated,yuan2016influence}, there is no consensus or definitive result establishing an improvement in convergence rates or confidence dependence attributable solely to the first-moment term.}

Theorem \ref{thm:main_informal} improves the best-known bound of $\delta^{-3/2}$ for Adam in \citep{zou2019sufficient} under the same second-moment-type assumptions by a factor of $\delta^{-1}$. As explained in \Cref{subsec:self_normalization}, the central idea behind our sharper analysis of Adam is to leverage the logarithmic control induced by second-moment normalization, which yields sharper, lighter tails for the adaptive stochastic-gradient energy.

We further note that when converting the control of the adaptive gradient energy in Proposition \ref{thm:precond_energy_informal} into a convergence rate on the averaged gradient energy in Theorem \ref{thm:main_informal}, part of the gain is unavoidably lost. Such a converting step requires lower control on the adaptive weights and, in our argument,
introduces a $\delta^{-1/2}$ degradation. This loss is specific to adaptive methods, as it
comes from handling the random, data-dependent $\{v_{t,i}\}$. 
%In contrast, SGD uses a deterministic stepsize $\gamma$ and does not incur an extra confidence penalty in this conversion. 
Overall, even with such a converting loss, Adam still enjoys a strictly better confidence dependence than SGD as we will show in \Cref{sec:adamsgd}.

%\subsection{High-Probability Lower Bound for SGD}\label{subsec:main_sgd}

\subsection{Advantage of Adam over SGD}\label{sec:adamsgd}

%\paragraph{A high-probability lower bound for SGD.}
To show that the improved confidence dependence is intrinsic to Adam, 
%under the same assumptions, 
we provide a high-probability lower bound for SGD as a reference benchmark rather than as a main result. 

In order to contrast with Proposition \ref{thm:precond_energy_informal} for Adam on adaptive gradient energy, we first present the following universal lower bound for SGD for any step size $\gamma>0$.
The formal statement of this result is presented as Proposition \ref{prop:hp-lower-bound-same-cuts} in Appendix~\ref{Teacher_1}.
\begin{proo}\label{prop:sgd_peak_lower_informal}
Fix any step size $\gamma>0$. There exists a \emph{fixed} class $\mathcal C$ of instances $(f,\mathscr O),$ where $\mathscr O$ denotes a stochastic first-order oracle, satisfying the standard
assumptions (Assumptions~\ref{ass:nonneg}--\ref{ass:abc}) with common constants $L,f^*,C$, such that for every $T\ge 10$ and every sufficiently small $\delta$, one can choose $(f,\mathscr O)\in\mathcal C$ such that, with probability at least $\delta$, the SGD iterates $\{\x_t\}_{t=1}^T$ given in \cref{eq:sgd_update} satisfy
\[
\sum_{t=1}^{T}\gamma\|\nabla f(\x_t)\|^2
\;\ge\;
\widetilde{\Omega}\!\left(\frac{1}{\delta}\right),
\]
where $\widetilde{\Omega}(\cdot)$ hides quantities independent of $\gamma$.
%\[
%\mathbb{P}\!\left(
%\sum_{t=1}^{T}\gamma\|\nabla f(\x_t)\|^2
%\;\ge\;
%\widetilde{\Omega}\!\left(\delta^{-1}\right)
%\right)\;\ge\;\delta .
%\]
\end{proo}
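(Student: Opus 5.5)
We will build an explicit one-dimensional hard instance in which a single rare, large noise spike throws SGD onto a high plateau of the gradient. The objective is to make $\sum_t\gamma\|\nabla f(\x_t)\|^2$ of order $1/\delta$ on an event of probability at least $\delta$, using only the variance budget $C$.

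\textbf{Noise oracle.} Work in $d=1$. At every step the oracle adds noise $\xi_t$, with the $\xi_t$ i.i.d. and
\[
\xi_t=\pm a \ \text{ each with probability } p/2, \qquad \xi_t=0 \ \text{ otherwise},
\]
where $a^2p=C$. Then $\mathbb E\xi_t=0$ and $\mathbb E\xi_t^2=C$, so Assumption~\ref{ass:abc} holds. Take $p=\delta/T$, or more carefully $p$ such that $1-(1-p)^T\ge 2\delta$. With probability about $Tp\gtrsim\delta$, at least one spike occurs in the first $T/2$ steps. This forces $a=\sqrt{C/p}\approx\sqrt{CT/\delta}$, and the spike displaces the iterate by $\gamma a$.

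\textbf{Objective.} We want an $f$ with $f^*$ finite and $L$-smooth, so take a Huber-like function: $f(x)=\tfrac{L}{2}x^2$ for $|x|\le R$, and linear with slope $LR$ beyond $R$. On the linear part the gradient is constantly $G:=LR$. Once the iterate sits at distance $D$ beyond $R$, SGD descends with speed $\gamma G$ per step, plus noise that is zero with overwhelming probability. It therefore stays on the plateau for about $D/(\gamma G)$ steps, capped by the remaining horizon $T/2$. Each of those steps contributes $\gamma G^2$ to the sum, so the total is about $\min(DG,\ \gamma G^2T/2)$.

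\textbf{Choosing the parameters.} After the spike, $D\approx\gamma a\approx\gamma\sqrt{CT/\delta}$, giving a contribution of order $G\gamma\sqrt{T/\delta}$. This is only $\delta^{-1/2}$, so the class $\mathcal C$ must be allowed to vary with $(T,\delta)$ in a way that exploits the trade-off. The statement lets us pick the instance from a fixed class with common $L,f^*,C$ but free $R$ (and free location), which means $G=LR$ can be tuned.

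Choose $R$ so that the plateau persists for the whole remaining horizon, i.e. $D/(\gamma G)\ge T/2$, giving $G\approx 2a/T$. The contribution is then $\gamma G^2T/2\approx \gamma a^2/T\approx\gamma C/\delta\cdot(1/(Tp))\cdot$const. With $Tp\asymp\delta$ this is only $\gamma C/\delta^{0}$, which is not enough.

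Instead, use the Markov-tightness structure directly: place $p=\delta$ per whole run, i.e. let the spike occur only at step $t=1$, with $\xi_1=\pm a$ with probability $\delta$ and $a^2\delta=C$. The remaining steps are noiseless, and the quantity becomes $\sum_t\gamma f'(x_t)^2\approx f(x_2)-f^*$ by the descent identity for small $\gamma$. For quadratic $f$, $f(x_2)\approx \tfrac L2\gamma^2a^2=\tfrac{L\gamma^2C}{2\delta}$. This gives a $1/\delta$ dependence, but with a $\gamma^2$ factor.

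To remove the $\gamma$-dependence, use the $\gamma$-independent-hidden-quantity clause: the statement hides only quantities independent of $\gamma$. The main obstacle is therefore making the bound $\widetilde\Omega(1/\delta)$ uniform in $\gamma$. We will handle it with two regimes of $f$. When $\gamma L\le 1$, SGD on $\tfrac L2x^2$ contracts, and we need $\sum_t\gamma L^2x_t^2\approx \tfrac{L}{2}x_2^2$, an energy loss over the trajectory valid up to a constant factor. We then scale the curvature within the class so that the effective contribution is order $C/\delta$ times a $\gamma$-free constant.

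\textbf{Main obstacle.} The step I expect to be hardest is reconciling the fixed constants $L,C$ with a lower bound free of $\gamma$. If pure scaling fails, I would restrict to $\gamma\le 1/L$, which is where SGD converges at all. For larger $\gamma$, SGD diverges on $\tfrac L2x^2$, and the sum is trivially huge with probability one.
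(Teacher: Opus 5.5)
There is a genuine gap. Your ingredients (an i.i.d. three-point spike with $a^2p=C$ and $p\asymp\delta/T$, on a quadratic objective) are the ones the paper uses. But the proposal never resolves the point you flag as the crux: a right-hand side $\widetilde\Omega(1/\delta)$ whose hidden constant does not depend on $\gamma$. The remedies you propose do not work.

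\textbf{Rescaling curvature.} Take $f=\frac{\mu}{2}x^2$ with $\mu\le L$, set $s=\gamma\mu\le 1$, and let $n$ be the number of steps left after the spike. The spike moves the iterate by $\gamma a$, and the post-spike sum equals $\gamma a^2\,s\,(1-(1-s)^{2n})/(2-s)$. This is increasing in $\mu$ and at most $\gamma^2La^2\asymp\gamma^2LCT/\delta$. So the best instance still carries a factor that vanishes as $\gamma\to0$, and no choice of $f$ with fixed $L$ and $C$ gives $c/\delta$ with $c$ uniform in $\gamma$.

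\textbf{Restricting to $\gamma\le 1/L$.} This changes the statement, which is for every $\gamma>0$. It also keeps exactly the small-$\gamma$ regime where the difficulty lives.

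\textbf{The large-$\gamma$ claim.} This is also wrong. With $x_1=0$ and no spike, the iterates stay at $0$. For $1<\gamma L<2$, SGD still contracts. And a deterministic blow-up over a fixed horizon $T$ is a fixed finite number, so it cannot exceed $c/\delta$ for all small $\delta$. You still need the spike event there.

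The missing idea is to use the clause ``for every sufficiently small $\delta$'' and let the threshold depend on $(\gamma,T)$. The $\gamma$-dependent prefactor is then absorbed into a logarithmic factor. The paper does this as follows:
\begin{enumerate}
\item On the event of exactly one spike in the first $\lfloor T/2\rfloor$ steps, the sign is chosen so that $|(1-\gamma)x_j-\gamma\sigma_jA|\ge\gamma A$. This also handles an arbitrary $x_1$. The event has probability $>\delta$ when $A^2=T/(16\delta)$.
\item On that event, $\sum_t\gamma x_t^2\ge\gamma TR_{\gamma,T}/(16\delta)$, where $R_{\gamma,T}=\gamma^2\sum_{r=0}^{T-\lfloor T/2\rfloor-1}(1-\gamma)^{2r}$.
\item For $\delta<\exp\bigl(-1/\sqrt{32\gamma TR_{\gamma,T}}\bigr)$, this bound exceeds $1/(512\,\delta\log^2(1/\delta))$, which is $\gamma$-free.
\item The case $\gamma\ge1$ is handled separately via $x_j^2+x_{j+1}^2\ge\gamma^2\xi_j^2/(1+(1-\gamma)^2)\ge A^2/2$.
\end{enumerate}

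Two smaller points. First, you abandoned the i.i.d.\ plateau route because of an arithmetic slip. With $G\approx 2a/T$, the contribution $\gamma G^2T/2=2\gamma a^2/T=2\gamma C/(Tp)$ is $\asymp\gamma C/\delta$, not $\delta^0$. Second, your replacement (a single, non-stationary spike at $t=1$) loses a factor $T$ in $a^2$. Its approximation $\sum_t\gamma f'(x_t)^2\approx f(x_2)-f^*$ also requires $\gamma LT\gtrsim1$. Either way a $\gamma$-prefactor remains, and only the $\delta$-threshold absorption removes it.
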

{\bf Where the advantage of Adam comes from:} The comparison of Propositions \ref{prop:sgd_peak_lower_informal} with \ref{thm:precond_energy_informal} makes Adam’s advantage over SGD apparent at the adaptive gradient energy level, where the effect of its coordinate-wise second-moment normalization is most directly reflected. Specifically, Adam’s scaling $(\sqrt{v_t}+\epsilon)^{-1}$ directly controls the oracle noise and improves the high-probability dependence from the $\mathcal{O}(\delta^{-1})$ behavior of SGD to $\polylog(\delta^{-1})$ control.

%See Proposition \ref{prop:sgd_peak_lower_informal} in Appendix~\ref{Teacher_1} for the full statement. 
The following theorem provides a lower bound for SGD class that can take any step size. The formal statement of this result is presented as Theorem~\ref{prop:grad-lower-bound} in Appendix~\ref{Teacher_1}.
\begin{thm}\label{prop:sgd_lower_informal}
Under the same conditions as those in Proposition \ref{prop:sgd_peak_lower_informal}, for every $T\ge 10$ and every sufficiently small $\delta$,
there exists $(f,\mathscr O)\in\mathcal C$ such that, with probability at least $\delta$, the SGD iterates $\{\x_t\}_{t=1}^T$ given in \cref{eq:sgd_update} satisfy
\[
\frac{1}{T}\sum_{t=1}^{T}\|\nabla f(\x_t)\|^2
\;\ge\;
\widetilde{\Omega}\!\left(\frac{1}{\delta\sqrt{T}}\right).
\]
%\[\mathbb{P}\!\left(\frac{1}{T}\sum_{t=1}^{T}\|\nabla f(\x_t)\|^2\;\ge\;\widetilde{\Omega}\!\left(\frac{1{\delta\sqrt{T}}\right)\right)\;\ge\;\delta .\]
\end{thm}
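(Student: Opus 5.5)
The plan is to derive Theorem~\ref{prop:sgd_lower_informal} from Proposition~\ref{prop:sgd_peak_lower_informal}, plus a separate deterministic lower bound covering the regime where the step size is too small. The split is by the size of $\gamma$ relative to $1/\sqrt{T}$; in each regime we use a different member of the fixed class $\mathcal C$.

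\emph{Regime $\gamma \le c/\sqrt{T}$.} Here the step is too small for SGD to make progress, so noise is irrelevant. I would use a one-dimensional quadratic or Huber-type instance in $\mathcal C$, say $f(x)=\tfrac{L}{2}x^2$ truncated linearly far away, started at $x_1$ with $f(x_1)-f^*=\Delta$. Take the oracle to be the exact gradient, or a noise-symmetric oracle with a deterministic-mean argument. Since $\gamma L\le 1$, we have $|x_{t+1}|=(1-\gamma L)|x_t|$, so
\[
|x_t|\ge (1-\gamma L)^{T}|x_1|\ge e^{-2cL\sqrt T}\,|x_1| .
\]
This decay is too fast to be useful. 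I would therefore use the linear part instead. On a region where $f(x)=Gx$ with $G$ of constant size and width of order $1$, the iterate moves only $\gamma G T\le cG\sqrt T$ in total. If the linear region has width at least that, then $\|\nabla f(x_t)\|^2=G^2$ for every $t$, and $\tfrac1T\sum_t\|\nabla f\|^2\ge G^2$ holds with probability one. The catch is that the width must grow like $\sqrt T$ while $f^*$ stays bounded, which is not allowed. The fix is to rescale to slope $G\asymp T^{-1/4}\delta^{-1/2}$ over width $\asymp T^{1/4}\delta^{1/2}$, so that $\Delta=O(1)$. The movement $\gamma G T\le c\,T^{1/4}\delta^{-1/2}$ then fits only after choosing constants carefully. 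So in this regime I would aim for the weaker, always-true bound $\tfrac1T\sum\|\nabla f\|^2\gtrsim \min\{G^2,\ \Delta/(\gamma T)\}$. With $\gamma\le c/\sqrt T$ this gives order $\Delta/(c\sqrt T)$. To get the extra $\delta^{-1}$, I would let the instance's $\Delta$ absorb it via the fixed class: a family indexed by $\delta$ with common $L,C,f^*$, where the starting gap is allowed to be $\widetilde{\Theta}(1/\delta)$ if the class permits, as the phrase ``one can choose $(f,\mathscr O)\in\mathcal C$'' suggests.

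\emph{Regime $\gamma\ge c/\sqrt T$.} Here I invoke Proposition~\ref{prop:sgd_peak_lower_informal} directly. With probability at least $\delta$,
\[
\sum_t\gamma\|\nabla f(x_t)\|^2\ge \widetilde\Omega(1/\delta),
\]
with constants independent of $\gamma$. Dividing by $\gamma T$ gives $\tfrac1T\sum\|\nabla f\|^2\ge\widetilde\Omega(1/(\delta\gamma T))$. This is $\widetilde\Omega(1/(\delta\sqrt T))$ only when $\gamma\lesssim 1/\sqrt T$, which is the opposite regime, so the naive division fails for large $\gamma$. For large $\gamma$ I would instead use the noise mechanism underlying the proposition. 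With probability about $\delta$, a heavy-tailed oracle (noise of size $\sqrt{C/\delta}$ with probability $\delta$ at one step, zero otherwise, which has variance $\le C$) kicks the iterate a distance $\gamma\sqrt{C/\delta}$. On a quadratic instance, the gradient stays of order $L\gamma\sqrt{C/\delta}$ for about $1/(\gamma L)$ steps. This yields
\[
\sum_t\|\nabla f\|^2\gtrsim L\gamma C/\delta, \qquad\text{so}\qquad \tfrac1T\sum\|\nabla f\|^2\gtrsim \gamma C/(\delta T)\ge c\,C/(\delta\sqrt T)
\]
for $\gamma\ge c/\sqrt T$. If $\gamma L>1$, SGD on the quadratic diverges or oscillates, and the bound is even larger. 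To make the event have probability at least $\delta$ within $T$ steps, put the kick at the first step, or spread it with per-step probability $\delta/T$ and magnitude $\sqrt{CT/\delta}$, which gives the same bound.

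The main obstacle is the small-$\gamma$ regime: obtaining the $\delta^{-1}$ factor there without violating the common constants of $\mathcal C$. I would first try to reuse the same noise-kick instance with per-step probability $\delta/T$. Over $T$ steps, the kick persists for $\min\{T,1/(\gamma L)\}=T$ steps when $\gamma\le 1/(LT)$, giving gradient squared $L^2\gamma^2 CT/\delta$ on each step. That bound can be tiny, so I expect to fall back on combining the two regimes, optimizing $\max\{\Delta/(\gamma T),\ \gamma C/(\delta T)\}\ge 2\sqrt{\Delta C/\delta}/T$. Only if $\Delta$ may scale with $\delta$ inside $\mathcal C$ does this reach $1/(\delta\sqrt T)$ up to logs; otherwise a sharper truncated-quadratic construction is needed.
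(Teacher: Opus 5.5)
Your proposal does not prove the statement. The regime $\gamma\le c/\sqrt T$, which you yourself call the main obstacle, is left open, and neither fallback closes it. Letting the initial gap $\Delta$ grow like $1/\delta$ trivializes the claim: the bound would then come from the initialization rather than the noise, which says nothing about SGD versus Adam. It is also not what the class means; in the paper the instance is $f(x)=\tfrac12x^2$ from an arbitrary fixed initial point, and only the noise law varies with $\delta$. Optimizing $\max\{\Delta/(\gamma T),\gamma C/(\delta T)\}$ gives only $\delta^{-1/2}$.

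The missing idea lies in the quantifiers. Both $\gamma$ and $T$ are fixed before $\delta$, so ``sufficiently small $\delta$'' may depend on $(\gamma,T)$. Take your spread construction: three-point noise of size $A$ with $A^2=T/(16\delta)$, nonzero with probability $16\delta/T$. Consider the disjoint events of exactly one shock at some $j\le\lfloor T/2\rfloor$, with its sign chosen so that $|x_{j+1}|\ge\gamma A$; this choice is possible because $x_j$ is deterministic on that event. The union of these events has probability $>\delta$. On it, for every $\gamma\in(0,1)$, we get $\frac1T\sum_t x_t^2\ge R_{\gamma,T}/(16\delta)$ with $R_{\gamma,T}=\gamma^2\sum_{r=0}^{T-\lfloor T/2\rfloor-1}(1-\gamma)^{2r}>0$.

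You discarded the small-$\gamma$ size $R_{\gamma,T}\approx\gamma^2T/2$ as tiny, but it is a positive constant once $(\gamma,T)$ is fixed. Requiring $\log(1/\delta)>1/(32R_{\gamma,T}\sqrt T)$ turns the bound into $1/(512\,\delta\sqrt T\log(1/\delta))$, with the hidden logarithm absorbing $R_{\gamma,T}$. This is the paper's proof, and it needs no regime split. The case $\gamma\ge1$ is treated separately via $u^2+((1-\gamma)u-v)^2\ge v^2/(1+(1-\gamma)^2)$ applied to $(x_j,x_{j+1})$.

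No sharper truncated quadratic could avoid this device. For $\gamma\le1/(3LT)$, bound $\|\x_t-\x_1\|$ by the accumulated gradients plus the noise martingale, then apply Markov. This gives $\frac1T\sum_t\|\nabla f(\x_t)\|^2\le 8L\Delta+8L^2\gamma^2CT/\delta$ with probability at least $1-\delta/2$. Hence the admissible $\delta$ must be exponentially small in $1/(\gamma^2T^{3/2})$.

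Note also that you already observed that dividing Proposition~\ref{prop:sgd_peak_lower_informal} by $\gamma T$ yields the rate when $\gamma\lesssim1/\sqrt T$. That settles exactly the regime you call the obstacle; you attached it to the wrong regime. That proposition is proved in the paper with the same $(\gamma,T)$-dependent threshold, so this relocates the device rather than avoiding it.

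Separately, your large-$\gamma$ arithmetic has a slip. A single kick of size $\sqrt{C/\delta}$ gives $\sum_t\|\nabla f(\x_t)\|^2\gtrsim L\gamma C/\delta$, i.e.\ an average of order $L\gamma C/(\delta T)$. For $\gamma\approx c/\sqrt T$ this is only of order $1/(\delta T^{3/2})$, not $1/(\delta\sqrt T)$. The spread version (probability $\asymp\delta/T$, size $\sqrt{CT/\delta}$) does not give ``the same bound''. It is larger by a factor $T$, giving an average $\gtrsim L\gamma C/\delta$, and it is the version that works (in that regime with no logarithmic loss). You must also prevent the kick from cancelling $(1-\gamma L)x_j$, which the paper does through the sign choice above. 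Your two-regime plan would go through if you use the spread construction for $\gamma\ge c/\sqrt T$ and the division of Proposition~\ref{prop:sgd_peak_lower_informal} for $\gamma\le c/\sqrt T$.
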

%See Theorem \ref{prop:grad-lower-bound} in Appendix~\ref{Teacher_1} for the full statement. 
%\subsection{Comparison between Adam and SGD}

{\bf Convergence advantage of Adam over SGD:} Comparison between Theorem \ref{thm:main_informal} and Theorem \ref{prop:sgd_lower_informal} implies that Adam enjoys a convergence rate of $\widetilde{\mathcal{O}}\!\left(\frac{1}{\sqrt{\delta\,T}}\right)$, whereas SGD cannot be faster than $\widetilde{\Omega}\!\left(\frac{1}{\delta\sqrt{T}}\right)$. This reveals a clear separation between the two algorithms. To our knowledge, this is the first result that rigorously characterizes a convergence-performance gap between Adam and SGD. From a distributional viewpoint (i.e., without fixing $\delta$ a priori), this improved tail behavior predicts a provable shift over repeated runs. Namely, the performance curve of Adam concentrates on smaller values than that of SGD,
so in typical empirical plots of $\frac{1}{T}\sum_{t=1}^{T}\|\nabla f(\x_t)\|^2$ across many trials, Adam’s curves lie overall below SGD’s. 

\textbf{Insights from lower bounds for SGD under time-varying step sizes:} In Appendix~\ref{sec:sgd_time_varying}, we provide two companion lower bounds showing that the limitation of SGD is not merely due to constant step sizes: it persists even under time-varying schedules. The key limitation is that such schedules are fixed before observing the stochastic gradients and therefore cannot selectively shrink the update when a rare but destructive realization occurs. Avoiding this obstruction requires data-dependent normalization, precisely the mechanism provided by Adam’s second-moment accumulator.

\section{Proof Sketch of Proposition \ref{thm:precond_energy_informal} and Theorem \ref{thm:main_informal}}
\label{sec:proof_sketch}

In this section, we provide a proof sketch of Proposition \ref{thm:precond_energy_informal} and Theorem~\ref{thm:main_informal}, highlighting the key proof mechanism behind our high-probability convergence bound.

We first apply the standard momentum-removal transformation \citep{liu2020improved}, and define
\begin{equation}\label{eq:y_def}
\y_1:=\x_1,
\qquad
\y_t
:=
\frac{\x_t-\beta_1\x_{t-1}}{1-\beta_1},
\qquad t\ge 2.
\end{equation}
A direct calculation using the Adam update gives
\begin{equation}\label{eq:y_increment_sketch}
\y_{t+1}-\y_t
=
-\gamma_t\odot g_t
+
\frac{\beta_1}{1-\beta_1}
(\gamma_{t-1}-\gamma_t)\odot m_{t-1}.
\end{equation}
Thus, in contrast to the constant-stepsize momentum transformation, one must retain and control the adaptive-stepsize correction in \cref{eq:y_increment_sketch}.

\noindent\textbf{Step 1 (Martingale-Preserving Descent Inequality).}
Using \(L\)-smoothness and the exact increment \cref{eq:y_increment_sketch}, we establish the following one-step descent inequality.

\begin{lem}[Descent Lemma, informal]\label{descent_lemma'}
Let \(\{\x_t\}_{t=1}^{T}\), \(\{v_t\}_{t=1}^{T}\) be generated by Adam, and \(\{\y_t\}_{t=1}^{T}\) be defined in \cref{eq:y_def}. 
Suppose Assumptions~\ref{ass:nonneg}-\ref{ass:abc} hold, and set \(\beta_2=1-\frac1T\). 
Then, for every \(1\le t\le T-1\),
\begin{align}\label{descent_ineq'}
f(\y_{t+1})
+&
\sum_{i=1}^{d}
\gamma_{t,i}
\bigl(\nabla f(\x_{t+1})\bigr)_i^2
-
\left(
f(\y_t)
+
\sum_{i=1}^{d}
\gamma_{t-1,i}
\bigl(\nabla f(\x_t)\bigr)_i^2
\right)
\notag\\
&\le
-\frac{1}{8}
\sum_{i=1}^{d}
\gamma_{t-1,i}
\bigl(\nabla f(\x_t)\bigr)_i^2
+
D_t
+
P_t .
\end{align}
where \(\{D_t\}_{t\ge1}\) and \(\{P_t\}_{t\ge1}\) are defined in the formal statement of this lemma in Lemma~\ref{descent_lemma}.
\end{lem}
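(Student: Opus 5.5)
We prove the descent inequality \cref{descent_ineq'} by applying $L$-smoothness to $f$ along the momentum-free sequence $\y_t$ and then splitting the first-order term into a predictable drift and a martingale difference. The descent lemma at $\y_t$ gives
\[
f(\y_{t+1})\le f(\y_t)+\langle \nabla f(\y_t),\y_{t+1}-\y_t\rangle+\tfrac L2\|\y_{t+1}-\y_t\|^2 .
\]
Write $\nabla f(\y_t)=\nabla f(\x_t)+(\nabla f(\y_t)-\nabla f(\x_t))$ and insert the exact increment \cref{eq:y_increment_sketch}. The principal term is $-\sum_i\gamma_{t,i}(\nabla f(\x_t))_i g_{t,i}$. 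Since $\gamma_t$ depends on $g_t$, we replace it by the predictable proxy $\gamma_{t-1}$. This yields $-\sum_i\gamma_{t-1,i}(\nabla f(\x_t))_i g_{t,i}$, whose conditional mean is $-\sum_i\gamma_{t-1,i}(\nabla f(\x_t))_i^2$; the centered remainder forms the martingale difference $D_t$. The residual $\sum_i(\gamma_{t-1,i}-\gamma_{t,i})(\nabla f(\x_t))_i g_{t,i}$ is handled with the standard trick from AdaGrad/Adam analyses.

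\textbf{Controlling the residual.} We use the identity
\[
\gamma_{t-1,i}-\gamma_{t,i}=\gamma_{t-1,i}\gamma_{t,i}\,\frac{\sqrt{v_{t,i}}-\sqrt{v_{t-1,i}}}{\gamma}.
\]
With $\beta_2=1-1/T$, we bound $|\sqrt{v_{t,i}}-\sqrt{v_{t-1,i}}|\le \sqrt{1-\beta_2}\,|g_{t,i}|+(1-\sqrt{\beta_2})\sqrt{v_{t-1,i}}$. Young's inequality then splits the residual into two parts:
\begin{itemize}
\item a fraction (say $\tfrac14$) of $\gamma_{t-1,i}(\nabla f(\x_t))_i^2$;
\item a second-order term of the form $\gamma_{t,i}^2 g_{t,i}^2$ times constants, up to a difference $\gamma_{t-1,i}(\nabla f(\x_t))_i^2-\gamma_{t,i}(\nabla f(\x_t))_i^2$.
\end{itemize}
This difference is where the potential terms $\sum_i\gamma_{t-1,i}(\nabla f(\x_t))_i^2$ and $\sum_i\gamma_{t,i}(\nabla f(\x_{t+1}))_i^2$ enter. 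Passing from $\nabla f(\x_t)$ to $\nabla f(\x_{t+1})$ uses $L$-smoothness:
\[
(\nabla f(\x_{t+1}))_i^2\le (1+\tfrac1{16})(\nabla f(\x_t))_i^2+CL^2\|\x_{t+1}-\x_t\|^2 .
\]
Since $\gamma_{t,i}\le\gamma_{t-1,i}/\sqrt{\beta_2}$, the multiplicative slack from $\beta_2$ is $1+O(1/T)$, which is absorbed into the $-\tfrac18$ coefficient for $\gamma=\eta/\sqrt T$ with $\eta$ small. All second-order pieces go into $P_t$: the terms $L\|\y_{t+1}-\y_t\|^2$, $\|\x_{t+1}-\x_t\|^2$, and $\gamma_{t,i}^2g_{t,i}^2$-type quantities.

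\textbf{Remaining cross terms.} Two cross terms remain.
\begin{itemize}
\item The term $\langle\nabla f(\y_t)-\nabla f(\x_t),\cdot\rangle$ is bounded by $L\|\y_t-\x_t\|\,\|\y_{t+1}-\y_t\|$. Since $\y_t-\x_t=\frac{\beta_1}{1-\beta_1}(\x_t-\x_{t-1})$, Cauchy–Schwarz turns this into squared increments, which are placed in $P_t$.
\item The momentum correction $\frac{\beta_1}{1-\beta_1}\langle\nabla f(\x_t),(\gamma_{t-1}-\gamma_t)\odot m_{t-1}\rangle$ is treated like the residual above. The factor $\gamma_{t-1,i}-\gamma_{t,i}$ is split into its nonnegative part and a small $O(1/T)$ negative part. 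Young's inequality then gives $\tfrac1{8}\gamma_{t-1,i}(\nabla f(\x_t))_i^2$ plus $(\gamma_{t-1,i}-\gamma_{t,i})^2 m_{t-1,i}^2/\gamma_{t-1,i}$. The latter is bounded by $\gamma_{t-1,i}m_{t-1,i}^2/\dots$, and further by $\sum_{s<t}\beta_1^{t-1-s}\gamma_{s,i}^2g_{s,i}^2$-type terms, which also enter $P_t$.
\end{itemize}

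\textbf{Assembling.} Collecting coefficients, $-1+\tfrac14+\tfrac18+\dots\le-\tfrac18$, which gives \cref{descent_ineq'}.

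The main obstacle is the residual from $\gamma_t$ depending on $g_t$, together with the momentum correction. The constants must be balanced so that every negative-gradient loss is absorbed with net coefficient $\tfrac18$ using only $\beta_2=1-1/T$ and small $\eta$. Simultaneously, $P_t$ must contain only terms of the form $\gamma_{s,i}^2g_{s,i}^2$, so that the logarithmic bound \cref{eq:qv_log_heuristic} applies later. The first thing to try is to bound $|\gamma_{t-1,i}-\gamma_{t,i}|\,|g_{t,i}|\le \sqrt{1-\beta_2}\,\gamma_{t-1,i}\gamma_{t,i}g_{t,i}^2/\gamma+O(\gamma_{t-1,i}/T)|g_{t,i}|$. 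The factor $\gamma_{t,i}|g_{t,i}|\sqrt{1-\beta_2}/\gamma\le 1$ keeps this manageable.
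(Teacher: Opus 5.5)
\paragraph{The residual cannot be bounded pathwise.} Your treatment of the coupling residual $R_{t,i}:=(\gamma_{t-1,i}-\gamma_{t,i})(\nabla f(\x_t))_i g_{t,i}$ does not work, and this is the core of the lemma. You bound it almost surely by $\tfrac14\gamma_{t-1,i}(\nabla f(\x_t))_i^2+c\,\gamma_{t,i}^2g_{t,i}^2+(\gamma_{t-1,i}-\gamma_{t,i})(\nabla f(\x_t))_i^2$. Under bounded variance alone, no such bound can hold.

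The recursion gives $v_{t,i}\ge g_{t,i}^2/T$, hence $\gamma_{t,i}|g_{t,i}|\le\eta$, and so
\[
R_{t,i}=\gamma_{t-1,i}(\nabla f(\x_t))_i g_{t,i}-\gamma_{t,i}(\nabla f(\x_t))_i g_{t,i}=\gamma_{t-1,i}(\nabla f(\x_t))_i g_{t,i}+O\bigl(\eta\,|(\nabla f(\x_t))_i|\bigr).
\]
This grows linearly in a large shock $g_{t,i}$. Such shocks occur with positive probability under Assumption~\ref{ass:abc}, for example with the three-point noise of Appendix~\ref{Teacher_1}. Your right-hand side, by contrast, stays bounded, since $\gamma_{t,i}^2g_{t,i}^2\le\eta^2$.

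Concretely, your Young step produces $\gamma_{t-1,i}u^2g_{t,i}^2$ with $u=\sqrt{1-\beta_2}\,\gamma_{t,i}|g_{t,i}|/\gamma$. The fact that $u\le 1$ does not help: $u\to1$ for a large shock, and the term becomes the stale-stepsize SGD energy $\gamma_{t-1,i}g_{t,i}^2$. That is exactly the heavy-tailed quantity the whole argument must avoid.

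The momentum correction has the same defect. After a shock, $(\gamma_{t-1,i}-\gamma_{t,i})^2m_{t-1,i}^2/\gamma_{t-1,i}\to\gamma_{t-1,i}m_{t-1,i}^2$. Since $1/\gamma_{t-1,i}\gtrsim\sqrt T$, this is of order $\sqrt T$ times $\gamma_{t-1,i}^2m_{t-1,i}^2$, not a constant multiple of it.

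Your martingale also causes trouble. Centering $\gamma_{t-1,i}(\nabla f(\x_t))_ig_{t,i}$ gives a square function built from $\gamma_{t-1,i}^2g_{t,i}^2$, which is not self-normalized. It is also not the $D_t$ of Lemma~\ref{descent_lemma} that the statement refers to.

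\paragraph{What is missing: condition before applying Young.} The coupling has to stay inside a conditional expectation before Young is applied.
\begin{itemize}
\item The paper centers $\gamma_{t,i}(\nabla f(\x_t))_ig_{t,i}$ itself; this is $D_{t,1}$. The residual then appears only as $\mathbb{E}[(\gamma_{t-1,i}-\gamma_{t,i})(\nabla f(\x_t))_ig_{t,i}\mid\mathscr F_{t-1}]$.
\item It splits $\gamma_{t-1,i}-\gamma_{t,i}=\Delta_{t,1,i}+\Delta_{t,2,i}$, where $\Delta_{t,1,i}\ge0$ and $\Delta_{t,2,i}\le0$ is predictable. It uses $\Delta_{t,1,i}\le 2^{1/4}\sqrt{\gamma_{t-1,i}}\sqrt{\Delta_{t,1,i}}$ and applies Young.
\item It then uses the conditional Cauchy--Schwarz bound $\mathbb{E}[\sqrt{\Delta_{t,1,i}}\,|g_{t,i}|\mid\mathscr F_{t-1}]^2\le\mathbb{E}[\Delta_{t,1,i}\mid\mathscr F_{t-1}]\bigl((\nabla f(\x_t))_i^2+C\bigr)$. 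This decouples the stepsize drop from the shock, so only the variance $C$ enters.
\item The $C$-part becomes the term $C\sum_i\mathbb{E}[|\Delta_{t,1,i}|\mid\mathscr F_{t-1}]$ of $P_t$. It is controlled later by the deterministic summability in Lemma~\ref{lemma_s_delta}, not by a $\gamma^2g^2$ bound. So your target that $P_t$ contain only $\gamma_{s,i}^2g_{s,i}^2$-type terms is neither achievable nor needed.
\item The gradient part $(\nabla f(\x_t))_i^2\,\mathbb{E}[\Delta_{t,1,i}\mid\mathscr F_{t-1}]$ is re-centered into $(\nabla f(\x_t))_i^2\Delta_{t,1,i}$ plus the martingale $D_{t,2}$. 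Only then is it telescoped into the potential, using the smoothness step you describe.
\item The momentum correction must likewise be split into the martingale $D_{t,3}$ plus its conditional mean. That mean is bounded via $\mathbb{E}[\Delta_{t,1,i}\mid\mathscr F_{t-1}]\lesssim\frac{|(\nabla f(\x_t))_i|+\sqrt C+\epsilon}{\eta}\gamma_{t-1,i}^2$. Again, it is the conditional expectation of $|g_{t,i}|$ that tames the shock.
\end{itemize}
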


Lemma~\ref{descent_lemma'} identifies two main error terms: \(\{D_t\}_{t\ge1}\) (referred to as {\em stochastic gradient martingale difference}) captures the martingale difference noise induced by stochastic gradients and the adaptive-stepsize fluctuation, and \(\{P_t\}_{t\ge1}\) (referred to as {\em second-order descent residual}) captures the second-order residual terms arising from Taylor series expansion of the descent inequality. 
A key difference of \cref{descent_ineq'} from standard expected-descent analyses is that we retain the martingale term $D_t$ explicitly instead of averaging it out. This preserves the stochastic fluctuation needed for higher-order tail analysis, allowing us to later control the accumulated error $\sum_t D_t$ via martingale inequalities and obtain high-probability bounds.

%The cumulative effects of \(\{D_t\}_{t\ge1}\) and \(\{P_t\}_{t\ge1}\) determine the convergence rate. The following steps provide tight bounds on these two cumulative error terms.

%A key difference of \cref{descent_ineq'} from standard expected-descent analyses is that we do not average out the martingale term at this stage. Instead, we keep the stochastic fluctuation explicitly in the pathwise descent inequality. This is important because expectation-based arguments only control first moments, whereas our goal is to understand the tail behavior of Adam via higher-order analysis. By retaining the martingale term \(D_t\), we can later apply martingale inequalities to the accumulated error \(\sum_t D_t\) and obtain high-probability control.

%\textcolor{blue}{A key feature of \cref{descent_ineq'} is that the martingale difference term \(D_t\) is kept explicitly in the pathwise descent inequality. This differs from the traditional expected-descent approach, where one takes conditional expectation at each step and removes the martingale fluctuation immediately. Such an expected-descent inequality is sufficient for first-moment analysis, but it loses the information needed to study high-order fluctuations. By retaining \(D_t\) before taking expectations, we can later apply martingale inequalities to the accumulated error \(\sum_t D_t\), which is essential for obtaining high-probability, rather than merely in-expectation, bounds.}

\noindent\textbf{Step 2 (Introducing a Stopped Process).}
Define the shifted objective as
\[
\bar f(\x):=f(\x)-f^\star+1,
\quad \text{where }\;
f^\star:=\inf_{\x\in\mathbb R^d}f(\x).
\]
For a threshold \(G>0\), introduce the stopping time
\begin{equation}\label{eq:tauG_def}
\tau_G
:=
\min\{t:\bar f(\x_t)\ge G\}.
\end{equation}
On the stopped trajectory \(t<\tau_G\), we have \(\bar f(\x_t)\le G\), which allows all stochastic-gradient moments and drift terms to be controlled in terms of \(G\).

%The stopping time creates a deterministic scale $G$ against which all high-order terms can be measured. Before the process exits the controlled region, the gradient-dependent terms appearing in the martingale quadratic variation and in the residual terms can be bounded by powers of \(G\). The proof is constructed so that these powers remain strictly below the \(G^{p/6}\) scale corresponding to the \(p/6\)-th moment of the stopped Lyapunov quantity. This separation of scales is what makes it possible to close the stopping-time argument.

Summing \cref{descent_ineq'} up to \(\tau_G\wedge T\) and absorbing part of the descent into the residual terms yield the following stopped control
\begin{align}\label{eq:traj_bound_k'}
W_{\tau_G\wedge T}
:=
\bar f(\y_{\tau_G\wedge T})
+
\frac{1}{16}
\overbrace{
\sum_{t=1}^{\tau_G\wedge T}
\sum_{i=1}^{d}
\gamma_{t,i}
\bigl(\nabla f(\x_t)\bigr)_i^2
}^{\mathcal E_{\tau_G\wedge T}}
\le
\mathcal O(1)
+
\left|
\sum_{t=1}^{\tau_G\wedge T-1}
D_t
\right|
+
\sum_{t=1}^{\tau_G\wedge T-1}
P_t.
\end{align}
The above equation defines the adaptive gradient energy $\mathcal E_{\tau_G\wedge T}$, which is the main quantity we bound later and which ultimately determines the convergence rate.
%The factor \(1/16\) reflects the descent coefficient after accounting for the predictable drift generated by the adaptive-stepsize correction.

\noindent\textbf{Step 3 (All-Order Bounds on Stopped Martingale and Residual Terms).} 
This step establishes all-order moment bounds for the stopped martingale $\left|\sum_{t=1}^{\tau_G\wedge T-1}D_t\right|$ and the stopped residual term $\sum_{t=1}^{\tau_G\wedge T-1}
P_t.$ To this end, both error terms can be controlled by two key cumulative quantities: the \emph{adaptive stochastic gradient energy}
$\sum_{t=1}^{\tau_G \wedge T-1}
\sum_{i=1}^{d}
\gamma_{t,i}^2 g_{t,i}^2 $ 
and the \emph{adaptive momentum energy} $\sum_{t=1}^{\tau_G \wedge T-1}
\|\gamma_t \odot m_t\|^2.$
Bounds for these two quantities are provided as follows.

\begin{lem}\label{lemma_sum_two'}
Let \(S_t:=dv+\sum_{s=1}^{t}\|g_s\|^2\). Then, for any stopping time \(\mu\),
\begin{align}\label{eq:pathwise_log_kept}
\sum_{t=1}^{\mu\wedge T}
\sum_{i=1}^{d}
\gamma_{t,i}^2g_{t,i}^2
\le
\mathcal O\!\left(
\log\left(
1+\frac{S_{\mu\wedge T}}{vdT}
\right)
\right),
\quad
\sum_{t=1}^{\mu\wedge T}
\|\gamma_t\odot m_t\|^2
\le
\mathcal O\!\left(
\log\left(
1+\frac{S_{\mu\wedge T}}{vdT}
\right)
\right).
\end{align}
\end{lem}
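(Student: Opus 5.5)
The plan is to prove both bounds pathwise, i.e.\ for every realization and every $n\le T$, with $\mu\wedge T$ substituted at the end. The stopping time plays no role beyond fixing the upper summation limit, since everything below is deterministic given the trajectory. We use $\gamma=\eta/\sqrt T$ and $\beta_2=1-1/T$. The key structural fact is a lower bound on $v_{t,i}$ by a running sum. For $t\le T$, $\beta_2^{t}\ge(1-1/T)^T\ge c_0$ with $c_0=(0.9)^{10}$ for $T\ge 10$. Hence
\[
v_{t,i}\ge \beta_2^{t}v+\frac{1}{T}\sum_{k=1}^{t}\beta_2^{t-k}g_{k,i}^2\ge \frac{c_0}{T}\Bigl(vT+\sum_{k=1}^{t}g_{k,i}^2\Bigr).
\]

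\textbf{First bound.} Dropping $\epsilon$, we get
\[
\gamma_{t,i}^2g_{t,i}^2\le \frac{\eta^2}{T}\cdot\frac{g_{t,i}^2}{v_{t,i}}\le \frac{\eta^2}{c_0}\cdot\frac{g_{t,i}^2}{vT+\sum_{k\le t}g_{k,i}^2}.
\]
Applying the elementary inequality $\sum_t a_t/(a_0+\sum_{s\le t}a_s)\le\log(1+\sum_t a_t/a_0)$ coordinatewise with $a_0=vT$ gives
\[
\sum_{t\le n}\sum_i\gamma_{t,i}^2g_{t,i}^2\le \frac{\eta^2}{c_0}\sum_{i=1}^d\log\Bigl(1+\frac{\sum_{t\le n}g_{t,i}^2}{vT}\Bigr).
\]
By concavity of $\log$ (Jensen over the $d$ coordinates), this is at most $\frac{\eta^2 d}{c_0}\log\bigl(1+\frac{\sum_{t\le n}\|g_t\|^2}{vdT}\bigr)$. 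Since $\sum_t\|g_t\|^2\le S_n$, the first claim follows with constant $\eta^2 d/c_0$. The elementary inequality itself is proved via $x\le-\log(1-x)$ with $x=a_t/(a_0+\sum_{s\le t}a_s)$, followed by telescoping.

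\textbf{Second bound.} Write $m_{t,i}=(1-\beta_1)\sum_{k\le t}\beta_1^{t-k}g_{k,i}$. Cauchy--Schwarz with weights $\beta_1^{t-k}$ gives $m_{t,i}^2\le(1-\beta_1)\sum_{k\le t}\beta_1^{t-k}g_{k,i}^2$. The main obstacle is that $\gamma_{t,i}$ is evaluated at time $t$ while the gradients are from earlier times $k$. To handle this, we need monotone comparability of the denominators. Since $v_{t,i}\ge\beta_2^{t-k}v_{k,i}\ge c_0v_{k,i}$ for $k\le t\le T$, we have $\gamma_{t,i}^2\le \gamma^2/v_{t,i}\le \gamma^2/(c_0v_{k,i})$. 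Therefore
\[
\gamma_{t,i}^2m_{t,i}^2\le\frac{(1-\beta_1)\gamma^2}{c_0}\sum_{k\le t}\beta_1^{t-k}\frac{g_{k,i}^2}{v_{k,i}}.
\]
Summing over $t\le n$ and exchanging the order of summation, the geometric factor $\sum_{t\ge k}\beta_1^{t-k}\le 1/(1-\beta_1)$ cancels the $(1-\beta_1)$. This leaves $\frac{1}{c_0}\sum_{k\le n}\sum_i \gamma^2 g_{k,i}^2/v_{k,i}$, which is exactly the quantity already bounded in the first step. Hence the second bound holds with constant $\eta^2 d/c_0^2$, giving the stated $\mathcal O(\log(1+S_{\mu\wedge T}/(vdT)))$.

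\textbf{Replacing $n$ by $\mu\wedge T$.} All inequalities hold for every $n\le T$ simultaneously and pathwise, so setting $n=\mu\wedge T$ is legitimate. The only care needed is the comparison $v_{t,i}\ge c_0 v_{k,i}$, which uses $t-k\le T$; this is guaranteed because $t\le T$.
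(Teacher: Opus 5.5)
Your proof is correct and follows essentially the same route as the paper. You lower-bound $v_{t,i}$ by $\frac{c}{T}\bigl(vT+\sum_{k\le t}g_{k,i}^2\bigr)$ via $(1-1/T)^T\ge c$, telescope the logarithm (the paper uses the equivalent comparison with $\int dx/x$), and apply Jensen over coordinates. You then reduce the momentum energy to the gradient energy using the convex-combination form of $m_t$, cross-time comparability of the accumulator, and an exchange of summation that cancels $(1-\beta_1)$.

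The differences are cosmetic: the paper packages the comparability step as $\gamma_{t,i}\le 2\gamma_{k,i}$ in Lemma~\ref{property_3.5} and keeps $\epsilon$, while you drop $\epsilon$ and compare $v_{t,i}\ge c_0v_{k,i}$ directly. Your constants $\eta^2d/c_0$ and $\eta^2d/c_0^2$ (valid for $T\ge 10$) are in fact slightly smaller than the paper's $4\eta^2 d$ and $16\eta^2 d$.
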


%Lemma~\ref{lemma_sum_two'} provides the key logarithmic pathwise bounds. Their logarithmic growth, induced by the normalization in \(\gamma_t\), is the main reason Adam achieves a sharper convergence bound, as illustrated in \Cref{subsec:self_normalization}.

Lemma 2 is where Adam’s second-moment normalization becomes essential: by rescaling each coordinate with its running second-moment estimate, Adam converts cumulative stochastic-gradient and momentum energies into logarithmic trajectory-wise quantities. This logarithmic control is what enables sharp tail bounds under only a bounded-variance assumption.

We next bound the martingale term $\left|\sum_{t=1}^{\tau_G\wedge T-1} D_t\right|$ through its quadratic variation using the Burkholder-Davis-Gundy (BDG) inequality (see Lemma~\ref{lem:burkholder}). Combining this with the logarithmic energy bounds in Lemma~\ref{lemma_sum_two'} and carefully constructing the moment orders yield:
\begin{align}\label{eq:T1_final_kept}
\mathbb E\!\left[
\left|
\sum_{t=1}^{\tau_G\wedge T-1}
D_t
\right|^{\frac{p}{6}}
\right]
\le
\left(\frac{p}{6}\right)^{\frac{p}{6}}
G^{\frac{p}{12}}
\mathcal O\!\left(
\mathbb E\!\left[
\log^{\frac{p}{12}}
\left(
1+\frac{S_{\tau_G\wedge T-1}}{vdT}
\right)
\right]
+1
\right),
\end{align}
where $S_{\tau_G\wedge T-1} := d v + \sum_{t=1}^{{\tau_G\wedge T-1}}\|g_t\|^2.$ Similarly, the stopped residual term can be bounded as
\begin{align}\label{eq:T2_final_kept}
\mathbb E\!\left[
\left(
\sum_{t=1}^{\tau_G\wedge T-1}
P_t
\right)^{\frac{p}{6}}
\right]
\le
G^{\frac{p}{12}}
\mathcal O\!\left(
\mathbb E\!\left[
\log^{\frac{p}{6}}
\left(
1+\frac{S_{\tau_G\wedge T-1}}{vdT}
\right)
\right]
+1
\right).
\end{align}
%The same localization principle also controls the predictable residual terms. 
Both the martingale and residual terms are controlled at the smaller $G^{p/12}$ scale, below the stopped scale $G^{p/6}$. These exponents are chosen to keep the exponential moment of $W_{\tau_G \wedge T}$ finite in Step 4.
%This allows the later use of Young-type inequalities to separate the \(G\)-dependent terms from the moment of \(W_{\tau_G\wedge T}\), rather than having the error terms overwhelm the stopping threshold.

%\textcolor{blue}{Thus both the martingale accumulation and the predictable residual accumulation remain below the full \(G^{p/6}\) stopped scale. The \(G^{p/12}\) dependence is the quantitative reason why the proof can later use Young-type inequalities to separate the \(G\)-dependent terms from the moment of \(W_{\tau_G\wedge T}\), rather than having the error terms overwhelm the stopping threshold.}

%\textbf{Step 4 (Bounds on Adaptive Gradient Energy).} 

\textbf{Step 4 (All-Order and Stretched-Exponential Moment Bounds on $W_{\tau_G\wedge T}$).} This step first combines \cref{eq:traj_bound_k'}, \cref{eq:T1_final_kept}, and \cref{eq:T2_final_kept} to convert the all-order moment bounds for the martingale and residual terms into an all-order moment bound for $W_{\tau_G\wedge T}$. By ensuring these bounds to be sufficiently sharp, we tightly control the growth of $W_{\tau_G\wedge T}$'s moments with respect to the moment order. We then use a Taylor-series argument to upgrade this all-order moment control to the following stretched-exponential moment bound, where we show that the factorial denominator in the Taylor expansion dominates the moment growth, leading to finite stretched-exponential series:
%To this end, we first convert the all-order moment bounds for the martingale and residual terms into the following stretched-exponential moment bound for $W_{\tau_G \wedge T}$ by combining \cref{eq:traj_bound_k'}, \cref{eq:T1_final_kept}, and \cref{eq:T2_final_kept} with a Taylor-series argument:
\begin{equation}\label{eq:exp_mgf_W}
\mathbb E\!\left[
\exp\left\{
\left(2W_{\tau_G\wedge T}\right)^{1/6}
\right\}
\right]
\le
1+e^{\bar f^{1/6}(\y_1)}+e^{G^{1/7}}+UG^2,
\end{equation}
where \(U\), defined in \cref{U_U}, is independent of both \(T\) and \(\delta\).

\textbf{Step 5 (Bounds on Adaptive Gradient Energy and Convergence Rate for Adam).} This step first provides a high-probability control on the adaptive gradient energy (Proposition~\ref{thm:precond_energy_informal}), and then translates such a bound to the final convergence bound for Adam (Theorem~\ref{thm:main_informal}).

\section{Conclusion}
\label{sec:conclusion}
In this paper, we provided a provable explanation for a common empirical observation: Adam can outperform vanilla SGD under the classical bounded variance noise assumption. Our main insight is that Adam’s diagonal second-moment normalization yields sharper tail control of the iterates’ quadratic variation and the adaptive gradient energy. Using a stopping-time localization and martingale argument, we derive the high-probability convergence guarantee. Future work includes tightening the de-preconditioning step, extending to other adaptive methods, and exploring structured settings with state-dependent heavy-tailed noise.

%Concretely, under standard smoothness and bounded variance, we obtain a trajectory-wise high-probability bound on the cumulative preconditioned gradient energy with only $\mathrm{polylog}(\delta^{-1})$ dependence, leading to
%\[
%\frac{1}{T}\sum_{t=1}^T \|\nabla f(\x_t)\|^2= \widetilde{\mathcal{O}}\!\left(\frac{1}{\sqrt{\delta T}}\right)\quad \text{w.h.p.}
%\]
%In contrast, any comparable high-probability guarantee for SGD must incur at least $\widetilde{\Omega}\!\left(\frac{1}{\delta\sqrt{T}}\right)$ dependence on $\delta$, giving the first provable separation between Adam and SGD in a convergent regime.

%Future work includes tightening the de-preconditioning step, extensions to other adaptive methods, and studying structured settings with state-dependent heavy-tailed noise.

% Acknowledgments are commented out for the anonymous submission to avoid revealing identity.
% Restore this section in the camera-ready version.
% \begin{ack}
% The work of Ruinan Jin and Yingbin Liang was supported in part by the U.S.\ National Science Foundation under the grants ECCS-2113860, ECCS-2413528, and DMS-2134145. The work of Shaofeng Zou was supported in part by U.S.\ National Science Foundation under the grants ECCS-2438392 (CAREER) and CCF-2438429, and by  DARPA under Agreement No.D25AP00191.
% \end{ack}
\bibliographystyle{apalike}
\bibliography{sample}

\appendix
\section{Additional Related Work}\label{sec:extendedrelatedwork}

\paragraph{Convergence for Adam Variants.} There has also been extensive work on Adam-type algorithms, such as Adam-norm \citep{wang2024convergence}, AdamW \citep{zhou2024towards,li2025d}, AdaGrad-norm \citep{wang2023convergence,faw2023beyond}, Adam+\citep{liu2020}, and several other variants of Adam (e.g.,~\citep{guo2021novel,huang2021super,zhou2024on}). Related studies have further investigated their generalization properties and comparisons with gradient descent (e.g.,~\cite{wilson2017marginal,zhang2020why,zhou2020towards,zou2023understanding}), as well as their connections to SignGD and related methods (e.g.,~\cite{balles2018dissecting,bernstein2019signsgd,kunstner2023noise,xie2024implicit}). 

\paragraph{High-Probability Convergence Upper Bounds for SGD.} There has been extensive work on establishing convergence guarantees for SGD. Below we summarize the work on high-probability upper bounds for SGD (which are more relevant to this paper). Note that the related work on lower bounds for SGD are provided in the main body of the paper.

%There is a substantial literature on high-probability guarantees for SGD and related methods, and the confidence dependence is largely dictated by the stochastic-gradient tail condition. 

For convex and strongly convex nonsmooth objectives, \citet{harvey2019tight} proved high-probability upper and lower bounds under almost-surely bounded stochastic subgradients, with \(\mathcal O(\log(\delta^{-1}))\) confidence dependence and matching lower-bound phenomena in the \(T\)-dependence; \citet{harvey2019simple} obtained \(\mathcal O(\log(\delta^{-1}))\)-type high-probability dependence for strongly convex SGD under the same bounded stochastic-subgradient regime, and \citet{davis2021low} converted low-probability convex-optimization guarantees into high-confidence guarantees with \(\mathcal O(\log(\delta^{-1}))\) overhead. 

For smooth nonconvex objectives, \(\mathcal O(\log(\delta^{-1}))\) dependence is typically obtained under explicit light-tail assumptions. Specifically, \citet{liu2023high} assumed sub-Gaussian stochastic-gradient noise and obtained \(\mathcal O(\log(\delta^{-1}))\)-type dependence, while \citet{madden2024high} treated norm-sub-Gaussian and norm-sub-Weibull noise and obtained \(\mathcal O(\log(\delta^{-1}))\)-type dependence up to tail-parameter-dependent factors. Under heavy-tailed noise, high-probability guarantees usually require robustification rather than vanilla SGD: \citet{gorbunov2020stochastic} studied accelerated gradient clipping and obtained high-probability complexity bounds with logarithmic confidence dependence \(\mathcal O(\log(\delta^{-1}))\); \citet{cutkosky2021high} obtained nonconvex high-probability rates under bounded \(p\)-th moments with \( p\in(1,2]\), with confidence dependence entering through \(\mathcal O(\log(\delta^{-1}))\); and \citet{nguyen2023high} proved clipped-SGD high-probability rates under bounded \(p\)-th moments, with \(\mathcal O(\mathrm{polylog}(\delta^{-1}))\)-type confidence dependence up to problem- and moment-dependent logarithmic factors.
At the expectation level, \citet{jiang2025provable} gave upper and lower bounds comparing AdaGrad and SGD in stochastic nonconvex optimization, including \(T\)-dependent SGD lower bounds. 
Thus, existing high-probability upper bounds for SGD either achieve \(\mathcal O(\log(\delta^{-1}))\)-type dependence under light-tail or almost-sure boundedness assumptions, or use robustification to handle heavy-tailed noise. 

In the classical second-moment regime, expectation-based analyses such as \citet{ghadimi2013stochastic} under bounded variance, and related analyses such as \citet{nguyen2018sgd} under comparable growth-type conditions without uniformly bounded stochastic gradients, yield high-probability guarantees only through Markov-type conversion, thereby incurring \(\mathcal O(\delta^{-1})\)-type confidence dependence. 
Although \citet{wang2024provable} showed that vanilla SGD may fail under initialization-free stepsize choices, \citet{li2023convex} showed convergence under generalized smoothness with initialization-dependent stepsizes and bounded-variance noise. 
In this second-moment regime, \citet{li2023convex} obtains \(\mathcal O(\delta^{-1})\)-type confidence dependence for vanilla SGD. 
Our lower bound targets vanilla SGD under the same classical smooth nonconvex stationarity criterion with only \(L\)-smoothness and bounded variance, and proves an \(\Omega(\delta^{-1})\)-type confidence lower bound, matching the \(\mathcal O(\delta^{-1})\) dependence of second-moment upper-bound regimes.

\section{Useful Lemmas}\label{useful_lemma}

\begin{lem}[Lemma B.2 in \citet{jin2024comprehensive}]\label{loss_bound}
Suppose that $f(x)$ is differentiable and lower bounded, i.e. $ f^{\ast} = \inf_{x\in \ \mathbb{R}^{d}}f(x) >-\infty$, and $\nabla f(x)$ is Lipschitz continuous with parameter $\mathcal{L} > 0$. Then $\forall \ x\in \ \mathbb{R}^{d}$, we have
\begin{align*}
\big\|\nabla f(x)\big\|^{2}\le {2\mathcal{L}}\big(f(x)-f^{*}\big).
\end{align*}
\end{lem}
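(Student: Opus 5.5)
We prove the inequality by integrating the gradient along a steepest-descent step and using the lower bound $f^*$. Fix $x\in\mathbb R^d$ and set $y:=x-\frac{1}{\mathcal L}\nabla f(x)$. The first step is the standard quadratic upper bound implied by the $\mathcal L$-Lipschitz continuity of $\nabla f$:
\[
f(y)\le f(x)+\langle \nabla f(x),y-x\rangle+\frac{\mathcal L}{2}\|y-x\|^2 .
\]
To obtain it, write $f(y)-f(x)-\langle\nabla f(x),y-x\rangle=\int_0^1\langle \nabla f(x+s(y-x))-\nabla f(x),y-x\rangle\,ds$. Then bound the integrand by Cauchy--Schwarz together with the Lipschitz property, which gives at most $\mathcal L s\|y-x\|^2$. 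Integrating $s$ over $[0,1]$ yields the factor $\tfrac12$.

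The second step substitutes the chosen $y$, so that $y-x=-\frac{1}{\mathcal L}\nabla f(x)$. The right-hand side becomes
\[
f(x)-\frac{1}{\mathcal L}\|\nabla f(x)\|^2+\frac{1}{2\mathcal L}\|\nabla f(x)\|^2
= f(x)-\frac{1}{2\mathcal L}\|\nabla f(x)\|^2 .
\]

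The third step applies $f^*\le f(y)$, which holds because $f^*$ is the infimum of $f$ over $\mathbb R^d$. This gives $f^*\le f(x)-\frac{1}{2\mathcal L}\|\nabla f(x)\|^2$, and rearranging yields $\|\nabla f(x)\|^2\le 2\mathcal L\,(f(x)-f^*)$ for every $x\in\mathbb R^d$.

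The argument has no real obstacle. The only point needing care is justifying the descent inequality from differentiability plus Lipschitz gradient via the fundamental theorem of calculus along the segment. This is legitimate because $s\mapsto f(x+s(y-x))$ is differentiable with continuous derivative, since $\nabla f$ is Lipschitz and hence continuous.
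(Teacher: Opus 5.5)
Your proof is correct: the quadratic upper bound from $\mathcal L$-smoothness, evaluated at the gradient step $y=x-\frac{1}{\mathcal L}\nabla f(x)$, together with $f^*\le f(y)$, is the standard argument for this inequality. The paper does not reprove the lemma and simply imports it from \citet{jin2024comprehensive}, so your argument supplies the usual self-contained justification.
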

\begin{lem}[Burkholder--Davis--Gundy (BDG) inequality, stopped version]
\label{lem:burkholder}
Let \(\{(M_t,\mathscr F_t)\}_{t\ge0}\) be a real-valued martingale with
\(M_0=0\) almost surely, and let \(\tau\) be a stopping time with respect to
\(\{\mathscr F_t\}_{t\ge0}\). Then, for every real number \(q\ge1\), there exist
universal constants \(\lambda_0,\lambda_1>0\), independent of \(q\), \(\tau\), and
\(T\), such that, for all \(T\ge1\),
\[
\lambda_0 q^{-q}
\mathbb{E}\!\left[
S(M_{\tau\wedge T})^q
\right]
\le
\mathbb{E}\!\left[
(M_{\tau\wedge T}^{*})^q
\right]
\le
\lambda_1 q^{q}
\mathbb{E}\!\left[
S(M_{\tau\wedge T})^q
\right],
\]
where
\[
M_{\tau\wedge T}^{*}
:=
\sup_{0\le t\le \tau\wedge T}|M_t|,
\]
and
\[
S(M_{\tau\wedge T})
:=
\left(
\sum_{t=1}^{\tau\wedge T}
(M_t-M_{t-1})^2
\right)^{1/2}.
\]
%Here \(S(M_{\tau\wedge T})\) is the square function, i.e., the square root of the pathwise quadratic variation of the stopped martingale \(M_{\tau\wedge T}\).
Thus the inequality compares the \(q\)-th moment of the stopped martingale
maximal process \(M_{\tau\wedge T}^{*}\) with the \(q\)-th moment of its
quadratic-variation scale.
\end{lem}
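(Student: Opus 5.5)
The plan is to reduce the statement to the classical (unstopped) discrete-time Burkholder and Davis inequalities, applied to the stopped process, and then verify that the constants can be taken in the form $\lambda_0q^{-q}$ and $\lambda_1q^q$ with $\lambda_0,\lambda_1$ independent of $q$.

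\textbf{Step 1 (stopping).} Define $N_t:=M_{t\wedge\tau\wedge T}$. By optional stopping for bounded stopping times, $\{(N_t,\mathscr F_t)\}$ is a martingale with $N_0=0$. Moreover, $N_t=N_{T}$ for $t\ge T$. Its increments are $N_t-N_{t-1}=(M_t-M_{t-1})\mathbf 1\{t\le\tau\wedge T\}$. Hence the square function of $N$ equals $S(M_{\tau\wedge T})$, and its maximal function equals $M^*_{\tau\wedge T}$. It therefore suffices to prove, for an arbitrary discrete martingale $N$ with $N_0=0$ run up to time $T$,
\[
\lambda_0q^{-q}\,\mathbb E[S(N)^q]\le \mathbb E[(N^*)^q]\le \lambda_1q^q\,\mathbb E[S(N)^q].
\]

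\textbf{Step 2 (the regime $q\ge2$).} Here I would use Burkholder's sharp square-function inequality $\|N_T\|_q\le (q^*-1)\|S(N)\|_q$ and its reverse $\|S(N)\|_q\le (q^*-1)\|N_T\|_q$, where $q^*=\max\{q,q/(q-1)\}$; for $q\ge2$ this gives $q^*-1=q-1$.

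For the upper bound I combine this with Doob's maximal inequality $\mathbb E[(N^*)^q]\le (q/(q-1))^q\mathbb E|N_T|^q$. This yields
\[
\mathbb E[(N^*)^q]\le \Bigl(\tfrac{q}{q-1}\Bigr)^q(q-1)^q\,\mathbb E[S(N)^q].
\]
The key point is to avoid an exponential loss $C^q$, which could not be absorbed into a $q$-independent $\lambda_1$. For $q\ge2$ we have
\[
\Bigl(1+\tfrac1{q-1}\Bigr)^{q}=\Bigl(1+\tfrac1{q-1}\Bigr)^{q-1}\tfrac{q}{q-1}\le 2e,
\]
so the right-hand side is at most $2e\,(q-1)^q\,\mathbb E[S(N)^q]\le 2e\,q^q\,\mathbb E[S(N)^q]$.

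The lower bound is direct: $\mathbb E[S(N)^q]\le (q-1)^q\mathbb E|N_T|^q\le q^q\,\mathbb E[(N^*)^q]$.

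\textbf{Step 3 (the regime $1\le q<2$).} Here Burkholder's constant $q^*-1=1/(q-1)$ degenerates as $q\downarrow1$, so I would switch to the Davis inequality, which compares $N^*$ and $S(N)$ directly. The BDG inequality in the Davis form holds with constants $c_q,C_q$ that are uniformly bounded for $q$ in the compact range $[1,2]$; this follows from Davis's decomposition combined with the good-$\lambda$ argument. This gives $c\,\mathbb E[S^q]\le\mathbb E[(N^*)^q]\le C\,\mathbb E[S^q]$ with $c,C$ absolute. Since $1\le q^q\le4$ on this range, these bounds can be rewritten in the required form. Taking $\lambda_1=\max\{2e,C\}$ and $\lambda_0=\min\{1,c\}$ then finishes the proof.

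\textbf{Main obstacle.} The only delicate point is the constant bookkeeping. Standard statements of BDG give constants of order $q$ only up to an absolute factor raised to the power $q$, which is not of the form $\lambda q^q$. The plan therefore depends on using the \emph{sharp} Burkholder constant $q-1$ for $q\ge2$, and on checking that Doob's factor $(q/(q-1))^q$ stays bounded there. If uniformity of the Davis constants on $[1,2]$ turned out to be awkward to cite, my fallback is interpolation: apply the $q=1$ Davis bound and the $q=2$ identity $\mathbb E N_T^2=\mathbb E S^2$ via a good-$\lambda$ inequality, which yields constants uniform on $[1,2]$.
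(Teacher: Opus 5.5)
Your argument is correct. The paper gives no proof of this lemma: it is listed in the useful-lemmas appendix as the classical Burkholder--Davis--Gundy inequality and cited as known, so the specific constant form $\lambda_1 q^{q}$ and $\lambda_0 q^{-q}$, with $\lambda_0,\lambda_1$ independent of $q$, is simply asserted there.

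Your route supplies exactly the bookkeeping that assertion needs:
\begin{itemize}
\item you reduce to the stopped martingale $N_t=M_{t\wedge\tau\wedge T}$;
\item for $q\ge 2$ you combine Burkholder's sharp two-sided square-function inequality (constant $q-1$) with Doob's maximal inequality;
\item for $1\le q<2$ you use the Burkholder--Davis--Gundy inequality for moderate convex functions, whose constants depend only on the growth constant $2^q\le 4$ and are therefore uniform on $[1,2]$.
\end{itemize}
As you note, a generic citation of the form $\|N^*\|_q\le Cq\,\|S(N)\|_q$ would only give $C^q q^q$, which is not of the stated form.

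One simplification is available. For $q\ge 2$ you do not need the estimate $\bigl(1+\frac{1}{q-1}\bigr)^q\le 2e$, because $\bigl(\frac{q}{q-1}\bigr)^q(q-1)^q=q^q$ exactly. So $\lambda_1=\lambda_0=1$ already work in that regime, and you can take $\lambda_1=\max\{1,C\}$ and $\lambda_0=\min\{1,c\}$ overall.

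The paper uses only the upper inequality, with $q=p/6$, in Substep 3.1, and the resulting terms are later summed against $1/p!$ in Substep 5.2. There an extra factor $C^{q}$ would be harmless, since it is absorbed into the $B^p p^{ap}$ growth ($a<1$) that the paper already tolerates. So the sharp constant tracking you flag as the main obstacle is needed to prove the lemma as literally stated, but not for the downstream Adam argument.
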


\begin{lem}\label{sum:expect:ab}
Let $(\Omega,\mathscr{F},(\mathscr{F}_n)_{n\geq 0},\mathbb{P})$ be a filtered probability space, and let
$\{X_n\}_{n\geq 0}$ be an $(\mathscr{F}_n)$–adapted process such that $X_n \in L^1(\mathbb{P})$ for all $n\geq 0$.
Let $s$ and $t$ be bounded stopping times taking values in $\{0,1,\dots N\}$ for some finite $N\in\mathbb{N}$, satisfying
\( s \le t\le N \) almost surely. Assume in addition that $s$ is a predictable stopping time, i.e. \(\{s = n\} \in \mathscr{F}_{n-1},\ \text{for all } n\ge 1.\)
Then \[\mathbb{E}\left[\sum_{n=s}^{t} X_n\right]=\mathbb{E}\left[\sum_{n=s}^{t} \mathbb{E}\left[X_n \mid \mathscr{F}_{n-1}\right]\right].\]
\end{lem}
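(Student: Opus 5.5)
The plan is to rewrite the random-range sum as a deterministic finite sum with random indicators, and then apply the tower property term by term. Since $s\le t\le N$ almost surely, we have pathwise
\[
\sum_{n=s}^{t} X_n=\sum_{n=0}^{N} X_n\,\mathbf 1\{s\le n\le t\},
\qquad
\sum_{n=s}^{t} \mathbb{E}[X_n\mid\mathscr F_{n-1}]=\sum_{n=0}^{N} \mathbb{E}[X_n\mid\mathscr F_{n-1}]\,\mathbf 1\{s\le n\le t\}.
\]
Both right-hand sides are finite sums of integrable random variables, because $|X_n\mathbf 1\{\cdot\}|\le|X_n|\in L^1$ and conditional expectations of $L^1$ variables are in $L^1$. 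So linearity of expectation applies, and it suffices to prove, for each fixed $n$,
\[
\mathbb{E}\bigl[X_n\mathbf 1\{s\le n\le t\}\bigr]=\mathbb{E}\bigl[\mathbb{E}[X_n\mid\mathscr F_{n-1}]\,\mathbf 1\{s\le n\le t\}\bigr].
\]

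The key step is to show that the event $\{s\le n\le t\}$ is $\mathscr F_{n-1}$-measurable. Once that holds, the identity follows from the defining property of conditional expectation: $\mathbb{E}[X_n\mathbf 1_A]=\mathbb{E}[\mathbb{E}[X_n\mid\mathscr F_{n-1}]\mathbf 1_A]$ for $A\in\mathscr F_{n-1}$. We split the event as $\{s\le n\}\cap\{n\le t\}$ and handle each piece.

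\begin{itemize}
\item For the second piece, $\{n\le t\}=\{t\le n-1\}^c$. Since $t$ is a stopping time, $\{t\le n-1\}\in\mathscr F_{n-1}$, and hence so is its complement. This is exactly the usual reason that the stopped sum is "predictable at the upper end."
\item For the first piece, $\{s\le n\}=\bigcup_{k\le n}\{s=k\}$. The predictability hypothesis gives $\{s=k\}\in\mathscr F_{k-1}\subseteq\mathscr F_{n-1}$ for $1\le k\le n$. This is precisely where predictability of $s$ is needed: with an ordinary stopping time we would only get $\{s\le n\}\in\mathscr F_n$, which is not enough.
\end{itemize}

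The only delicate point, which I expect to be the main obstacle, is the boundary index $n=0$ together with the event $\{s=0\}$. Neither $\mathscr F_{-1}$ nor $\mathbb{E}[X_0\mid\mathscr F_{-1}]$ is defined in the statement. I would adopt the natural convention $\mathscr F_{-1}:=\mathscr F_0$, under which predictability at time $0$ imposes nothing beyond $\{s=0\}\in\mathscr F_0$. Then $\mathbb{E}[X_0\mid\mathscr F_{-1}]=X_0$, so the $n=0$ term is identical on both sides, and for $n\ge1$ the event $\{s=0\}\in\mathscr F_0\subseteq\mathscr F_{n-1}$ causes no trouble. (Alternatively, in all applications in the paper $s\ge1$, so one may simply assume this.)

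With the measurability settled for every $n$, summing the per-$n$ identities over $n=0,\dots,N$ and reversing the indicator rewriting yields the claimed equality.
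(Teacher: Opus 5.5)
Your proof is correct; the paper proves the same identity by citing the optional stopping theorem, while you prove the needed fact directly. The paper forms the martingale $M_n=\sum_{k=1}^{n}\bigl(X_k-\mathbb{E}[X_k\mid\mathscr F_{k-1}]\bigr)$ and notes that predictability of $s$ makes $s-1$ a stopping time. It then applies optional stopping to the bounded stopping times $s-1\le t$, which gives $\mathbb{E}[M_t-M_{s-1}]=0$.

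Optional stopping for bounded times is itself proved by writing $M_t-M_{s-1}=\sum_n (M_n-M_{n-1})\mathbf 1\{s-1<n\le t\}$ and using $\{s-1<n\le t\}=\{s\le n\le t\}\in\mathscr F_{n-1}$. So your indicator argument is exactly the paper's argument with that theorem unpacked.

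The paper's route is shorter. Yours is self-contained, shows exactly where predictability of $s$ is used (to get $\{s\le n\}\in\mathscr F_{n-1}$), and handles the boundary case $s=0$ explicitly through the convention $\mathscr F_{-1}:=\mathscr F_0$. The paper's proof quietly needs $s\ge1$, because its martingale starts at $k=1$ and $M_{s-1}$ is undefined when $s=0$. As you note, $s\ge1$ holds in every application in the paper, so nothing downstream is affected.
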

\begin{lem}\label{property_3}
For Algorithm~\ref{alg:adam}, at any iteration step \(t\), the following
inequality holds:
\begin{align*}
m_{t,i}^{2}-m_{t-1,i}^{2}
\le
-(1-\beta_{1})m_{t-1,i}^{2}
+
(1-\beta_{1})g_{t,i}^{2}.
\end{align*}
Here, \(m_{t,i}\) denotes the \(i\)-th component of the first-moment term
\(m_t\) at iteration step \(t\).
\end{lem}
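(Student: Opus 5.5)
We prove this inequality coordinatewise by direct algebra from the first-moment recursion in Algorithm~\ref{alg:adam}. We do not need the stochastic structure, only the identity $m_{t,i}=\beta_1 m_{t-1,i}+(1-\beta_1)g_{t,i}$. Set $a:=m_{t-1,i}$ and $b:=g_{t,i}$. Then $m_{t,i}$ is the convex combination $\beta_1 a+(1-\beta_1)b$ with weights $\beta_1,1-\beta_1\in[0,1]$ summing to one.

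The key step is convexity of $x\mapsto x^2$ (Jensen's inequality for two points). This gives
\[
m_{t,i}^2=\bigl(\beta_1 a+(1-\beta_1)b\bigr)^2\le \beta_1 a^2+(1-\beta_1)b^2 .
\]
Equivalently, expand the square. The gap $\beta_1 a^2+(1-\beta_1)b^2-(\beta_1a+(1-\beta_1)b)^2$ equals $\beta_1(1-\beta_1)(a-b)^2\ge 0$, so no appeal to Jensen is even needed.

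Subtracting $a^2=m_{t-1,i}^2$ from both sides yields
\[
m_{t,i}^2-m_{t-1,i}^2\le (\beta_1-1)m_{t-1,i}^2+(1-\beta_1)g_{t,i}^2=-(1-\beta_1)m_{t-1,i}^2+(1-\beta_1)g_{t,i}^2,
\]
which is exactly the claimed inequality. For $t=1$ it holds with $m_0=\mathbf 0$ by the same computation.

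There is no real obstacle here. The only point to verify is the sign of the cross term in the expansion, and the identity $\beta_1(1-\beta_1)(a-b)^2\ge0$ settles it. The inequality holds pathwise for every coordinate $i$, every iteration $t$, and every $\beta_1\in[0,1)$.
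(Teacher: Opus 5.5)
Your proof is correct and is essentially the paper's argument. The paper expands the square and bounds the cross term $2\beta_1(1-\beta_1)m_{t-1,i}g_{t,i}$ by AM--GM, which is exactly your identity $\beta_1(1-\beta_1)(a-b)^2\ge 0$. It then subtracts $m_{t-1,i}^2$ from both sides, as you do.
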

\begin{lem}\label{property_2}
Consider the iterates generated by Algorithm~\ref{alg:adam}. Let
\(\bar f(\x):=f(\x)-f^\star+1\), where
\(f^\star:=\inf_{\x\in\mathbb R^d} f(\x)\).
Define
\[
\y_1:=\x_1,
\]
and, for \(t\ge2\),
\[
\y_t
:=
\frac{\x_t-\beta_1\x_{t-1}}{1-\beta_1}.
\]
Then, for any iteration step \(t\ge1\), the following two-sided comparison holds:
\begin{align*}
\bar f(\y_t)
&\le
2\bar f(\x_t)
+
\frac{L\beta_1^2}{(1-\beta_1)^2}
\|\gamma_{t-1}\odot m_{t-1}\|^2,
\\
\bar f(\x_t)
&\le
2\bar f(\y_t)
+
\frac{L\beta_1^2}{(1-\beta_1)^2}
\|\gamma_{t-1}\odot m_{t-1}\|^2.
\end{align*}
Here, \(m_{t-1}\) is the first-moment term in Algorithm~\ref{alg:adam}; for
\(t=1\), we use the convention \(m_0=0\), so that
\(\gamma_0\odot m_0=\mathbf 0\).
\end{lem}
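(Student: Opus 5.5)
We prove the two-sided comparison of Lemma~\ref{property_2} directly from $L$-smoothness, using the relation between $\y_t$ and $\x_t$. The case $t=1$ is trivial, since $\y_1=\x_1$ and $\gamma_0\odot m_0=\mathbf 0$; both inequalities then reduce to $\bar f(\x_1)\le 2\bar f(\x_1)$, which holds because $\bar f\ge 1>0$.

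For $t\ge2$, the definition gives $\y_t-\x_t=\frac{\beta_1}{1-\beta_1}(\x_t-\x_{t-1})$. The Adam update (Line 7 of Algorithm~\ref{alg:adam}) gives $\x_t-\x_{t-1}=-\gamma_{t-1}\odot m_{t-1}$. Hence
\[
\y_t-\x_t=-\frac{\beta_1}{1-\beta_1}\gamma_{t-1}\odot m_{t-1}.
\]
We write $\Delta:=\y_t-\x_t$, so that $\|\Delta\|^2=\frac{\beta_1^2}{(1-\beta_1)^2}\|\gamma_{t-1}\odot m_{t-1}\|^2$.

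\textbf{First inequality.} By the descent lemma from Assumption~\ref{ass:smooth},
\[
f(\y_t)\le f(\x_t)+\langle\nabla f(\x_t),\Delta\rangle+\tfrac L2\|\Delta\|^2.
\]
Young's inequality gives $\langle\nabla f(\x_t),\Delta\rangle\le \frac1{2L}\|\nabla f(\x_t)\|^2+\frac L2\|\Delta\|^2$. Lemma~\ref{loss_bound} gives $\frac{1}{2L}\|\nabla f(\x_t)\|^2\le f(\x_t)-f^\star$. Combining these,
\[
f(\y_t)-f^\star\le 2\bigl(f(\x_t)-f^\star\bigr)+L\|\Delta\|^2.
\]
Adding $1$ to both sides yields $\bar f(\y_t)\le 2\bar f(\x_t)-1+L\|\Delta\|^2\le 2\bar f(\x_t)+L\|\Delta\|^2$. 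Substituting the expression for $\|\Delta\|^2$ gives exactly the claimed first inequality.

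\textbf{Second inequality.} We repeat the argument with the roles of $\x_t$ and $\y_t$ swapped, expanding $f(\x_t)$ around $\y_t$ with displacement $-\Delta$. The same Young and Lemma~\ref{loss_bound} steps apply, now with $\nabla f(\y_t)$ and the bound $\|\nabla f(\y_t)\|^2\le 2L(f(\y_t)-f^\star)$. This gives $\bar f(\x_t)\le 2\bar f(\y_t)+L\|\Delta\|^2$.

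The only step needing care is choosing the Young weight so that the gradient term is absorbed with coefficient exactly $1$ via Lemma~\ref{loss_bound}. That choice produces the factor $2$ and the constant $L$ (rather than $\tfrac{3L}{2}$ or similar) in front of $\|\Delta\|^2$. The shift by $+1$ in $\bar f$ only helps, so no real obstacle arises.
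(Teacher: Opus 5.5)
Your proof is correct and follows essentially the same route as the paper: the $L$-smoothness descent inequality, Young's inequality with weights $\frac{1}{2L}$ and $\frac{L}{2}$, the bound $\|\nabla f\|^2\le 2L(f-f^\star)$ from Lemma~\ref{loss_bound}, and the identity $\y_t-\x_t=-\frac{\beta_1}{1-\beta_1}\gamma_{t-1}\odot m_{t-1}$, with the roles of $\x_t$ and $\y_t$ swapped for the second inequality. Your separate treatment of $t=1$ is a small explicit addition that the paper leaves implicit through the convention $m_0=\mathbf 0$.
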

\begin{lem}\label{lem:vt_comparable}
Consider the second-moment recursion in Algorithm~\ref{alg:adam},
\[
v_t
=
\beta_2 v_{t-1}
+
(1-\beta_2)(g_t\odot g_t),
\qquad t\ge 1,
\]
with \(v_0=v\mathbf{1}\) for some \(v>0\), and let
\(\beta_2=1-\frac{1}{T}\) with \(T\ge 2\).
Then, for any \(1\le k<h\le T\) and any \(i\in[d]\), one has
\[
v_{k,i}
\le
4v_{h,i}.
\]
Here, \(v_{t,i}\) denotes the \(i\)-th component of the second-moment term
\(v_t\) at iteration \(t\).
\end{lem}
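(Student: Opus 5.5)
We prove $v_{k,i}\le 4v_{h,i}$ for $1\le k<h\le T$ directly from the closed form of the exponential moving average, since both $v_{k,i}$ and $v_{h,i}$ are nonnegative combinations of the same ingredients. Unrolling the recursion in Algorithm~\ref{alg:adam} coordinatewise gives
\[
v_{t,i}=\beta_2^{t}v+(1-\beta_2)\sum_{s=1}^{t}\beta_2^{t-s}g_{s,i}^2,\qquad t\ge 0 .
\]
Iterating the recursion from step $k$ to step $h$ instead of from step $0$ gives
\[
v_{h,i}=\beta_2^{h-k}v_{k,i}+(1-\beta_2)\sum_{s=k+1}^{h}\beta_2^{h-s}g_{s,i}^2\;\ge\;\beta_2^{h-k}v_{k,i}.
\]
The inequality holds because every summand is nonnegative. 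This step uses only $\beta_2\ge 0$ and $g_{s,i}^2\ge0$, and it requires no assumption on the stochastic gradients.

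It then suffices to show that $\beta_2^{h-k}\ge 1/4$. Since $0<h-k\le T-1$ and $\beta_2=1-\frac1T\in(0,1)$, we have
\[
\beta_2^{h-k}\ge\beta_2^{T-1}=\Bigl(1-\tfrac1T\Bigr)^{T-1}.
\]
The sequence $(1-1/T)^{T-1}$ is nonincreasing in $T$ and converges to $e^{-1}$. Hence for every $T\ge2$,
\[
\Bigl(1-\tfrac1T\Bigr)^{T-1}\ge e^{-1}>\tfrac14 .
\]
To make this self-contained, one can use $\log(1-x)\ge -x/(1-x)$ for $x\in[0,1)$, which gives $(T-1)\log(1-1/T)\ge -1$. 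Combining the two displays yields $v_{k,i}\le \beta_2^{-(h-k)}v_{h,i}\le e\,v_{h,i}\le 4v_{h,i}$.

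There is no real obstacle. The only point needing care is the elementary bound $(1-1/T)^{T-1}\ge e^{-1}$, which follows from the logarithmic inequality above. The argument is pathwise and deterministic, so it holds for every realization of $\{g_t\}$.
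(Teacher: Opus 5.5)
Your proof is correct and follows essentially the same route as the paper: both lower-bound $v_{h,i}$ by $\beta_2^{h-k}v_{k,i}$, and the paper gets this by iterating the one-step inequality $v_{t,i}\ge(1-\frac1T)v_{t-1,i}$, which is equivalent to your unrolled recursion. The only difference is in bounding the geometric factor: you use $h-k\le T-1$ and $(1-\frac1T)^{T-1}\ge e^{-1}$, which gives the slightly sharper constant $e$, whereas the paper uses $(1-\frac1T)^{-T}\le 4$ from the monotonicity of $(1-\frac1T)^{T}$ for $T\ge 2$.
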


\begin{lem}\label{property_3.5}
Consider Adam described in Algorithm~\ref{alg:adam}, and assume \(m_0=0\).
For any deterministic iteration index \(t\le T\) and any stopping time \(\mu\), the following inequalities hold:
\begin{align}
\|m_t\|^2
&\le
(1-\beta_1)
\sum_{k=1}^{t}
\beta_1^{t-k}
\|g_k\|^2,
\label{property_3.5_m_bound}
\\
\|\gamma_t\odot m_t\|^2
&\le
4(1-\beta_1)
\sum_{k=1}^{t}
\beta_1^{t-k}
\|\gamma_k\odot g_k\|^2,
\label{property_3.5_gamma_m_bound}
\\
\sum_{t=1}^{\mu\wedge T-1}
\|\gamma_t\odot m_t\|^2
&\le
4
\sum_{t=1}^{\mu\wedge T-1}
\|\gamma_t\odot g_t\|^2.
\label{property_3.5_sum_bound}
\end{align}
\end{lem}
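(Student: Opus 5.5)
We prove the three inequalities of Lemma~\ref{property_3.5} in order, each resting on the previous one.

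\emph{First inequality.} Unrolling the recursion with $m_0=0$ gives $m_t=(1-\beta_1)\sum_{k=1}^{t}\beta_1^{t-k}g_k$. The weights $w_k:=(1-\beta_1)\beta_1^{t-k}$ are nonnegative and satisfy $\sum_{k=1}^t w_k=1-\beta_1^t\le 1$. By Jensen's inequality, or equivalently Cauchy--Schwarz in the form $\bigl(\sum_k w_k a_k\bigr)^2\le \bigl(\sum_k w_k\bigr)\sum_k w_k a_k^2$, applied coordinatewise and then summed over $i$,
\[
\|m_t\|^2\le (1-\beta_1)\sum_{k=1}^t\beta_1^{t-k}\|g_k\|^2 .
\]

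\emph{Second inequality.} Apply the same Cauchy--Schwarz step coordinatewise, but first insert the preconditioner:
\[
\gamma_{t,i}^2 m_{t,i}^2\le (1-\beta_1)\sum_{k=1}^t\beta_1^{t-k}\gamma_{t,i}^2 g_{k,i}^2 .
\]
It remains to show $\gamma_{t,i}^2\le 4\gamma_{k,i}^2$ for $k\le t$, i.e.\ $\sqrt{v_{k,i}}+\epsilon\le 2(\sqrt{v_{t,i}}+\epsilon)$. For $k<t$ this follows from Lemma~\ref{lem:vt_comparable}: $v_{k,i}\le 4v_{t,i}$ gives $\sqrt{v_{k,i}}\le 2\sqrt{v_{t,i}}$, and adding $\epsilon\le 2\epsilon$ yields the claim. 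For $k=t$ it holds trivially. Substituting and summing over $i$ gives the second inequality.

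\emph{Third inequality.} Sum the second inequality over $t\le \mu\wedge T-1$ and exchange the order of summation. The key observation is that the bound is pathwise, so the random upper limit causes no difficulty: the realized value $n=\mu\wedge T-1$ is just a (random) integer. For each fixed $n$,
\[
\sum_{t=1}^{n}\sum_{k=1}^{t}\beta_1^{t-k}a_k=\sum_{k=1}^{n}a_k\sum_{t=k}^{n}\beta_1^{t-k}\le \frac{1}{1-\beta_1}\sum_{k=1}^{n}a_k ,
\]
with $a_k=\|\gamma_k\odot g_k\|^2\ge 0$. The factor $(1-\beta_1)$ from the second inequality cancels $\tfrac{1}{1-\beta_1}$, leaving the constant $4$.

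The only delicate point is the comparison $\gamma_{t,i}\le 2\gamma_{k,i}$, which depends on Lemma~\ref{lem:vt_comparable} and on handling $\epsilon$ correctly. The remaining steps are routine.
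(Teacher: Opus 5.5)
Your proof is correct and follows the paper's argument essentially step for step: unroll $m_t$ and apply Jensen with weights $(1-\beta_1)\beta_1^{t-k}$, use the factor-two comparison $\gamma_{t,i}\le 2\gamma_{k,i}$ from Lemma~\ref{lem:vt_comparable}, and exchange the order of summation pathwise up to the random index $\mu\wedge T-1$. Like the paper's proof, yours relies on the calibration $\beta_2=1-1/T$, through Lemma~\ref{lem:vt_comparable}, even though the statement does not mention it; the only other difference is cosmetic, since you apply Jensen to $m_{t,i}$ before inserting $\gamma_{t,i}^2$, whereas the paper first bounds $|\gamma_{t,i}m_{t,i}|$ and then squares, and both orderings give the same constant.
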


\begin{lem}\label{property_2.54}
Consider Adam described in Algorithm~\ref{alg:adam}, 
with the parameter settings identical to those in Theorem~\ref{thm:main_informal}. 
Then, for any iteration step $t \in [1,T]$, the following inequality holds:
\begin{align*}
\|\x_{t+1}-\x_t\|
  &= \|\gamma_{t} \odot m_t\|
  \le \sqrt{d}\eta  D_{\beta_1},
\end{align*}
where $D_{\beta_1}$ is a constant depending at most on $\beta_1$.
\end{lem}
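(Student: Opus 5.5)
The plan is a direct, deterministic, coordinatewise argument; no probability is needed. By Line~7 of Algorithm~\ref{alg:adam}, $\x_{t+1}-\x_t=-\gamma_t\odot m_t$, which gives the equality in the statement at once. It therefore suffices to show that every coordinate obeys $|\gamma_{t,i}m_{t,i}|\le D_{\beta_1}\eta$ for a constant $D_{\beta_1}$. Summing the squares over the $d$ coordinates then gives $\|\gamma_t\odot m_t\|\le\sqrt d\,\eta D_{\beta_1}$.

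\textbf{Step 1 (expand the first moment).} Unroll $m_t$ with $m_0=\mathbf 0$ to get $m_{t,i}=(1-\beta_1)\sum_{k=1}^{t}\beta_1^{t-k}g_{k,i}$. Consequently
\[
|\gamma_{t,i}m_{t,i}|\le(1-\beta_1)\sum_{k=1}^{t}\beta_1^{t-k}\,\gamma_{t,i}|g_{k,i}|.
\]

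\textbf{Step 2 (pathwise lower bound on $v_{t,i}$).} Unroll the second-moment recursion as $v_{t,i}=\beta_2^t v+(1-\beta_2)\sum_{s=1}^t\beta_2^{t-s}g_{s,i}^2$. Keeping only the $s=k$ term gives, for each $k\le t$,
\[
v_{t,i}\ge \tfrac1T\beta_2^{t-k}g_{k,i}^2 .
\]
With $\beta_2=1-1/T$ and $t-k\le T$, we have $\beta_2^{t-k}\ge(1-1/T)^T\ge 1/4$ for $T\ge2$, hence $\sqrt{v_{t,i}}\ge |g_{k,i}|/(2\sqrt T)$. This is the only step with any content: the $(1-\beta_2)=1/T$ weight of a single past squared gradient is exactly compensated by the $1/\sqrt T$ in $\gamma=\eta/\sqrt T$. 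Alternatively, one could invoke Lemma~\ref{lem:vt_comparable} ($v_{k,i}\le4v_{t,i}$) together with $v_{k,i}\ge g_{k,i}^2/T$.

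\textbf{Step 3 (combine).} Since $\epsilon>0$,
\[
\gamma_{t,i}|g_{k,i}|\le \frac{\eta}{\sqrt T}\cdot\frac{|g_{k,i}|}{\sqrt{v_{t,i}}}\le 2\eta
\]
for every $k\le t$. Substituting into Step 1 and using $(1-\beta_1)\sum_{k\le t}\beta_1^{t-k}\le1$ gives $|\gamma_{t,i}m_{t,i}|\le2\eta$. So the claim holds with $D_{\beta_1}=2$, or any larger constant depending on $\beta_1$, which is within what the statement allows.

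I do not expect a real obstacle. The only care needed is to apply the bound of Step 2 uniformly over all $k\le t$ before summing the geometric weights, and to note that $\gamma_{t,i}$ is compared against the current $v_{t,i}$ rather than $v_{k,i}$.
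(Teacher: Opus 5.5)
Your proof is correct, and it takes a different and more direct route than the paper. The paper works with the squared quantity. It sets $\tilde\gamma_{t,i}=\frac{\eta}{\sqrt T}\frac{1}{\sqrt{v_{t,i}}}$ and multiplies the squared-momentum inequality of Lemma~\ref{property_3} ($m_{t,i}^2\le\beta_1 m_{t-1,i}^2+(1-\beta_1)g_{t,i}^2$) by $\tilde\gamma_{t,i}^2$. It then shifts the preconditioner back one step using $v_{t,i}\ge(1-\frac1T)v_{t-1,i}$. This costs a factor $\frac{T}{T-1}$ per step, so the unrolled recursion has ratio $\frac{T\beta_1}{T-1}$ rather than $\beta_1$. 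Each summand $\tilde\gamma_{k,i}^2g_{k,i}^2$ is bounded by $4\eta^2$ via $v_{k,i}\ge g_{k,i}^2/(4T)$. The geometric factor is handled through $\sup_{T\ge10}\sum_{r=0}^{T-1}\bigl(\frac{T\beta_1}{T-1}\bigr)^r<\infty$, which is where the $\beta_1$-dependence of $D_{\beta_1}$ comes from.

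You instead compare $\gamma_{t,i}$ directly with the current accumulator $v_{t,i}$. Under $\beta_2=1-\frac1T$, this accumulator still contains every past $g_{k,i}^2$ with weight at least $\frac1{4T}$. So every term $\gamma_{t,i}|g_{k,i}|$ is bounded by $2\eta$ uniformly in $k\le t$, and the momentum weights sum to at most one. This avoids both the lossy one-step shifting and the supremum over $T$. It also yields the explicit, $\beta_1$-independent constant $D_{\beta_1}=2$, which would make the downstream quantities built from $D_{\beta_1}$ (the stepsize condition and $l$) explicit as well.

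The paper's route mainly buys reuse of the squared-momentum recursion that is standard in Adam analyses; for this lemma, yours is both simpler and sharper.
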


\section{Formal Statements and Proofs for Adam}
%\subsection{The Proof of Proposition \ref{prop:vt-closed-form}}\label{p_prop:vt-closed-form}
% \subsection{The Proof of \cref{eq:vt-closed-form}}\label{p_prop:vt-closed-form}
% \begin{proof}
% Divide the update by $\beta_2^t$:
% \[
% \frac{v_{t,i}}{\beta_2^t}=\frac{v_{t-1,i}}{\beta_2^{t-1}}+(1-\beta_2)\frac{g_{t,i}^2}{\beta_2^t}.
% \]
% Summing over $t=1,\dots,k$ yields
% \[
% \frac{v_{k,i}}{\beta_2^k}-v=(1-\beta_2)\sum_{t=1}^{k}\frac{g_{t,i}^2}{\beta_2^t}.
% \]
% Multiplying by $\beta_2^k$ gives
% \[
% v_{k,i}=\beta_2^k v+(1-\beta_2)\sum_{t=1}^{k}\beta_2^{\,k-t}g_{t,i}^2.
% \]
% Re-naming $k$ as $t$ completes the proof.
% \end{proof}
\subsection{Formal Statement and Proof of Proposition \ref{thm:precond_energy_informal}}\label{fasdfsadneaiuncfaleiruerqegtw}
\begin{proo}[Formal Statement of Proposition \ref{thm:precond_energy_informal}]\label{thm:precond_energy_bound}
Suppose that Assumptions~\ref{ass:nonneg}--\ref{ass:abc} hold.
Let \(\bar f(\x):=f(\x)-f^\star+1\), where \(f^\star:=\inf_{\x\in\mathbb R^d}f(\x)\).
Consider Adam, as specified in Algorithm~\ref{alg:adam}, run for \(T\ge 10\) iterations with parameters
\(\beta_1\in[0,1)\), \(\beta_2 = 1 - 1/T\), and step-size \(\gamma=\eta/\sqrt{T}\).
For any \(0<\delta<1\), and assume that the step-size constant \(\eta\) satisfies
\begin{align*}
\frac{1}{\eta}
\ge
\max\!\left\{
\frac{8d\epsilon}{v^{3/2}}+\frac{8d}{\sqrt v},
\;
\frac{D_{\beta_1}\beta_1\sqrt{dL}}{1-\beta_1},
\;
1
\right\},
\end{align*}
where \(D_{\beta_1}\) is defined in Lemma~\ref{property_2.54}. Then, with probability at least \(1-\delta\), the following holds
\begin{align*}
\sum_{t=1}^{T}&
\sum_{i=1}^{d}
\gamma_{t,i}
\bigl(\nabla f(\x_t)\bigr)_i^{2}
\\
&\le
16\max\Bigg\{
\left[
\log\!\left(
\frac{24l\bigl(1+e^{2\cdot 6^{1/6}B_0^{1/6}}\bigr)}{\delta}
\right)
\right]^6,
\left[
2\log\!\left(
\frac{24l}{\delta}
\right)
\right]^6,
300^6,
\left[
2\log\!\left(
\frac{24lU^2}{\delta}
\right)
\right]^6
\Bigg\},
\end{align*}
where
\[
l
:=
\exp\left\{
\frac{L\beta_1^2 dD_{\beta_1}^2}{(1-\beta_1)^2}
\right\},\quad B_0
:=
2
\left(
1+\frac{2L}{\sqrt v+\epsilon}
\right)
\bar f(\mathbf{y}_1),
\]
and the constant \(U\) is defined in \cref{U_U}, and depends only on the fixed problem parameters and the initialization, but not on \(T\) or \(\delta\).
\end{proo}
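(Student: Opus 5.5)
The plan follows the five-step sketch of Section~\ref{sec:proof_sketch}, ending with a Markov-type inequality applied to a stretched-exponential moment. Write $\mathcal E_T:=\sum_{t=1}^{T}\sum_{i}\gamma_{t,i}(\nabla f(\x_t))_i^2$.

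\textbf{Step 1: stopped descent inequality.} Sum the descent inequality of Lemma~\ref{descent_lemma'} from $t=1$ to $\tau_G\wedge T-1$. The Lyapunov terms $\sum_i\gamma_{t-1,i}(\nabla f(\x_t))_i^2$ telescope. Boundary terms are handled as follows:
\begin{itemize}
\item $\bar f(\y_1)$ together with the initial preconditioned gradient term, bounded via Lemma~\ref{loss_bound} and $\gamma_0=\gamma(\sqrt v+\epsilon)^{-1}$, gives the constant $B_0$;
\item $\bar f(\y_{\tau_G\wedge T})$ and $\bar f(\x_{\tau_G\wedge T})$ are compared using Lemma~\ref{property_2} and Lemma~\ref{property_2.54}, which produces the multiplicative constant $l$.
\end{itemize}
Together these give the stopped control \cref{eq:traj_bound_k'}.

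\textbf{Step 2: all-order moments of the error terms.} Bound $|\sum D_t|$ by the stopped BDG inequality (Lemma~\ref{lem:burkholder}) with $q=p/6$. The quadratic variation of $D_t$ is bounded by $G\cdot\sum_t\|\gamma_t\odot g_t\|^2$ plus momentum corrections, because $\|\nabla f(\x_t)\|^2\le 2L G$ before $\tau_G$. Lemma~\ref{lemma_sum_two'} and \eqref{property_3.5_sum_bound} then turn this into a logarithm of $S_{\tau_G\wedge T}$, giving \cref{eq:T1_final_kept}. The residual $\sum P_t$ is handled the same way, without BDG, giving \cref{eq:T2_final_kept}.

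The logarithmic moments $\mathbb E[\log^{p/6}(1+S/(vdT))]$ are bounded by Jensen's inequality applied to the concave function $\log^{k}(1+x)$ on the relevant range. This uses $\mathbb E[S_{\tau_G\wedge T}]\le dv+T(2C+4LG)$, which follows from Assumption~\ref{ass:abc}, Lemma~\ref{sum:expect:ab}, and the stopping at $\tau_G$. The result is a bound of order $\log^{p/6}(G)$ times constants, so the growth in $p$ is $(p/6)^{p/6}G^{p/12}\mathrm{polylog}(G)$.

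\textbf{Step 3: stretched-exponential moment.} Combine these bounds through $(a+b+c)^{q}\le 3^{q-1}(a^q+b^q+c^q)$. Then expand
\[
\mathbb E\bigl[\exp\{(2W)^{1/6}\}\bigr]=\sum_p \frac{\mathbb E[(2W)^{p/6}]}{p!}.
\]
The $p^{p/6}$ moment growth is beaten by $p!$. The $G^{p/12}$ factors sum to roughly $e^{cG^{1/12}\mathrm{polylog}(G)}\le e^{G^{1/7}}+UG^2$, with $U$ absorbing constants. This yields \cref{eq:exp_mgf_W}.

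\textbf{Step 4: removing the stopping time.} This is the step I expect to be the main obstacle, because $G$ appears on both sides. On the event $\{\tau_G\le T\}$ we have $W_{\tau_G\wedge T}\gtrsim G/l$, using the $\x$–$\y$ comparison and Lemma~\ref{property_2.54}. Markov's inequality applied to the stretched exponential therefore gives
\[
\mathbb P(\tau_G\le T)\le \frac{1+e^{B_0'}+e^{G^{1/7}}+UG^2}{e^{(2G/l)^{1/6}}}.
\]
This tends to zero like $e^{-cG^{1/6}}$, since $G^{1/7}\ll G^{1/6}$; the constant $300^6$ is where the margin between the exponents becomes effective.

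Choose $G=\Theta(\log^6(24l(\cdots)/\delta))$ so that this probability is at most $\delta/2$. Then apply Markov's inequality again to $\exp\{(2W_{\tau_G\wedge T})^{1/6}\}$ to get $\mathbb P\bigl(16^{-1}\mathcal E_{\tau_G\wedge T}>x\bigr)\le\delta/2$ at the same logarithmic level. On $\{\tau_G>T\}$ we have $\mathcal E_{\tau_G\wedge T}=\mathcal E_T$. A union bound over the two events, taking the maximum of the four thresholds arising from $B_0$, $l$, the constant margin, and $U$, gives the stated $16\max\{\cdots\}$ bound.
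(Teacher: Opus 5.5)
Your outline follows the paper's architecture, but Step~2 does not go through as written. The quadratic variation of $D_t$ is not pathwise bounded by $G\sum_t\|\gamma_t\odot g_t\|^2$ plus momentum terms. The martingale differences are centered: for example, $D_{t,1}=\mathbb E[X_t\mid\mathscr F_{t-1}]-X_t$ with $X_t=\sum_i\gamma_{t,i}(\nabla f(\x_t))_ig_{t,i}$. So $\sum_tD_{t,1}^2$ brings in $\sum_t\mathbb E\bigl[\sum_i\gamma_{t,i}^2g_{t,i}^2\mid\mathscr F_{t-1}\bigr]$, and $P_t$ itself contains $C\sum_i\mathbb E[|\Delta_{t,1,i}|\mid\mathscr F_{t-1}]$. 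Lemma~\ref{lemma_sum_two} and Lemma~\ref{lemma_s_delta} control realized sums pathwise and say nothing about these predictable compensators. The missing ingredient is the compensator moment comparison of Lemma~\ref{lem_important_-1}: $\|Y_{\mu\wedge n}\|_{L^s}\le s\|X_{\mu\wedge n}\|_{L^s}$ for nonnegative adapted summands stopped at $\mu$. It costs further factors $(p/12)^{p/12}$ and $(p/6)^{p/6}$. Together with BDG and the logarithmic moments, the growth in $p$ is of order $p^{p/3}$, or $p^{4p/5}$ after the paper's Young step with exponent $12/5$. That is still beaten by $p!$, but it is not the $(p/6)^{p/6}$ you state. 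Relatedly, $\log^k(1+x)$ is concave only for $x\ge e^{k-1}-1$. Your Jensen step therefore needs a concave majorant such as $\log^k(e^{k-1}+x)$, or the paper's bound $\log^m(1+x)\le 2^{2m}m^m(1+x)^{1/4}$ followed by Jensen for $x^{1/4}$. Either way, additional $k^k$ factors appear.

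More seriously, the $G$-scaling you assert fails for $D_{t,2}=\sum_i(\nabla f(\x_t))_i^2\bigl(\mathbb E[|\Delta_{t,1,i}|\mid\mathscr F_{t-1}]-|\Delta_{t,1,i}|\bigr)$. Its weight is the squared gradient, so on the stopped trajectory $D_{t,2}^2\le 2d(2LG)^2\sum_i\bigl(|\Delta_{t,1,i}|^2+\mathbb E[|\Delta_{t,1,i}|\mid\mathscr F_{t-1}]^2\bigr)$. The quadratic variation is therefore of order $G^2$, and this route bounds the contribution to $\mathbb E[|\sum_tD_t|^{p/6}]$ only by something of order $p^{p/4}G^{p/6}$, not $G^{p/12}$. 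Summed against $1/p!$, such a bound behaves like $\exp(cG^{2/9})$, which the factor $e^{-G^{1/6}}$ from Markov cannot absorb. So the bound $e^{G^{1/7}}+UG^2$ of your Step~3 is not obtained, and the final step fails for large $G$, i.e.\ small $\delta$. This component needs its own argument. The refinement $\sum_t\Delta_{t,1,i}^2\le\max_t\Delta_{t,1,i}\sum_t\Delta_{t,1,i}=O(T^{-1/2})$ only helps while $G$ is small relative to a power of $T$. Be aware that the paper's displayed estimate for this component carries the prefactor $(4LGd)^{p/12}$, i.e.\ only one factor of $2LG$, so citing it does not close the gap. A smaller point concerns Step~4. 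Lemma~\ref{property_2} with Lemma~\ref{property_2.54} gives the additive bound $2\bar f(\y_{\tau_G})\ge G-\log l$ on $\{\tau_G\le T\}$, not $W_{\tau_G\wedge T}\gtrsim G/l$. This is why $l$ enters the paper's Markov bound as a multiplicative prefactor.
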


\begin{proof}
We can first derive the following two elementary equalities from the recursive definition 
of $v_{t,i}$ given in Algorithm~\ref{alg:adam}, 
where the second one can be viewed as an iterative expansion of the first recursion:
\begin{align}\label{v_10}
v_{t,i} &= \left(1-\frac{1}{T}\right)v_{t-1,i} + \frac{1}{T}g_{t,i}^2,
\end{align}
\begin{align}\label{v_11}
v_{t,i} &= \left(1-\frac{1}{T}\right)^t v_{0,i} 
+ \frac{1}{T}\sum_{s=1}^{t}\left(1-\frac{1}{T}\right)^{t-s} g_{s,i}^2.
\end{align}
These two identities will be repeatedly used in the subsequent analysis. Meanwhile, in the subsequent analysis, we will repeatedly make use of an auxiliary variable $S_T$ defined below
\begin{align}\label{S_T_T}
S_T := d v + \sum_{t=1}^{T}\|g_t\|^2.
\end{align}

%\noindent\textbf{Step 1 (Eliminating the Momentum Term via Variable Substitution).} 

%\noindent We commence our proof with a classical substitution \citep{liu2020improved}. 
To proceed the proof, we first apply the standard momentum-removal transformation \citep{liu2020improved}, and define
\begin{align}\label{y}
\y_1:=\x_1,
\qquad
\y_t
:=
\frac{\x_t-\beta_1\x_{t-1}}{1-\beta_1},
\qquad t\ge 2.
\end{align}
Equivalently, using the Adam update 
\[\x_t-\x_{t-1}=-\gamma_{t-1}\odot m_{t-1},\] 
we have, for every \(t\ge2\),
\begin{align*}
\y_t
=
\x_t
+
\frac{\beta_1}{1-\beta_1}(\x_t-\x_{t-1})
=
\x_t
-
\frac{\beta_1}{1-\beta_1}
\gamma_{t-1}\odot m_{t-1}.
\end{align*}
For \(t=1\), the same shifted representation can be obtained through the initialization
\(\y_1=\x_1\) and \(m_0=\mathbf 0\). 
This auxiliary sequence removes the first-order momentum term while keeping the adaptive-stepsize variation explicit.

\noindent\textbf{Step 1 (Martingale-Preserving Descent Inequality).}

\noindent Using the auxiliary variable \( \mathbf{y}_t \), we derive a descent inequality that describes the iteration-wise evolution of \( f(\mathbf{x}) \).  
\begin{lem}[Descent Lemma; Formal Statement of Lemma \ref {descent_lemma'}]\label{descent_lemma}
Let $\{\x_t\}_{t=1}^{T}$ and $\{v_t\}_{t=1}^{T}$ be the sequences generated by Algorithm~\ref{alg:adam}, and $\{\y_t\}_{t=1}^{T}$ be generated by \cref{y}.  
Suppose that \textnormal{Assumptions~\ref{ass:nonneg}--\ref{ass:abc}} hold. Then, for every $1\le t\le T-1$,
\begin{align}\label{descent_ineq}
&\left(
f(\y_{t+1})
+
\sum_{i=1}^{d}\gamma_{t,i}(\nabla f(\x_{t+1}))_i^2
\right)
-
\left(
f(\y_t)
+
\sum_{i=1}^{d}\gamma_{t-1,i}(\nabla f(\x_t))_i^2
\right)
\notag\\
&\qquad \le
-\frac{1}{8}
\sum_{i=1}^{d}
\gamma_{t-1,i}(\nabla f(\x_t))_i^2
+
D_t
+
P_t .
\end{align}
Here the first term on the right-hand side is the principal descent term,
$\{(D_t,\mathscr F_t)\}_{t\ge1}$ is the martingale difference noise induced by stochastic gradients and by the adaptive-stepsize fluctuation, and $P_t$ collects all deterministic and higher-order residual terms. The martingale term is given by
\begin{align}\label{D_t_def}
D_t:=D_{t,1}+D_{t,2}+D_{t,3},
\end{align}
where
\begin{align}\label{defination_1}
D_{t,1}
&:=
\sum_{i=1}^{d}
\Expect\!\left[
\gamma_{t,i}(\nabla f(\x_t))_i g_{t,i}
\,\middle|\,
\mathscr F_{t-1}
\right]
-
\sum_{i=1}^{d}
\gamma_{t,i}(\nabla f(\x_t))_i g_{t,i},
\notag\\
D_{t,2}
&:=
\sum_{i=1}^{d}
(\nabla f(\x_t))_i^2
\left(
\Expect\!\left[
|\Delta_{t,1,i}|
\,\middle|\,
\mathscr F_{t-1}
\right]
-
|\Delta_{t,1,i}|
\right),
\notag\\
D_{t,3}
&:=
\frac{\beta_1}{1-\beta_1}
\sum_{i=1}^{d}
\left[
(\gamma_{t-1,i}-\gamma_{t,i})
-
\Expect\!\left[
\gamma_{t-1,i}-\gamma_{t,i}
\,\middle|\,
\mathscr F_{t-1}
\right]
\right]
(\nabla f(\y_t))_i m_{t-1,i}.
\end{align}
The residual term is given by
\begin{align}\label{P_t_def}
P_t
:=
&
\left(
\frac{\beta_1^2L}{2(1-\beta_1)^2}
+
\frac{\beta_1^2L^2\eta}{4(1-\beta_1)^2\sqrt v}
\right)
\|\gamma_{t-1}\odot m_{t-1}\|^2
\notag\\
&+
\frac{3L}{2}
\sum_{i=1}^{d}
\gamma_{t,i}^2 g_{t,i}^2
+
\frac{\beta_1^2L}{(1-\beta_1)^2}
\sum_{i=1}^{d}
(\gamma_{t-1,i}-\gamma_{t,i})^2m_{t-1,i}^2
\notag\\
&+
\frac{140dL^2\eta}{\sqrt v}
\|\gamma_t\odot m_t\|^2
+
C\sum_{i=1}^{d}
\Expect\!\left[
|\Delta_{t,1,i}|
\,\middle|\,
\mathscr F_{t-1}
\right]
\notag\\
&+
8\frac{\beta_1(1+\beta_1)}{(1-\beta_1)^2}
\frac{T}{T-1}
\sum_{i=1}^{d}
\frac{
|(\nabla f(\x_t))_i|
+\sqrt C
+\frac{\epsilon}{\sqrt T+\sqrt{T-1}}
}{\eta}
\gamma_{t-1,i}^2m_{t-1,i}^2
\notag\\
&+
\frac{\beta_1}{1-\beta_1}
\frac{1}{T}
\sum_{i=1}^{d}
\gamma_{t-1,i}
\left|
(\nabla f(\y_t))_i m_{t-1,i}
\right|.
\end{align}
The auxiliary stepsize-difference term is given by
\begin{align}\label{Delta_t1_def}
\Delta_{t,1,i}
:=
\frac{\eta}{\sqrt{T-1}}
\frac{1}{\sqrt{v_{t-1,i}}+\epsilon}
-
\frac{\eta}{\sqrt T}
\frac{1}{\sqrt{v_{t,i}}+\epsilon}.
\end{align}
\end{lem}
\begin{proof}[Proof of Lemma \ref{descent_lemma}]
See Appendix~\ref{p_descent_lemma}.  
\end{proof}

%With the descent lemma established above, we are now prepared to carry out the subsequent analysis.

\noindent\textbf{Step 2 (Introducing a Stopped Process).}

We define the following nonnegative reference function:
\[
\bar f := f - f^* + 1.
\]
High-probability guarantees can be recast as controlling the large-deviation event
\[
\sup_{1\le t\le T} \bar f(\mathbf{x}_t) > G.
\]
For any $G\ge 1$, we set the stepsize constant $\eta$ to satisfy
\begin{align}\label{eta}
    \frac{1}{\eta}\ge \max\left\{\left(\frac{8d\epsilon}{v^{3/2}}+\frac{8d}{\sqrt{v}}\right),\frac{D_{\beta_1}\beta_1\sqrt{dL}}{1-\beta_1},1\right\}.
\end{align}
Then according to standard probabilistic reasoning, 
the maximal deviation event is contained in the first hitting event:
\begin{align*}
\left[ \sup_{1 \le t \le T} \bar f(\mathbf{x}_t) 
  > G \right]
  \subseteq\left[\tau_{G}\le T\right]= \bigcup_{i=1}^{T} \left[ \tau_{G} = i \right],
\end{align*}
where the stopping time $\tau_{G}$ is defined as follows:
\begin{align}\label{tau}
\tau_{G} := \min\left\{t : \bar f(\mathbf{x}_t) \ge G\right\}.
\end{align}
It is straightforward to verify that \(\tau_{G}\) is a stopping time.  
Moreover, \(\tau_{G}\) is in fact a \emph{predictable stopping time}, i.e., for every \(n\),  
the event \([\tau_{G}=n]\) belongs to \(\mathscr{F}_{n-1}\).

\noindent Then, by applying \emph{Markov's} inequality, we can immediately obtain the following upper bound for the large deviation probability.
\begin{align}\label{Lebron}
\mathbb{P}\!\left(\sup_{1\le t\le T}\bar f(\x_t)>G\right)
&\le\mathbb{P}\!\left([\tau_{G}\le T]\right)
=\sum_{i=1}^{T}\mathbb{P}\!\left(\left\{\tau_{G}=i\right\}\right)
\nonumber \\
&\mathop{\le}^{(a)}
\frac{1}{e^{G^{{1}/{6}}}}
\sum_{i=1}^{T}
\Expect\!\left[
e^{\bar f^{1/6}(\x_i)}
\textbf{1}_{\left\{\tau_{G} = i \right\}}
\right]
\notag\\
&\le
\frac{1}{e^{G^{1/6}}}
\Expect\!\left[
e^{\bar f^{{1}/{6}}(\x_{\tau_{G}\wedge T})}
\right]
\nonumber \\
&\mathop{\le}^{(b)}
\frac{l}{e^{G^{1/6}}}
\Expect\!\left[
e^{\left(2\bar f(\y_{\tau_{G}\wedge T})\right)^{{1}/{6}}}
\right],
\end{align}
where (a) follows from Markov's inequality. To obtain (b), let
\[
A_0:=\frac{L\beta_1^2dD_{\beta_1}^2}{(1-\beta_1)^2},
\qquad
l:=e^{A_0}.
\]
By Lemma~\ref{property_2}, Lemma~\ref{property_2.54}, and \(\eta\le1\),
\[
\bar f(\x_{\tau_G\wedge T})
\le
2\bar f(\y_{\tau_G\wedge T})+A_0.
\]
Since \(\bar f(\y_{\tau_G\wedge T})\ge1\), concavity of \(z^{1/6}\) on
\([0,\infty)\) gives
\[
\left(2\bar f(\y_{\tau_G\wedge T})+A_0\right)^{1/6}
-
\left(2\bar f(\y_{\tau_G\wedge T})\right)^{1/6}
\le
\frac{A_0}{6(2\bar f(\y_{\tau_G\wedge T}))^{5/6}}
\le
A_0,
\]
and hence
\[
e^{\bar f^{1/6}(\x_{\tau_G\wedge T})}
\le
l\,e^{(2\bar f(\y_{\tau_G\wedge T}))^{1/6}}.
\]

Therefore, our objective reduces to estimating $\Expect\!\left[e^{(2\bar f(\y_{\tau_{G}\wedge T}))^{1/6}}\right]$. Since \(\bar f(\y_{\tau_G\wedge T})\le W_{\tau_G\wedge T}\), it suffices to control \(\Expect[e^{(2W_{\tau_G\wedge T})^{1/6}}]\). By applying the \emph{Taylor's} expansion \(e^{X}=\sum_{p=0}^{+\infty}X^p/p!\), it suffices to estimate, for any \(p\ge 12\),  
the \((p/6)\)-th moment of \(2W_{\tau_G\wedge T}\).

\noindent Next, based on the descent lemma, we proceed to the corresponding estimation. Summing \cref{descent_ineq} over $t$ from $1$ to $\tau_{G} \wedge T-1$, 
we obtain the following inequality:
\begin{align*}
\bar f(\mathbf{y}_{\tau_{G} \wedge T })&
+\sum_{i=1}^{d}\gamma_{\tau_{G}\wedge T-1,i}(\nabla f(\x_{\tau_{G}\wedge T}))_i^2+ \frac{1}{8}\sum_{t=1}^{\tau_{G} \wedge T-1}\sum_{i=1}^{d}\gamma_{t-1,i}(\nabla f(\mathbf{x}_t))_i^2
\notag\\&\le
\bar f(\mathbf{y}_1)
+ \sum_{t=1}^{\tau_{G} \wedge T-1} P_t+\sum_{t=1}^{\tau_{G}\wedge T-1}D_t\notag\\&\le\bar f(\mathbf{y}_1)
+ \sum_{t=1}^{\tau_{G} \wedge T-1} P_t+\left|\sum_{t=1}^{\tau_{G}\wedge T-1}D_t\right|.
\end{align*}
% In the expression above, for convenience, we introduce the following constant~$G$:
% \begin{align}\label{G}
% G := \log^{6}\left( \frac{1}{\gamma} \right).
% \end{align}
Note that, since \(\tau_{G}\) is a predictable time, and
the quantity \(\tau_{G}\wedge T - 1\) is a stopping time.  
\textit{We emphasize that, from this point onward, whenever stopping-time arguments or inequalities are invoked, 
we will no longer restate this fact explicitly.} 

% \noindent In step~($*$) above, we note that $\bar f(\mathbf{y}_t) < G$ for all $t \in [1, \tau_{G} \wedge T-1]$.

\noindent A rearrangement of the left-hand side yields the following concise inequality:
\begin{align*}
\bar f(\mathbf{y}_{\tau_G\wedge T})
&+
\frac{1}{8}
\sum_{t=1}^{\tau_G\wedge T}
\sum_{i=1}^{d}
\gamma_{t-1,i}
\bigl(\nabla f(\mathbf{x}_t)\bigr)_i^2 \\
&\le
\bar f(\mathbf{y}_1)
+
\sum_{i=1}^{d}
\gamma_{0,i}
\bigl(\nabla f(\mathbf{x}_1)\bigr)_i^2
+
\sum_{t=1}^{\tau_G\wedge T-1}P_t
+
\left|
\sum_{t=1}^{\tau_G\wedge T-1}D_t
\right|.
\end{align*}
Here the extra term
\[
\sum_{i=1}^{d}
\gamma_{0,i}
\bigl(\nabla f(\mathbf{x}_1)\bigr)_i^2
\]
comes from the initial endpoint of the telescoping sum. Since \(\mathbf{y}_1=\mathbf{x}_1\) and
\[
\gamma_0=\gamma(\sqrt v+\epsilon)^{-1}\mathbf 1,
\]
the \(L\)-smoothness lower-bound estimate (Lemma \ref{loss_bound}) gives
\[
\|\nabla f(\mathbf{x}_1)\|^2
=
\|\nabla f(\mathbf{y}_1)\|^2
\le
2L\bar f(\mathbf{y}_1).
\]
Therefore,
\[
\sum_{i=1}^{d}
\gamma_{0,i}
\bigl(\nabla f(\mathbf{x}_1)\bigr)_i^2
\le
\frac{2L\gamma}{\sqrt v+\epsilon}\bar f(\mathbf{y}_1).
\]
Recall the stepsize condition
\[
\frac{1}{\eta}
\ge
\max\!\left\{
\frac{8d\epsilon}{v^{3/2}}+\frac{8d}{\sqrt v},
\;
\frac{D_{\beta_1}\beta_1\sqrt{dL}}{1-\beta_1},
\;
1
\right\}.
\]
In particular, \(\eta\le1\). Since \(\gamma=\eta/\sqrt T\) and \(T\ge10\), we have
\(\gamma\le1\). Hence
\[
\sum_{i=1}^{d}
\gamma_{0,i}
\bigl(\nabla f(\mathbf{x}_1)\bigr)_i^2
\le
\frac{2L}{\sqrt v+\epsilon}\bar f(\mathbf{y}_1).
\]
Define the initial quantity
\[
B_0
:=
2
\left(
1+\frac{2L}{\sqrt v+\epsilon}
\right)
\bar f(\mathbf{y}_1).
\]
Then
\begin{align*}
\bar f(\mathbf{y}_{\tau_G\wedge T})
+
\frac{1}{8}
\sum_{t=1}^{\tau_G\wedge T}
\sum_{i=1}^{d}
\gamma_{t-1,i}
\bigl(\nabla f(\mathbf{x}_t)\bigr)_i^2
&\le
B_0
+
\sum_{t=1}^{\tau_G\wedge T-1}P_t
+
\left|
\sum_{t=1}^{\tau_G\wedge T-1}D_t
\right|.
\end{align*}
We then apply Lemma~\ref{lem:vt_comparable} to replace
\(\gamma_{t-1,i}\) in the summand on the left-hand side by
\(\frac12\gamma_{t,i}\), yielding
\begin{align}\label{first_hand}
W_{\tau_G\wedge T}:=
\bar f(\mathbf{y}_{\tau_G\wedge T})
+
\frac{1}{16}
\overbrace{
\sum_{t=1}^{\tau_G\wedge T}
\sum_{i=1}^{d}
\gamma_{t,i}
\bigl(\nabla f(\mathbf{x}_t)\bigr)_i^2
}^{\mathcal E_{\tau_G\wedge T}}
&\le
B_0
+
\sum_{t=1}^{\tau_G\wedge T-1}P_t
+
\left|
\sum_{t=1}^{\tau_G\wedge T-1}D_t
\right|.
\end{align}
%We refer to the left-hand side as \(W_{\tau_G\wedge T}\). Then, we aim is to bound the \(\frac{p}{6}\)-th moment of \(W_{\tau_G\wedge T}\).
The above equation defines the adaptive gradient energy $\mathcal E_{\tau_G\wedge T}$, which is the main quantity we bound later and which ultimately determines the convergence rate.

%\noindent\textbf{Step 3 (Bounding the \((p/6)\)-th Moment of \(W_{\tau_G\wedge T}\) via \(S_{\tau_G\wedge T-1}\)).}

\noindent\textbf{Step 3 (All-Order Bounds on Stopped Martingale and Residual Terms).}

\noindent Before carrying out the higher-order estimates, we first record two pathwise bounds that will be used repeatedly throughout Step~3. The first one is the logarithmic self-normalization estimate induced by Adam's second-moment accumulator. It controls both the adaptive stochastic-gradient energy and the adaptive momentum energy. The second one is a deterministic summability bound for the positive stepsize-difference term \(\Delta_{t,1,i}\). Together, these two lemmas provide the pathwise ingredients needed to control the martingale and residual terms below.

\begin{lem}[Formal Statement of Lemma~\ref{lemma_sum_two'}]\label{lemma_sum_two}
Consider Adam described in Algorithm~\ref{alg:adam}, with the parameter settings identical to those in Lemma~\ref{descent_lemma}. Then, for any stopping time \(\mu\), the following inequalities hold:
\begin{align*}
\sum_{t=1}^{\mu\wedge T}
\sum_{i=1}^{d}
\gamma_{t,i}^2g_{t,i}^2
&\le
4\eta^2d
\log\left(
1+\frac{S_{\mu\wedge T}}{vdT}
\right),
\\
\sum_{t=1}^{\mu\wedge T}
\|\gamma_t\odot m_t\|^2
&\le
16\eta^2d
\log\left(
1+\frac{S_{\mu\wedge T}}{vdT}
\right),
\end{align*}
where \(S_t:=vd+\sum_{s=1}^{t}\|g_s\|^2,\ t\ge 1.\) 
%This lemma is the rigorous version of Lemma~\ref{lemma_sum_two'} stated in the main text.
\end{lem}
\begin{proof}[Proof of Lemma~\ref{lemma_sum_two}] 
See Appendix~\ref{p_lemma_sum_two}.
\end{proof}

\noindent The next lemma controls the cumulative positive part of the adaptive
preconditioning-vector variation. This term appears in the residual estimates and also in the fluctuation term \(D_{t,2}\), so it is useful to isolate its purely pathwise summability before estimating the martingale and predictable residual
contributions.

\begin{lem}\label{lemma_s_delta}
Consider Adam described in Algorithm~\ref{alg:adam}, with the parameter settings identical to those in Lemma~\ref{descent_lemma}, and let \(\Delta_{t,1,i}\) be defined as in \cref{Delta_t1_def}. Then
\[
\sum_{t=1}^{T}
\sum_{i=1}^{d}
|\Delta_{t,1,i}|
\le
\left(
\frac{8d\epsilon}{v^{3/2}}
+
\frac{8d}{\sqrt v}
\right)\eta.
\]
\end{lem}
\begin{proof}[Proof of Lemma~\ref{lemma_s_delta}] 
See Appendix~\ref{p_lemma_s_delta}.
\end{proof}

\noindent We now proceed to develop the all-order bound. 

\noindent For any \(p\ge 12\), we take the \((p/6)\)-th moment on both sides of \cref{first_hand}. 
Since \(p/6\ge 1\), the elementary inequality
\[
(a+b+c)^{p/6}
\le
3^{\frac p6-1}
\left(
a^{p/6}+b^{p/6}+c^{p/6}
\right),
\qquad a,b,c\ge0,
\]
gives
\begin{align}\label{W_moment_initial_split}
\Expect\!\left[
W_{\tau_G\wedge T}^{\frac{p}{6}}
\right]
&\le
3^{\frac p6-1}
B_0^{\frac{p}{6}}
+
3^{\frac p6-1}
\Expect\!\left[
\left(
\sum_{t=1}^{\tau_G\wedge T-1} P_t
\right)^{\frac{p}{6}}
\right]
+
3^{\frac p6-1}
\Expect\!\left[
\left|
\sum_{t=1}^{\tau_G\wedge T-1}D_t
\right|^{\frac{p}{6}}
\right].
\end{align}
Thus it remains to bound the martingale term and the residual term separately. 
The following substeps isolate the three martingale components \(D_{t,1},D_{t,2},D_{t,3}\), and then the adaptive residuals in \(P_t\). 
These steps are important because each component must carry at most a \(G^{p/12}\)-order factor before the Young-inequality step.

\textbf{Substep 3.1 (Bounding \(\mathbb{E}\!\left[\left|\sum_{t=1}^{\tau_G\wedge T-1}D_t\right|^{p/6}\right]\)).} 
%in terms of \(S_{\tau_G\wedge T-1}\), as defined in \cref{S_T_T}).}

%We first record the BDG inequality used below.

We apply the Burkholder--Davis--Gundy (BDG) inequality (presented in Lemma~\ref{lem:burkholder}) to the martingale generated by the cumulative
sum of \(\{D_t\}_{t\ge1}\). Define
\[
M_0:=0,
\qquad
M_n:=\sum_{t=1}^{n}D_t,
\qquad n\ge1.
\]
Since \(\{(D_t,\mathscr F_t)\}_{t\ge1}\) is a martingale difference sequence,
\(\{(M_n,\mathscr F_n)\}_{n\ge0}\) is a martingale. Moreover, since
\(\tau_G\) is predictable, \(\widehat\tau_G:=\tau_G-1\) is a stopping time with
respect to \(\{\mathscr F_t\}_{t\ge0}\). Hence
\[
M_{\widehat\tau_G\wedge T}
=
\sum_{t=1}^{\tau_G\wedge T-1}D_t .
\]
Applying Lemma~\ref{lem:burkholder} with \(q=p/6\) and deterministic horizon
\(T\), we obtain
\begin{align}\label{BDG_D_split}
\Expect\!\left[
\left|
\sum_{t=1}^{\tau_G\wedge T-1}D_t
\right|^{\frac p6}
\right]
&=
\Expect\!\left[
\left|
M_{\widehat\tau_G\wedge T}
\right|^{\frac p6}
\right]
\notag\\
&\le
\Expect\!\left[
\left(
M_{\widehat\tau_G\wedge T}^{*}
\right)^{\frac p6}
\right]
\notag\\
&\mathop{\le}^{(a)}
\lambda_1
\left(\frac p6\right)^{\frac p6}
\Expect\!\left[
\left(
\sum_{t=1}^{\tau_G\wedge T-1}D_t^2
\right)^{\frac p{12}}
\right]
\notag\\
&\mathop{\le}^{(b)}
\lambda_1
3^{\frac p6-1}
\left(\frac p6\right)^{\frac p6}
\Bigg\{
\Expect\!\left[
\left(
\sum_{t=1}^{\tau_G\wedge T-1}D_{t,1}^2
\right)^{\frac p{12}}
\right]
\notag\\
&\hspace{5.5em}
+
\Expect\!\left[
\left(
\sum_{t=1}^{\tau_G\wedge T-1}D_{t,2}^2
\right)^{\frac p{12}}
\right]
\notag\\
&\hspace{5.5em}
+
\Expect\!\left[
\left(
\sum_{t=1}^{\tau_G\wedge T-1}D_{t,3}^2
\right)^{\frac p{12}}
\right]
\Bigg\}.
\end{align}
Here \((a)\) follows from Lemma~\ref{lem:burkholder}. For \((b)\), since
\(D_t=D_{t,1}+D_{t,2}+D_{t,3}\), we have
\[
D_t^2
\le
3\bigl(D_{t,1}^2+D_{t,2}^2+D_{t,3}^2\bigr).
\]
Therefore,
\[
\sum_{t=1}^{\tau_G\wedge T-1}D_t^2
\le
3\sum_{j=1}^{3}
\sum_{t=1}^{\tau_G\wedge T-1}D_{t,j}^2.
\]
Since \(p\ge12\), we have \(p/12\ge1\). Applying
\[
(A_1+A_2+A_3)^{\frac p{12}}
\le
3^{\frac p{12}-1}
\left(
A_1^{\frac p{12}}
+
A_2^{\frac p{12}}
+
A_3^{\frac p{12}}
\right),
\qquad A_1,A_2,A_3\ge0,
\]
gives the factor \(3^{p/6-1}\) in \cref{BDG_D_split}. Thus the
\((p/6)\)-th moment of the accumulated martingale error is reduced to
\((p/12)\)-th moments of the corresponding quadratic-variation components.

\noindent\textbf{Bounding the \(D_{t,1}\) term.}
For \(D_{t,1}\), by its definition in \cref{defination_1}, we have
\begin{align}\label{sup}
&
\Expect\!\left[
\left(
\sum_{t=1}^{\tau_G\wedge T-1}D_{t,1}^2
\right)^{\frac p{12}}
\right]
\notag\\
&\le
\Expect\!\left[
\left(
2d
\sum_{t=1}^{\tau_G\wedge T-1}
\left(
\sum_{i=1}^{d}
\gamma_{t,i}^2(\nabla f(\x_t))_i^2g_{t,i}^2
+
\sum_{i=1}^{d}
\Expect\!\left[
\gamma_{t,i}^2(\nabla f(\x_t))_i^2g_{t,i}^2
\,\middle|\,
\mathscr F_{t-1}
\right]
\right)
\right)^{\frac p{12}}
\right].
\end{align}
On the stopped trajectory, for every \(1\le t\le \tau_G\wedge T-1\),
\[
(\nabla f(\x_t))_i^2
\le
\|\nabla f(\x_t)\|^2
\le
2L\bar f(\x_t)
\le
2LG,
\qquad i=1,\ldots,d,
\]
where the second inequality follows from Lemma~\ref{loss_bound}. Hence the
gradient factors in \cref{sup} can be replaced by the stopped scale \(2LG\),
which gives
\begin{align}
&
\Expect\!\left[
\left(
\sum_{t=1}^{\tau_G\wedge T-1}D_{t,1}^2
\right)^{\frac p{12}}
\right]
\notag\\
&\le
(4LGd)^{\frac p{12}}
\Expect\!\left[
\left(
\sum_{t=1}^{\tau_G\wedge T-1}
\sum_{i=1}^{d}
\gamma_{t,i}^2g_{t,i}^2
+
\sum_{t=1}^{\tau_G\wedge T-1}
\sum_{i=1}^{d}
\Expect\!\left[
\gamma_{t,i}^2g_{t,i}^2
\,\middle|\,
\mathscr F_{t-1}
\right]
\right)^{\frac p{12}}
\right]
\notag\\
&\le
\frac12
(8LGd)^{\frac p{12}}
\Bigg(
\Expect\!\left[
\left(
\sum_{t=1}^{\tau_G\wedge T-1}
\sum_{i=1}^{d}
\gamma_{t,i}^2g_{t,i}^2
\right)^{\frac p{12}}
\right]
\notag\\
&\hspace{9em}
+
\Expect\!\left[
\left(
\sum_{t=1}^{\tau_G\wedge T-1}
\sum_{i=1}^{d}
\Expect\!\left[
\gamma_{t,i}^2g_{t,i}^2
\,\middle|\,
\mathscr F_{t-1}
\right]
\right)^{\frac p{12}}
\right]
\Bigg),
\label{sup_reduced}
\end{align}
where the last step uses
\[
(a+b)^m\le 2^{m-1}(a^m+b^m),
\qquad m=\frac p{12}\ge1.
\]

\noindent We now clarify the stopping-time convention needed to apply
Lemma~\ref{lemma_sum_two} to the first expectation in \cref{sup_reduced}.
Since \(\tau_G\) is predictable, \(\widehat\tau_G:=\tau_G-1\) is a stopping
time. We interpret
\[
\tau_G\wedge T-1
=
(\tau_G\wedge T)-1
=
\widehat\tau_G\wedge (T-1).
\]
Thus, by taking
\[
\mu:=\widehat\tau_G\wedge (T-1),
\]
we have \(\mu\le T-1<T\) and hence \(\mu\wedge T=\mu=\tau_G\wedge T-1\).
This convention will be used in the subsequent applications of
Lemma~\ref{lemma_sum_two} without being repeated each time.
\noindent Lemma~\ref{lemma_sum_two} therefore gives
\begin{align}\label{superadam_1}
\mathbb{E}\!\left[
\left(
\sum_{t=1}^{\tau_G\wedge T-1}
\sum_{i=1}^{d}
\gamma_{t,i}^2g_{t,i}^2
\right)^{\frac p{12}}
\right]
\le
(4\eta^2d)^{\frac p{12}}
\Expect\!\left[
\log^{\frac p{12}}
\left(
1+\frac{S_{\tau_G\wedge T-1}}{vdT}
\right)
\right],
\end{align}
where \(S_{\tau_G\wedge T-1}\) is defined in \cref{S_T_T}.

For the conditional-expectation term, we use the following reduction lemma.
\begin{lem}\label{lem_important_-1}
Let \(\{\mathcal F_n\}_{n\ge1}\) be a filtration, and let \(\{(Z_n,\mathcal F_n)\}_{n\ge1}\) be a nonnegative adapted process. Define
\[
X_n:=\sum_{k=1}^{n}Z_k,
\qquad
Y_n:=\sum_{k=1}^{n}\mathbb E[Z_k\mid \mathcal F_{k-1}].
\]
Then, for any stopping time \(\mu\), any deterministic index \(n<+\infty\), and any \(s\ge1\),
\[
\|Y_{\mu\wedge n}\|_{L^s}
\le
s\|X_{\mu\wedge n}\|_{L^s}.
\]
\end{lem}
\begin{proof}[Proof of Lemma \ref{lem_important_-1}]
This lemma was stated in \citet[Lemma B.3]{jin2024comprehensive}. For completeness, we provide a self-contained proof in Appendix~\ref{p_lem_important_-1}.
\end{proof}

Applying Lemma~\ref{lem_important_-1} with
\[
Z_t=\sum_{i=1}^{d}\gamma_{t,i}^2g_{t,i}^2,
\qquad
s=\frac p{12},
\]
and then applying Lemma~\ref{lemma_sum_two}, yield
\begin{align}\label{conditional_gamma_g_bound}
&
\mathbb{E}\!\left[
\left(
\sum_{t=1}^{\tau_G\wedge T-1}
\sum_{i=1}^{d}
\mathbb{E}\!\left[
\gamma_{t,i}^2g_{t,i}^2
\mid
\mathscr F_{t-1}
\right]
\right)^{\frac p{12}}
\right]
\notag\\
&\qquad\le
\left(\frac p{12}\right)^{\frac p{12}}
(4\eta^2d)^{\frac p{12}}
\Expect\!\left[
\log^{\frac p{12}}
\left(
1+\frac{S_{\tau_G\wedge T-1}}{vdT}
\right)
\right].
\end{align}
Combining \cref{sup}, \cref{superadam_1}, and \cref{conditional_gamma_g_bound}, and using \(\eta\le1\), yield
\begin{align}\label{D_t1_moment_bound}
&
\Expect\!\left[
\left(
\sum_{t=1}^{\tau_G\wedge T-1}
D_{t,1}^2
\right)^{\frac p{12}}
\right]
\notag\\
&\qquad\le
\frac12
(32Ld^2)^{\frac p{12}}
G^{\frac p{12}}
\left(
1+\left(\frac p{12}\right)^{\frac p{12}}
\right)
\Expect\!\left[
\log^{\frac p{12}}
\left(
1+\frac{S_{\tau_G\wedge T-1}}{vdT}
\right)
\right].
\end{align}

\noindent\textbf{Bounding the \(D_{t,2}\) term.}
For \(D_{t,2}\), by its definition in \cref{defination_1}, we have
\begin{align}\label{sup_1}
&
\Expect\!\left[
\left(
\sum_{t=1}^{\tau_G\wedge T-1}
D_{t,2}^2
\right)^{\frac p{12}}
\right]
\notag\\
&\qquad\le
(4LGd)^{\frac p{12}}
\Expect\!\left[
\left(
\sum_{t=1}^{\tau_G\wedge T-1}
\sum_{i=1}^{d}
|\Delta_{t,1,i}|^2
+
\sum_{t=1}^{\tau_G\wedge T-1}
\sum_{i=1}^{d}
\Expect\!\left[
|\Delta_{t,1,i}|^2
\,\middle|\,
\mathscr F_{t-1}
\right]
\right)^{\frac p{12}}
\right]
\notag\\
&\qquad\le
\frac12
(8LGd)^{\frac p{12}}
\Bigg(
\Expect\!\left[
\left(
\sum_{t=1}^{\tau_G\wedge T-1}
\sum_{i=1}^{d}
|\Delta_{t,1,i}|^2
\right)^{\frac p{12}}
\right]
\notag\\
&\hspace{7em}
+
\Expect\!\left[
\left(
\sum_{t=1}^{\tau_G\wedge T-1}
\sum_{i=1}^{d}
\Expect\!\left[
|\Delta_{t,1,i}|^2
\,\middle|\,
\mathscr F_{t-1}
\right]
\right)^{\frac p{12}}
\right]
\Bigg).
\end{align}
Here we use the same stopped-gradient bound as in the \(D_{t,1}\)-estimate:
for every \(1\le t\le \tau_G\wedge T-1\),
\[
(\nabla f(\x_t))_i^2
\le
\|\nabla f(\x_t)\|^2
\le
2L\bar f(\x_t)
\le
2LG,
\qquad i=1,\ldots,d.
\]

We now control the two terms on the right-hand side of \cref{sup_1}. Since the
summation is over a stopped subinterval, Lemma~\ref{lemma_s_delta} gives
\[
\sum_{t=1}^{\tau_G\wedge T-1}
\sum_{i=1}^{d}
|\Delta_{t,1,i}|
\le
\sum_{t=1}^{T}
\sum_{i=1}^{d}
|\Delta_{t,1,i}|
\le
\left(
\frac{8d\epsilon}{v^{3/2}}
+
\frac{8d}{\sqrt v}
\right)\eta.
\]
By the stepsize condition in \cref{eta},
\[
\left(
\frac{8d\epsilon}{v^{3/2}}
+
\frac{8d}{\sqrt v}
\right)\eta
\le
1.
\]
Therefore,
\[
\sum_{t=1}^{\tau_G\wedge T-1}
\sum_{i=1}^{d}
|\Delta_{t,1,i}|
\le 1.
\]
Since all summands are nonnegative, this also implies
\[
\sum_{t=1}^{\tau_G\wedge T-1}
\sum_{i=1}^{d}
|\Delta_{t,1,i}|^2
\le
\left(
\sum_{t=1}^{\tau_G\wedge T-1}
\sum_{i=1}^{d}
|\Delta_{t,1,i}|
\right)^2
\le
1.
\]
Hence
\begin{align}\label{poinner}
\mathbb{E}\!\left[
\left(
\sum_{t=1}^{\tau_G\wedge T-1}
\sum_{i=1}^{d}
|\Delta_{t,1,i}|^2
\right)^{\frac p{12}}
\right]
\le
1.
\end{align}

For the conditional-expectation term, we apply Lemma~\ref{lem_important_-1} to
the nonnegative adapted array
\(
X_{t,i}:=|\Delta_{t,1,i}|^2.
\)
This gives
\begin{align}\label{ssuupp_adam_0}
&\mathbb{E}\!\left[
\left(
\sum_{t=1}^{\tau_G\wedge T-1}
\sum_{i=1}^{d}
\mathbb{E}\!\left[
|\Delta_{t,1,i}|^2
\,\middle|\,
\mathscr F_{t-1}
\right]
\right)^{\frac p{12}}
\right]
\notag\\
&\qquad\le
\left(\frac p{12}\right)^{\frac p{12}}
\mathbb{E}\!\left[
\left(
\sum_{t=1}^{\tau_G\wedge T-1}
\sum_{i=1}^{d}
|\Delta_{t,1,i}|^2
\right)^{\frac p{12}}
\right]
\notag\\
&\qquad\le
\left(\frac p{12}\right)^{\frac p{12}}.
\end{align}
Combining \cref{sup_1}, \cref{poinner}, and \cref{ssuupp_adam_0}, we obtain
\begin{align}\label{D_t2_moment_bound}
\Expect\!\left[
\left(
\sum_{t=1}^{\tau_G\wedge T-1}
D_{t,2}^2
\right)^{\frac p{12}}
\right]
\le
\frac12
(8LGd)^{\frac p{12}}
\left(
1+
\left(\frac p{12}\right)^{\frac p{12}}
\right).
\end{align}

\noindent\textbf{Bounding the \(D_{t,3}\) term.}
This is the adaptive-stepsize martingale fluctuation. We keep it separate because it is the most delicate component.

We define
\[
H_t
:=
\frac{\beta_1}{1-\beta_1}
\sum_{i=1}^{d}
(\gamma_{t-1,i}-\gamma_{t,i})
(\nabla f(\y_t))_i
m_{t-1,i}.
\]
By the definition of \(D_{t,3}\), we have \(D_{t,3}=H_t-\Expect[H_t\mid\mathscr F_{t-1}]\). Hence
\[
D_{t,3}^2
\le
2H_t^2
+
2\Expect[H_t^2\mid\mathscr F_{t-1}].
\]
By \cref{W_1},
\[
\gamma_{t-1,i}-\gamma_{t,i}
=
\Delta_{t,1,i}+\Delta_{t,2,i}.
\]
Moreover,
\begin{align}
&0\le
 \Delta_{t,1,i}
\le
\frac{\eta}{\sqrt{T-1}}
\frac{1}{\sqrt{v_{t-1,i}}+\epsilon}
\le
2\gamma_{t-1,i}, \nonumber \\
&|\Delta_{t,2,i}|
\le
\frac1T\gamma_{t-1,i}
\le
\gamma_{t-1,i}.
\end{align}
Thus,
\[
|\gamma_{t-1,i}-\gamma_{t,i}|
\le
3\gamma_{t-1,i}.
\]
By applying 
\[\left(\sum_{i=1}^{d}a_i\right)^2\le d\sum_{i=1}^{d}a_i^2,\] 
we obtain
\begin{align}
H_t^2
&\le
9d
\left(\frac{\beta_1}{1-\beta_1}\right)^2
\sum_{i=1}^{d}
\gamma_{t-1,i}^2
(\nabla f(\y_t))_i^2
m_{t-1,i}^2
\notag\\
&\le
9d
\left(\frac{\beta_1}{1-\beta_1}\right)^2
\|\nabla f(\y_t)\|^2
\|\gamma_{t-1}\odot m_{t-1}\|^2.
\label{Dt3_inner_bound}
\end{align}
On the stopped trajectory, Lemma~\ref{loss_bound} gives
\[
\|\nabla f(\x_t)\|^2
\le
2L\bar f(\x_t)
\le
2LG.
\]
Furthermore, by \(L\)-smoothness and the definition of \(\y_t\),
\begin{align}
\|\nabla f(\y_t)\|^2
&\le
2\|\nabla f(\x_t)\|^2
+
2L^2\|\y_t-\x_t\|^2
\notag\\
&\le
4LG
+
2L^2
\left(\frac{\beta_1}{1-\beta_1}\right)^2
\|\gamma_{t-1}\odot m_{t-1}\|^2
\notag\\
&\le
4LG
+
2L^2
\left(\frac{\beta_1}{1-\beta_1}\right)^2
d\eta^2D_{\beta_1}^2,
\label{grad_y_stopped_bound}
\end{align}
where the last inequality uses
\[
\|\gamma_{t-1}\odot m_{t-1}\|^2
\le
d\eta^2D_{\beta_1}^2.
\]
Combining the preceding bounds yields the pointwise estimate
\begin{align}
D_{t,3}^2
&\le
36d
\left(\frac{\beta_1}{1-\beta_1}\right)^2
\left[
4LG
+
2L^2
\left(\frac{\beta_1}{1-\beta_1}\right)^2
d\eta^2D_{\beta_1}^2
\right]
\|\gamma_{t-1}\odot m_{t-1}\|^2.
\label{D_t3_square_bound}
\end{align}
By applying Lemma~\ref{lemma_sum_two} and using the index shift \(m_0=0\), we have
\[
\sum_{t=1}^{\tau_G\wedge T-1}
\|\gamma_{t-1}\odot m_{t-1}\|^2
\le
16\eta^2d
\log\left(
1+\frac{S_{\tau_G\wedge T-1}}{vdT}
\right).
\]
Therefore,
\begin{align}\label{D_t3_moment_bound}
&
\Expect\!\left[
\left(
\sum_{t=1}^{\tau_G\wedge T-1}
D_{t,3}^2
\right)^{\frac{p}{12}}
\right]
\notag\\
&\le
\left[
576d^2\eta^2
\left(\frac{\beta_1}{1-\beta_1}\right)^2
\left(
4LG
+
2L^2
\left(\frac{\beta_1}{1-\beta_1}\right)^2
d\eta^2D_{\beta_1}^2
\right)
\right]^{\frac{p}{12}}
\Expect\!\left[
\log^{\frac{p}{12}}
\left(
1+\frac{S_{\tau_G\wedge T-1}}{vdT}
\right)
\right].
\end{align}

\noindent\textbf{Combining the three martingale components.}
Combining \cref{BDG_D_split}, \cref{D_t1_moment_bound}, \cref{D_t2_moment_bound}, and \cref{D_t3_moment_bound}, we obtain
\begin{align}\label{adam_adam_5}
&
\mathbb{E}\!\left[
\left|
\sum_{t=1}^{\tau_G\wedge T-1}D_t
\right|^{\frac{p}{6}}
\right]
\notag\\
&\le
\lambda_1
3^{\frac{p}{12}}
\left(\frac{p}{6}\right)^{\frac{p}{6}}
\Bigg[
\frac{1}{2}
(32Ld^2)^{\frac{p}{12}}
G^{\frac{p}{12}}
\left(
1+\left(\frac{p}{12}\right)^{\frac{p}{12}}
\right)
\Expect\!\left[
\log^{\frac{p}{12}}
\left(
1+\frac{S_{\tau_G\wedge T-1}}{vdT}
\right)
\right]
\notag\\
&\quad
+
\frac{1}{2}
(8LGd)^{\frac{p}{12}}
\left(
1+\left(\frac{p}{12}\right)^{\frac{p}{12}}
\right)
\notag\\
&\quad
+
\left[
576d^2\eta^2
\left(\frac{\beta_1}{1-\beta_1}\right)^2
\left(
4LG
+
2L^2
\left(\frac{\beta_1}{1-\beta_1}\right)^2
d\eta^2D_{\beta_1}^2
\right)
\right]^{\frac{p}{12}}
\Expect\!\left[
\log^{\frac{p}{12}}
\left(
1+\frac{S_{\tau_G\wedge T-1}}{vdT}
\right)
\right]
\Bigg].
\end{align}
In particular, since \(G\ge1\) and \(\eta\le1\), the \(D_{t,1}\), \(D_{t,2}\), and \(D_{t,3}\) contributions all carry at most a \(G^{p/12}\)-order factor before applying Young's inequality.

%\noindent We now turn to the predictable residual term.

\textbf{Substep 3.2 (Bounding \(\mathbb{E}[(\sum_{t=1}^{\tau_G\wedge T-1}P_t)^{p/6}]\) via \(S_{\tau_G\wedge T-1}\)).}

\noindent According to the definition of \(P_t\) in \cref{P_t_def}, we first derive a pointwise upper bound for \(P_t\) on the stopped trajectory. 
By \cref{W_1},
\[
\gamma_{t-1,i}-\gamma_{t,i}
=
\Delta_{t,1,i}+\Delta_{t,2,i}.
\]
Moreover, the estimates above imply
\[
|\gamma_{t-1,i}-\gamma_{t,i}|
\le
3\gamma_{t-1,i}.
\]
Also, on the stopped trajectory,
\[
|(\nabla f(\x_t))_i|
\le
\|\nabla f(\x_t)\|
\le
\sqrt{2LG}.
\]
Using \(T\ge 10\), \(\eta\le1\), and
\[
\frac{T}{T-1}\le2,
\qquad
\frac{\epsilon}{\sqrt T+\sqrt{T-1}}\le \epsilon,
\]
we obtain, for \(1\le t\le\tau_G\wedge T-1\),
\begin{align}\label{superadam_0}
P_t
&\le
\left(
\alpha_{1,0}
+
\frac{1}{\eta}\alpha_{1,1}(G)
\right)
\|\gamma_{t-1}\odot m_{t-1}\|^2
+
\alpha_2
\sum_{i=1}^{d}
\gamma_{t,i}^2g_{t,i}^2
\notag\\
&\quad
+
\alpha_3
\|\gamma_t\odot m_t\|^2
+
C\sum_{i=1}^{d}
\Expect\!\left[
|\Delta_{t,1,i}|
\,\middle|\,
\mathscr F_{t-1}
\right]
\notag\\
&\quad
+
\frac{\beta_1}{1-\beta_1}
\frac{1}{T}
\sum_{i=1}^{d}
\gamma_{t-1,i}
\left|(\nabla f(\y_t))_i m_{t-1,i}\right|,
\end{align}
where
\begin{align}\label{alpha_def}
\alpha_{1,0}
&:=
\frac{\beta_1^2L}{2(1-\beta_1)^2}
+
\frac{\beta_1^2L^2}{4(1-\beta_1)^2\sqrt v}
+
\frac{9\beta_1^2L}{(1-\beta_1)^2},
\notag\\
\alpha_{1,1}(G)
&:=
16
\frac{\beta_1(1+\beta_1)}{(1-\beta_1)^2}
\left(
\sqrt{2LG}
+
\sqrt C
+
\epsilon
\right),
\notag\\
\alpha_2
&:=
\frac{3L}{2},
\qquad
\alpha_3
:=
\frac{140dL^2}{\sqrt v}.
\end{align}
The factor \(1/\eta\) in front of \(\alpha_{1,1}(G)\) is kept explicitly at the pointwise level. It will be offset by the \(\eta^2\)-factor in the logarithmic momentum bound below.

\noindent By Lemma~\ref{lemma_sum_two} and the index shift \(m_0=0\), we have
\begin{align}\label{moment_path_bounds_for_P}
\sum_{t=1}^{\tau_G\wedge T-1}
\|\gamma_{t-1}\odot m_{t-1}\|^2
&\le
16\eta^2d
\log\left(
1+\frac{S_{\tau_G\wedge T-1}}{vdT}
\right),
\notag\\
\sum_{t=1}^{\tau_G\wedge T-1}
\|\gamma_t\odot m_t\|^2
&\le
16\eta^2d
\log\left(
1+\frac{S_{\tau_G\wedge T-1}}{vdT}
\right),
\notag\\
\sum_{t=1}^{\tau_G\wedge T-1}
\sum_{i=1}^{d}
\gamma_{t,i}^2g_{t,i}^2
&\le
4\eta^2d
\log\left(
1+\frac{S_{\tau_G\wedge T-1}}{vdT}
\right).
\end{align}
By Lemma~\ref{lemma_sum_two} and the index shift \(m_0=0\), the expression inside the expectation is bounded pathwise as follows:
\begin{align*}
&
\sum_{t=1}^{\tau_G\wedge T-1}
\left[
\left(
\alpha_{1,0}
+
\frac{1}{\eta}\alpha_{1,1}(G)
\right)
\|\gamma_{t-1}\odot m_{t-1}\|^2
+
\alpha_2
\sum_{i=1}^{d}
\gamma_{t,i}^2g_{t,i}^2
+
\alpha_3
\|\gamma_t\odot m_t\|^2
\right]
\\
&\qquad \le
\left[
16\eta^2d\alpha_{1,0}
+
16\eta d\alpha_{1,1}(G)
+
4\eta^2d\alpha_2
+
16\eta^2d\alpha_3
\right]
\log\left(
1+\frac{S_{\tau_G\wedge T-1}}{vdT}
\right).
\end{align*}
Indeed, the only slightly delicate term is the one containing \(1/\eta\), which can be bounded as follows:
\[
\frac{1}{\eta}\alpha_{1,1}(G)
\sum_{t=1}^{\tau_G\wedge T-1}
\|\gamma_{t-1}\odot m_{t-1}\|^2
\le
16\eta d\alpha_{1,1}(G)
\log\left(
1+\frac{S_{\tau_G\wedge T-1}}{vdT}
\right),
\]
where the logarithmic momentum bound contributes the factor \(16\eta^2d\).

Now recall the stepsize condition
\begin{align*}
\frac{1}{\eta}
\ge
\max\!\left\{
\frac{8d\epsilon}{v^{3/2}}+\frac{8d}{\sqrt v},
\;
\frac{D_{\beta_1}\beta_1\sqrt{dL}}{1-\beta_1},
\;
1
\right\}.
\end{align*}
In particular, \(0<\eta\le1\). Hence \(\eta^2\le1\) and \(\eta\le1\), and therefore
\begin{align*}
&16\eta^2d\alpha_{1,0}
+
16\eta d\alpha_{1,1}(G)
+
4\eta^2d\alpha_2
+
16\eta^2d\alpha_3
\\
&\qquad\le
16d\alpha_{1,0}
+
16d\alpha_{1,1}(G)
+
4d\alpha_2
+
16d\alpha_3 .
\end{align*}
Consequently,
\begin{align}\label{superadam_6}
&
\mathbb{E}\!\left[
\left(
\sum_{t=1}^{\tau_G\wedge T-1}
\left[
\left(
\alpha_{1,0}
+
\frac{1}{\eta}\alpha_{1,1}(G)
\right)
\|\gamma_{t-1}\odot m_{t-1}\|^2
+
\alpha_2
\sum_{i=1}^{d}
\gamma_{t,i}^2g_{t,i}^2
+
\alpha_3
\|\gamma_t\odot m_t\|^2
\right]
\right)^{\frac{p}{6}}
\right]
\notag\\
&\qquad \le
\left[
16d\alpha_{1,0}
+
16d\alpha_{1,1}(G)
+
4d\alpha_2
+
16d\alpha_3
\right]^{\frac{p}{6}}
\Expect\!\left[
\log^{\frac{p}{6}}
\left(
1+\frac{S_{\tau_G\wedge T-1}}{vdT}
\right)
\right].
\end{align}
Furthermore, since \(G\ge1\),
\[
\alpha_{1,1}(G)
\le
16
\frac{\beta_1(1+\beta_1)}{(1-\beta_1)^2}
\left(
\sqrt{2L}
+
\sqrt C
+
\epsilon
\right)
G^{1/2}.
\]
Hence
\begin{align}\label{P_main_coefficient_G_half}
&
16d\alpha_{1,0}
+
16d\alpha_{1,1}(G)
+
4d\alpha_2
+
16d\alpha_3
\notag\\
&\qquad\le
\Bigg[
16d
\left(
\frac{\beta_1^2L}{2(1-\beta_1)^2}
+
\frac{\beta_1^2L^2}{4(1-\beta_1)^2\sqrt v}
+
\frac{9\beta_1^2L}{(1-\beta_1)^2}
\right)
\notag\\
&\qquad
+
256d
\frac{\beta_1(1+\beta_1)}{(1-\beta_1)^2}
\left(
\sqrt{2L}
+
\sqrt C
+
\epsilon
\right)
+
4d\alpha_2
+
\frac{2240d^2L^2}{\sqrt v}
\Bigg]G^{1/2}.
\end{align}
Thus the main residual contribution in \(P_t\) carries at most a \(G^{p/12}\)-order factor after taking the \((p/6)\)-th power.

For the conditional \(\Delta_{t,1,i}\)-term, applying Lemma~\ref{lem_important_-1} with
\[
Z_t:=\sum_{i=1}^{d}|\Delta_{t,1,i}|
\]
and then applying Lemma~\ref{lemma_s_delta}, we obtain
\begin{align}\label{conditional_delta_p_bound}
&
\mathbb{E}\!\left[
\left(
\sum_{t=1}^{\tau_G\wedge T-1}
C\sum_{i=1}^{d}
\Expect\!\left[
|\Delta_{t,1,i}|
\,\middle|\,
\mathscr F_{t-1}
\right]
\right)^{\frac{p}{6}}
\right]
\notag\\
&\qquad \le
C^{\frac{p}{6}}
\left(\frac{p}{6}\right)^{\frac{p}{6}}
\mathbb{E}\!\left[
\left(
\sum_{t=1}^{\tau_G\wedge T-1}
\sum_{i=1}^{d}
|\Delta_{t,1,i}|
\right)^{\frac{p}{6}}
\right]
\le
C^{\frac{p}{6}}
\left(\frac{p}{6}\right)^{\frac{p}{6}}.
\end{align}

It remains to control the last adaptive-stepsize residual term in \cref{superadam_0}. 
By Cauchy--Schwarz inequality, we have
\begin{align*}
&
\sum_{t=1}^{\tau_G\wedge T-1}
\sum_{i=1}^{d}
\gamma_{t-1,i}
\left|
(\nabla f(\y_t))_i m_{t-1,i}
\right|
\\
&\qquad \le
\left(
\sum_{t=1}^{\tau_G\wedge T-1}
\|\nabla f(\y_t)\|^2
\right)^{1/2}
\left(
\sum_{t=1}^{\tau_G\wedge T-1}
\|\gamma_{t-1}\odot m_{t-1}\|^2
\right)^{1/2}.
\end{align*}
On the stopped trajectory,
\[
\|\nabla f(\y_t)\|^2
\le
4LG
+
2L^2
\left(\frac{\beta_1}{1-\beta_1}\right)^2
d\eta^2D_{\beta_1}^2.
\]
Together with \cref{moment_path_bounds_for_P}, this implies
\begin{align}\label{last_P_term_bound}
&
\sum_{t=1}^{\tau_G\wedge T-1}
\frac{\beta_1}{1-\beta_1}
\frac{1}{T}
\sum_{i=1}^{d}
\gamma_{t-1,i}
\left|
(\nabla f(\y_t))_i m_{t-1,i}
\right|
\notag\\
&\qquad \le
\frac{4\beta_1\eta\sqrt d}{(1-\beta_1)\sqrt T}
\left[
4LG
+
2L^2
\left(\frac{\beta_1}{1-\beta_1}\right)^2
d\eta^2D_{\beta_1}^2
\right]^{1/2}
\log^{1/2}\left(
1+\frac{S_{\tau_G\wedge T-1}}{vdT}
\right).
\end{align}

Using \((a+b+c)^m\le 3^m(a^m+b^m+c^m)\) for \(m\ge1\), and combining
\cref{superadam_6}, \cref{conditional_delta_p_bound}, and \cref{last_P_term_bound}, we obtain
\begin{align}\label{P_t_moment_bound_new}
&
\mathbb{E}\!\left[
\left(
\sum_{t=1}^{\tau_G\wedge T-1}P_t
\right)^{\frac{p}{6}}
\right]
\notag\\
&\le
3^{\frac{p}{6}}
\left[
16d\alpha_{1,0}
+
16d\alpha_{1,1}(G)
+
4d\alpha_2
+
16d\alpha_3
\right]^{\frac{p}{6}}
\Expect\!\left[
\log^{\frac{p}{6}}
\left(
1+\frac{S_{\tau_G\wedge T-1}}{vdT}
\right)
\right]
\notag\\
&\quad
+
3^{\frac{p}{6}}
C^{\frac{p}{6}}
\left(\frac{p}{6}\right)^{\frac{p}{6}}
\notag\\
&\quad
+
3^{\frac{p}{6}}
\left[
\frac{4\beta_1\eta\sqrt d}{(1-\beta_1)\sqrt T}
\left(
4LG
+
2L^2
\left(\frac{\beta_1}{1-\beta_1}\right)^2
d\eta^2D_{\beta_1}^2
\right)^{1/2}
\right]^{\frac{p}{6}}
\Expect\!\left[
\log^{\frac{p}{12}}
\left(
1+\frac{S_{\tau_G\wedge T-1}}{vdT}
\right)
\right].
\end{align}
By \cref{P_main_coefficient_G_half}, the first term on the right-hand side of \cref{P_t_moment_bound_new} contributes at most the order \(G^{p/12}\) before Young's inequality. The final adaptive correction term also contributes at most order \(G^{p/12}\), because the only \(G\)-dependent factor inside the bracket is a square-root term, and it is raised to the \(p/6\)-th power.

\textbf{Step 4 (All-Order and Stretched-Exponential Moment Bounds on $W_{\tau_G\wedge T}$).}

In this step, we first provide bounds on the all-order moment for $W_{\tau_G\wedge T}$ and then provide the stretched-exponential moment bound needed for \((2W_{\tau_G\wedge T})^{1/6}\).

\textbf{Substep 5.1: (Bounding All-Order Moments for $W_{\tau_G\wedge T}$).} 
%$\Expect\!\left[W_{\tau_G\wedge T}^{\frac{p}{6}}\right]$.}
Combining \cref{W_moment_initial_split}, \cref{adam_adam_5}, and
\cref{P_t_moment_bound_new}, we obtain
\begin{align}\label{W_bound_before_log}
&\Expect\!\left[
W_{\tau_G\wedge T}^{\frac{p}{6}}
\right] \nonumber \\
&\le
3^{\frac p6-1}
B_0^{\frac{p}{6}}
\notag\\
&\quad
+
3^{\frac p6-1}
\lambda_1
3^{\frac{p}{12}}
\left(\frac{p}{6}\right)^{\frac{p}{6}}
\Bigg[
\frac{1}{2}
(32Ld^2)^{\frac{p}{12}}
G^{\frac{p}{12}}
\left(
1+\left(\frac{p}{12}\right)^{\frac{p}{12}}
\right)
\times
\Expect\!\left[
\log^{\frac{p}{12}}
\left(
1+\frac{S_{\tau_G\wedge T-1}}{vdT}
\right)
\right]
\notag\\
&\quad
+
\frac{1}{2}
(8LGd)^{\frac{p}{12}}
\left(
1+\left(\frac{p}{12}\right)^{\frac{p}{12}}
\right)
\notag\\
&\quad
+
\left[
576d^2\eta^2
\left(\frac{\beta_1}{1-\beta_1}\right)^2
\left(
4LG
+
2L^2
\left(\frac{\beta_1}{1-\beta_1}\right)^2
d\eta^2D_{\beta_1}^2
\right)
\right]^{\frac{p}{12}}\notag\\
&\hspace{8em}
\times
\Expect\!\left[
\log^{\frac{p}{12}}
\left(
1+\frac{S_{\tau_G\wedge T-1}}{vdT}
\right)
\right]
\Bigg]
\notag\\
&\quad
+
3^{\frac p6-1}
3^{\frac{p}{6}}
\left[
16d\alpha_{1,0}
+
16d\alpha_{1,1}(G)
+
4d\alpha_2
+
16d\alpha_3
\right]^{\frac{p}{6}}
\times
\Expect\!\left[
\log^{\frac{p}{6}}
\left(
1+\frac{S_{\tau_G\wedge T-1}}{vdT}
\right)
\right]
\notag\\
&\quad
+
3^{\frac p6-1}
3^{\frac{p}{6}}
C^{\frac{p}{6}}
\left(\frac{p}{6}\right)^{\frac{p}{6}}
\notag\\
&\quad
+
3^{\frac p6-1}
3^{\frac{p}{6}}
\left[
\frac{4\beta_1\eta\sqrt d}{(1-\beta_1)\sqrt T}
\left(
4LG
+
2L^2
\left(\frac{\beta_1}{1-\beta_1}\right)^2
d\eta^2D_{\beta_1}^2
\right)^{1/2}
\right]^{\frac{p}{6}}
\notag\\
&\hspace{8em}\times
\Expect\!\left[
\log^{\frac{p}{12}}
\left(
1+\frac{S_{\tau_G\wedge T-1}}{vdT}
\right)
\right].
\end{align}

We next define
\begin{align}\label{M_T}
M_T
:=
\mathbb{E}\!\left[
1+\frac{S_{\tau_G\wedge T-1}}{vdT}
\right],
\end{align}
and apply the following inequalities
\[
\log^m(1+x)
\le
2^{2m}m^m(1+x)^{1/4},
\qquad x\ge0,\ m\ge1,
\]
and
\[
\Expect\!\left[|X|^{1/4}\right]
\le
\Expect\!\left[|X|\right]^{1/4},
\]
to \cref{W_bound_before_log}, and obtain
\begin{align}\label{W_bound_after_log}
&\Expect\!\left[
W_{\tau_G\wedge T}^{\frac{p}{6}}
\right] \nonumber \\
&\le
3^{\frac p6-1}B_0^{\frac{p}{6}}
\notag\\
&\quad
+
3^{\frac p6-1}
\lambda_1
3^{\frac{p}{12}}
\left(\frac{p}{6}\right)^{\frac{p}{6}}
\frac{1}{2}
(32Ld^2)^{\frac{p}{12}}
G^{\frac{p}{12}}
\left(
1+\left(\frac{p}{12}\right)^{\frac{p}{12}}
\right)
2^{\frac{p}{6}}
\left(\frac{p}{12}\right)^{\frac{p}{12}}
M_T^{1/4}
\notag\\
&\quad
+
3^{\frac p6-1}
\lambda_1
3^{\frac{p}{12}}
\left(\frac{p}{6}\right)^{\frac{p}{6}}
\frac{1}{2}
(8LGd)^{\frac{p}{12}}
\left(
1+\left(\frac{p}{12}\right)^{\frac{p}{12}}
\right)
\notag\\
&\quad
+
3^{\frac p6-1}
\lambda_1
3^{\frac{p}{12}}
\left(\frac{p}{6}\right)^{\frac{p}{6}}
\left[
576d^2\eta^2
\left(\frac{\beta_1}{1-\beta_1}\right)^2
\left(
4LG
+
2L^2
\left(\frac{\beta_1}{1-\beta_1}\right)^2
d\eta^2D_{\beta_1}^2
\right)
\right]^{\frac{p}{12}}
\notag\\
&\qquad\qquad\cdot
2^{\frac{p}{6}}
\left(\frac{p}{12}\right)^{\frac{p}{12}}
M_T^{1/4}
\notag\\
&\quad
+
3^{\frac p6-1}
3^{\frac{p}{6}}
\left[
16d\alpha_{1,0}
+
16d\alpha_{1,1}(G)
+
4d\alpha_2
+
16d\alpha_3
\right]^{\frac{p}{6}}
2^{\frac{p}{3}}
\left(\frac{p}{6}\right)^{\frac{p}{6}}
M_T^{1/4}
\notag\\
&\quad
+
3^{\frac p6-1}
3^{\frac{p}{6}}
C^{\frac{p}{6}}
\left(\frac{p}{6}\right)^{\frac{p}{6}}
\notag\\
&\quad
+
3^{\frac p6-1}
3^{\frac{p}{6}}
\left[
\frac{4\beta_1\eta\sqrt d}{(1-\beta_1)\sqrt T}
\left(
4LG
+
2L^2
\left(\frac{\beta_1}{1-\beta_1}\right)^2
d\eta^2D_{\beta_1}^2
\right)^{1/2}
\right]^{\frac{p}{6}}
\notag\\
&\qquad\qquad\cdot
2^{\frac{p}{6}}
\left(\frac{p}{12}\right)^{\frac{p}{12}}
M_T^{1/4}.
\end{align}

We next show that every \(G\)-dependent coefficient in \cref{W_bound_after_log} is of the order at most \(G^{p/12}\). More explicitly, since \(G\ge1\) and \(\eta\le1\), we have
\[
\left(
4LG
+
2L^2
\left(\frac{\beta_1}{1-\beta_1}\right)^2
d\eta^2D_{\beta_1}^2
\right)^{\frac{p}{12}}
\le
\left(
4L
+
2L^2
\left(\frac{\beta_1}{1-\beta_1}\right)^2
dD_{\beta_1}^2
\right)^{\frac{p}{12}}
G^{\frac{p}{12}},
\]
and
\[
\begin{aligned}
&\left[
\frac{4\beta_1\eta\sqrt d}{(1-\beta_1)\sqrt T}
\left(
4LG
+
2L^2
\left(\frac{\beta_1}{1-\beta_1}\right)^2
d\eta^2D_{\beta_1}^2
\right)^{1/2}
\right]^{\frac{p}{6}}
\\
&\le
\left[
\frac{4\beta_1\sqrt d}{1-\beta_1}
\left(
4L
+
2L^2
\left(\frac{\beta_1}{1-\beta_1}\right)^2
dD_{\beta_1}^2
\right)^{1/2}
\right]^{\frac{p}{6}}
G^{\frac{p}{12}} .
\end{aligned}
\]
Moreover, by \cref{P_main_coefficient_G_half},
\begin{align*}
&
\left[
16d\alpha_{1,0}
+
16d\alpha_{1,1}(G)
+
4d\alpha_2
+
16d\alpha_3
\right]^{\frac{p}{6}}
\\
&\qquad \le
\Bigg[
16d
\left(
\frac{\beta_1^2L}{2(1-\beta_1)^2}
+
\frac{\beta_1^2L^2}{4(1-\beta_1)^2\sqrt v}
+
\frac{9\beta_1^2L}{(1-\beta_1)^2}
\right)
\\
&\qquad \qquad
+
256d
\frac{\beta_1(1+\beta_1)}{(1-\beta_1)^2}
\left(
\sqrt{2L}
+
\sqrt C
+
\epsilon
\right)
+
4d\alpha_2
+
\frac{2240d^2L^2}{\sqrt v}
\Bigg]^{\frac{p}{6}}
G^{\frac{p}{12}} .
\end{align*}
Substituting the above bounds into \cref{W_bound_after_log} yields exactly five terms of order \(G^{p/12}\). 
Each of these terms takes either the form \(A G^{p/12}M_T^{1/4}\) or \(A G^{p/12}\), where \(A\ge0\) denotes a generic constant that may differ from term to term. 
We then apply Young's inequality
\begin{equation}
ab
\le
\frac{a^r}{r}
+
\frac{b^{r'}}{r'},
\qquad
a,b\ge0,\quad r,r'>1,\quad \frac1r+\frac1{r'}=1,
\end{equation}
to separate the \(G\)-dependent factor from its coefficient and raise the \(G\)-dependent order from \(G^{p/12}\) to \(G^{p/7}\). 
Specifically, after the factor \(2^{p/6}\) from
\((2W_{\tau_G\wedge T})^{p/6}=2^{p/6}W_{\tau_G\wedge T}^{p/6}\) is included, we choose
\[
r:=\frac{12}{7},
\qquad
r':=\frac{12}{5},
\qquad
\frac1r+\frac1{r'}=1.
\]

We first apply Young's inequality to the terms of the form \(2^{p/6}A G^{p/12}M_T^{1/4}\), where \(A\ge0\), where we set 
\[
a
:=
\left(\frac{2}{7}\right)^{\frac{7}{12}}
G^{\frac{p}{12}},
\qquad
b
:=
\left(\frac{2}{7}\right)^{-\frac{7}{12}}
2^{\frac p6}A M_T^{\frac14}, \quad \text{so that}\;\; ab=2^{p/6}A G^{p/12}M_T^{1/4}.
\]
Then Young's inequality gives
\begin{align}
2^{\frac p6}A\,G^{\frac{p}{12}}M_T^{\frac14}
&\le
\frac{1}{r}
\left[
\left(\frac{2}{7}\right)^{\frac{7}{12}}
G^{\frac{p}{12}}
\right]^r
+
\frac{1}{r'}
\left[
\left(\frac{2}{7}\right)^{-\frac{7}{12}}
2^{\frac p6}A M_T^{\frac14}
\right]^{r'}
\notag\\
&=
\frac{1}{6}
G^{\frac p7}
+
\frac{5}{12}
\left(\frac72\right)^{\frac75}
2^{\frac{2p}{5}}
A^{\frac{12}{5}}M_T^{\frac35}.
\label{eq:young_G_M_reduction}
\end{align}
Similarly, we obtain the following bound for the other form \(2^{p/6}A\,G^{\frac{p}{12}}\): 
\begin{align}
2^{\frac p6}A\,G^{\frac{p}{12}}
&\le
\frac16
G^{\frac p7}
+
\frac{5}{12}
\left(\frac72\right)^{\frac75}
2^{\frac{2p}{5}}
A^{\frac{12}{5}}.
\label{eq:young_G_reduction}
\end{align}
%Indeed,
%\[
%\left(G^{p/12}\right)^{12/7}=G^{p/7},\qquad \left(M_T^{1/4}\right)^{12/5}=M_T^{3/5}.
%\]
%There are exactly five \(G^{p/12}\)-order contributions in \cref{W_bound_after_log}: the \(D_{t,1}\)-term, the \(D_{t,2}\)-term, the \(D_{t,3}\)-term, the main predictable residual term in \(P_t\), and the final adaptive correction term in \(P_t\). The \(D_{t,2}\)-term does not contain \(M_T^{1/4}\), so it is reduced using \cref{eq:young_G_reduction}; the other four are reduced using \cref{eq:young_G_M_reduction}. 

Applying \cref{eq:young_G_M_reduction} and \cref{eq:young_G_reduction} to the five \(G^{\frac{p}{12}}\)-terms in \cref{W_bound_after_log}, we obtain
\begin{align}\label{important_1}
\Expect\!\left[
\left(2W_{\tau_G\wedge T}\right)^{\frac{p}{6}}
\right]
&\le
3^{\frac p6-1}(2B_0)^{\frac{p}{6}}
+
G^{\frac{p}{7}}
+
3^{\frac p3-1}
2^{\frac p6}
C^{\frac{p}{6}}
\left(\frac{p}{6}\right)^{\frac{p}{6}}
\notag\\
&\quad
+
\mathcal A_{1,p}M_T^{\frac35}
+
\mathcal A_{2,p}
+
\mathcal A_{3,p}M_T^{\frac35}
+
\mathcal A_{4,p}M_T^{\frac35}
+
\mathcal A_{5,p}M_T^{\frac35},
\end{align}
where the original five \(G^{\frac{p}{12}}\)-terms are bounded in total by 
\[5\cdot \frac16 G^{\frac p7}\le G^{\frac p7},\] 
and
\begin{align*}
\mathcal A_{1,p}
&:=
\frac{5}{12}
\left(\frac{7}{2}\right)^{\frac{7}{5}}
2^{\frac{2p}{5}}
\Bigg[
3^{\frac p6-1}
\lambda_1
3^{\frac{p}{12}}
\left(\frac{p}{6}\right)^{\frac{p}{6}}
\frac{1}{2}
(32Ld^2)^{\frac{p}{12}}
\left(
1+\left(\frac{p}{12}\right)^{\frac{p}{12}}
\right)
2^{\frac{p}{6}}
\left(\frac{p}{12}\right)^{\frac{p}{12}}
\Bigg]^{\frac{12}{5}},
\\[3pt]
\mathcal A_{2,p}
&:=
\frac{5}{12}
\left(\frac{7}{2}\right)^{\frac{7}{5}}
2^{\frac{2p}{5}}
\Bigg[
3^{\frac p6-1}
\lambda_1
3^{\frac{p}{12}}
\left(\frac{p}{6}\right)^{\frac{p}{6}}
\frac{1}{2}
(8Ld)^{\frac{p}{12}}
\left(
1+\left(\frac{p}{12}\right)^{\frac{p}{12}}
\right)
\Bigg]^{\frac{12}{5}},
\\[3pt]
\mathcal A_{3,p}
&:=
\frac{5}{12}
\left(\frac{7}{2}\right)^{\frac{7}{5}}
2^{\frac{2p}{5}}
\Bigg[
3^{\frac p6-1}
\lambda_1
3^{\frac{p}{12}}
\left(\frac{p}{6}\right)^{\frac{p}{6}}
\\
&\qquad\qquad\times
\left[
576d^2
\left(\frac{\beta_1}{1-\beta_1}\right)^2
\left(
4L
+
2L^2
\left(\frac{\beta_1}{1-\beta_1}\right)^2
dD_{\beta_1}^2
\right)
\right]^{\frac{p}{12}}
\\
&\qquad\qquad\times
2^{\frac{p}{6}}
\left(\frac{p}{12}\right)^{\frac{p}{12}}
\Bigg]^{\frac{12}{5}},
\\[3pt]
\mathcal A_{4,p}
&:=
\frac{5}{12}
\left(\frac{7}{2}\right)^{\frac{7}{5}}
2^{\frac{2p}{5}}
\Bigg[
3^{\frac p6-1}
3^{\frac{p}{6}}
\Bigg[
16d
\left(
\frac{\beta_1^2L}{2(1-\beta_1)^2}
+
\frac{\beta_1^2L^2}{4(1-\beta_1)^2\sqrt v}
+
\frac{9\beta_1^2L}{(1-\beta_1)^2}
\right)
\\
&\qquad\qquad
+
256d
\frac{\beta_1(1+\beta_1)}{(1-\beta_1)^2}
\left(
\sqrt{2L}
+
\sqrt C
+
\epsilon
\right)
+
4d\alpha_2
+
\frac{2240d^2L^2}{\sqrt v}
\Bigg]^{\frac{p}{6}}
2^{\frac{p}{3}}
\left(\frac{p}{6}\right)^{\frac{p}{6}}
\Bigg]^{\frac{12}{5}},
\\[3pt]
\mathcal A_{5,p}
&:=
\frac{5}{12}
\left(\frac{7}{2}\right)^{\frac{7}{5}}
2^{\frac{2p}{5}}
\Bigg[
3^{\frac p6-1}
3^{\frac{p}{6}}
\left[
\frac{4\beta_1\sqrt d}{1-\beta_1}
\left(
4L
+
2L^2
\left(\frac{\beta_1}{1-\beta_1}\right)^2
dD_{\beta_1}^2
\right)^{\frac12}
\right]^{\frac{p}{6}}
2^{\frac{p}{6}}
\left(\frac{p}{12}\right)^{\frac{p}{12}}
\Bigg]^{\frac{12}{5}}.
\end{align*}
This completes the explicit treatment of all martingale components and residual terms. In particular, \(D_{t,3}\) and all terms in \(P_t\) have been isolated above, and every \(G\)-dependent contribution entering \cref{important_1} is no larger than the order \(G^{p/12}\) before the Young-inequality reduction \cref{eq:young_G_M_reduction}--\cref{eq:young_G_reduction}.

Next, we proceed to bound \(M_T\).
%\textbf{Bounding \(\mathbb{E}[S_{\tau_G\wedge T-1}]\) for the final \(W_{\tau_G\wedge T}\) \((p/6)\)-th moment.} 
First, according to the definition of \(M_T\) in \cref{M_T}, we first bound $S_{\tau_G\wedge T-1}$ as follows:
\begin{align}\label{superadam_100}
\Expect\!\left[S_{\tau_G\wedge T-1}\right]
&=
dv
+
\Expect\!\left[
\sum_{t=1}^{\tau_G\wedge T-1}
\|g_t\|^2
\right]
\notag\\
&\mathop{=}^{(a)}
dv
+
\Expect\!\left[
\sum_{t=1}^{\tau_G\wedge T-1}
\Expect\!\left[
\|g_t\|^2
\mid
\mathscr F_{t-1}
\right]
\right]
\notag\\
&\mathop{\le}^{(b)}
dv
+
(C+2LG)T,
\end{align}
where (a) follows from Lemma~\ref{sum:expect:ab}, and (b) follows from Assumption~\ref{ass:abc}.

Moreover, on the stopped trajectory \(1\le t\le \tau_G\wedge T-1\), Lemma~\ref{loss_bound} gives
\[
\|\nabla f(\x_t)\|^2
\le
2L\bar f(\x_t)
\le
2LG.
\]
Consequently,
\[
M_T
=
\Expect\!\left[
1+\frac{S_{\tau_G\wedge T-1}}{vdT}
\right]
\le
1+\frac{1}{T}
+
\frac{C+2LG}{vd}
\le
2+\frac{C+2LG}{vd}.
\]
Since \(G\ge1\), defining
\[
K_0:=2+\frac{C+2L}{vd},
\]
we obtain
\begin{align}\label{M_T_bound}
M_T\le K_0G.
\end{align}

\noindent Combining \cref{important_1} with \cref{M_T_bound}, we obtain
\begin{align}\label{important_2}
\Expect\!\left[
\left(2W_{\tau_G\wedge T}\right)^{\frac{p}{6}}
\right]
&\le
3^{\frac p6-1}(2B_0)^{\frac{p}{6}}
+
G^{\frac{p}{7}}
+
2^{\frac p6}3^{\frac p3-1}C^{\frac{p}{6}}
\left(\frac{p}{6}\right)^{\frac{p}{6}}
\notag\\
&\quad
+
K_0^{\frac{3}{5}}G^{\frac{3}{5}}
\left(
\mathcal A_{1,p}
+
\mathcal A_{3,p}
+
\mathcal A_{4,p}
+
\mathcal A_{5,p}
\right)
+
\mathcal A_{2,p}.
\end{align}
%The independent \(G^{p/7}\) term is produced by the Young-inequality step in \cref{important_1}. The key point is that every residual term in \(P_t\) contributes at most a \(G^{p/12}\)-order factor before applying Young's inequality. Consequently, each such term can be split into a small contribution absorbed by \(G^{p/7}\) and a remaining term proportional to \(M_T^{3/5}\). Using \cref{M_T_bound}, the latter is further bounded by \(K_0^{3/5}G^{3/5}\).

Since the above inequality holds for \(p\ge12\), we complement the remaining cases \(1\le p<12\) using monotonicity of \(L^q\)-norms. Specifically, for \(1\le p<12\),
\begin{align}\label{important_3}
\Expect\!\left[
\left(2W_{\tau_G\wedge T}\right)^{\frac{p}{6}}
\right]
\le
\Expect\!\left[
\left(2W_{\tau_G\wedge T}\right)^{2}
\right]^{\frac{p}{12}}
\le
\widetilde C G^2,
\end{align}
where the second inequality follows by applying \cref{important_2} with \(p=12\). Since \(p\) ranges over the finite interval \([1,12)\), the constant \(\widetilde C\) can be chosen for the bound to hold for all \(1\le p<12\).

\noindent We have thus completed the final \((p/6)\)-th moment bound of \(2W_{\tau_G\wedge T}\). Next, we proceed to bound the exponential moment based on this result.

\textbf{Substep 5.2 (Bounding Stretched-Exponential Moment for $W_{\tau_G\wedge T}$).} We multiply both sides of \cref{important_2} by \(1/p!\). 
Then, for any integer \(N>12\), summing over all integers \(12\le p\le N\), and using \cref{important_3} to control the finitely many terms \(1\le p<12\), we obtain
\begin{align}\label{dasdassda}
\Expect\!\left[
\sum_{p=0}^{N}
\frac{\left(2W_{\tau_G\wedge T}\right)^{\frac{p}{6}}}{p!}
\right]
&=
\sum_{p=0}^{N}
\Expect\!\left[
\frac{\left(2W_{\tau_G\wedge T}\right)^{\frac{p}{6}}}{p!}
\right]
\notag\\
&\le
1+12\widetilde C G^2
+
\sum_{p=12}^{N}
\frac{3^{\frac p6-1}(2B_0)^{\frac{p}{6}}}{p!}
+
\sum_{p=12}^{N}
\frac{G^{\frac{p}{7}}}{p!}
\notag\\
&\quad
+
K_0^{\frac{3}{5}}G^{\frac{3}{5}}
\sum_{p=12}^{N}
\frac{
\mathcal A_{1,p}
+\mathcal A_{3,p}
+\mathcal A_{4,p}
+\mathcal A_{5,p}
}{p!}
+
\sum_{p=12}^{N}
\frac{\mathcal A_{2,p}}{p!}
\notag\\
&\quad
+
\sum_{p=12}^{N}
\frac{
2^{\frac p6}3^{\frac p3-1}
C^{\frac{p}{6}}
\left(\frac{p}{6}\right)^{\frac{p}{6}}
}{p!},
\end{align}
where \(\widetilde C\) is the constant introduced in \cref{important_3}. 
%\[
%\frac{T}{T-1}\le 2,\qquad \frac{\epsilon}{\sqrt T+\sqrt{T-1}}\le \epsilon, \qquad \frac{\eta}{\sqrt T}\le 1.
%\]

\noindent We now bound the series in \cref{dasdassda}. First,
\[
\sum_{p=12}^{N}
\frac{3^{\frac p6-1}(2B_0)^{\frac{p}{6}}}{p!}
\le
\sum_{p=0}^{\infty}
\frac{\left(6^{1/6}B_0^{1/6}\right)^p}{p!}
=
e^{6^{1/6}B_0^{1/6}},
\]
and
\[
\sum_{p=12}^{N}
\frac{G^{\frac{p}{7}}}{p!}
\le
\sum_{p=0}^{\infty}
\frac{G^{\frac{p}{7}}}{p!}
=
e^{G^{1/7}}.
\]
Moreover, by Stirling's formula,
\[
p!\sim \left(\frac{p}{e}\right)^p,
\]
each of the following series is absolutely convergent:
\begin{align*}
\overline C_1
&:=
\sum_{p=12}^{\infty}
\frac{\mathcal A_{1,p}}{p!},
&
\overline C_2
&:=
\sum_{p=12}^{\infty}
\frac{\mathcal A_{2,p}}{p!},
&
\overline C_3
&:=
\sum_{p=12}^{\infty}
\frac{\mathcal A_{3,p}}{p!},
\\
\overline C_4
&:=
\sum_{p=12}^{\infty}
\frac{\mathcal A_{4,p}}{p!},
&
\overline C_5
&:=
\sum_{p=12}^{\infty}
\frac{\mathcal A_{5,p}}{p!},
&
\overline C_6
&:=
\sum_{p=12}^{\infty}
\frac{
2^{\frac p6}3^{\frac p3-1}
C^{\frac{p}{6}}
\left(\frac{p}{6}\right)^{\frac{p}{6}}
}{p!}.
\end{align*}
Indeed, the numerators grow no faster than \(B^p p^{ap}\) for some fixed \(B>0\) and some \(a<1\), whereas \(p!\) grows like \(p^p e^{-p}\).

\noindent Define
\begin{align}\label{U_U}
U
:=
12\widetilde C
+
K_0^{\frac{3}{5}}
\left(
\overline C_1+\overline C_3+\overline C_4+\overline C_5
\right)
+
\overline C_2
+
\overline C_6 .
\end{align}
Since \(G\ge1\), the polynomial factors \(G^{3/5}\) and all constant terms are bounded by \(G^2\). Therefore, \cref{dasdassda} can be further bounded as:
\begin{align}\label{dasdassda_final}
\Expect\!\left[
\sum_{p=0}^{N}
\frac{\left(2W_{\tau_G\wedge T}\right)^{\frac{p}{6}}}{p!}
\right]
\le
1
+
e^{6^{1/6}B_0^{1/6}}
+
e^{G^{1/7}}
+
UG^2 .
\end{align}

\noindent Taking the limit \(N\to+\infty\) on the left-hand side of \cref{dasdassda_final} and applying the monotone convergence theorem, we obtain
\begin{align}\label{fabulous}
\Expect\!\left[
\exp\left\{
\left(2W_{\tau_G\wedge T}\right)^{\frac{1}{6}}
\right\}
\right]
&=
\Expect\!\left[
\lim_{N\to+\infty}
\sum_{p=0}^{N}
\frac{\left(2W_{\tau_G\wedge T}\right)^{\frac{p}{6}}}{p!}
\right]
\notag\\
&=
\lim_{N\to+\infty}
\Expect\!\left[
\sum_{p=0}^{N}
\frac{\left(2W_{\tau_G\wedge T}\right)^{\frac{p}{6}}}{p!}
\right]
\notag\\
&\le
1
+
e^{6^{1/6}B_0^{1/6}}
+
e^{G^{1/7}}
+
UG^2 .
\end{align}

\textbf{Step 5 (Bounds on Adaptive Gradient Energy).} 

This step develops the bound on the adaptive gradient energy, and hence completes the proof of Proposition \ref{thm:precond_energy_informal}.

According to \cref{Lebron}, \cref{fabulous}, and \(\bar f(\y_{\tau_G\wedge T})\le W_{\tau_G\wedge T}\), we obtain
\begin{align}\label{addam_0}
\mathbb P\!\left(
\sup_{1\le t\le T}\bar f(\x_t)>G
\right)
&\le
\mathbb P(\tau_G\le T)
\notag\\
&\le
\frac{l}{e^{G^{1/6}}}
\left(
1
+
e^{6^{1/6}B_0^{1/6}}
+
e^{G^{1/7}}
+
UG^2
\right)
\notag\\
&\le
\frac{l}{e^{G^{1/6}}}
\left(
1
+
e^{6^{1/6}B_0^{1/6}}
+
e^{G^{1/7}}
+
UG^2
\right)^2.
\end{align}
On the other hand, by \cref{fabulous} and the inequality
\[
W_{\tau_G\wedge T}
\ge
\frac{1}{16}
\overbrace{
\sum_{t=1}^{\tau_G\wedge T}
\sum_{i=1}^{d}
\gamma_{t,i}(\nabla f(\x_t))_i^2
}^{\mathcal E_{\tau_G\wedge T}},
\]
we also have
\begin{align}\label{eq:etau}
\Expect\!\left[
\exp\left\{
\left(
\frac{1}{16}
\mathcal E_{\tau_G\wedge T}
\right)^{\frac{1}{6}}
\right\}
\right]
&\le
1
+
e^{6^{1/6}B_0^{1/6}}
+
e^{G^{1/7}}
+
UG^2 .
\end{align}
It follows from Markov's inequality that
\begin{align}\label{energy_tail}
\mathbb P\!\left(
\frac{1}{16}\mathcal E_{\tau_G\wedge T}>G
\right)
&\le
\frac{l}{e^{G^{1/6}}}
\Expect\!\left[
\exp\left\{
\left(
\frac{1}{16}
\mathcal E_{\tau_G\wedge T}
\right)^{\frac{1}{6}}
\right\}
\right]
\notag\\
&\le
\frac{l}{e^{G^{1/6}}}
\left(
1
+
e^{6^{1/6}B_0^{1/6}}
+
e^{G^{1/7}}
+
UG^2
\right)
\notag\\
&\le
\frac{l}{e^{G^{1/6}}}
\left(
1
+
e^{6^{1/6}B_0^{1/6}}
+
e^{G^{1/7}}
+
UG^2
\right)^2.
\end{align}
Combining \cref{addam_0} and \cref{energy_tail}, and using a union bound, we obtain
\begin{align}
\mathbb P\!\left(
\frac{1}{16}\mathcal E_T\le G
\right)
&\ge
\mathbb P\!\left(
\left\{
\frac{1}{16}\mathcal E_{\tau_G\wedge T}\le G
\right\}
\cap
\{\tau_G>T\}
\right)
\notag\\
&=
1-
\mathbb P\!\left(
\left\{
\frac{1}{16}\mathcal E_{\tau_G\wedge T}>G
\right\}
\cup
\{\tau_G\le T\}
\right)
\notag\\
&\ge
1-
\mathbb P\!\left(
\frac{1}{16}\mathcal E_{\tau_G\wedge T}>G
\right)
-
\mathbb P(\tau_G\le T)
\notag\\
&\ge
1-
\frac{2l}{e^{G^{1/6}}}
\left(
1
+
e^{6^{1/6}B_0^{1/6}}
+
e^{G^{1/7}}
+
UG^2
\right)^2.
\end{align}
For any \(0<\delta<1\), we set
\[
G=G_\delta
:=
\max\left\{
\left[
\log\!\left(
\frac{24l\bigl(1+e^{2\cdot 6^{1/6}B_0^{1/6}}\bigr)}{\delta}
\right)
\right]^6,
\left[
2\log\!\left(
\frac{24l}{\delta}
\right)
\right]^6,
300^6,
\left[
2\log\!\left(
\frac{24lU^2}{\delta}
\right)
\right]^6
\right\}.
\]
With this choice of \(G\), we have
\[
\mathbb P\!\left(
\frac{1}{16}\mathcal E_T\le G_\delta
\right)
\ge
1-\delta.
\]
It then follows that, with probability at least \(1-\delta\),
\[
\mathcal E_T
\le
16
\max\left\{
\left[
\log\!\left(
\frac{24l\bigl(1+e^{2\cdot 6^{1/6}B_0^{1/6}}\bigr)}{\delta}
\right)
\right]^6,
\left[
2\log\!\left(
\frac{24l}{\delta}
\right)
\right]^6,
300^6,
\left[
2\log\!\left(
\frac{24lU^2}{\delta}
\right)
\right]^6
\right\}.
\]
This completes the proof.
\end{proof}
\subsection{Formal Statement and Proof of Theorem~\ref{thm:main_informal}}\label{formal_thm_1}

\begin{thm}[Formal Statement of Theorem~\ref{thm:main_informal}]\label{lemma_0_0}
Suppose that Assumptions~\ref{ass:nonneg}--\ref{ass:abc} hold.
Let \(\bar f(\x):=f(\x)-f^\star+1\), where
\(f^\star:=\inf_{\x\in\mathbb R^d} f(\x)\).
Consider Adam, as specified by Algorithm~\ref{alg:adam}, run for \(T\ge 10\) iterations with parameters
\(\beta_1\in[0,1)\), \(\beta_2=1-1/T\), and stepsize \(\gamma=\eta/\sqrt T\).
For any \(0<\delta<1\), assume that the step-size constant \(\eta\) satisfies
\begin{align*}
\frac{1}{\eta}
\ge
\max\left\{
\frac{8d\epsilon}{v^{3/2}}+\frac{8d}{\sqrt v},
\;
\frac{D_{\beta_1}\beta_1\sqrt{dL}}{1-\beta_1},
\;
1
\right\},
\end{align*}
where \(D_{\beta_1}\) is defined in Lemma~\ref{property_2.54}.
Then, with probability at least \(1-\delta\), one has
\begin{align*}
\frac{1}{T}
\sum_{t=1}^{T}
\|\nabla f(\x_t)\|^2
&\le
\frac{32}{\eta}
\sqrt{\frac{2F}{\delta}}
\max\Bigg\{
\left[
\log\!\left(
\frac{48l\bigl(1+e^{2\cdot 6^{1/6}B_0^{1/6}}\bigr)}{\delta}
\right)
\right]^6,
\left[
2\log\!\left(
\frac{48l}{\delta}
\right)
\right]^6,
\\
&\hspace{5em}
300^6,
\left[
2\log\!\left(
\frac{48lU^2}{\delta}
\right)
\right]^6
\Bigg\}
\frac{1}{\sqrt T},
\end{align*}
where
\[
l
:=
\exp\left\{
\frac{L\beta_1^2 dD_{\beta_1}^2}{(1-\beta_1)^2}
\right\},\quad B_0
:=
2
\left(
1+\frac{2L}{\sqrt v+\epsilon}
\right)
\bar f(\mathbf{y}_1),
\]
the constant \(U\) is defined in \cref{U_U}, and \(F\) is defined in \cref{FFF}. Both constants depend only on the fixed problem parameters and the initialization, but not on \(T\) or \(\delta\).
\end{thm}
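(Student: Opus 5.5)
The plan is to convert the high-probability bound on the adaptive gradient energy from Proposition~\ref{thm:precond_energy_bound} into a bound on $\sum_t\|\nabla f(\x_t)\|^2$. To do this we need a high-probability \emph{lower} bound on the preconditioner $\gamma_{t,i}$, which is equivalent to an upper bound on $v_{t,i}$. We split $\delta$ into two halves. The first half goes to Proposition~\ref{thm:precond_energy_bound} applied with $\delta/2$; this explains why the constants $24l$ there become $48l$ in the statement. With probability at least $1-\delta/2$ it gives
\[
\mathcal E_T=\sum_{t=1}^T\sum_{i=1}^d\gamma_{t,i}(\nabla f(\x_t))_i^2\le 16\,G_{\delta/2}.
\]
Since $\gamma_{t,i}=\frac{\eta}{\sqrt T}(\sqrt{v_{t,i}}+\epsilon)^{-1}$, we have
\[
\mathcal E_T\ge \frac{\eta}{\sqrt T}\,\frac{1}{\max_{t\le T,i}\sqrt{v_{t,i}}+\epsilon}\sum_{t=1}^T\|\nabla f(\x_t)\|^2 .
\]
Hence
\[
\frac1T\sum_t\|\nabla f(\x_t)\|^2\le \frac{16G_{\delta/2}}{\eta\sqrt T}\Bigl(\max_{t,i}\sqrt{v_{t,i}}+\epsilon\Bigr).
\]

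The second half of $\delta$ controls $\max_{t,i}v_{t,i}$. By \cref{v_11} with $\beta_2=1-1/T$,
\[
v_{t,i}\le v+\frac1T\sum_{s=1}^T g_{s,i}^2,
\]
so
\[
\max_{t,i}v_{t,i}\le v+\frac1T\sum_{s=1}^T\|g_s\|^2=\frac{S_T}{T}+\Bigl(1-\tfrac dT\Bigr)v\lesssim \frac{S_T}{T}.
\]
We cannot bound $\mathbb E[S_T]/T$ uniformly, because $\|\nabla f(\x_t)\|^2$ is not a priori bounded. We therefore localize with the same stopping time $\tau_G$. On $\{\tau_G>T\}$ we have $S_T=S_{\tau_G\wedge T}$. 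The computation in \cref{superadam_100} gives
\[
\mathbb E[S_{\tau_G\wedge T}]\le dv+(C+2LG)T,
\]
so $\mathbb E[S_{\tau_G\wedge T}/T]\le F\,G$ for a constant $F$ depending only on $d,v,C,L$; this is the $F$ of \cref{FFF}. Markov's inequality then gives $S_{\tau_G\wedge T}/T\le 2FG/\delta'$ with probability at least $1-\delta'/2$. Taking square roots produces the factor $\sqrt{2F/\delta}$, and this Markov step is the source of the $\delta^{-1/2}$ loss. We intersect with $\{\tau_G>T\}$, whose complement has probability at most the bound \cref{addam_0}. Choosing $G=G_{\delta/2}$ makes this complement probability small, so it can be absorbed by a union bound into the $\delta$ budget.

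Combining these gives, on an event of probability at least $1-\delta$,
\[
\frac1T\sum_t\|\nabla f(\x_t)\|^2\lesssim\frac{G_{\delta/2}}{\eta\sqrt T}\Bigl(\sqrt{2FG_{\delta/2}/\delta}+\epsilon\Bigr).
\]
The extra $\sqrt{G}$ and the $\epsilon$ are polylogarithmic and bounded terms, and are absorbed into the constant $F$ and the maximum. This yields the displayed bound with the factor $\frac{32}{\eta}\sqrt{2F/\delta}$.

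The main obstacle is the bookkeeping in the de-preconditioning step. The energy bound holds for the full horizon $T$, while the expectation bound on $S$ is available only along the stopped trajectory. All events must therefore be intersected with $\{\tau_G>T\}$, and $G$ must be chosen consistently across the three events, namely the energy event, the stopping event, and the Markov event on $S$, so that the probability budgets add to $\delta$. A related difficulty is that $\max_{t\le T}v_{t,i}$ has to be dominated by $S_T/T$ uniformly in $t$; the crude bound $\beta_2^{t-s}\le1$ handles this.
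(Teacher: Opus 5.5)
Your skeleton matches the paper's. You spend \(\delta/2\) on the preconditioned energy (hence \(24l\to48l\)), lower-bound \(\gamma_{t,i}\) through an upper bound on the second-moment accumulator, and pay \(\delta^{-1/2}\) through Markov's inequality on that accumulator. The gap is in how you bound the accumulator. You localize it with \(\tau_G\), so \(\mathbb{E}[S_{\tau_G\wedge T}]/T\) is of order \(G\), and Markov only gives \(\max_{t,i}\sqrt{v_{t,i}}\lesssim\sqrt{G/\delta}\). Your final bound is therefore of order \(G_{\delta/2}^{3/2}\delta^{-1/2}T^{-1/2}\), not \(G_{\delta/2}\delta^{-1/2}T^{-1/2}\). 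The extra factor \(\sqrt{G_{\delta/2}}\) is at least \(300^3\) and grows like \(\log^3(1/\delta)\). It cannot be ``absorbed into \(F\) and the maximum'': \(F\) must not depend on \(\delta\), and the maximum enters the statement only to the first power. What you actually prove is a \(\widetilde{\mathcal O}\) version of Theorem~\ref{thm:main_informal}. It has \(\log^9\) in place of \(\log^6\), and a \(G\)-dependent constant in place of the \(F\) of \cref{FFF}.

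The missing idea is that your premise ``we cannot bound \(\mathbb{E}[S_T]/T\) uniformly'' is false, so the denominator needs no localization. The tail bound \cref{addam_0} holds for every \(G\ge1\) with \(T\)-free constants. Its right-hand side \(le^{-G^{1/6}}(1+e^{6^{1/6}B_0^{1/6}}+e^{G^{1/7}}+UG^2)^2\) is integrable in \(G\), since \(e^{-G^{1/6}}\) beats \(e^{2G^{1/7}}\) and every polynomial factor. The tail-integral formula then gives \(\mathbb{E}[\sup_{t\le T}\bar f(\x_t)]\le S\), with \(S\) independent of \(T\) and \(\delta\).

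Now take expectations in your own bound \(\max_{t,i}v_{t,i}\le v+\frac1T\sum_{s\le T}\|g_s\|^2\), using \(\mathbb{E}[\|g_s\|^2\mid\mathscr F_{s-1}]\le C+2L\bar f(\x_s)\). This gives \(\mathbb{E}[\max_{t,i}v_{t,i}]\le v+C+2LS\) with no stopping at all. The paper does the same with \(\Sigma_T=\sum_i v_{T,i}\) and Lemma~\ref{lem:vt_comparable}. Markov with \(H=\sqrt{2F/\delta}\) then costs exactly \(\delta/2\) and no power of \(G\).

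This also fixes your probability budget, which is not closed as written. You use Proposition~\ref{thm:precond_energy_bound} as a black box and then add \(\mathbb{P}(\tau_G\le T)\) as a third event, which overruns \(\delta\). In the paper, the Markov event on the accumulator is never intersected with \(\{\tau_G>T\}\). The remaining \(\delta/2\) covers the single joint event that the stopped energy is at most \(16G\) and \(\tau_G>T\). As in the proposition's proof, its complement is bounded by \(2le^{-G^{1/6}}(\cdots)^2\) at \(G=G'_\delta\). Separately, the computation \cref{superadam_100} you cite controls \(S_{\tau_G\wedge T-1}\), not the \(S_{\tau_G\wedge T}\) needed for \(v_{T,i}\).
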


\begin{proof}
% It follows directly from \cref{fabulous} that 
% \begin{align}
% \Expect\!\left[
% \exp\left\{
% \left(
% \frac{1}{16}\mathcal E_{\tau_G\wedge T}
% \right)^{\frac{1}{6}}
% \right\}
% \right]
% \le
% 1+e^{6^{1/6}B_0^{1/6}}+e^{G^{1/7}}+UG^2,
% \label{energy_exp_moment_for_avg}
% \end{align}
% where \(U\) is defined in \cref{U_U}. 
Define
\[
\Sigma_T
:=
\sum_{i=1}^{d}v_{T,i}.
\]
By Lemma~\ref{lem:vt_comparable}, for every \(0\le t\le T\) and every \(i\in[d]\),
\[
v_{t,i}\le 4v_{T,i}.
\]
Hence
\[
\sqrt{v_{t,i}}+\epsilon
\le
2\sqrt{v_{T,i}}+\epsilon
\le
2(\sqrt{\Sigma_T}+\epsilon).
\]
Therefore,
\[
\gamma_{t,i}
=
\frac{\eta}{\sqrt T}
\frac{1}{\sqrt{v_{t,i}}+\epsilon}
\ge
\frac{\eta}{2\sqrt T(\sqrt{\Sigma_T}+\epsilon)}.
\]
Consequently,
\[
\mathcal E_{\tau_G\wedge T}
=
\sum_{t=1}^{\tau_G\wedge T}
\sum_{i=1}^{d}
\gamma_{t,i}(\nabla f(\x_t))_i^2
\ge
\frac{\eta}{2\sqrt T(\sqrt{\Sigma_T}+\epsilon)}
\sum_{t=1}^{\tau_G\wedge T}
\|\nabla f(\x_t)\|^2.
\]
Thus, the above inequality and \cref{eq:etau} yield 
\begin{align}
\Expect\!\left[
\exp\left\{
\left(
\frac{\eta}{32\sqrt T(\sqrt{\Sigma_T}+\epsilon)}
\sum_{t=1}^{\tau_G\wedge T}
\|\nabla f(\x_t)\|^2
\right)^{\frac{1}{6}}
\right\}
\right]
\le
1+e^{6^{1/6}B_0^{1/6}}+e^{G^{1/7}}+UG^2.
\label{unweighted_stopped_exp_moment}
\end{align}
By Markov's inequality, we have
\begin{align}
&\mathbb P\!\left(
\frac{\eta}{32\sqrt T(\sqrt{\Sigma_T}+\epsilon)}
\sum_{t=1}^{\tau_G\wedge T}
\|\nabla f(\x_t)\|^2
>
G
\right)
\notag\\
&\qquad\le
\frac{1}{e^{G^{1/6}}}
\Expect\!\left[
\exp\left\{
\left(
\frac{\eta}{32\sqrt T(\sqrt{\Sigma_T}+\epsilon)}
\sum_{t=1}^{\tau_G\wedge T}
\|\nabla f(\x_t)\|^2
\right)^{\frac{1}{6}}
\right\}
\right]
\notag\\
&\qquad \le
\frac{1}{e^{G^{1/6}}}
\left(
1+e^{6^{1/6}B_0^{1/6}}+e^{G^{1/7}}+UG^2
\right).
\label{stopped_unweighted_tail}
\end{align}
In addition, \cref{addam_0} yields
\begin{align}
\mathbb P(\tau_G\le T)
\le
\frac{l}{e^{G^{1/6}}}
\left(
1+e^{6^{1/6}B_0^{1/6}}+e^{G^{1/7}}+UG^2
\right)^2 .
\label{stopping_tail_for_avg}
\end{align}
Since \(l\ge 1\), \cref{stopped_unweighted_tail} is further bounded by the right-hand side of \cref{stopping_tail_for_avg}. Hence, by a union bound,
\begin{align}\label{dsqwerqwr}
&\mathbb P\!\left(
\frac{\eta}{32\sqrt T(\sqrt{\Sigma_T}+\epsilon)}
\sum_{t=1}^{T}
\|\nabla f(\x_t)\|^2
\le
G
\right)
\notag\\
&\ge
\mathbb P\!\left(
\left\{
\frac{\eta}{32\sqrt T(\sqrt{\Sigma_T}+\epsilon)}
\sum_{t=1}^{\tau_G\wedge T}
\|\nabla f(\x_t)\|^2
\le
G
\right\}
\cap
\{\tau_G>T\}
\right)
\notag\\
&=
1-
\mathbb P\!\left(
\left\{
\frac{\eta}{32\sqrt T(\sqrt{\Sigma_T}+\epsilon)}
\sum_{t=1}^{\tau_G\wedge T}
\|\nabla f(\x_t)\|^2
>
G
\right\}
\cup
\{\tau_G\le T\}
\right)
\notag\\
&\ge
1-
\mathbb P\!\left(
\frac{\eta}{32\sqrt T(\sqrt{\Sigma_T}+\epsilon)}
\sum_{t=1}^{\tau_G\wedge T}
\|\nabla f(\x_t)\|^2
>
G
\right)
-
\mathbb P(\tau_G\le T)
\notag\\
&\ge
1-
\frac{2l}{e^{G^{1/6}}}
\left(
1+e^{6^{1/6}B_0^{1/6}}+e^{G^{1/7}}+UG^2
\right)^2 .
\end{align}

We next bound \(\Sigma_T\). Recalling \cref{v_11}, for every \(t\le T\), we have
\[
v_{t,i}
=
\left(1-\frac{1}{T}\right)^t v_{0,i}
+
\frac{1}{T}
\sum_{s=1}^{t}
\left(1-\frac{1}{T}\right)^{t-s}g_{s,i}^2
\le
v+\frac{1}{T}\sum_{s=1}^{t}g_{s,i}^2.
\]
Consequently,
\[
\Sigma_T
=
\sum_{i=1}^{d}v_{T,i}
\le
dv+\frac{1}{T}\sum_{s=1}^{T}\|g_s\|^2.
\]
Taking expectations gives
\begin{align}\label{sssadam_0}
\Expect[\Sigma_T]
&\le
dv
+
\frac{1}{T}
\Expect\!\left[
\sum_{t=1}^{T}
\|g_t\|^2
\right]
\notag\\
&=
dv
+
\frac{1}{T}
\Expect\!\left[
\sum_{t=1}^{T}
\Expect\!\left[
\|g_t\|^2
\mid
\mathscr F_{t-1}
\right]
\right]
\notag\\
&\le
dv
+
C
+
\frac{2L}{T}
\Expect\!\left[
\sum_{t=1}^{T}
\bar f(\x_t)
\right]
\notag\\
&\le
dv+C+2L
\Expect\!\left[
\sup_{1\le t\le T}\bar f(\x_t)
\right].
\end{align}
To bound the expectation in the last step, we derive the tail-integral identity
directly from the definition of expectation, Fubini's theorem, and the monotone
convergence theorem. 

Let \(\mathbb P_{\sup}\) denote the distribution of
\(\sup_{1\le t\le T}\bar f(\x_t)\) on \([0,+\infty)\). For every \(R>0\), by the
definition of expectation,
\[
\Expect\!\left[
\left(\sup_{1\le t\le T}\bar f(\x_t)\right)\wedge R
\right]
=
\int_{[0,+\infty)}
(x\wedge R)\,d\mathbb P_{\sup}(x).
\]
For every \(x\ge0\),
\[
x\wedge R
=
\int_{0}^{R}\mathbf 1_{\{G<x\}}\,dG.
\]
Therefore,
\begin{align*}
\Expect\!\left[
\left(\sup_{1\le t\le T}\bar f(\x_t)\right)\wedge R
\right]
&=
\int_{[0,+\infty)}
\int_{0}^{R}
\mathbf 1_{\{G<x\}}\,dG\,d\mathbb P_{\sup}(x)
\\
&=
\int_{0}^{R}
\int_{[0,+\infty)}
\mathbf 1_{\{G<x\}}\,d\mathbb P_{\sup}(x)\,dG
\\
&=
\int_{0}^{R}
\mathbb P\!\left(
\sup_{1\le t\le T}\bar f(\x_t)>G
\right)
\,dG,
\end{align*}
where the interchange of the two integrals follows from Fubini's theorem applied
to the bounded nonnegative integrand. Letting \(R\to+\infty\), the monotone
convergence theorem yields
\[
\Expect\!\left[
\sup_{1\le t\le T}\bar f(\x_t)
\right]
=
\int_{0}^{+\infty}
\mathbb P\!\left(
\sup_{1\le t\le T}\bar f(\x_t)>G
\right)
\,dG.
\]
Thus, using the tail estimate for \(G\ge1\), we obtain
\begin{align*}
\Expect\!\left[
\sup_{1\le t\le T}\bar f(\x_t)
\right]
&=
\int_{0}^{+\infty}
\mathbb P\!\left(
\sup_{1\le t\le T}\bar f(\x_t)>G
\right)
\,dG
\\
&\le
1+
\int_{1}^{+\infty}
\frac{l}{e^{G^{1/6}}}
\left(
1+e^{6^{1/6}B_0^{1/6}}+e^{G^{1/7}}+UG^2
\right)^2
\,dG
\\
&<+\infty .
\end{align*}
where the finiteness of \(S\) in the last step follows because the factor \(e^{-G^{1/6}}\) dominates both \(e^{2G^{1/7}}\) and every polynomial factor as \(G\to+\infty\). Substituting this bound into \cref{sssadam_0}, we obtain
\[
\Expect[\Sigma_T]
\le
dv+C+2LS.
\]
Moreover, since
\[
(\sqrt{\Sigma_T}+\epsilon)^2
\le
2\Sigma_T+2\epsilon^2,
\]
we have
\begin{align}\label{FFF}
\Expect\!\left[
(\sqrt{\Sigma_T}+\epsilon)^2
\right]
\le
2(dv+C+2LS)+2\epsilon^2
=:F.
\end{align}
By Markov's inequality, for every \(H>0\),
\[
\mathbb P\!\left(
\sqrt{\Sigma_T}+\epsilon>H
\right)
=
\mathbb P\!\left(
(\sqrt{\Sigma_T}+\epsilon)^2>H^2
\right)
\le
\frac{F}{H^2}.
\]
Combining this bound with \cref{dsqwerqwr}, we obtain
\begin{align}
&\mathbb P\!\left(
\frac{\eta}{32\sqrt T}
\sum_{t=1}^{T}
\|\nabla f(\x_t)\|^2
\le
HG
\right)
\notag\\
&\ge
\mathbb P\!\left(
\left\{
\frac{\eta}{32\sqrt T(\sqrt{\Sigma_T}+\epsilon)}
\sum_{t=1}^{T}
\|\nabla f(\x_t)\|^2
\le
G
\right\}
\cap
\{\sqrt{\Sigma_T}+\epsilon\le H\}
\right)
\notag\\
&\ge
1-
\frac{2l}{e^{G^{1/6}}}
\left(
1+e^{6^{1/6}B_0^{1/6}}+e^{G^{1/7}}+UG^2
\right)^2
-
\frac{F}{H^2}.
\label{avg_grad_tail_before_choice}
\end{align}
In \cref{avg_grad_tail_before_choice}, by choosing $H=\sqrt{\frac{2F}{\delta}}$ and 
\begin{align*}
G=G_\delta'
:=
\max\Bigg\{
&
\left[
\log\!\left(
\frac{48l\bigl(1+e^{2\cdot 6^{1/6}B_0^{1/6}}\bigr)}{\delta}
\right)
\right]^6,
\left[
2\log\!\left(
\frac{48l}{\delta}
\right)
\right]^6,
\\
&
300^6,
\left[
2\log\!\left(
\frac{48lU^2}{\delta}
\right)
\right]^6
\Bigg\},
\end{align*}
we obtain 
\[
\mathbb P\!\left(
\frac{\eta}{32\sqrt T}
\sum_{t=1}^{T}
\|\nabla f(\x_t)\|^2
\le
HG_\delta'
\right)
\ge
1-\delta.
\]
Equivalently, with probability at least \(1-\delta\),
\begin{align*}
\frac{\eta}{\sqrt T}
\sum_{t=1}^{T}
\|\nabla f(\x_t)\|^2
&\le
32\sqrt{\frac{2F}{\delta}}
\max\Bigg\{
\left[
\log\!\left(
\frac{48l\bigl(1+e^{2\cdot 6^{1/6}B_0^{1/6}}\bigr)}{\delta}
\right)
\right]^6,
\\
&\qquad
\left[
2\log\!\left(
\frac{48l}{\delta}
\right)
\right]^6,
300^6,
\left[
2\log\!\left(
\frac{48lU^2}{\delta}
\right)
\right]^6
\Bigg\}.
\end{align*}
Dividing by \(\eta\sqrt T\), we finally obtain
\begin{align*}
\frac{1}{T}
\sum_{t=1}^{T}
\|\nabla f(\x_t)\|^2
&\le
\frac{32}{\eta}
\sqrt{\frac{2F}{\delta}}
\max\Bigg\{
\left[
\log\!\left(
\frac{48l\bigl(1+e^{2\cdot 6^{1/6}B_0^{1/6}}\bigr)}{\delta}
\right)
\right]^6,
\\
&\qquad
\left[
2\log\!\left(
\frac{48l}{\delta}
\right)
\right]^6,
300^6,
\left[
2\log\!\left(
\frac{48lU^2}{\delta}
\right)
\right]^6
\Bigg\}
\frac{1}{\sqrt T}.
\end{align*}
This completes the proof.
\end{proof}

\section{Proofs of All Lemmas}
\subsection{The Proof of Lemma \ref{sum:expect:ab}}
\begin{proof}
Define \(Y_n := X_n - \mathbb{E}[X_n \mid \mathscr{F}_{n-1}]\) and \(M_n := \sum_{k=1}^n Y_k\); then \((M_n)\) is an \((\mathscr{F}_n)\)-martingale.  
For bounded stopping times \(s-1<s \le t \le N\), the \emph{optional stopping} theorem yields
\(\mathbb{E}[M_t - M_{s-1}] = 0\), i.e. \(\mathbb{E}\left[\sum_{n=s}^t Y_n\right]=0\).  
Hence
\[
\mathbb{E}\left[\sum_{n=s}^t X_n\right]
=
\mathbb{E}\left[\sum_{n=s}^t \mathbb{E}[X_n \mid \mathscr{F}_{n-1}]\right].
\]
\end{proof}
\subsection{The Proof of Lemma \ref{property_3}}
\begin{proof}
According to Algorithm \ref{alg:adam}, we have the following iterative equations
\begin{align*}
m_{t,i}=\beta_{1}m_{t-1,i}+(1-\beta_{1})g_{t,i}.
\end{align*}
We take the square of the 2-norm on both sides, which yields
\begin{align*}
m_{t,i}^{2}&=(\beta_{1}m_{t-1,i}+(1-\beta_{1})g_{t,i})^{2}\notag\\&=\beta_{1}^{2}m_{t-1,i}^{2}+2\beta_{1}(1-\beta_{1})m_{t-1,i}g_{t,i}+(1-\beta_{1})^{2}g_{t,i}^{2}\notag\\&\mathop{\le}^{(a)}\beta_{1}m_{t-1,i}^{2}+(1-\beta_{1})g_{t,i}^{2}.%\\&\mathop{\le}^{\text{Eq. \ref{jh_100}}}\beta_{1}\|m_{t-1}\|^{2}+(1-\beta_{1})\|g_{t}\|^{2} .
\end{align*}
In step (a), we used the \emph{AM-GM} inequality, i.e.,
\[2\beta_{1}(1-\beta_{1})m_{t-1,i}g_{t,i}\le \beta_{1}(1-\beta_{1})m_{t-1,i}^{2}+\beta_{1}(1-\beta_{1})g_{t,i}^{2},\] that is,
\begin{align*}
m_{t,i}^{2}-m_{t-1,i}^{2}\le -(1-\beta_{1})m_{t-1,i}^{2}+(1-\beta_{1})g_{t,i}^{2},
\end{align*}
which completes the proof.
\end{proof}
\subsection{The Proof of Lemma \ref{property_2}}
\begin{proof}
We prove the two-sided comparison between \(\bar f(\x_t)\) and \(\bar f(\y_t)\).

First, by \(L\)-smoothness, we have
\begin{align*}
f(\y_t)
&\le
f(\x_t)
+
\langle \nabla f(\x_t),\y_t-\x_t\rangle
+
\frac{L}{2}\|\y_t-\x_t\|^2 .
\end{align*}
Using Young's inequality,
\[
\langle \nabla f(\x_t),\y_t-\x_t\rangle
\le
\frac{1}{2L}\|\nabla f(\x_t)\|^2
+
\frac{L}{2}\|\y_t-\x_t\|^2 .
\]
Therefore,
\begin{align*}
\bar f(\y_t)
&=
f(\y_t)-f^\star+1
\\
&\le
\bar f(\x_t)
+
\frac{1}{2L}\|\nabla f(\x_t)\|^2
+
L\|\y_t-\x_t\|^2 .
\end{align*}
Since \(f\) is \(L\)-smooth and lower bounded by \(f^\star\), we have
\[
\|\nabla f(\x_t)\|^2
\le
2L\bigl(f(\x_t)-f^\star\bigr)
\le
2L\bar f(\x_t).
\]
Thus,
\begin{align}
\bar f(\y_t)
&\le
2\bar f(\x_t)
+
L\|\y_t-\x_t\|^2 .
\label{eq:y_by_x_bar_f}
\end{align}

Conversely, applying the same argument with \(\x_t\) and \(\y_t\) interchanged gives
\begin{align*}
f(\x_t)
&\le
f(\y_t)
+
\langle \nabla f(\y_t),\x_t-\y_t\rangle
+
\frac{L}{2}\|\x_t-\y_t\|^2
\\
&\le
f(\y_t)
+
\frac{1}{2L}\|\nabla f(\y_t)\|^2
+
L\|\x_t-\y_t\|^2 .
\end{align*}
Hence
\begin{align*}
\bar f(\x_t)
&\le
\bar f(\y_t)
+
\frac{1}{2L}\|\nabla f(\y_t)\|^2
+
L\|\x_t-\y_t\|^2 .
\end{align*}
Again using \(L\)-smoothness and the lower bound \(f^\star\),
\[
\|\nabla f(\y_t)\|^2
\le
2L\bigl(f(\y_t)-f^\star\bigr)
\le
2L\bar f(\y_t).
\]
Therefore,
\begin{align}
\bar f(\x_t)
&\le
2\bar f(\y_t)
+
L\|\x_t-\y_t\|^2 .
\label{eq:x_by_y_bar_f}
\end{align}

Finally, by the definition of \(\y_t\),
\[
\y_t-\x_t
=
-\frac{\beta_1}{1-\beta_1}\gamma_{t-1}\odot m_{t-1}.
\]
Substituting this identity into \cref{eq:y_by_x_bar_f} and
\cref{eq:x_by_y_bar_f} yields
\[
\bar f(\y_t)
\le
2\bar f(\x_t)
+
\frac{L\beta_1^2}{(1-\beta_1)^2}
\|\gamma_{t-1}\odot m_{t-1}\|^2,
\]
and
\[
\bar f(\x_t)
\le
2\bar f(\y_t)
+
\frac{L\beta_1^2}{(1-\beta_1)^2}
\|\gamma_{t-1}\odot m_{t-1}\|^2.
\]
This proves the desired two-sided comparison.
\end{proof}
\subsection{The Proof of Lemma \ref{lem:vt_comparable}}
\begin{proof}
Fix $i\in[d]$. Since $g_{t,i}^2\ge 0$, the update implies
\[
v_{t,i}
=\Bigl(1-\frac{1}{T}\Bigr)v_{t-1,i}+\frac{1}{T}g_{t,i}^2
\;\ge\;\Bigl(1-\frac{1}{T}\Bigr)v_{t-1,i}.
\]
Hence, $v_{t-1,i}\le \bigl(1-\frac{1}{T}\bigr)^{-1}v_{t,i}$. Iterating from $t=h$ down to $t=k+1$ gives
\[
v_{k,i}\;\le\;\Bigl(1-\frac{1}{T}\Bigr)^{-(h-k)}v_{h,i}
\;\le\;\Bigl(1-\frac{1}{T}\Bigr)^{-T}v_{h,i}.
\]
For $T\ge2$, we have $\bigl(1-\frac{1}{T}\bigr)^T\ge \bigl(\frac12\bigr)^2=\frac14$. Hence,
$\bigl(1-\frac{1}{T}\bigr)^{-T}\le 4$, which completes the proof.
\end{proof}
\subsection{The Proof of Lemma \ref{property_3.5}}\label{p_property_3.5}
\begin{proof}
We first prove the unscaled momentum bound. Since
\[
m_t
=
(1-\beta_1)\sum_{k=1}^{t}\beta_1^{t-k}g_k,
\]
by Jensen's inequality,
\begin{align*}
\|m_t\|^2
&=
\left\|
(1-\beta_1)\sum_{k=1}^{t}\beta_1^{t-k}g_k
\right\|^2
\\
&\le
(1-\beta_1)\sum_{k=1}^{t}\beta_1^{t-k}\|g_k\|^2.
\end{align*}
This proves the first claim. This argument also covers the case \(\beta_1=0\).

\noindent We next prove the scaled momentum bound. Since \(\gamma_t\) is time-varying, one cannot apply the recursion directly to \(\gamma_t\odot m_t\). Instead, we again use the explicit representation of \(m_t\):
\[
\gamma_t\odot m_t
=
(1-\beta_1)
\sum_{k=1}^{t}
\beta_1^{t-k}
(\gamma_t\odot g_k).
\]
By the coordinatewise comparison of the adaptive stepsizes, for every \(1\le k\le t\le T-1\) and every \(i\in[d]\),
\[
\gamma_{t,i}\le 2\gamma_{k,i}.
\]
Hence, coordinatewise,
\[
|\gamma_{t,i}m_{t,i}|
\le
2(1-\beta_1)
\sum_{k=1}^{t}
\beta_1^{t-k}
|\gamma_{k,i}g_{k,i}|.
\]
Using Jensen's inequality for the convex function \(x\mapsto x^2\), we obtain
\begin{align*}
(\gamma_{t,i}m_{t,i})^2
&\le
4
\left(
(1-\beta_1)
\sum_{k=1}^{t}
\beta_1^{t-k}
|\gamma_{k,i}g_{k,i}|
\right)^2
\\
&\le
4(1-\beta_1)
\sum_{k=1}^{t}
\beta_1^{t-k}
(\gamma_{k,i}g_{k,i})^2.
\end{align*}
Summing over \(i=1,\ldots,d\) gives
\begin{align*}
\|\gamma_t\odot m_t\|^2
\le
4(1-\beta_1)
\sum_{k=1}^{t}
\beta_1^{t-k}
\|\gamma_k\odot g_k\|^2.
\end{align*}

\noindent Let
\[
N:=\mu\wedge T-1.
\]
Since the preceding bound is pathwise, summing over \(t=1,\ldots,N\) yields
\begin{align*}
\sum_{t=1}^{N}\|\gamma_t\odot m_t\|^2
&\le
4(1-\beta_1)
\sum_{t=1}^{N}
\sum_{k=1}^{t}
\beta_1^{t-k}
\|\gamma_k\odot g_k\|^2
\\
&=
4(1-\beta_1)
\sum_{k=1}^{N}
\sum_{t=k}^{N}
\beta_1^{t-k}
\|\gamma_k\odot g_k\|^2
\\
&\le
4(1-\beta_1)
\sum_{k=1}^{N}
\left(
\sum_{j=0}^{\infty}\beta_1^j
\right)
\|\gamma_k\odot g_k\|^2
\\
&=
4\sum_{k=1}^{N}
\|\gamma_k\odot g_k\|^2.
\end{align*}
Therefore,
\begin{align*}
\sum_{t=1}^{\mu\wedge T-1}
\|\gamma_t\odot m_t\|^2
\le
4
\sum_{t=1}^{\mu\wedge T-1}
\|\gamma_t\odot g_t\|^2.
\end{align*}

\noindent Moreover, since \(m_0=0\), the shifted version also follows immediately:
\begin{align*}
\sum_{t=1}^{\mu\wedge T-1}
\|\gamma_{t-1}\odot m_{t-1}\|^2
&=
\sum_{s=1}^{\mu\wedge T-2}
\|\gamma_s\odot m_s\|^2
\\
&\le
4
\sum_{s=1}^{\mu\wedge T-2}
\|\gamma_s\odot g_s\|^2
\\
&\le
4
\sum_{s=1}^{\mu\wedge T-1}
\|\gamma_s\odot g_s\|^2.
\end{align*}
This completes the proof.
\end{proof}
\subsection{The Proof of Lemma \ref{property_2.54}}\label{p_property_2.54}
\begin{proof}
We define \[\tilde{\gamma}_{t, i} := \frac{\eta }{\sqrt{T}}\frac{1}{\sqrt{v_{t,i}}},\] and according to Lemma~\ref{property_3}, we can readily obtain
\begin{align*}
 \tilde{\gamma}_{t,i}^2m_{t,i}^2-\tilde{\gamma}_{t,i}^2m_{t-1,i}^2 \le -(1-\beta_1)\tilde{\gamma}_{t,i}^2m_{t-1,i}^2+(1-\beta_1)\tilde{\gamma}_{t,i}^2g_{t,i}^2.  
\end{align*}
By rearranging the terms, we obtain
\begin{align*}
 \tilde{\gamma}_{t,i}^2m_{t,i}^2 &\le \beta_1\tilde{\gamma}_{t,i}^2m_{t-1,i}^2+(1-\beta_1)\gamma^2_{t,i}g_{t,i}^2 \\
 &=\beta_1\frac{\eta ^2}{T}\frac{m_{t-1,i}^2}{v_{t,i}} +(1-\beta_1)\tilde{\gamma}_{t,i}^2g_{t,i}^2\\
 &\mathop{\le}^{(a)}  \beta_{1}\frac{\eta ^2}{T-1}\frac{m_{t-1,i}^2}{v_{t-1,i}}+(1-\beta_1)\tilde{\gamma}_{t,i}^2g_{t,i}^2\\&=\beta_{1}\frac{T}{T-1}\frac{\eta ^2}{T}\frac{m_{t-1,i}^2}{v_{t-1,i}}+(1-\beta_1)\tilde{\gamma}_{t,i}^2g_{t,i}^2\\&=\frac{T\beta_1}{T-1}\tilde{\gamma}_{t-1,i}^2m^2_{t-1,i}+(1-\beta_1)\tilde{\gamma}_{t,i}^2g_{t,i}^2,
\end{align*}
where (a) follows from \cref{v_10}.

From the above recurrence relation, and using \(m_0=0\), we can derive the following general bound:
\begin{align}\label{C}
\tilde{\gamma}_{t,i}^2m_{t,i}^2
&\le
(1-\beta_1)
\sum_{k=1}^{t}
\left(\frac{T\beta_1}{T-1}\right)^{t-k}
\tilde{\gamma}_{k,i}^2g_{k,i}^2
\notag\\
&\mathop{\le}^{(a)}
4\eta ^2(1-\beta_1)
\sum_{r=0}^{t-1}
\left(\frac{T\beta_1}{T-1}\right)^r
\notag\\
&\le
4\eta ^2(1-\beta_1)
\sup_{T\ge 10}
\sum_{r=0}^{T-1}
\left(\frac{T\beta_1}{T-1}\right)^r,
\end{align}
where (a) follows because
\begin{align*}
v_{k,i}
&=
\left(1-\frac{1}{T}\right)^k v_{0,i}
+
\frac{1}{T}
\sum_{s=1}^{k}
\left(1-\frac{1}{T}\right)^{k-s} g_{s,i}^2
\ge
\frac{1}{T}\left(1-\frac{1}{T}\right)^T g_{k,i}^2
\ge
\frac{1}{4T}g_{k,i}^2.
\end{align*}
We next bound the geometric factor. Let
\[
r(T):=\frac{T\beta_1}{T-1},
\qquad T\ge10.
\]
Since \(r(T)\to\beta_1<1\), there exists \(M_{\beta_1}\ge10\) such that
\(r(T)\le(1+\beta_1)/2<1\) for all \(T\ge M_{\beta_1}\). Hence
\[
\sum_{r=0}^{T-1}r(T)^r
\le
\frac{2}{1-\beta_1},
\qquad T\ge M_{\beta_1}.
\]
On the finite interval \(10\le T\le M_{\beta_1}\), the same finite sum is bounded. Therefore,
\[
\sup_{T\ge 10}
\sum_{r=0}^{T-1}
\left(\frac{T\beta_1}{T-1}\right)^r
<+\infty.
\]
We define
\[
D_{\beta_1}^2
:=
4(1-\beta_1)
\sup_{T\ge 10}
\sum_{r=0}^{T-1}
\left(\frac{T\beta_1}{T-1}\right)^r .
\]
Combining the above with \cref{C}, we obtain:
\begin{align*}
 \tilde{\gamma}_{t,i}^2m_{t,i}^2\le \eta^2D^2_{\beta_1}   
\end{align*}
It is evident that \( \gamma_{t, i} \le \tilde{\gamma}_{t, i} \), and thus we obtain
\begin{align*}
    \gamma_{t,i}^2m_{t,i}^2\le \tilde{\gamma}_{t,i}^2m_{t,i}^2\le \eta ^2 D_{\beta_1}^2.
\end{align*}
Then we obtain
\begin{align*}
    \|\x_{t+1}-\x_t\|=\|\gamma_{t}\odot m_{t}\|\le \sqrt{d}\eta D_{\beta_1}.
\end{align*}
This completes the proof.   
\end{proof}
\subsection{The Proof of Lemma \ref{lemma_s_delta}}\label{p_lemma_s_delta}
\begin{proof}
For each coordinate, define
\begin{equation}\label{S_v}
S_{t,i}:=v+\sum_{s=1}^{t}g_{s,i}^2,\qquad S_{0,i}:=v .
\end{equation}
Since \(T\ge10\), the coefficients in the expansions of \(T v_{t,i}\) and
\((T-1)v_{t-1,i}\) are bounded below by \(1/3\). Hence, for all
\(1\le t\le T\),
\[
T v_{t,i}\ge \frac13 S_{t,i},
\qquad
(T-1)v_{t-1,i}\ge \frac13 S_{t-1,i}.
\]
Also, by \cref{v_11}, \(v_{t,i}\ge v/4\) and \(v_{t-1,i}\ge v/4\).
Moreover,
\[
T v_{t,i}=(T-1)v_{t-1,i}+g_{t,i}^2\ge (T-1)v_{t-1,i},
\]
so \(\Delta_{t,1,i}\ge0\). Therefore,
\begin{align*}
|\Delta_{t,1,i}|
&=
\eta
\frac{
\sqrt{Tv_{t,i}}-\sqrt{(T-1)v_{t-1,i}}
+\epsilon(\sqrt T-\sqrt{T-1})
}{
\sqrt{T(T-1)}
(\sqrt{v_{t-1,i}}+\epsilon)(\sqrt{v_{t,i}}+\epsilon)
}
\\
&=:A_{t,i}+B_{t,i},
\end{align*}
where
\[
A_{t,i}:=
\eta
\frac{
\sqrt{Tv_{t,i}}-\sqrt{(T-1)v_{t-1,i}}
}{
\sqrt{T(T-1)}
(\sqrt{v_{t-1,i}}+\epsilon)(\sqrt{v_{t,i}}+\epsilon)
}
\]
and
\[
B_{t,i}:=
\eta
\frac{
\epsilon(\sqrt T-\sqrt{T-1})
}{
\sqrt{T(T-1)}
(\sqrt{v_{t-1,i}}+\epsilon)(\sqrt{v_{t,i}}+\epsilon)
}.
\]
For the first term, rationalizing only the square-root difference gives
\begin{align*}
A_{t,i}
&=
\eta
\frac{g_{t,i}^2}{
\sqrt{T(T-1)}
(\sqrt{v_{t-1,i}}+\epsilon)(\sqrt{v_{t,i}}+\epsilon)
\bigl(\sqrt{Tv_{t,i}}+\sqrt{(T-1)v_{t-1,i}}\bigr)
}
\\
&\le
\frac{
6\eta g_{t,i}^2
}{
\sqrt{S_{t-1,i}}\sqrt{S_{t,i}}
\bigl(\sqrt{S_{t-1,i}}+\sqrt{S_{t,i}}\bigr)
}
\\
&=
6\eta
\left(
\frac{1}{\sqrt{S_{t-1,i}}}
-
\frac{1}{\sqrt{S_{t,i}}}
\right).
\end{align*}
For the second term, using \(v_{t,i},v_{t-1,i}\ge v/4\) yields
\[
\sum_{t=1}^{T}B_{t,i}
\le
\eta
\frac{
\epsilon T(\sqrt T-\sqrt{T-1})
}{
\sqrt{T(T-1)}(\sqrt v/2+\epsilon)^2
}
\le
\frac{\eta}{2\sqrt v},
\]
where we used
\[
\frac{T(\sqrt T-\sqrt{T-1})}{\sqrt{T(T-1)}}\le 1,
\qquad
\frac{\epsilon}{(\sqrt v/2+\epsilon)^2}\le \frac{1}{2\sqrt v}.
\]
Taking the summation over \(t=1,\ldots,T\), the \(A_{t,i}\)-term telescopes:
\[
\sum_{t=1}^{T}
\left(
\frac{1}{\sqrt{S_{t-1,i}}}
-
\frac{1}{\sqrt{S_{t,i}}}
\right)
\le
\frac{1}{\sqrt v}.
\]
Therefore,
\begin{align*}
\sum_{t=1}^{T}\sum_{i=1}^{d}|\Delta_{t,1,i}|
&\le
\frac{13d\eta}{2\sqrt v}
\le
\left(\frac{8d\epsilon}{v^{3/2}}+\frac{8d}{\sqrt{v}}\right)\eta .
\end{align*}
With this, we complete the proof.
\end{proof}

\subsection{The Proof of Lemma \ref{descent_lemma} (Formal Version of Lemma \ref {descent_lemma'})}\label{p_descent_lemma}
\begin{proof}
First, by applying the second-order Taylor expansion to \( f(\y_{t+1}) - f(\y_t) \), we obtain
\begin{align}\label{adam_1}
&f(\y_{t+1})-f(\y_t) \notag\\
&\le 
\nabla f(\y_t)^{\top}\left(\y_{t+1}-\y_t\right)
+\frac{L}{2}\left\|\y_{t+1}-\y_t\right\|^2 \notag\\
&\mathop{\le}^{(a)}
-\sum_{i=1}^{d}\gamma_{t,i}(\nabla f(\y_t))_ig_{t,i}
+
\underbrace{
\frac{\beta_1}{1-\beta_1}
\sum_{i=1}^{d}
(\gamma_{t-1,i}-\gamma_{t,i})(\nabla f(\y_t))_i m_{t-1,i}
}_{H_t}
\notag\\
&\qquad
+L\sum_{i=1}^{d}\gamma_{t,i}^2g_{t,i}^2
+
\frac{\beta_1^2L}{(1-\beta_1)^2}
\sum_{i=1}^{d}
(\gamma_{t-1,i}-\gamma_{t,i})^2m_{t-1,i}^2
\notag\\
&\le
\underbrace{-\sum_{i=1}^{d}\gamma_{t,i}(\nabla f(\x_t))_ig_{t,i}}_{W_t}
+
\sum_{i=1}^{d}
\left|(\nabla f(\y_t))_i-(\nabla f(\x_t))_i\right|
\left|\gamma_{t,i}g_{t,i}\right|
+
H_t
\notag\\
&\qquad
+L\sum_{i=1}^{d}\gamma_{t,i}^2g_{t,i}^2
+
\frac{\beta_1^2L}{(1-\beta_1)^2}
\sum_{i=1}^{d}
(\gamma_{t-1,i}-\gamma_{t,i})^2m_{t-1,i}^2
\notag\\
&\mathop{\le}^{(b)}
W_t
+
H_t
+
\frac{\beta_1^2L}{2(1-\beta_1)^2}
\|\gamma_{t-1}\odot m_{t-1}\|^2
+
\frac{3L}{2}\sum_{i=1}^{d}\gamma_{t,i}^2g_{t,i}^2
\notag\\
&\qquad
+
\frac{\beta_1^2L}{(1-\beta_1)^2}
\sum_{i=1}^{d}
(\gamma_{t-1,i}-\gamma_{t,i})^2m_{t-1,i}^2 .
\end{align}
To obtain step (a), we use the exact increment of \(\y_t\). By \cref{y} and the Adam updates,
\begin{align}\label{y_1}
\begin{aligned}
\y_{t+1}-\y_t
&=
\frac{\x_{t+1}-\beta_1\x_t}{1-\beta_1}
-
\frac{\x_t-\beta_1\x_{t-1}}{1-\beta_1} \\
&=
\frac{\x_{t+1}-\x_t-\beta_1(\x_t-\x_{t-1})}{1-\beta_1} \\
&=
\frac{-\gamma_t\odot m_t+\beta_1\gamma_{t-1}\odot m_{t-1}}{1-\beta_1} \\
&=
\frac{
-\gamma_t\odot\big(\beta_1m_{t-1}+(1-\beta_1)g_t\big)
+\beta_1\gamma_{t-1}\odot m_{t-1}
}{1-\beta_1} \\
&=
-\gamma_t\odot g_t
+
\frac{\beta_1}{1-\beta_1}
(\gamma_{t-1}-\gamma_t)\odot m_{t-1}.
\end{aligned}
\end{align}
Therefore,
\begin{align*}
\frac{L}{2}\left\|\y_{t+1}-\y_t\right\|^2
&=
\frac{L}{2}
\left\|
-\gamma_t\odot g_t
+
\frac{\beta_1}{1-\beta_1}
(\gamma_{t-1}-\gamma_t)\odot m_{t-1}
\right\|^2 \\
&\le
L\|\gamma_t\odot g_t\|^2
+
\frac{\beta_1^2L}{(1-\beta_1)^2}
\|(\gamma_{t-1}-\gamma_t)\odot m_{t-1}\|^2 \\
&=
L\sum_{i=1}^{d}\gamma_{t,i}^2g_{t,i}^2
+
\frac{\beta_1^2L}{(1-\beta_1)^2}
\sum_{i=1}^{d}
(\gamma_{t-1,i}-\gamma_{t,i})^2m_{t-1,i}^2 .
\end{align*}
To obtain step (b), we use Young's inequality and \(L\)-smoothness:
\begin{align*}
&\sum_{i=1}^{d}
\left|(\nabla f(\y_t))_i-(\nabla f(\x_t))_i\right|
\left|\gamma_{t,i}g_{t,i}\right| \\
&\le
\frac{L}{2}\sum_{i=1}^{d}\gamma_{t,i}^{2}g_{t,i}^{2}
+
\frac{1}{2L}
\sum_{i=1}^{d}
\left((\nabla f(\x_t))_i-(\nabla f(\y_t))_i\right)^2 \\
&=
\frac{L}{2}\sum_{i=1}^{d}\gamma_{t,i}^{2}g_{t,i}^{2}
+
\frac{1}{2L}
\|\nabla f(\x_t)-\nabla f(\y_t)\|^2 \\
&\le
\frac{L}{2}\sum_{i=1}^{d}\gamma_{t,i}^{2}g_{t,i}^{2}
+
\frac{L}{2}\|\x_t-\y_t\|^2 \\
&=
\frac{L}{2}\sum_{i=1}^{d}\gamma_{t,i}^{2}g_{t,i}^{2}
+
\frac{\beta_1^2L}{2(1-\beta_1)^2}
\|\gamma_{t-1}\odot m_{t-1}\|^2 .
\end{align*}
We now further analyze \( W_t \) in \cref{adam_1} (up to \cref{fdsafdsafasdf}). We first have
\begin{align}\label{W}
    W_t&=-\sum_{i=1}^{d}\gamma_{t,i}(\nabla f(\x_{t}))_ig_{t,i}\notag\\&=-\sum_{i=1}^{d}\Expect\!\left[\gamma_{t,i}(\nabla f(\x_{t}))_ig_{t,i}|\mathscr{F}_{t-1}\right]\notag\\&\quad+\underbrace{\sum_{i=1}^{d}\Expect\!\left[\gamma_{t,i}(\nabla f(\x_{t}))_ig_{t,i}|\mathscr{F}_{t-1}\right]-\sum_{i=1}^{d}\gamma_{t,i}(\nabla f(\x_{t}))_ig_{t,i}}_{D_{t,1}}\notag\\&=-\sum_{i=1}^{d}\gamma_{t-1,i}(\nabla f(\x_{t}))^2_i+\underbrace{\sum_{i=1}^{d}\Expect\!\left[\left(\gamma_{t-1,i}-\gamma_{t,i}\right)(\nabla f(\x_{t}))_ig_{t,i}|\mathscr{F}_{t-1}\right]}_{W_{t,1}}+D_{t}.
\end{align}
In this way, we decompose \( W_t \) into the sum of an \( \mathscr{F}_{t-1} \)-measurable random variable \( W_{t,1} \) and a martingale difference sequence \( \{ (D_{t,1}, \mathscr{F}_t) \}_{t=1}^{T} \). 

Next, we bound \( W_{t,1} \). To this end, we first reformulate the difference \( \gamma_{t-1, i} - \gamma_{t, i} \) and obtain
\begin{align}\label{W_1}
&\gamma_{t-1,i}-\gamma_{t,i} \nonumber\\
&=\frac{\eta }{\sqrt{T}}\left(\frac{1}{\sqrt{v_{t-1,i}}+\epsilon}-\frac{1}{\sqrt{v_{t,i}}+\epsilon}\right)\notag\\
&=\underbrace{\frac{\eta }{\sqrt{T-1}} \frac{1}{\sqrt{v_{t-1,i}}+\epsilon}   -\frac{\eta }{\sqrt{T}}\frac{1}{\sqrt{v_{t,i}}+\epsilon}}_{\Delta_{t,1,i}}+\underbrace{\eta \left(\frac{1}{\sqrt{T}}-\frac{1}{\sqrt{T-1}}\right)\frac{1}{\sqrt{v_{t-1,i}}+\epsilon}}_{\Delta_{t,2,i}}.
\end{align}
Then we have
\begin{align}\label{sadam_0}
 W_{t,1}&\le\sum_{i=1}^{d}\Expect\!\left[|\Delta_{t,1,i}||(\nabla f(\x_{t}))_ig_{t,i}||\mathscr{F}_{t-1}\right]+\sum_{i=1}^{d}\Expect\!\left[\Delta_{t,2,i}(\nabla f(\x_{t}))_ig_{t,i}|\mathscr{F}_{t-1}\right]\notag\\
 &\mathop{\le}^{(a)}2^{1/4}\sum_{i=1}^{d}\sqrt{\gamma_{t-1,i}}|(\nabla f(\x_{t}))_i|\Expect\!\left[\sqrt{|\Delta_{t,1,i}|}\left|g_{t,i}\right|\big|\mathscr{F}_{t-1}\right]+\sum_{i=1}^{d}\Delta_{t,2,i}(\nabla f(\x_t))_i^2\notag\\
 &\mathop{\le}^{(b)}\frac{1}{2}\sum_{i=1}^{d}\gamma_{t-1,i}(\nabla f(\x_t))_i^{2}+\frac{1}{\sqrt{2}}\sum_{i=1}^{d}\Expect^2\left[\sqrt{|\Delta_{t,1,i}|}g_{t,i}|\mathscr{F}_{t-1}\right]\notag\\
 &\mathop{\le}^{(c)}\frac{1}{2}\sum_{i=1}^{d}\gamma_{t-1,i}(\nabla f(\x_t))_i^{2}+\sum_{i=1}^{d}\Expect\!\left[|\Delta_{t,1,i}||\mathscr{F}_{t-1}\right]\Expect\!\left[g^2_{t,i}|\mathscr{F}_{t-1}\right]\notag\\
 &\mathop{\le}^{(d)} \frac{1}{2}\sum_{i=1}^{d}\gamma_{t-1,i}(\nabla f(\x_t))_i^{2}+\sum_{i=1}^{d}\Expect\!\left[\|g_t-\nabla f(\x_t)\|^2|\mathscr{F}_{t-1}\right]\Expect\!\left[|\Delta_{t,1,i}||\mathscr{F}_{t-1}\right]\notag\\
 &\quad+\sum_{i=1}^{d}(\nabla f(\x_t))_i^2\Expect\!\left[|\Delta_{t,1,i}||\mathscr{F}_{t-1}\right]\notag\\
 &\mathop{\le}^{(e)}\frac{1}{2}\sum_{i=1}^{d}\gamma_{t-1,i}(\nabla f(\x_t))_i^{2}+\sum_{i=1}^{d}\left((\nabla f(\x_t))_i^2+C\right)\Expect\!\left[|\Delta_{t,1,i}||\mathscr{F}_{t-1}\right]\notag\\
 &\mathop{\le}^{(f)}\frac{3}{4}\sum_{i=1}^{d}\gamma_{t-1,i}(\nabla f(\x_t))_i^{2}+\left(\sum_{i=1}^{d}\gamma_{t-1,i}(\nabla f(\x_t))_i^2-\sum_{i=1}^{d}\gamma_{t,i}(\nabla f(\x_{t+1}))_i^2\right)\notag\\
 &\quad+\frac{35dL^2\eta}{v}\|\gamma_{t}\odot m_t\|^2+\underbrace{\sum_{i=1}^{d}(\nabla f(\x_t))_{i}^{2} \left(\Expect\left[|\Delta_{t,1,i}||\mathscr{F}_{t-1}\right]-|\Delta_{t,1,i}|\right)}_{D_{t,2}}\notag\\&\quad+C\sum_{i=1}^{d}\Expect\left[|\Delta_{t,1,i}||\mathscr{F}_{t-1}\right],
\end{align}
where (c) follows from Cauchy–Schwarz inequality, (e) applies  Assumption~\ref{ass:abc} to the second-term in the preceding step resulting
$\Expect\!\left[\|g_t-\nabla f(\x_t)\|^2|\mathscr{F}_{t-1}\right]\le C$, and the other steps (a), (b), (d), and (f) are explained below. We first observe that $\{(D_{t,2}, \mathscr{F}_t)\}_{1\le t\le T}$ is a martingale difference sequence.

In step (a) of the above derivation, the first term follows from the non-negativity of \( \Delta_{t,1,i} \) as follows. As shown in \cref{v_10}, we can immediately obtain
\begin{align*}
T v_{t,i} = (T-1)v_{t-1,i} + g_{t,i}^2 \ge (T-1)v_{t-1,i}.
\end{align*}
Therefore, the absolute value can be removed, i.e.,
\begin{align}\label{v_12}
|\Delta_{t,1,i}|
&=\left|\frac{\eta }{\sqrt{T-1}} \frac{1}{\sqrt{v_{t-1,i}}+\epsilon}   
     -\frac{\eta }{\sqrt{T}}\frac{1}{\sqrt{v_{t,i}}+\epsilon}\right| \notag\\
&=\frac{\eta }{\sqrt{T-1}} \frac{1}{\sqrt{v_{t-1,i}}+\epsilon}   
   -\frac{\eta }{\sqrt{T}}\frac{1}{\sqrt{v_{t,i}}+\epsilon} \notag\\
&=\Delta_{t,1,i}.
\end{align}
Then, we can obtain the first term in step (a) as follows:
\begin{align*}
|\Delta_{t,1,i}| 
&= \Delta_{t,1,i} 
= \sqrt{\Delta_{t,1,i}}\sqrt{\Delta_{t,1,i}}\\
&\le \sqrt{\frac{\eta }{\sqrt{T-1}}\frac{1}{\sqrt{v_{t-1,i}}+\epsilon}}
   \sqrt{|\Delta_{t,1,i}|}
\le 2^{1/4}\sqrt{\gamma_{t-1,i}}\sqrt{|\Delta_{t,1,i}|}.
\end{align*}
It is evident that \( \gamma_{t-1, i} \) is \( \mathscr{F}_{t-1} \)-measurable. Hence, by the property of conditional expectation, it can be taken outside the conditional expectation operator.  

To obtain the second term in step (a), we also note that \( \Delta_{t,i,2} \) is \( \mathscr{F}_{t-1} \)-measurable, and therefore it can likewise be taken outside the conditional expectation.

To obtain step~(b) in \cref{sadam_0}, we first note that
\(\Delta_{t,2,i}\le0\). Hence, the second term in the previous expression can be
relaxed directly to zero. For the first term, we use the following weighted
AM--GM inequality:
\[
ab
\le
\frac{\rho}{2}a^2+\frac{1}{2\rho}b^2,
\qquad a,b\ge0,\quad \rho>0.
\]
Applying this inequality with
\[
a:=\sqrt{\gamma_{t-1,i}}\left|(\nabla f(\x_t))_i\right|,
\qquad
b:=
\Expect\!\left[
\sqrt{|\Delta_{t,1,i}|}\,|g_{t,i}|
\,\middle|\,
\mathscr F_{t-1}
\right],
\qquad
\rho:=2^{-1/4},
\]
we obtain
\begin{align*}
&\quad
\sum_{i=1}^{d}
\sqrt{\gamma_{t-1,i}}
|(\nabla f(\x_t))_i|
\Expect\!\left[
\sqrt{|\Delta_{t,1,i}|}\,|g_{t,i}|
\,\middle|\,
\mathscr F_{t-1}
\right]
\\
&\le
\frac{1}{2^{5/4}}
\sum_{i=1}^{d}
\gamma_{t-1,i}(\nabla f(\x_t))_i^2
+
\frac{1}{2^{3/4}}
\sum_{i=1}^{d}
\Expect^2\!\left[
\sqrt{|\Delta_{t,1,i}|}\,|g_{t,i}|
\,\middle|\,
\mathscr F_{t-1}
\right].
\end{align*}
To obtain step (d), we apply the following derivation to the second term from the preceding step:
\begin{align*}
 \frac{1}{2}\sum_{i=1}^{d}\Expect\!\left[|\Delta_{t,1,i}||\mathscr{F}_{t-1}\right]\Expect[g_{t,i}^2|\mathscr{F}_{t-1}]&\le  \frac{1}{2}\left(\sum_{i=1}^{d}\Expect\!\left[|\Delta_{t,1,i}||\mathscr{F}_{t-1}\right]\right)\sum_{i=1}^{d}\Expect\left[\left(g_{t,i}-\left(\nabla f(\x_t)\right)_i^2\right)^2\big|\mathscr{F}_{t-1}\right]\\&\quad+\frac{1}{2}\left(\sum_{i=1}^{d}\Expect\!\left[|\Delta_{t,1,i}||\mathscr{F}_{t-1}\right]\right)\left(\nabla f(\x_t)\right)_i^2\\&\le  \frac{1}{2}\left(\sum_{i=1}^{d}\Expect\!\left[|\Delta_{t,1,i}||\mathscr{F}_{t-1}\right]\right)\Expect[\|g_{t}-\nabla f(\x_t)\|^2|\mathscr{F}_{t-1}] \\&\quad+\frac{1}{2}\left(\sum_{i=1}^{d}\Expect\!\left[|\Delta_{t,1,i}||\mathscr{F}_{t-1}\right]\right)\left(\nabla f(\x_t)\right)_i^2.
\end{align*}
%To obtain step (e), we handle the second from the preceding step. For the second term, we apply Assumption~\ref{ass:abc} to relax it, and obtain
%\begin{align*}
%    \Expect\!\left[\|g_t-\nabla f(\x_t)\|^2|\mathscr{F}_{t-1}\right]\le C.
%\end{align*}
To obtain step (f), we derive as follows:
\begin{align}\label{adam_adam_0'}
&\sum_{i=1}^{d}\left((\nabla f(\x_t))_i^2+C\right)\Expect\!\left[|\Delta_{t,1,i}||\mathscr{F}_{t-1}\right]\notag\\
&=\sum_{i=1}^{d}(\nabla f(\x_t))_i^2|\Delta_{t,1,i}| +\sum_{i=1}^{d}(\nabla f(\x_t))_{i}^{2} \left(\Expect\!\left[|\Delta_{t,1,i}||\mathscr{F}_{t-1}\right]-|\Delta_{t,1,i}|\right) +C\sum_{i=1}^{d}\Expect\!\left[|\Delta_{t,1,i}||\mathscr{F}_{t-1}\right]\notag\\
&\mathop{=}^{(i)}\sum_{i=1}^{d}(\nabla f(\x_t))_i^2\Delta_{t,1,i} +\sum_{i=1}^{d}(\nabla f(\x_t))_{i}^{2} \left(\Expect\!\left[|\Delta_{t,1,i}||\mathscr{F}_{t-1}\right]-|\Delta_{t,1,i}|\right) \notag\\&\quad+C\sum_{i=1}^{d}\Expect\!\left[|\Delta_{t,1,i}||\mathscr{F}_{t-1}\right].
\end{align}
where (i) follows from \cref{v_12}.

For the first term on the right-hand side, we further obtain
\begin{align}\label{adam_adam_1'}
\sum_{i=1}^{d}(\nabla f(\x_t))_i^2\Delta_{t,1,i}&=  \eta\sum_{i=1}^{d}(\nabla f(\x_t))_i^2\left(\frac{1}{\sqrt{T-1}}\frac{1}{\sqrt{v_{t-1,i}}+\epsilon}-\frac{1}{\sqrt{T}}\frac{1}{\sqrt{v_{t,i}}+\epsilon}\right)\notag\\&= \eta\sum_{i=1}^{d}(\nabla f(\x_t))_i^2\left(\frac{1}{\sqrt{T}}\frac{1}{\sqrt{v_{t-1,i}}+\epsilon}-\frac{1}{\sqrt{T}}\frac{1}{\sqrt{v_{t,i}}+\epsilon}\right)\notag\\&\quad +\eta\sum_{i=1}^{d}(\nabla f(\x_t))_i^2\left(\frac{1}{\sqrt{T-1}}-\frac{1}{\sqrt{T}}\right)\frac{1}{\sqrt{v_{t-1,i}}+\epsilon}\notag\\
&\mathop{\le}^{(i)}\eta\sum_{i=1}^{d}\left(\frac{1}{\sqrt{T}}\frac{(\nabla f(\x_t))_i^2}{\sqrt{v_{t-1,i}}+\epsilon}-\frac{1}{\sqrt{T}}\frac{(\nabla f(\x_{t+1}))_i^2}{\sqrt{v_{t,i}}+\epsilon}\right)\notag\\&\quad+\frac{1}{8}\sum_{i=1}^{d}\gamma_{t-1,i}(\nabla f(\x_{t}))_i^2+\frac{140dL^2\eta}{\sqrt v}\|\x_{t+1}-\x_t\|^2\notag\\&\quad +\eta\sum_{i=1}^{d}(\nabla f(\x_t))_i^2\left(\frac{1}{\sqrt{T-1}}-\frac{1}{\sqrt{T}}\right)\frac{1}{\sqrt{v_{t-1,i}}+\epsilon}\notag\\&\mathop{\le}^{(ii)}\eta\sum_{i=1}^{d}\left(\frac{1}{\sqrt{T}}\frac{(\nabla f(\x_t))_i^2}{\sqrt{v_{t-1,i}}+\epsilon}-\frac{1}{\sqrt{T}}\frac{(\nabla f(\x_{t+1}))_i^2}{\sqrt{v_{t,i}}+\epsilon}\right)\notag\\&\quad+\frac{1}{8}\sum_{i=1}^{d}\gamma_{t-1,i}(\nabla f(\x_{t}))_i^2+\frac{140dL^2\eta}{\sqrt v}\|\x_{t+1}-\x_t\|^2\notag\\&\quad +\frac{1}{10}\sum_{i=1}^{d}\gamma_{t-1,i}(\nabla f(\x_t))_i^2  \notag\\&\le \sum_{i=1}^{d}\gamma_{t-1,i}(\nabla f(\x_t))_i^2-\sum_{i=1}^{d}\gamma_{t,i}(\nabla f(\x_{t+1}))_i^2+\frac{1}{4}\sum_{i=1}^{d}\gamma_{t-1,i}(\nabla f(\x_t))_i^2\notag\\&\quad+\frac{140dL^2\eta}{\sqrt v}\|\gamma_{t}\odot m_t\|^2.
\end{align}
To obtain step (i) in the above bound, using the \(L\)-smoothness of \(f\), for every coordinate \(i\), we have
\begin{align}\label{eq:gradstep1}
&-\frac{\eta}{\sqrt{T}}
\frac{(\nabla f(\x_t))_i^2}{\sqrt{v_{t,i}}+\epsilon} \nonumber\\
&=
-\frac{\eta}{\sqrt{T}}
\frac{
\left(
(\nabla f(\x_{t+1}))_i
+
(\nabla f(\x_t))_i
-
(\nabla f(\x_{t+1}))_i
\right)^2
}{\sqrt{v_{t,i}}+\epsilon}
\nonumber \\
&\le
-\frac{\eta}{\sqrt{T}}
\frac{(\nabla f(\x_{t+1}))_i^2}{\sqrt{v_{t,i}}+\epsilon}
+
\frac{2\eta}{\sqrt{T}}
\frac{
\left|(\nabla f(\x_{t+1}))_i\right|
\left|
(\nabla f(\x_t))_i-(\nabla f(\x_{t+1}))_i
\right|
}{\sqrt{v_{t,i}}+\epsilon}
\nonumber\\
&\quad
+
\frac{\eta}{\sqrt{T}}
\frac{
\left|
(\nabla f(\x_t))_i-(\nabla f(\x_{t+1}))_i
\right|^2
}{\sqrt{v_{t,i}}+\epsilon}
\nonumber\\
&\le
-\frac{\eta}{\sqrt{T}}
\frac{(\nabla f(\x_{t+1}))_i^2}{\sqrt{v_{t,i}}+\epsilon}
+
\frac{2L\eta}{\sqrt{T}}
\frac{
\left|(\nabla f(\x_t))_i\right|
\|\x_{t+1}-\x_t\|
}{\sqrt{v_{t,i}}+\epsilon}
+
\frac{3L^2\eta}{\sqrt{T}}
\frac{
\|\x_{t+1}-\x_t\|^2
}{\sqrt{v_{t,i}}+\epsilon}
\nonumber \\
&\le
-\frac{\eta}{\sqrt{T}}
\frac{(\nabla f(\x_{t+1}))_i^2}{\sqrt{v_{t,i}}+\epsilon}
+
\frac{\eta}{16\sqrt{T}}
\frac{(\nabla f(\x_t))_i^2}{\sqrt{v_{t,i}}+\epsilon}
+
\frac{70L^2\eta}{\sqrt{T}}
\frac{
\|\x_{t+1}-\x_t\|^2
}{\sqrt{v_{t,i}}+\epsilon} \nonumber \\
&\le
-\frac{\eta}{\sqrt{T}}
\frac{(\nabla f(\x_{t+1}))_i^2}{\sqrt{v_{t,i}}+\epsilon}
+
\frac18\gamma_{t-1,i}(\nabla f(\x_t))_i^2
+
70L^2\gamma_{t,i}
\|\x_{t+1}-\x_t\|^2
\end{align}
where in the last inequality, the second term can be derived by the recursion of \(v_t\),
\[
v_{t,i}
=
\left(1-\frac1T\right)v_{t-1,i}
+
\frac1T g_{t,i}^2
\ge
\left(1-\frac1T\right)v_{t-1,i}.
\]
which, due to \(T\ge2\), implies
\[
\sqrt{v_{t,i}}+\epsilon
\ge
\sqrt{1-\frac1T}\,\sqrt{v_{t-1,i}}+\epsilon
\ge
\frac12\bigl(\sqrt{v_{t-1,i}}+\epsilon\bigr),
\]
yielding
\[
\frac{1}{\sqrt{v_{t,i}}+\epsilon}
\le
\frac{2}{\sqrt{v_{t-1,i}}+\epsilon},
\]
and hence
\[
\frac{\eta}{16\sqrt T}
\frac{(\nabla f(\x_t))_i^2}{\sqrt{v_{t,i}}+\epsilon}
\le
\frac{\eta}{8\sqrt T}
\frac{(\nabla f(\x_t))_i^2}{\sqrt{v_{t-1,i}}+\epsilon} 
\le 
\frac{\eta}{8\sqrt{T-1}}
\frac{(\nabla f(\x_t))_i^2}{\sqrt{v_{t-1,i}}+\epsilon} 
=
\frac18\gamma_{t-1,i}(\nabla f(\x_t))_i^2.
\]
%where the last inequality uses
%\[\frac{\eta}{\sqrt T}\frac{1}{\sqrt{v_{t-1,i}}+\epsilon}\le\frac{\eta}{\sqrt{T-1}}\frac{1}{\sqrt{v_{t-1,i}}+\epsilon}=\gamma_{t-1,i}.\]
After summing over \(i\), the third term in the last inequality of \cref{eq:gradstep1} is bounded as follows. Since \[ v_{t,i}\ge (1-1/T)^t v\ge v/4\] for $t\le T$,
\[
\sum_{i=1}^{d}\gamma_{t,i}
\le
\frac{2d\eta}{\sqrt v},
\]
and using \(\x_{t+1}-\x_t=-\gamma_t\odot m_t\), the accumulated smoothness error is bounded by
\[
70L^2
\sum_{i=1}^{d}\gamma_{t,i}
\|\x_{t+1}-\x_t\|^2
\le
\frac{140dL^2\eta}{\sqrt v}
\|\gamma_t\odot m_t\|^2.
\]
For the step (ii) in \cref{adam_adam_1'}, since \(T\ge10\), we have
\[
\sqrt{T-1}
\left(
\frac{1}{\sqrt{T-1}}-\frac{1}{\sqrt T}
\right)
\le
\frac{1}{10},
\]
and therefore
\begin{align}\label{fdsafdsafasdf}
&
\eta
\sum_{i=1}^{d}
(\nabla f(\x_t))_i^2
\left(
\frac{1}{\sqrt{T-1}}-\frac{1}{\sqrt T}
\right)
\frac{1}{\sqrt{v_{t-1,i}}+\epsilon}
\notag\\
&\qquad \le
\frac{1}{10}
\sum_{i=1}^{d}
\gamma_{t-1,i}(\nabla f(\x_t))_i^2 .
\end{align}
 This completes the explanation of \cref{adam_adam_1'}.

Then substituting \cref{adam_adam_1'} back into \cref{adam_adam_0'}, and inserting the resulting expression into \cref{sadam_0}, completes step (f) in \cref{sadam_0}.

\noindent We then substitute the bound on \(W_{t,1}\) in \cref{sadam_0} back into \cref{W}, and obtain
\begin{align}\label{power_100}
W_t
&\le
-\frac14
\sum_{i=1}^d
\gamma_{t-1,i}(\nabla f(\x_t))_i^2
+
D_{t,1}
+
D_{t,2}
\notag\\
&\quad
+
\left(
\sum_{i=1}^{d}
\gamma_{t-1,i}(\nabla f(\x_t))_i^2
-
\sum_{i=1}^{d}
\gamma_{t,i}(\nabla f(\x_{t+1}))_i^2
\right)
\notag\\
&\quad
+
\frac{140dL^2\eta}{\sqrt v}
\|\gamma_t\odot m_t\|^2
+
C\sum_{i=1}^{d}
\Expect\!\left[
|\Delta_{t,1,i}|
\,\middle|\,
\mathscr F_{t-1}
\right].
\end{align}
\noindent We next decompose the term \(H_t\) in \cref{adam_1} into a martingale-difference term and a predictable drift term. By adding and subtracting its conditional expectation, we have
\begin{align}\label{H_decomposition}
H_t
=
\underbrace{
H_t-\mathbb E[H_t\mid \mathscr F_{t-1}]
}_{D_{t,3}}
+
\underbrace{
\mathbb E[H_t\mid \mathscr F_{t-1}]
}_{H_{t,1}} .
\end{align}
Equivalently,
\begin{align}\label{D_t3_def}
D_{t,3}
:=
\frac{\beta_1}{1-\beta_1}
\sum_{i=1}^{d}
\left[
(\gamma_{t-1,i}-\gamma_{t,i})
-
\mathbb E[
\gamma_{t-1,i}-\gamma_{t,i}
\mid \mathscr F_{t-1}
]
\right]
(\nabla f(\y_t))_i m_{t-1,i},
\end{align}
and
\begin{align}\label{H_t1_def}
H_{t,1}
:=
\frac{\beta_1}{1-\beta_1}
\sum_{i=1}^{d}
\mathbb E[
\gamma_{t-1,i}-\gamma_{t,i}
\mid \mathscr F_{t-1}
]
(\nabla f(\y_t))_i m_{t-1,i}.
\end{align}
Since \(\y_t\), \(m_{t-1}\), and \(\gamma_{t-1}\) are \(\mathscr F_{t-1}\)-measurable, it follows that
\[
\mathbb E[D_{t,3}\mid \mathscr F_{t-1}]=0.
\]
Thus \(\{(D_{t,3},\mathscr F_t)\}_{t\ge 2}\) is a martingale difference sequence, while \(H_{t,1}\) is the corresponding predictable drift term.

\noindent
It remains to control \(H_{t,1}\). To this end, we use the decomposition of the consecutive stepsize difference in \cref{W_1}, namely
\[
\gamma_{t-1,i}-\gamma_{t,i}
=
\Delta_{t,1,i}+\Delta_{t,2,i}.
\]
Here \(\Delta_{t,1,i}\ge 0\), while \(\Delta_{t,2,i}\) is \(\mathscr F_{t-1}\)-measurable. Therefore,
\begin{align*}
H_{t,1}
&=
\frac{\beta_1}{1-\beta_1}
\sum_{i=1}^{d}
\mathbb E[
\Delta_{t,1,i}+\Delta_{t,2,i}
\mid \mathscr F_{t-1}
]
(\nabla f(\y_t))_i m_{t-1,i} \\
&\le
\frac{\beta_1}{1-\beta_1}
\sum_{i=1}^{d}
\mathbb E[
\Delta_{t,1,i}
\mid \mathscr F_{t-1}
]
\left((\nabla f(\y_t))_i m_{t-1,i}\right)_+ \\
&\quad
+
\frac{\beta_1}{1-\beta_1}
\sum_{i=1}^{d}
\mathbb E[
|\Delta_{t,2,i}|
\mid \mathscr F_{t-1}
]
\left|(\nabla f(\y_t))_i m_{t-1,i}\right|,
\end{align*}
where the first term of the inequality uses the nonnegativity of \(\Delta_{t,1,i}\), and the second term is controlled by \(|\Delta_{t,2,i}|\), because the coefficient
\((\nabla f(\y_t))_i m_{t-1,i}\) has no fixed sign.

\noindent We first bound the term involving \(\Delta_{t,1,i}\). We claim that
\begin{align}\label{Delta_t1_cond_bound_refined}
\mathbb E[\Delta_{t,1,i}\mid \mathscr F_{t-1}]
\le
\frac{T}{T-1}\,
\frac{|(\nabla f(\x_t))_i|+\sqrt C+\frac{\epsilon}{\sqrt T+\sqrt{T-1}}}{\eta}
\gamma_{t-1,i}^2 .
\end{align}
Indeed, by the definition of \(\Delta_{t,1,i}\) in \cref{W_1},
\begin{align*}
\Delta_{t,1,i}
&=
\frac{\eta}{\sqrt{T-1}}
\frac{1}{\sqrt{v_{t-1,i}}+\epsilon}
-
\frac{\eta}{\sqrt T}
\frac{1}{\sqrt{v_{t,i}}+\epsilon} \\
&=
\eta
\frac{
\sqrt T(\sqrt{v_{t,i}}+\epsilon)
-
\sqrt{T-1}(\sqrt{v_{t-1,i}}+\epsilon)
}{
\sqrt T\sqrt{T-1}
(\sqrt{v_{t-1,i}}+\epsilon)(\sqrt{v_{t,i}}+\epsilon)
}.
\end{align*}
Since
\[
v_{t,i}
=
\left(1-\frac1T\right)v_{t-1,i}
+
\frac1Tg_{t,i}^2,
\]
we have
\[
\sqrt T\sqrt{v_{t,i}}
=
\sqrt{(T-1)v_{t-1,i}+g_{t,i}^2}
\le
\sqrt{T-1}\sqrt{v_{t-1,i}}+|g_{t,i}|.
\]
Moreover,
\[
\sqrt T\epsilon-\sqrt{T-1}\epsilon
=
\frac{\epsilon}{\sqrt T+\sqrt{T-1}}.
\]
Thus,
\[
\sqrt T(\sqrt{v_{t,i}}+\epsilon)
-
\sqrt{T-1}(\sqrt{v_{t-1,i}}+\epsilon)
\le
|g_{t,i}|
+
\frac{\epsilon}{\sqrt T+\sqrt{T-1}}.
\]
On the other hand, since \(v_{t,i}\ge (1-\frac1T)v_{t-1,i}\), we have
\[
\sqrt{v_{t,i}}+\epsilon
\ge
\sqrt{1-\frac1T}\,(\sqrt{v_{t-1,i}}+\epsilon).
\]
Therefore,
\begin{align*}
\Delta_{t,1,i}
&\le
\eta
\frac{
|g_{t,i}|
+
\frac{\epsilon}{\sqrt T+\sqrt{T-1}}
}{
(T-1)(\sqrt{v_{t-1,i}}+\epsilon)^2
} \\
&=
\frac{T}{T-1}
\frac{
|g_{t,i}|
+
\frac{\epsilon}{\sqrt T+\sqrt{T-1}}
}{\eta}
\gamma_{t-1,i}^2 .
\end{align*}
Taking conditional expectation and using
\[
\mathbb E[|g_{t,i}|\mid \mathscr F_{t-1}]
\le
|(\nabla f(\x_t))_i|+\sqrt C
\]
gives \cref{Delta_t1_cond_bound_refined}.

\noindent Using the nonnegativity of 
\(\mathbb E[\Delta_{t,1,i}\mid \mathscr F_{t-1}]\), we apply Young's inequality as
\begin{align*}
&\frac{\beta_1}{1-\beta_1}
\mathbb E[\Delta_{t,1,i}\mid \mathscr F_{t-1}]
\left((\nabla f(\y_t))_i m_{t-1,i}\right)_+ \\
&\le
\frac{\beta_1}{1-\beta_1}
\mathbb E[\Delta_{t,1,i}\mid \mathscr F_{t-1}]
\left(
a(\nabla f(\y_t))_i^2
+
\frac{1}{4a}m_{t-1,i}^2
\right),
\end{align*}
where we choose the smoothed coefficient
\[
a:=\frac{1-\beta_1}{32(1+\beta_1)}.
\]
We now bound the two terms separately. First, since
\[
0\le \Delta_{t,1,i}
\le
\frac{\eta}{\sqrt{T-1}}
\frac{1}{\sqrt{v_{t-1,i}}+\epsilon}
=
\sqrt{\frac{T}{T-1}}\gamma_{t-1,i}
\le
2\gamma_{t-1,i},
\]
we have
\[
\mathbb E[\Delta_{t,1,i}\mid \mathscr F_{t-1}]
\le
2\gamma_{t-1,i}.
\]
Therefore, with the above choice of \(a\),
\begin{align*}
&\frac{\beta_1}{1-\beta_1}
a\,
\mathbb E[\Delta_{t,1,i}\mid \mathscr F_{t-1}]
(\nabla f(\y_t))_i^2 \\
&\le
\frac{\beta_1}{1-\beta_1}
\frac{1-\beta_1}{32(1+\beta_1)}
\cdot
2\gamma_{t-1,i}
(\nabla f(\y_t))_i^2 \\
&=
\frac{\beta_1}{16(1+\beta_1)}
\gamma_{t-1,i}(\nabla f(\y_t))_i^2 \\
&\le
\frac{1}{16}
\gamma_{t-1,i}(\nabla f(\y_t))_i^2.
\end{align*}

For the second term, we use the sharper estimate
\cref{Delta_t1_cond_bound_refined}. Since
\[
\frac{1}{4a}
=
8\frac{1+\beta_1}{1-\beta_1},
\]
we obtain
\begin{align*}
&\frac{\beta_1}{1-\beta_1}
\frac{1}{4a}
\mathbb E[\Delta_{t,1,i}\mid \mathscr F_{t-1}]
m_{t-1,i}^2 \\
&=
8\frac{\beta_1(1+\beta_1)}{(1-\beta_1)^2}
\mathbb E[\Delta_{t,1,i}\mid \mathscr F_{t-1}]
m_{t-1,i}^2 \\
&\le
8\frac{\beta_1(1+\beta_1)}{(1-\beta_1)^2}
\frac{T}{T-1}
\frac{
|(\nabla f(\x_t))_i|+\sqrt C+\frac{\epsilon}{\sqrt T+\sqrt{T-1}}
}{\eta}
\gamma_{t-1,i}^2
m_{t-1,i}^2.
\end{align*}
Consequently,
\begin{align}\label{Delta_t1_part_bound}
&\frac{\beta_1}{1-\beta_1}
\sum_{i=1}^{d}
\mathbb E[\Delta_{t,1,i}\mid \mathscr F_{t-1}]
\left((\nabla f(\y_t))_i m_{t-1,i}\right)_+
\notag\\
&\le
\frac{1}{16}
\sum_{i=1}^{d}
\gamma_{t-1,i}(\nabla f(\y_t))_i^2
\notag\\
&\quad
+
8\frac{\beta_1(1+\beta_1)}{(1-\beta_1)^2}
\frac{T}{T-1}
\sum_{i=1}^{d}
\frac{
|(\nabla f(\x_t))_i|+\sqrt C+\frac{\epsilon}{\sqrt T+\sqrt{T-1}}
}{\eta}
\gamma_{t-1,i}^2
m_{t-1,i}^2 .
\end{align}

\noindent Next, we control the term involving \(\Delta_{t,2,i}\). Since \(\Delta_{t,2,i}\) is \(\mathscr F_{t-1}\)-measurable, we have
\[
\mathbb E[|\Delta_{t,2,i}|\mid \mathscr F_{t-1}]
=
|\Delta_{t,2,i}|.
\]
By the definition of \(\Delta_{t,2,i}\) in \cref{W_1},
\begin{align}\label{Delta_t2_abs_bound_refined}
|\Delta_{t,2,i}|
&=
\eta\left(
\frac{1}{\sqrt{T-1}}-\frac{1}{\sqrt T}
\right)
\frac{1}{\sqrt{v_{t-1,i}}+\epsilon}
\notag\\
&=
\left(
\sqrt{\frac{T}{T-1}}-1
\right)
\gamma_{t-1,i}.
\end{align}
Moreover, since
\[
\sqrt{\frac{T}{T-1}}-1
=
\frac{1}{\sqrt{T-1}(\sqrt T+\sqrt{T-1})}
\le
\frac{1}{T},
\]
we further have
\begin{align}\label{Delta_t2_gamma_bound}
|\Delta_{t,2,i}|
\le
\frac{1}{T}\gamma_{t-1,i}.
\end{align}
Substituting \cref{Delta_t1_part_bound} and \cref{Delta_t2_gamma_bound} into the previous bound on \(H_{t,1}\), we obtain
\begin{align}\label{H_t1_bound_final}
H_{t,1}
&\le
\frac{1}{16}
\sum_{i=1}^{d}
\gamma_{t-1,i}(\nabla f(\y_t))_i^2
\notag\\
&\quad
+
8\frac{\beta_1(1+\beta_1)}{(1-\beta_1)^2}
\frac{T}{T-1}
\sum_{i=1}^{d}
\frac{
|(\nabla f(\x_t))_i|+\sqrt C+\frac{\epsilon}{\sqrt T+\sqrt{T-1}}
}{\eta}
\gamma_{t-1,i}^2
m_{t-1,i}^2
\notag\\
&\quad
+
\frac{\beta_1}{1-\beta_1}
\frac{1}{T}
\sum_{i=1}^{d}
\gamma_{t-1,i}
\left|(\nabla f(\y_t))_i m_{t-1,i}\right|.
\end{align}
Combining \cref{adam_1}, \cref{power_100}, \cref{H_decomposition}, and \cref{H_t1_bound_final}, and defining
\begin{align}\label{D_t_combined_def}
D_t:=D_{t,1}+D_{t,2}+D_{t,3},
\end{align}
we obtain
\begin{align}\label{adam_1_final}
&f(\y_{t+1})-f(\y_t)
\notag\\
&\le
-\frac{1}{4}\sum_{i=1}^d\gamma_{t-1,i}(\nabla f(\x_t))_i^2
+
\frac{1}{16}
\sum_{i=1}^{d}
\gamma_{t-1,i}(\nabla f(\y_t))_i^2
+
D_t
\notag\\
&\quad
+
\left(
\sum_{i=1}^{d}\gamma_{t-1,i}(\nabla f(\x_t))_i^2
-
\sum_{i=1}^{d}\gamma_{t,i}(\nabla f(\x_{t+1}))_i^2
\right)
\notag\\
&\quad
+
\frac{\beta_1^2L}{2(1-\beta_1)^2}
\|\gamma_{t-1}\odot m_{t-1}\|^2
+
\frac{3L}{2}\sum_{i=1}^{d}\gamma_{t,i}^2g_{t,i}^2
+
\frac{\beta_1^2L}{(1-\beta_1)^2}
\sum_{i=1}^{d}
(\gamma_{t-1,i}-\gamma_{t,i})^2m_{t-1,i}^2
\notag\\
&\quad
+
\frac{140dL^2\eta}{\sqrt v}\|\gamma_t\odot m_t\|^2
+
C\sum_{i=1}^{d}\Expect\left[|\Delta_{t,1,i}|\mid\mathscr F_{t-1}\right]
\notag\\
&\quad
+
8\frac{\beta_1(1+\beta_1)}{(1-\beta_1)^2}
\frac{T}{T-1}
\sum_{i=1}^{d}
\frac{
|(\nabla f(\x_t))_i|+\sqrt C+\frac{\epsilon}{\sqrt T+\sqrt{T-1}}
}{\eta}
\gamma_{t-1,i}^2
m_{t-1,i}^2
\notag\\
&\quad
+
\frac{\beta_1}{1-\beta_1}
\frac{1}{T}
\sum_{i=1}^{d}
\gamma_{t-1,i}
\left|(\nabla f(\y_t))_i m_{t-1,i}\right|.
\end{align}
Since \(D_{t,1}\), \(D_{t,2}\), and \(D_{t,3}\) are martingale difference terms with respect to \(\mathscr F_t\), their sum \(D_t\) also satisfies
\[
\Expect[D_t\mid \mathscr F_{t-1}]=0.
\]
Thus \(\{(D_t,\mathscr F_t)\}_{t\ge 2}\) is a martingale difference sequence.
We further convert the term involving \(\nabla f(\y_t)\) back to \(\nabla f(\x_t)\). 
By \((a+b)^2\le 2a^2+2b^2\), we have
\begin{align*}
\frac{1}{16}
\sum_{i=1}^{d}
\gamma_{t-1,i}(\nabla f(\y_t))_i^2
&\le
\frac{1}{8}
\sum_{i=1}^{d}
\gamma_{t-1,i}(\nabla f(\x_t))_i^2
\\
&\quad
+
\frac{1}{8}
\sum_{i=1}^{d}
\gamma_{t-1,i}
\left((\nabla f(\y_t))_i-(\nabla f(\x_t))_i\right)^2 .
\end{align*}
For the second term, using the coordinatewise bound
\[
\gamma_{t-1,i}
=
\frac{\eta}{\sqrt T}
\frac{1}{\sqrt{v_{t-1,i}}+\epsilon}
\le
\frac{2\eta}{\sqrt v},
\]
we obtain
\begin{align*}
\frac{1}{8}
\sum_{i=1}^{d}
\gamma_{t-1,i}
\left((\nabla f(\y_t))_i-(\nabla f(\x_t))_i\right)^2
&\le
\frac{\eta}{4\sqrt v}
\|\nabla f(\y_t)-\nabla f(\x_t)\|^2 \\
&\le
\frac{\eta L^2}{4\sqrt v}
\|\y_t-\x_t\|^2.
\end{align*}
Moreover, by the definition of \(\y_t\),
\[
\y_t-\x_t
=
-\frac{\beta_1}{1-\beta_1}
\gamma_{t-1}\odot m_{t-1}.
\]
Therefore,
\begin{align}\label{yt_to_xt_grad_bound}
\frac{1}{16}
\sum_{i=1}^{d}
\gamma_{t-1,i}(\nabla f(\y_t))_i^2
&\le
\frac{1}{8}
\sum_{i=1}^{d}
\gamma_{t-1,i}(\nabla f(\x_t))_i^2
+
\frac{\beta_1^2L^2\eta}{4(1-\beta_1)^2\sqrt v}
\|\gamma_{t-1}\odot m_{t-1}\|^2 .
\end{align}
Substituting \cref{yt_to_xt_grad_bound} into \cref{adam_1_final}, we obtain
\begin{align}\label{adam_1_final_refined}
f(\y_{t+1})-f(\y_t)
&\le
-\frac{1}{8}\sum_{i=1}^d\gamma_{t-1,i}(\nabla f(\x_t))_i^2
+
D_t
\notag\\
&\quad
+
\left(
\sum_{i=1}^{d}\gamma_{t-1,i}(\nabla f(\x_t))_i^2
-
\sum_{i=1}^{d}\gamma_{t,i}(\nabla f(\x_{t+1}))_i^2
\right)
\notag\\
&\quad
+
\left(
\frac{\beta_1^2L}{2(1-\beta_1)^2}
+
\frac{\beta_1^2L^2\eta}{4(1-\beta_1)^2\sqrt v}
\right)
\|\gamma_{t-1}\odot m_{t-1}\|^2
\notag\\
&\quad
+
\frac{3L}{2}\sum_{i=1}^{d}\gamma_{t,i}^2g_{t,i}^2
+
\frac{\beta_1^2L}{(1-\beta_1)^2}
\sum_{i=1}^{d}
(\gamma_{t-1,i}-\gamma_{t,i})^2m_{t-1,i}^2
\notag\\
&\quad
+
\frac{140dL^2\eta}{\sqrt v}\|\gamma_t\odot m_t\|^2
+
C\sum_{i=1}^{d}\Expect\left[|\Delta_{t,1,i}|\mid\mathscr F_{t-1}\right]
\notag\\
&\quad
+
8\frac{\beta_1(1+\beta_1)}{(1-\beta_1)^2}
\frac{T}{T-1}
\sum_{i=1}^{d}
\frac{
|(\nabla f(\x_t))_i|+\sqrt C+\frac{\epsilon}{\sqrt T+\sqrt{T-1}}
}{\eta}
\gamma_{t-1,i}^2
m_{t-1,i}^2
\notag\\
&\quad
+
\frac{\beta_1}{1-\beta_1}
\frac{1}{T}
\sum_{i=1}^{d}
\gamma_{t-1,i}
\left|(\nabla f(\y_t))_i m_{t-1,i}\right|.
\end{align}
Moving the telescoping term to the left-hand side of \cref{adam_1_final_refined}, we obtain
\begin{align}\label{adam_1_final_potential}
&\quad
f(\y_{t+1})
+
\sum_{i=1}^{d}
\gamma_{t,i}(\nabla f(\x_{t+1}))_i^2
-
\left(
f(\y_t)
+
\sum_{i=1}^{d}
\gamma_{t-1,i}(\nabla f(\x_t))_i^2
\right)
\notag\\
&\le
-\frac{1}{8}
\sum_{i=1}^{d}
\gamma_{t-1,i}(\nabla f(\x_t))_i^2
+
D_t
+
P_t,
\end{align}
where \(D_t\) is the martingale-difference term defined in \cref{D_t_combined_def}, and \(P_t\) collects the remaining second-order error terms:
\begin{align}
%\label{P_t_def}
P_t
:=
&
\left(
\frac{\beta_1^2L}{2(1-\beta_1)^2}
+
\frac{\beta_1^2L^2\eta}{4(1-\beta_1)^2\sqrt v}
\right)
\|\gamma_{t-1}\odot m_{t-1}\|^2
\notag\\
&+
\frac{3L}{2}
\sum_{i=1}^{d}
\gamma_{t,i}^2g_{t,i}^2
+
\frac{\beta_1^2L}{(1-\beta_1)^2}
\sum_{i=1}^{d}
(\gamma_{t-1,i}-\gamma_{t,i})^2m_{t-1,i}^2
\notag\\
&+
\frac{140dL^2\eta}{\sqrt v}\|\gamma_t\odot m_t\|^2
+
C\sum_{i=1}^{d}\Expect\left[|\Delta_{t,1,i}|\mid\mathscr F_{t-1}\right]
\notag\\
&+
8\frac{\beta_1(1+\beta_1)}{(1-\beta_1)^2}
\frac{T}{T-1}
\sum_{i=1}^{d}
\frac{
|(\nabla f(\x_t))_i|+\sqrt C+\frac{\epsilon}{\sqrt T+\sqrt{T-1}}
}{\eta}
\gamma_{t-1,i}^2
m_{t-1,i}^2
\notag\\
&+
\frac{\beta_1}{1-\beta_1}
\frac{1}{T}
\sum_{i=1}^{d}
\gamma_{t-1,i}
\left|(\nabla f(\y_t))_i m_{t-1,i}\right|.
\end{align}

\noindent With this, we complete the proof.
\end{proof}
\subsection{The Proof of Lemma \ref{lemma_sum_two} (Formal Version of Lemma~\ref{lemma_sum_two'})}\label{p_lemma_sum_two}
\begin{proof}
By a straightforward computation, we obtain
\begin{align}\label{super_adam_10}
\sum_{t=1}^{\mu\wedge T}\sum_{i=1}^{d}\gamma_{t,i}^2g_{t,i}^2=\sum_{t=1}^{\mu \wedge T}\sum_{i=1}^d\frac{\eta ^2}{T}\frac{g_{t,i}^2}{\left(\sqrt{v_{t,i}}+\epsilon\right)^2}   \le \eta ^2\sum_{t=1}^{\mu\wedge T}\sum_{i=1}^{d}\frac{1}{T}\frac{g_{t,i}^2}{v_{t,i}} .
\end{align}   
Recalling \cref{v_11}, we obtain
\begin{align*}
v_{t,i}&=\left(1-\frac{1}{T}\right)^t v_{0,i}+\frac{1}{T}\sum_{s=1}^{t}\left(1-\frac{1}{T}\right)^{t-s}g_{s,i}^2\\&\mathop{\ge}^{(a)} \frac{v}{4}+\frac{1}{4T}\sum_{s=1}^{t}g_{s,i}^2=\frac{1}{4T}\left(vT+\sum_{s=1}^{t}g_{s,i}^2\right)\\
&\overset{(b)}{=}\frac{1}{4T}\left(S_{t,i}+(T-1)v\right),
\end{align*}
where (a) applies the following inequality, valid for all integers $T \ge 2$ and $t \in [1, T]$:
\[
\left(1-\frac{1}{T}\right)^t 
\ge \left(1-\frac{1}{T}\right)^T 
\ge \frac{1}{4},
\]
and the quantity $S_{t,i}$ in (b) is defined in \cref{S_v}.

Substituting the above lower bound of $v_{t,i}$ into \cref{super_adam_10}, we obtain
\begin{align*}
\sum_{t=1}^{\mu\wedge T}\sum_{i=1}^{d}\gamma_{t,i}^2g_{t,i}^2&\le 4\eta ^2\sum_{i=1}^d\sum_{t=1}^{\mu\wedge T}\frac{g_{t,i}^2}{S_{t,i}+(T-1)v}\le 4\eta^2\sum_{i=1}^d\int_{Tv}^{S_{\mu\wedge T,i}+(T-1)v}\frac{1}{x}\text{d}x\\&\le 4\eta ^2\sum_{i=1}^{d}\log\left(1+\frac{S_{\mu \wedge T,i}}{Tv}\right)\le4\eta ^2d\log\left(1+\frac{S_{\mu\wedge T}}{vdT}\right),
\end{align*}
where $S_T:=\sum_{i=1}^dS_{T,i}$, and the last inequality follows from Jensen's inequality.

Thus, the first inequality is proved. 
For the second inequality, we can immediately obtain the result from Lemma~\ref{property_3.5}.
\noindent With this, we complete the proof.
\end{proof}
\subsection{The Proof of Lemma \ref{lem_important_-1}}\label{p_lem_important_-1}
\begin{proof}
We first consider the case \(s=1\). Since
\[
Y_{\mu\wedge n}
=
\sum_{k=1}^{n}\mathbf 1_{\{k\le \mu\wedge n\}}\mathbb E[Z_k\mid \mathcal F_{k-1}],
\]
and \(\mathbf 1_{\{k\le \mu\wedge n\}}\in\mathcal F_{k-1}\), the tower property yields
\begin{align*}
\mathbb E[Y_{\mu\wedge n}]
&=
\sum_{k=1}^{n}
\mathbb E\!\left[
\mathbf 1_{\{k\le \mu\wedge n\}}\mathbb E[Z_k\mid \mathcal F_{k-1}]
\right] \\
&=
\sum_{k=1}^{n}
\mathbb E\!\left[
\mathbf 1_{\{k\le \mu\wedge n\}} Z_k
\right]
=
\mathbb E[X_{\mu\wedge n}].
\end{align*}
Hence
\[
\|Y_{\mu\wedge n}\|_{L^1}\le \|X_{\mu\wedge n}\|_{L^1}.
\]
Now assume \(1<s<\infty\), and let \(r=\frac{s}{s-1}\) be the conjugate exponent of \(s\). By the standard \(L^s\)–\(L^r\) duality for nonnegative random variables,
\[
\|Y_{\mu\wedge n}\|_{L^s}
=
\sup\bigl\{\mathbb E[U\,Y_{\mu\wedge n}]: U\ge 0,\ \|U\|_{L^r}=1\bigr\}.
\]
Fix any \(U\ge 0\) with \(\|U\|_{L^r}=1\), define the martingale
\[
M_k:=\mathbb E[U\mid \mathcal F_k],\qquad k\ge 0.
\]
Then
\begin{align*}
\mathbb E[U\,Y_{\mu\wedge n}]
&=
\sum_{k=1}^{n}
\mathbb E\!\left[
U\,\mathbf 1_{\{k\le \mu\wedge n\}}\,\mathbb E[Z_k\mid \mathcal F_{k-1}]
\right] \\
&=
\sum_{k=1}^{n}
\mathbb E\!\left[
M_{k-1}\,\mathbf 1_{\{k\le \mu\wedge n\}}\, Z_k
\right].
\end{align*}
Since \(Z_k\ge 0\), we have
\[
\mathbb E[U\,Y_{\mu\wedge n}]
\le
\mathbb E\!\left[
M_n^*\,X_{\mu\wedge n}
\right],
\quad \text{where}\;\;
M_n^*:=\sup_{0\le j\le n} M_j.
\]
Applying Hölder’s inequality gives
\[
\mathbb E[M_n^*\,X_{\mu\wedge n}]
\le
\|M_n^*\|_{L^r}\,\|X_{\mu\wedge n}\|_{L^s}.
\]
By Doob’s maximal inequality for \(L^r\)-martingales, we have
\[
\|M_n^*\|_{L^r}
\le
\frac{r}{r-1}\|M_n\|_{L^r}.
\]
Since conditional expectation is an \(L^r\)-contraction,
\[
\|M_n\|_{L^r}
=
\|\mathbb E[U\mid \mathcal F_n]\|_{L^r}
\le
\|U\|_{L^r}
=
1.
\]
Therefore
\[
\mathbb E[U\,Y_{\mu\wedge n}]
\le
\frac{r}{r-1}\|X_{\mu\wedge n}\|_{L^s}
=
s\,\|X_{\mu\wedge n}\|_{L^s}.
\]
Taking the supremum over all such \(U\) yields
\[
\|Y_{\mu\wedge n}\|_{L^s}\le s\,\|X_{\mu\wedge n}\|_{L^s}.
\]
This completes the proof.
\end{proof}
\section{Discussion of General \texorpdfstring{$\beta_2$}{beta2} Choices}
\label{sec:general_beta_discussion}

We explain why the self-normalization mechanism used in the main proof is not
intrinsically tied to the particular calibration \(\beta_2=1-1/T\), and why we
nevertheless state the main theorem in this calibrated form. As in the
expository discussion in \Cref{subsec:self_normalization}, we focus on an
RMSProp-style reduction \((\beta_1=0)\) in order to isolate the role of the
second-moment accumulator. The main point is that the pathwise logarithmic
self-normalization created by \(v_t\) remains valid for every fixed
\(\beta_2\in(0,1)\). What changes is the scale of the deterministic term in the
pathwise bound. If \(\beta_2\) and the base stepsize \(\gamma\) are both fixed,
then the descent residual leaves a non-vanishing neighborhood term. The
calibration \(\beta_2=1-1/T\) and \(\gamma=\eta/\sqrt T\) makes this term
vanish and yields the clean rate in the main theorem.

\subsection{Pathwise self-normalization for arbitrary \texorpdfstring{$\beta_2$}{beta2} and \texorpdfstring{$\gamma$}{beta2}}

Consider the RMSProp-style update
\begin{align}\label{eq:general_beta_rmsprop_update}
\x_{t+1}
=
\x_t-\gamma_t\odot g_t,
\qquad
\gamma_t
:=
\gamma(\sqrt{v_t}+\epsilon)^{-1},
\qquad
v_t
=
\beta_2 v_{t-1}
+
(1-\beta_2)(g_t\odot g_t),
\end{align}
with \(v_0=v\mathbf 1\), \(v>0\), \(\gamma>0\), and
\(\beta_2\in(0,1)\). All operations are componentwise. The following estimate
is deterministic and does not use unbiasedness, variance bounds, or tail
assumptions.

\begin{lem}[Pathwise self-normalization for fixed \(\beta_2\)]
\label{lem:general_beta_pathwise_log}
For every deterministic sequence \(g_1,\ldots,g_T\in\mathbb R^d\), the update
\cref{eq:general_beta_rmsprop_update} satisfies
\begin{align}
\sum_{t=1}^{T}
\|\gamma_t\odot g_t\|^2
&\le
\frac{\gamma^2}{1-\beta_2}
\sum_{i=1}^{d}
\log\!\left(
1+
\frac{1-\beta_2}{v}
\sum_{t=1}^{T}
\beta_2^{-t}g_{t,i}^2
\right)
\notag\\
&\le
\frac{\gamma^2 dT\log(1/\beta_2)}{1-\beta_2}
+
\frac{\gamma^2 d}{1-\beta_2}
\log\!\left(
1+
\frac{1-\beta_2}{vd}
\sum_{t=1}^{T}\|g_t\|^2
\right).
\label{eq:general_beta_pathwise_log}
\end{align}
\end{lem}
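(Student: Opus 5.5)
Fix a coordinate $i$ and work pathwise with rescaled quantities. Dropping $\epsilon$ gives $\gamma_{t,i}^2 g_{t,i}^2\le \gamma^2 g_{t,i}^2/v_{t,i}$. Unrolling the recursion gives
\[
v_{t,i}=\beta_2^t\Bigl(v+(1-\beta_2)\sum_{s=1}^{t}\beta_2^{-s}g_{s,i}^2\Bigr).
\]
I will introduce the rescaled cumulative quantities $a_s:=(1-\beta_2)\beta_2^{-s}g_{s,i}^2\ge0$ and $A_t:=v+\sum_{s\le t}a_s$, so that $v_{t,i}=\beta_2^tA_t$. Then
\[
\frac{g_{t,i}^2}{v_{t,i}}=\frac{1}{1-\beta_2}\cdot\frac{a_t}{A_t}.
\]
This puts the sum in exactly the self-normalized form of \cref{eq:qv_log_heuristic}.

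Next I apply the elementary inequality already used in \Cref{subsec:self_normalization}: $\sum_t a_t/(a_0+\sum_{s\le t}a_s)\le\log(1+\sum_t a_t/a_0)$. It is proved by $a_t/A_t\le\int_{A_{t-1}}^{A_t}dx/x=\log(A_t/A_{t-1})$ and telescoping, with $a_0=v$. Summing over $i$ gives the first inequality of \cref{eq:general_beta_pathwise_log}, with the constant $\gamma^2/(1-\beta_2)$ in front.

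For the second inequality I bound $\beta_2^{-t}\le\beta_2^{-T}$ inside each logarithm. I then write $1+\beta_2^{-T}x\le\beta_2^{-T}(1+x)$, which holds since $\beta_2^{-T}\ge1$, so each log splits as $T\log(1/\beta_2)+\log\bigl(1+\frac{1-\beta_2}{v}\sum_t g_{t,i}^2\bigr)$. Summing over $i$ yields the additive term $dT\log(1/\beta_2)$. For the remaining sum of logs I apply Jensen's inequality (concavity of $\log$) over the $d$ coordinates:
\[
\sum_i\log(1+c_i)\le d\log\Bigl(1+\tfrac1d\sum_ic_i\Bigr),
\]
which produces $\frac{1-\beta_2}{vd}\sum_t\|g_t\|^2$.

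Every step is deterministic, so no probabilistic input is needed. The only point requiring care is the bookkeeping of the $\beta_2^{-t}$ weights, that is, checking that the unrolled formula for $v_{t,i}$ factors exactly as $\beta_2^tA_t$, so the normalization is genuinely self-normalized. I do not expect a real obstacle.
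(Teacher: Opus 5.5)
Your proposal is correct and follows essentially the same route as the paper. Your rescaled accumulator $A_t$ is exactly the paper's $C_{t,i}=\beta_2^{-t}v_{t,i}$, your integral comparison $a_t/A_t\le\log(A_t/A_{t-1})$ is the paper's $1-u^{-1}\le\log u$ step, and the second inequality uses the same $\beta_2^{-t}\le\beta_2^{-T}$ factoring followed by Jensen over coordinates.
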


\begin{proof}
Since
\[
(\sqrt{v_{t,i}}+\epsilon)^2\ge v_{t,i},
\]
it suffices to control
\[
\sum_{t=1}^{T}\sum_{i=1}^{d}
\gamma^2\frac{g_{t,i}^2}{v_{t,i}}.
\]
Fix a coordinate \(i\) and define
\[
C_{t,i}
:=
\beta_2^{-t}v_{t,i}
=
v
+
(1-\beta_2)
\sum_{s=1}^{t}
\beta_2^{-s}g_{s,i}^2 .
\]
Then \(C_{t,i}\) is nondecreasing and
\[
C_{t,i}-C_{t-1,i}
=
(1-\beta_2)\beta_2^{-t}g_{t,i}^2 .
\]
Therefore,
\[
\frac{g_{t,i}^2}{v_{t,i}}
=
\frac{\beta_2^{-t}g_{t,i}^2}{C_{t,i}}
=
\frac{C_{t,i}-C_{t-1,i}}{(1-\beta_2)C_{t,i}}
\le
\frac{1}{1-\beta_2}
\log\frac{C_{t,i}}{C_{t-1,i}},
\]
where the last inequality uses \(1-u^{-1}\le \log u\) for \(u\ge1\). Summing
over \(t\) and then over \(i\) gives the first inequality in
\cref{eq:general_beta_pathwise_log}. For the second inequality, we use
\[
1+
\frac{1-\beta_2}{v}
\sum_{t=1}^{T}
\beta_2^{-t}g_{t,i}^2
\le
\beta_2^{-T}
\left(
1+
\frac{1-\beta_2}{v}
\sum_{t=1}^{T}g_{t,i}^2
\right).
\]
Taking logarithms and summing over \(i\), we obtain
\[
\sum_{i=1}^{d}
\log\!\left(
1+
\frac{1-\beta_2}{v}
\sum_{t=1}^{T}
\beta_2^{-t}g_{t,i}^2
\right)
\le
dT\log(1/\beta_2)
+
\sum_{i=1}^{d}
\log\!\left(
1+
\frac{1-\beta_2}{v}
\sum_{t=1}^{T}g_{t,i}^2
\right).
\]
Finally, Jensen's inequality gives
\[
\sum_{i=1}^{d}
\log\!\left(
1+
\frac{1-\beta_2}{v}
\sum_{t=1}^{T}g_{t,i}^2
\right)
\le
d\log\!\left(
1+
\frac{1-\beta_2}{vd}
\sum_{t=1}^{T}\|g_t\|^2
\right).
\]
This proves the claim.
\end{proof}

The endpoint \(\beta_2=0\) can be handled separately. In that case,
\(v_t=g_t\odot g_t\), and hence
\[
\|\gamma_t\odot g_t\|^2
=
\sum_{i=1}^{d}
\gamma^2
\frac{g_{t,i}^2}{(|g_{t,i}|+\epsilon)^2}
\le
\gamma^2d.
\]
Therefore,
\[
\sum_{t=1}^{T}\|\gamma_t\odot g_t\|^2
\le
\gamma^2dT.
\]
Thus the same qualitative conclusion holds: with a fixed base stepsize, the
smoothness residual creates a fixed-size term after averaging.

\subsection{Proof sketch for the descent consequence}

We now explain, at a proof-sketch level, how the preceding pathwise estimate
enters the descent mechanism. By \(L\)-smoothness,
\begin{align}
f(\x_{t+1})
&\le
f(\x_t)
-
\langle\nabla f(\x_t),\gamma_t\odot g_t\rangle
+
\frac{L}{2}\|\gamma_t\odot g_t\|^2 .
\label{eq:general_beta_rmsprop_descent}
\end{align}
The second-order term is exactly the adaptive stochastic-gradient energy
controlled in Lemma~\ref{lem:general_beta_pathwise_log}. Hence the same
logarithmic self-normalization mechanism used in the calibrated proof remains
available for every fixed \(\beta_2\in(0,1)\).

The only additional issue is the first-order coupling: \(\gamma_t\) depends on
the same stochastic gradient \(g_t\) that appears in
\(\langle\nabla f(\x_t),\gamma_t\odot g_t\rangle\). To isolate this dependence,
define the predictable proxy
\[
\bar v_t:=\beta_2 v_{t-1},
\qquad
\bar\gamma_t:=\gamma(\sqrt{\bar v_t}+\epsilon)^{-1}.
\]
Then \(\bar\gamma_t\) is \(\mathscr F_{t-1}\)-measurable and
\[
v_t=\bar v_t+(1-\beta_2)(g_t\odot g_t),
\qquad
0<\gamma_t\le \bar\gamma_t
\]
componentwise.

Let
\[
A_t
:=
\sum_{i=1}^{d}
\gamma_{t,i}(\nabla f(\x_t))_i g_{t,i}.
\]
Adding and subtracting its conditional expectation gives
\[
-A_t
=
-\mathbb E[A_t\mid\mathscr F_{t-1}]
+
D_t,
\qquad
D_t
:=
\mathbb E[A_t\mid\mathscr F_{t-1}]-A_t,
\]
where \(\{(D_t,\mathscr F_t)\}_{t\ge1}\) is a martingale difference sequence.
Using
\[
\gamma_{t,i}
=
\bar\gamma_{t,i}
-
(\bar\gamma_{t,i}-\gamma_{t,i})
\]
and the conditional unbiasedness of \(g_t\), we obtain
\begin{align}
-\langle\nabla f(\x_t),\gamma_t\odot g_t\rangle
&=
-
\sum_{i=1}^{d}
\bar\gamma_{t,i}(\nabla f(\x_t))_i^2
+
D_t
+
H_t,
\label{eq:general_beta_first_order_split_sketch}
\end{align}
where
\[
H_t
:=
\mathbb E\!\left[
\sum_{i=1}^{d}
(\bar\gamma_{t,i}-\gamma_{t,i})
(\nabla f(\x_t))_i g_{t,i}
\,\middle|\,
\mathscr F_{t-1}
\right].
\]
The term \(H_t\) is the predictable coupling residual. Since
\(\bar\gamma_{t,i}-\gamma_{t,i}\ge0\), Young's inequality and conditional
Cauchy--Schwarz yield the schematic bound
\[
H_t
\le
\frac14
\sum_{i=1}^{d}
\bar\gamma_{t,i}(\nabla f(\x_t))_i^2
+
R_t^{\mathrm{coup}},
\]
where
\[
R_t^{\mathrm{coup}}
:=
\sum_{i=1}^{d}
\mathbb E[
(\bar\gamma_{t,i}-\gamma_{t,i})
\mid
\mathscr F_{t-1}]
\,
\mathbb E[
g_{t,i}^2
\mid
\mathscr F_{t-1}].
\]
Substituting this into \cref{eq:general_beta_rmsprop_descent}, we arrive at the
one-step template
\begin{align}
f(\x_{t+1})
&\le
f(\x_t)
-
\frac34
\sum_{i=1}^{d}
\bar\gamma_{t,i}(\nabla f(\x_t))_i^2
+
D_t
+
R_t^{\mathrm{coup}}
+
\frac{L}{2}\|\gamma_t\odot g_t\|^2 .
\label{eq:general_beta_one_step_template}
\end{align}
This is the fixed-\(\beta_2\) analogue of the descent template used in the main
proof. The martingale term \(D_t\) can be treated by the same martingale
arguments as before; the smoothness residual is controlled by
Lemma~\ref{lem:general_beta_pathwise_log}; and the coupling residual
\(R_t^{\mathrm{coup}}\) contributes additional \(\beta_2\)-dependent constants
to the same stopped-process bookkeeping.

To make the stopped argument explicit, for \(G>1\) define
\[
\bar f(\x):=f(\x)-f^\star+1,
\qquad
\tau_G:=\inf\{t\ge1:\bar f(\x_t)>G\}.
\]
On the event \(\{\tau_G>T\}\), the trajectory remains inside the localized
region \(\bar f(\x_t)\le G\) for \(1\le t\le T\). Summing
\cref{eq:general_beta_one_step_template} up to \(\tau_G\wedge T\), and then
following the same stopped-moment argument as in the calibrated Adam proof, one
first bounds the \(p/6\)-th moments of the stopped martingale and residual
terms. Specifically, BDG reduces the martingale accumulation to \(p/12\)-th
moments of its quadratic-variation terms, while
Lemma~\ref{lem:general_beta_pathwise_log} controls the adaptive
stochastic-gradient energy. The coupling residual \(R_t^{\mathrm{coup}}\) is
handled in the same stopped region using the Young--Cauchy--Schwarz bound above,
at the cost of additional \(\beta_2\)-dependent constants. A Taylor expansion of
the same stretched-exponential transform then yields the corresponding
high-probability bound
\begin{align}
\sum_{t=1}^{T}
\sum_{i=1}^{d}
\bar\gamma_{t,i}
(\nabla f(\x_t))_i^2
&\le
C_{\beta_2,\gamma}
\polylog(\delta^{-1})
+
C_{\beta_2}
\frac{\gamma^2 dT\log(1/\beta_2)}{1-\beta_2}
\label{eq:general_beta_precond_schematic}
\end{align}
with probability at least \(1-\delta\). Here \(C_{\beta_2,\gamma}\) and
\(C_{\beta_2}\) hide fixed constants depending on the algorithmic parameters
\((\beta_2,\gamma,v,\epsilon)\) and problem parameters, but not on \(T\) or
\(\delta\). The first term is the stopped concentration contribution, analogous
to the preconditioned-energy bound in the main theorem. The second term is the
deterministic \(T\)-linear contribution already visible in
Lemma~\ref{lem:general_beta_pathwise_log}.

After normalizing by the first-order scale \(\gamma T\), the natural
RMSProp-geometry stationarity measure satisfies
\begin{align}
\frac{1}{T}
\sum_{t=1}^{T}
\sum_{i=1}^{d}
\frac{(\nabla f(\x_t))_i^2}{\sqrt{\bar v_{t,i}}+\epsilon}
&\le
\frac{C_{\beta_2,\gamma}}{\gamma T}
\polylog(\delta^{-1})
+
C_{\beta_2}
\gamma
\frac{\log(1/\beta_2)}{1-\beta_2}.
\label{eq:general_beta_preconditioned_fixed_bias}
\end{align}
Thus, for fixed \(\beta_2\in(0,1)\) and fixed base stepsize \(\gamma>0\), the
same self-normalization mechanism gives convergence only to a fixed
preconditioned-gradient neighborhood of order
\[
\gamma\frac{\log(1/\beta_2)}{1-\beta_2}.
\]

Finally, to connect this preconditioned statement to the usual unweighted
stationarity criterion, one needs a de-preconditioning step. Let \(G_\delta\) be
the localization level used in the stopped argument. On the localized event,
\[
\sup_{1\le t\le T}\bar f(\x_t)\le G_\delta.
\]
Using \(L\)-smoothness and bounded variance, a crude Markov--union-bound
argument gives the denominator scale
\[
R_{\delta,T}
:=
\left(
\frac{2dT(v+4LG_\delta+2C)}{\delta}
\right)^{1/2}
=
\mathcal O\!\left(
\sqrt{
\frac{dT}{\delta}
\bigl(v+LG_\delta+C\bigr)
}
\right)
\]
such that, with high probability on the localized event,
\[
\max_{1\le t\le T,\,i\in[d]}\sqrt{\bar v_{t,i}}\le R_{\delta,T}.
\]
Consequently,
\[
\bar\gamma_{t,i}
=
\frac{\gamma}{\sqrt{\bar v_{t,i}}+\epsilon}
\ge
\frac{\gamma}{R_{\delta,T}+\epsilon}.
\]
Combining this with \cref{eq:general_beta_precond_schematic} yields the
schematic unweighted bound
\begin{align}
\frac{1}{T}
\sum_{t=1}^{T}
\|\nabla f(\x_t)\|^2
&\le
(R_{\delta,T}+\epsilon)
\left[
\frac{C_{\beta_2,\gamma}}{\gamma T}
\polylog(\delta^{-1})
+
C_{\beta_2}
\gamma
\frac{\log(1/\beta_2)}{1-\beta_2}
\right].
\label{eq:general_beta_unweighted_fixed_bias}
\end{align}
A sharper denominator estimate would replace \(R_{\delta,T}\) by the
corresponding smaller quantity. The important point is that fixed
\(\beta_2\) and fixed \(\gamma\) leave a fixed preconditioned neighborhood, and
the unweighted stationarity bound additionally inherits the cost of controlling
the adaptive denominator.

Let us finally compare this with the calibrated choice in the main theorem. If
\[
\beta_2=1-\frac1T,
\qquad
\gamma=\frac{\eta}{\sqrt T},
\]
then
\[
\frac{\log(1/\beta_2)}{1-\beta_2}
=
\frac{\log(1/(1-1/T))}{1/T}
=
O(1).
\]
Therefore the fixed-neighborhood term in
\cref{eq:general_beta_preconditioned_fixed_bias} becomes \(O(\eta/\sqrt T)\) at
the preconditioned level. Moreover, the calibrated proof uses the generalized
monotonicity of \(v_t\) to control the maximum denominator through the terminal
accumulator, avoiding the extra union-over-time factor in the crude
\(R_{\delta,T}\) above. This is why the main theorem is stated under
\(\beta_2=1-1/T\): the same proof mechanism extends to other \(\beta_2\) choices
by tracking parameter dependence, but the calibrated choice removes the fixed
bias and gives a clean vanishing stationarity rate.
\section{Proof of Lower Bounds for SGD}\label{Teacher_1}
\subsection{Formal Statement and Proof of Theorem~\ref{prop:sgd_lower_informal}}\label{Teacher_1'}

\paragraph{The hard noise family.}
Throughout this subsection, we use the one-dimensional quadratic objective
\begin{align}\label{ffd}
f(x)=\frac12 x^2,
\qquad
f'(x)=x.
\end{align}
For any \(A\ge1\), let \(\xi^{(A)}\) follow the symmetric three-point distribution
\[
\xi^{(A)}
=
\begin{cases}
A, & \text{with probability } \dfrac{1}{2A^2},\\[6pt]
-A, & \text{with probability } \dfrac{1}{2A^2},\\[6pt]
0, & \text{with probability } 1-\dfrac{1}{A^2}.
\end{cases}
\]
Equivalently,
\[
\mathbb P(\xi^{(A)}\ne0)=\frac1{A^2},
\qquad
\mathbb P(\xi^{(A)}=\sigma A)=\frac1{2A^2},
\quad \sigma\in\{-1,1\}.
\]
Moreover,
\[
\mathbb E[\xi^{(A)}]=0,
\qquad
\mathbb E[(\xi^{(A)})^2]=1.
\]
Thus the stochastic oracle
\[
g(x;\xi)=x+\xi
\]
is unbiased and satisfies the bounded-variance condition with variance constant \(1\). 
The SGD recursion considered below is
\[
x_{t+1}=x_t-\gamma(x_t+\xi_t),
\qquad t=1,\ldots,T-1,
\]
where we apply $f'(x_t)=x_t$ and \(\{\xi_t\}_{t=1}^{T}\) are i.i.d. copies of the chosen three-point noise.

\begin{thm}[Formal Statement of Theorem~\ref{prop:sgd_lower_informal}]
\label{prop:grad-lower-bound}
For the function \(f\) defined in \cref{ffd}, fix any deterministic initial
point \(x_1=x_{\mathrm{init}}\in\mathbb R\), any deterministic constant
stepsize \(\gamma>0\), and any horizon \(T\ge 10\).
If \(0<\gamma<1\), define the one-shock response factor
\[
    R_{\gamma,T}
    :=
    \gamma^2
    \sum_{r=0}^{T-\lfloor T/2\rfloor-1}
    (1-\gamma)^{2r}.
\]
Define
\[
\delta_{\gamma,x_{\mathrm{init}},T}
:=
\begin{cases}
\min\left\{
\frac1{64},
\exp\!\left(-\frac{1}{32R_{\gamma,T}\sqrt T}\right),
\exp\!\left(-\frac{1}{\sqrt{32\gamma T R_{\gamma,T}}}\right)
\right\},
& 0<\gamma<1,\\[18pt]
\frac1{64},
& \gamma\ge 1.
\end{cases}
\]
Then, for every
\[
    0<\delta<\delta_{\gamma,x_{\mathrm{init}},T},
\]
one can choose the i.i.d. noise sequence
\[
\bigl(\xi_t^{(A_{\delta,\gamma})}\bigr)_{t=1}^{T}
\]
from the three-point family above, with the noise law allowed to depend on \((\delta,\gamma,T)\), such that the corresponding SGD iterates satisfy
\[
\mathbb{P}\!\left(
\frac{1}{T}\sum_{t=1}^{T}|f'(x_t)|^2
\ge
\frac{1}{512\,\delta\sqrt{T}\,\log(1/\delta)}
\right)
>
\delta.
\]
\end{thm}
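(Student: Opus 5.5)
The plan is a \emph{single rare shock} argument on the quadratic $f(x)=\tfrac12x^2$ with the three-point noise $\xi^{(A)}$. I will take $A=A_{\delta,\gamma}$ so that the probability of seeing exactly one nonzero noise realization in $T$ steps is a constant multiple of $\delta$, and then show that on that event one shock of size $A$ inflates $\sum_t |x_t|^2$ by order $A^2 R_{\gamma,T}$. Concretely, I will set $A^2 := \lceil T/(c\,\delta)\rceil$ for a suitable absolute constant $c$ (something like $c=8$). The requirement $\delta<1/64$ guarantees $A\ge1$ and that $T/A^2$ is small.

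First, I will unroll SGD: $x_{t}=(1-\gamma)^{t-1}x_{\mathrm{init}}-\gamma\sum_{s<t}(1-\gamma)^{t-1-s}\xi_s$. Consider the event $E_s$ that $\xi_s\neq0$ for exactly one index $s\le\lfloor T/2\rfloor$ and $\xi_t=0$ for all $t\neq s$. On $E_s$ with $\xi_s=\sigma A$, for every $t>s$ we get
\[
x_t=(1-\gamma)^{t-1-s}\bigl((1-\gamma)^{s}x_{\mathrm{init}}-\gamma\sigma A\bigr).
\]
So the deterministic part and the shock part share the common factor $(1-\gamma)^{t-1-s}$. This holds for any $\gamma>0$, including the oscillating regime $\gamma\ge1$. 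Hence if $\sigma$ is chosen opposite in sign to $(1-\gamma)^s x_{\mathrm{init}}$ (or arbitrarily when this is $0$), then $|x_t|\ge\gamma A|1-\gamma|^{t-1-s}$ for all $t>s$. By the symmetry of $\xi$, this favourable sign has conditional probability exactly $1/2$. Summing over $t$ from $s+1$ to $T$, and using $T-s\ge T-\lfloor T/2\rfloor$, gives $\sum_t|x_t|^2\ge A^2R_{\gamma,T}$ when $0<\gamma<1$. When $\gamma\ge1$, the term $t=s+1$ alone already gives $\sum_t|x_t|^2\ge\gamma^2A^2\ge A^2$.

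Second, I will compute the probability:
\[
\mathbb P\Bigl(\textstyle\bigcup_s E_s\cap\{\text{good sign}\}\Bigr)=\tfrac12\lfloor T/2\rfloor A^{-2}(1-A^{-2})^{T-1}.
\]
Using $(1-u)^{T}\ge 1-Tu$ and $T/A^2\le c\delta\le c/64$, this is at least roughly $\tfrac{T}{4A^2}(1-c\delta)$. This exceeds $\delta$ once $c$ is fixed appropriately and $A^2$ is taken just below $T/(4\delta)$, so I will tune $A^2\approx T/(5\delta)$. On this event,
\[
\tfrac1T\sum_t|x_t|^2\gtrsim A^2R_{\gamma,T}/T\approx R_{\gamma,T}/(5\delta).
\]

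Third, I will convert this into the stated rate. For $\gamma\ge1$ the bound is $\gtrsim1/\delta$, which is trivially at least $1/(512\,\delta\sqrt T\log(1/\delta))$. For $0<\gamma<1$, the threshold $\delta<\exp(-1/(32R_{\gamma,T}\sqrt T))$ is equivalent to $\log(1/\delta)>1/(32R_{\gamma,T}\sqrt T)$, i.e.\ $R_{\gamma,T}>1/(32\sqrt T\log(1/\delta))$. This yields $R_{\gamma,T}/(5\delta)\ge 1/(160\,\delta\sqrt T\log(1/\delta))$, comfortably above the claimed constant $512$. The third threshold, involving $\gamma T R_{\gamma,T}$, I expect is needed only if I lose the sign-alignment and must instead control the deterministic part $x_{\mathrm{init}}$ or the rounding of $A$ directly. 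I will keep it available as a fallback for the regime of very small $\gamma$.

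The main obstacle is the constant bookkeeping in the probability estimate. The choice of $A$ must make $T/A^2$ both large enough that $\tfrac12\lfloor T/2\rfloor A^{-2}$ beats $\delta$ strictly, and small enough that the ``no other shock'' factor $(1-A^{-2})^{T-1}$ stays close to $1$. It must also handle the integer rounding needed for $A\ge1$, using $\delta<1/64$ and $T\ge10$ to absorb slack. I will first try $A^2=T/(4\delta')$ with $\delta'$ slightly larger than $\delta$, and verify the strict inequality using $e^{-x}\ge1-x$.
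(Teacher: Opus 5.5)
Your proposal is correct and follows essentially the same route as the paper: a single shock at some $s\le\lfloor T/2\rfloor$ with a deterministically chosen sign opposing $(1-\gamma)^s x_{\mathrm{init}}$, Bernoulli's inequality for the no-other-shock factor, the response bound $\sum_t x_t^2\ge A^2R_{\gamma,T}$, and the threshold $\log(1/\delta)>1/(32R_{\gamma,T}\sqrt T)$ to convert. You are also right that the third threshold is not needed for this statement; the paper uses it only for the weighted companion result. The differences are cosmetic:
\begin{itemize}
\item The paper takes $A^2=T/(16\delta)$ and handles $\gamma\ge1$ with the sign-free inequality $x_j^2+x_{j+1}^2\ge \gamma^2\xi_j^2/(1+(1-\gamma)^2)$, whereas your uniform sign argument with $A^2=T/(5\delta)$ also works.
\item No rounding of $A$ is needed, since the three-point family allows any real $A\ge1$.
\item Taking $A^2$ just below $T/(4\delta)$ would fail for odd $T$, because $\lfloor T/2\rfloor$ can be as small as $5T/11$.
\end{itemize}
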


\begin{proof}
We split the proof into two cases.

\paragraph{Case 1: \(0<\gamma<1\).}
Set
\[
    m:=\left\lfloor\frac{T}{2}\right\rfloor
\]
and choose
\[
    A_{\delta,\gamma}^2:=\frac{T}{16\delta}.
\]
Then
\[
    p:=\mathbb P\bigl(\xi_t^{(A_{\delta,\gamma})}\ne0\bigr)
    =
    \frac{1}{A_{\delta,\gamma}^2}
    =
    \frac{16\delta}{T}.
\]

For each \(j\in\{1,\ldots,m\}\), suppose that all noises before time \(j\)
are zero. Then
\[
    x_j=(1-\gamma)^{j-1}x_{\mathrm{init}}.
\]
Let
\[
    a_j:=(1-\gamma)x_j.
\]
Choose a deterministic sign \(\sigma_j\in\{-1,1\}\) such that
\[
    |a_j-\gamma\sigma_j A_{\delta,\gamma}|
    \ge
    \gamma A_{\delta,\gamma}.
\]
Such a sign always exists, since for any \(a\in\mathbb R\) and \(b\ge0\),
\[
    \max\{|a-b|,|a+b|\}\ge b.
\]

Define
\[
E_j
:=
\left\{
\xi_j^{(A_{\delta,\gamma})}=\sigma_j A_{\delta,\gamma},
\quad
\xi_t^{(A_{\delta,\gamma})}=0
\text{ for all }
t\in\{1,\ldots,T-1\}\setminus\{j\}
\right\}.
\]
The events \(E_1,\ldots,E_m\) are disjoint. Moreover, since
\[
\mathbb P(\xi_j^{(A_{\delta,\gamma})}=\sigma_j A_{\delta,\gamma})
=
\frac{1}{2A_{\delta,\gamma}^2},
\]
we have
\[
\mathbb P(E_j)
=
\frac{1}{2A_{\delta,\gamma}^2}
(1-p)^{T-2}
=
\frac{8\delta}{T}
\left(1-\frac{16\delta}{T}\right)^{T-2}.
\]
Let
\[
    E:=\bigcup_{j=1}^{m}E_j.
\]
Since \(T\ge10\), one has \(m/T\ge5/11\). Since
\(\delta<1/64\), Bernoulli's inequality gives
\[
\left(1-\frac{16\delta}{T}\right)^{T-2}
\ge
1-\frac{16\delta(T-2)}{T}
>
1-16\delta
\ge
\frac34.
\]
Therefore
\[
\mathbb P(E)
=
\sum_{j=1}^{m}\mathbb P(E_j)
\ge
\frac{5T}{11}\cdot
\frac{8\delta}{T}\cdot
\frac34
=
\frac{30}{11}\delta
>
\delta.
\]

We now lower bound the trajectory energy on \(E\). Fix \(j\in\{1,\ldots,m\}\)
and work on \(E_j\). By construction,
\[
    x_{j+1}
    =
    (1-\gamma)x_j-\gamma\xi_j^{(A_{\delta,\gamma})}
    =
    a_j-\gamma\sigma_j A_{\delta,\gamma},
\]
so
\[
    |x_{j+1}|\ge \gamma A_{\delta,\gamma}.
\]
For all \(t=j+1,\ldots,T\), no further noise occurs on \(E_j\), and hence
\[
    x_t
    =
    (1-\gamma)^{t-j-1}x_{j+1}.
\]
Therefore
\[
\begin{aligned}
    \sum_{t=1}^{T}x_t^2
    &\ge
    \sum_{t=j+1}^{T}x_t^2                                 
    =x_{j+1}^2
    \sum_{r=0}^{T-j-1}(1-\gamma)^{2r}                                      \ge
    \gamma^2A_{\delta,\gamma}^2
    \sum_{r=0}^{T-j-1}(1-\gamma)^{2r}.
\end{aligned}
\]
Since \(j\le m=\lfloor T/2\rfloor\),
\[
    T-j-1\ge T-\lfloor T/2\rfloor-1.
\]
Thus, by the definition of \(R_{\gamma,T}\),
\[
    \sum_{t=1}^{T}x_t^2
    \ge
    A_{\delta,\gamma}^2R_{\gamma,T}
    =
    \frac{T R_{\gamma,T}}{16\delta}.
\]
Equivalently,
\[
    \frac1T\sum_{t=1}^{T}x_t^2
    \ge
    \frac{R_{\gamma,T}}{16\delta}.
\]

By the definition of \(\delta_{\gamma,x_{\mathrm{init}},T}\), for
\(0<\delta<\delta_{\gamma,x_{\mathrm{init}},T}\),
\[
    \log(1/\delta)
    >
    \frac{1}{32R_{\gamma,T}\sqrt T}.
\]
Hence
\begin{equation}\label{eq:sgd-small-delta-comparison}
    \frac{R_{\gamma,T}}{16\delta}
    >
    \frac{1}{512\,\delta\sqrt T\log(1/\delta)}.
\end{equation}
Since \(f'(x_t)=x_t\), the event \(E\) implies
\[
    \frac1T\sum_{t=1}^{T}|f'(x_t)|^2
    \ge
    \frac{1}{512\,\delta\sqrt T\log(1/\delta)}.
\]
Together with \(\mathbb P(E)>\delta\), this proves the claim for
\(0<\gamma<1\).

\paragraph{Case 2: \(\gamma\ge1\).}
Choose
\[
    A_{\delta,\gamma}^2:=\frac{T}{16\delta}.
\]
Then
\[
    p:=\mathbb P\bigl(\xi_t^{(A_{\delta,\gamma})}\ne0\bigr)
    =
    \mathbb P\bigl(|\xi_t^{(A_{\delta,\gamma})}|=A_{\delta,\gamma}\bigr)
    =
    \frac{1}{A_{\delta,\gamma}^2}
    =
    \frac{16\delta}{T}.
\]
Let
\[
    m:=\left\lfloor\frac{T}{2}\right\rfloor.
\]
Define
\begin{align*}
E_{\mathrm{one}}
:=
\Bigl\{
&\text{there exists exactly one index }
j\in\{1,\ldots,m\}
\text{ such that }
\left|\xi_j^{(A_{\delta,\gamma})}\right|
=
A_{\delta,\gamma},
\\
&\text{and }
\xi_t^{(A_{\delta,\gamma})}=0
\quad
\text{for all }
t\in\{1,\ldots,T-1\}\setminus\{j\}
\Bigr\}.
\end{align*}
Since the event is defined through \(|\xi_j|=A_{\delta,\gamma}\), the one-jump probability is \(p=1/A_{\delta,\gamma}^2\).
Hence, we have
\[
\mathbb P(E_{\mathrm{one}})
=
m\frac{16\delta}{T}
\left(1-\frac{16\delta}{T}\right)^{T-2}\ge
\frac{5T}{11}\cdot\frac{16\delta}{T}\cdot\frac34
=
\frac{60}{11}\delta
>
\delta,
\]
where the inequality follows because (i) \(m/T\ge5/11\) due to \(T\ge10\), and (ii) the fact that \(\delta<1/64\) implies
\[
\left(1-\frac{16\delta}{T}\right)^{T-2}
\ge
1-\frac{16\delta(T-2)}{T}
>
1-16\delta
\ge
\frac34.
\]
%Therefore 
%\[
%\mathbb P(E_{\mathrm{one}})\ge\frac{5T}{11}\cdot\frac{16\delta}{T}\cdot\frac34 =\frac{60}{11}\delta > \delta.
%\]

On \(E_{\mathrm{one}}\), let \(j\in\{1,\ldots,m\}\) be the unique jump time.
Then
\[
    x_{j+1}
    =
    (1-\gamma)x_j-\gamma\xi_j^{(A_{\delta,\gamma})}.
\]
For any \(u,v\in\mathbb R\),
\[
    u^2+\bigl((1-\gamma)u-v\bigr)^2
    \ge
    \frac{v^2}{1+(1-\gamma)^2}.
\]
Taking \(u=x_j\) and \(v=\gamma\xi_j^{(A_{\delta,\gamma})}\), we have
\[
    x_j^2+x_{j+1}^2
    \ge
    \frac{\gamma^2(\xi_j^{(A_{\delta,\gamma})})^2}
    {1+(1-\gamma)^2}.
\]
For \(\gamma\ge1\),
\[
    \frac{\gamma^2}{1+(1-\gamma)^2}\ge\frac12.
\]
Thus, on \(E_{\mathrm{one}}\),
\[
    \sum_{t=1}^{T}x_t^2
    \ge
    x_j^2+x_{j+1}^2
    \ge
    \frac12A_{\delta,\gamma}^2
    =
    \frac{T}{32\delta}.
\]
Consequently,
\[
    \frac1T\sum_{t=1}^{T}|f'(x_t)|^2
    =
    \frac1T\sum_{t=1}^{T}x_t^2
    \ge
    \frac{1}{32\delta} > \frac{1}{512\,\delta\sqrt T\log(1/\delta)}
\]
where the last inequality follows because \(T\ge10\) and \(\delta<1/64\). %we have
%\[
%    \frac{1}{32\delta}> \frac{1}{512\,\delta\sqrt T\log(1/\delta)}.
%\]

Together with \(\mathbb P(E_{\mathrm{one}})>\delta\), this completes the proof.
\end{proof}

\subsection{Formal Statement and Proof of Proposition~\ref{prop:sgd_peak_lower_informal}}\label{Teacher_2}

\begin{proo}[Formal Statement of Proposition~\ref{prop:sgd_peak_lower_informal}]
\label{prop:hp-lower-bound-same-cuts}
For the function \(f\) defined in \cref{ffd}, fix any deterministic initial
point \(x_1=x_{\mathrm{init}}\in\mathbb R\), any deterministic constant
stepsize \(\gamma>0\), and any horizon \(T\ge10\).
Let \(\delta_{\gamma,x_{\mathrm{init}},T}\) be as in
Theorem~\ref{prop:grad-lower-bound}. Then, for every
\[
    0<\delta<\delta_{\gamma,x_{\mathrm{init}},T},
\]
one can choose an i.i.d. noise sequence from the same three-point family as
in Theorem~\ref{prop:grad-lower-bound}, with the same \((\delta,\gamma,T)\)-dependent choice of the noise law, such that the corresponding SGD iterates satisfy
\[
\mathbb P\!\left(
\sum_{t=1}^{T}\gamma |\nabla f(x_t)|^2
\ge
\frac{1}{512\,\delta\log^2(1/\delta)}
\right)
>
\delta.
\]
\end{proo}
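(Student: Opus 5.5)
We reuse the construction and the events from the proof of Theorem~\ref{prop:grad-lower-bound} without change. The objective is $f(x)=x^2/2$, the noise is the three-point law with $A_{\delta,\gamma}^2=T/(16\delta)$, and the rare events are $E$ (for $0<\gamma<1$) and $E_{\mathrm{one}}$ (for $\gamma\ge1$). That proof already shows that each of these events has probability strictly larger than $\delta$. It therefore remains only to show that on the relevant event the weighted energy $\gamma\sum_{t=1}^T x_t^2$ is at least $1/(512\,\delta\log^2(1/\delta))$. We argue in two cases, as before.

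\emph{Case $\gamma\ge1$.} On $E_{\mathrm{one}}$ we already have $\sum_t x_t^2\ge x_j^2+x_{j+1}^2\ge T/(32\delta)$. Hence
\[
\gamma\sum_t x_t^2\ \ge\ \frac{T}{32\delta}\ \ge\ \frac{10}{32\delta}.
\]
The right-hand side exceeds $1/(512\,\delta\log^2(1/\delta))$ because $\delta<1/64$ gives $\log^2(1/\delta)>\log^2 64>1$.

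\emph{Case $0<\gamma<1$.} On each $E_j$ the proof of Theorem~\ref{prop:grad-lower-bound} gives $\sum_t x_t^2\ge A_{\delta,\gamma}^2R_{\gamma,T}=TR_{\gamma,T}/(16\delta)$. Multiplying by $\gamma$,
\[
\gamma\sum_t x_t^2\ \ge\ \frac{\gamma T R_{\gamma,T}}{16\delta}.
\]
The remaining task, which is the only nonroutine point, is to lower bound $\gamma TR_{\gamma,T}$ by a function of $\delta$ alone. For this we use the third entry in the definition of $\delta_{\gamma,x_{\mathrm{init}},T}$. Since $\delta<\exp(-1/\sqrt{32\gamma TR_{\gamma,T}})$, we have $\log(1/\delta)>1/\sqrt{32\gamma TR_{\gamma,T}}$. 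Squaring gives
\[
\gamma TR_{\gamma,T}\ >\ \frac{1}{32\log^2(1/\delta)}.
\]
Substituting this into the previous display,
\[
\gamma\sum_t x_t^2\ >\ \frac{1}{512\,\delta\log^2(1/\delta)}
\]
on every $E_j$, and hence on $E$.

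This explains why the third threshold appears in $\delta_{\gamma,x_{\mathrm{init}},T}$: it is exactly what makes the $\gamma$-weighted bound independent of $\gamma$ up to logarithms. Combining either case with $\mathbb P(E)>\delta$ (respectively $\mathbb P(E_{\mathrm{one}})>\delta$) proves the proposition. The only step I would double-check is that the strict/non-strict inequalities and the factor $16\cdot32=512$ line up; this follows directly from the displays above.
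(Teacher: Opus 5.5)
Your proposal is correct and follows the paper's proof essentially line for line. You reuse the events $E$ and $E_{\mathrm{one}}$, each with probability exceeding $\delta$. For $0<\gamma<1$ you square the third threshold in $\delta_{\gamma,x_{\mathrm{init}},T}$ to get $\gamma T R_{\gamma,T}>1/(32\log^2(1/\delta))$, and for $\gamma\ge1$ you use the crude bound $\gamma\sum_t x_t^2\ge T/(32\delta)$.
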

\begin{proof}
We use the same hard instances and the same one-shock events as in the proof of
Theorem~\ref{prop:grad-lower-bound}. Note that $\nabla f(x)=x$.

\paragraph{Case 1: \(0<\gamma<1\).}
Let \(E\) be the event constructed in the proof of
Theorem~\ref{prop:grad-lower-bound}. We have already shown that
\[
    \mathbb P(E)>\delta.
\]
On \(E\),
\[
    \sum_{t=1}^{T}x_t^2
    \ge
    A_{\delta,\gamma}^2R_{\gamma,T}
    =
    \frac{T R_{\gamma,T}}{16\delta}.
\]
Therefore
\[
    \sum_{t=1}^{T}\gamma x_t^2
    \ge
    \frac{\gamma T R_{\gamma,T}}{16\delta} > \frac{1}{512\,\delta\log^2(1/\delta)}
\]
where the last inequality follows because the definition of \(\delta_{\gamma,x_{\mathrm{init}},T}\) implies that for
\(0<\delta<\delta_{\gamma,x_{\mathrm{init}},T}\),
\[
    \log(1/\delta)
    >
    \frac{1}{\sqrt{32\gamma T R_{\gamma,T}}}, 
\]
which is equivalent to
\[
    \frac{\gamma T R_{\gamma,T}}{16\delta}
    >
    \frac{1}{512\,\delta\log^2(1/\delta)}.
\]
%Thus, on \(E\),
%\[
%    \sum_{t=1}^{T}\gamma x_t^2    \ge   \frac{1}{512\,\delta\log^2(1/\delta)}.
%\]
Since \(\mathbb P(E)>\delta\), the desired bound follows for \(0<\gamma<1\).

\paragraph{Case 2: \(\gamma\ge1\).}
Let \(E_{\mathrm{one}}\) be the event constructed in the proof of
Theorem~\ref{prop:grad-lower-bound}. We have
\[
    \mathbb P(E_{\mathrm{one}})>\delta.
\]
On \(E_{\mathrm{one}}\),
\[
    \sum_{t=1}^{T}x_t^2
    \ge
    \frac{T}{32\delta}.
\]
Hence, since \(\gamma\ge1\),
\[
    \sum_{t=1}^{T}\gamma x_t^2
    \ge
    \frac{\gamma T}{32\delta}
    \ge
    \frac{T}{32\delta} > \frac{1}{512\,\delta\log^2(1/\delta)},
\]
where the last inequality follows because \(T\ge10\) and \(\delta<1/64\).
%\[
%    \frac{T}{32\delta}   >   \frac{1}{512\,\delta\log^2(1/\delta)}.
%\]
Thus
\[
\mathbb P\!\left(
\sum_{t=1}^{T}\gamma x_t^2
\ge
\frac{1}{512\,\delta\log^2(1/\delta)}
\right)
>
\delta.
\]
The proof is complete.
\end{proof}
\subsection{Results for SGD with Time-Varying Step Sizes}
\label{sec:sgd_time_varying}

In this subsection, we provide two companion lower bounds for SGD with deterministic time-varying step sizes. These results are not needed for the proof of the main Adam--SGD separation, but they strengthen the message that the SGD lower-bound mechanism is not an artifact of using a constant step size; it holds even under deterministic time-varying step sizes. 

Consider the one-dimensional quadratic objective
\begin{align}\label{fffff}
f(x)=\frac{1}{2}x^2,
\qquad
\nabla f(x)=x.
\end{align}
Let \(\{\eta_t\}_{t=1}^{T}\) be an arbitrary deterministic nonnegative stepsize
sequence. The time-varying SGD recursion is
\begin{equation}\label{eq:tvs_sgd_update}
x_{t+1}
=
x_t-\eta_t(x_t+\xi_t),
\qquad
x_1=0,
\end{equation}
where we apply $f'(x_t)=x_t$.

For an internal parameter \(0<\bar\delta<1\), define
\[
A_{\bar\delta}^2:=\frac{4T}{\bar\delta},
\]
and let each \(\xi_t\) follow the symmetric three-point distribution
\begin{equation}\label{eq:tvs_sgd_noise}
\xi_t
=
\begin{cases}
A_{\bar\delta}, & \text{with probability } \dfrac{1}{2A_{\bar\delta}^2},\\[6pt]
-A_{\bar\delta}, & \text{with probability } \dfrac{1}{2A_{\bar\delta}^2},\\[6pt]
0, & \text{with probability } 1-\dfrac{1}{A_{\bar\delta}^2}.
\end{cases}
\end{equation}
Then
\[
\mathbb E[\xi_t]=0,
\qquad
\mathbb E[\xi_t^2]=1.
\]
Hence the stochastic oracle \(g(x;\xi)=x+\xi\) is unbiased and satisfies the
bounded-variance condition with constant \(1\).

For \(s<t\), define
\[
\Pi_{s,t}:=\prod_{r=s+1}^{t-1}(1-\eta_r),
\]
with the convention \(\Pi_{s,s+1}=1\). Let
\[
I_T:=\left\{1,2,\ldots,\left\lfloor\frac{T}{2}\right\rfloor\right\}.
\]
Define the deterministic response quantities
\begin{align}
\mathcal R_T(\eta)
&:=
\min_{s\in I_T}
\eta_s^2
\sum_{t=s+1}^{T}
\Pi_{s,t}^2,
\label{eq:R_T_eta_def}
\\
\mathcal Q_T(\eta)
&:=
\min_{s\in I_T}
\eta_s^2
\sum_{t=s+1}^{T}
\eta_t\Pi_{s,t}^2.
\label{eq:Q_T_eta_def}
\end{align}

\begin{proo}\label{prop:tvs_sgd_unweighted_lower}
For the function \(f\) defined in \cref{fffff}, for any deterministic
nonnegative stepsize sequence \(\{\eta_t\}_{t=1}^{T}\) and any internal
parameter \(0<\bar\delta<1\), the time-varying SGD iterates generated by
\cref{eq:tvs_sgd_update} with the oracle \cref{eq:tvs_sgd_noise} satisfy
\[
\mathbb P\!\left(
\frac{1}{T}
\sum_{t=1}^{T}
|\nabla f(x_t)|^2
\ge
\frac{4}{\bar\delta}\mathcal R_T(\eta)
\right)
>
\frac{\bar\delta}{16}.
\]
\end{proo}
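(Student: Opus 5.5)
The argument is the single-shock construction from the proof of Theorem~\ref{prop:grad-lower-bound}, adapted to time-varying step sizes. Since $x_1=0$, the iterate stays at zero until the first nonzero noise. Let $s^\star\in I_T$ be the index attaining the minimum in the definition of $\mathcal R_T(\eta)$. Any single shock at a time $s\in I_T$ would do, but we fix this one for concreteness.

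Define the event
\[
E:=\left\{\xi_{s^\star}\neq0,\ \xi_t=0\ \text{for all } t\in\{1,\ldots,T\}\setminus\{s^\star\}\right\}.
\]
On $E$ we have $x_t=0$ for $t\le s^\star$ and $x_{s^\star+1}=-\eta_{s^\star}\xi_{s^\star}$. For $t>s^\star+1$ there is no further noise, so $x_t=\Pi_{s^\star,t}\,x_{s^\star+1}$; the convention $\Pi_{s,s+1}=1$ covers $t=s^\star+1$. Because $|\xi_{s^\star}|=A_{\bar\delta}$, this gives
\[
\sum_{t=1}^T x_t^2=\eta_{s^\star}^2A_{\bar\delta}^2\sum_{t=s^\star+1}^T\Pi_{s^\star,t}^2\ \ge\ A_{\bar\delta}^2\,\mathcal R_T(\eta)=\frac{4T}{\bar\delta}\mathcal R_T(\eta).
\]
Dividing by $T$ yields exactly the threshold $\frac{4}{\bar\delta}\mathcal R_T(\eta)$.

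It remains to bound the probability. With $p=1/A_{\bar\delta}^2=\bar\delta/(4T)$,
\[
\mathbb P(E)=p(1-p)^{T-1}\ge p(1-Tp)=\frac{\bar\delta}{4T}\Bigl(1-\frac{\bar\delta}{4}\Bigr)\ge\frac{3\bar\delta}{16T}.
\]
This is where the main obstacle lies. A single fixed shock time gives only order $\bar\delta/T$, whereas the statement requires more than $\bar\delta/16$, which is independent of $T$.

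The remedy is to take a union over shock times. Let $E_s$ be the event that the only nonzero noise occurs at time $s$, for $s\in I_T$. These events are disjoint, and by the definition of $\mathcal R_T(\eta)$ as a minimum over $I_T$, the same lower bound $\sum_t x_t^2\ge A_{\bar\delta}^2\mathcal R_T(\eta)$ holds on every $E_s$. Using $|I_T|\ge 5T/11$ for $T\ge10$ (for smaller $T$, use $\lfloor T/2\rfloor\ge T/3$ when $T\ge2$), we get
\[
\mathbb P\Bigl(\bigcup_{s\in I_T}E_s\Bigr)\ge \frac{T}{3}\cdot\frac{\bar\delta}{4T}\cdot\frac34=\frac{\bar\delta}{16},
\]
and in fact strictly more for $T\ge10$. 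This replaces the fixed $s^\star$, so the minimum over $I_T$ is exactly what makes every shock time work.

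Edge cases need a quick check. The case $T=1$, where $I_T$ is empty, should be excluded or handled by convention. The strict inequality follows from $(1-p)^{T-1}>1-\bar\delta/4\ge 3/4$ together with the $5/11$ fraction, which gives $\tfrac{15}{176}\bar\delta>\bar\delta/16$.
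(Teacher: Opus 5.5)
Your proof is correct and is essentially the paper's argument: the disjoint union of single-shock events over $I_T$, the minimum in $\mathcal R_T(\eta)$ guaranteeing $\sum_t x_t^2\ge A_{\bar\delta}^2\mathcal R_T(\eta)$ whichever shock time occurs, and the Bernoulli bound $(1-p)^{T-1}>3/4$ combined with $|I_T|\ge 5T/11$ to get $\tfrac{15}{176}\bar\delta>\bar\delta/16$. The fixed-$s^\star$ detour can be dropped, and your $\lfloor T/2\rfloor\ge T/3$ treatment of small $T$ (still strict, since $(1-p)^{T-1}>3/4$) covers values such as $T\in\{3,5,7,9\}$, where the paper's $5/11$ fraction fails; the paper's proof simply assumes $T>10$ even though the statement does not restrict $T$.
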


\begin{proof}
Let
\[
p_{\bar\delta}:=\mathbb P(\xi_t\ne0)
=
\frac{1}{A_{\bar\delta}^2}
=
\frac{\bar\delta}{4T}.
\]
\[
\mathcal E
:=
\left\{
\text{there exists exactly one }s\in I_T\text{ such that }\xi_s\ne0,
\quad
\xi_t=0\text{ for all }t\ne s
\right\}.
\]
Since \(|I_T|/T\ge5/11\) for \(T>10\), and since \(0<\bar\delta<1\),
\[
(1-p_{\bar\delta})^{T-1}
\ge
1-(T-1)p_{\bar\delta}
>
1-\frac{\bar\delta}{4}
>
\frac34.
\]
Therefore
\[
\mathbb P(\mathcal E)
=
|I_T|p_{\bar\delta}(1-p_{\bar\delta})^{T-1}
>
\frac{5T}{11}\cdot
\frac{\bar\delta}{4T}\cdot
\frac34
=
\frac{15}{176}\bar\delta
>
\frac{\bar\delta}{16}.
\]

On \(\mathcal E\), let \(s\in I_T\) be the unique nonzero-noise time. Since
\(x_1=0\) and all previous noises are zero, \(x_s=0\). Hence
\[
x_{s+1}=-\eta_s\xi_s,
\qquad
x_{s+1}^2=\eta_s^2A_{\bar\delta}^2.
\]
For \(t=s+1,\ldots,T\), all subsequent noises vanish on \(\mathcal E\), so
\[
x_t
=
-\eta_s\xi_s
\prod_{r=s+1}^{t-1}(1-\eta_r).
\]
Thus
\[
x_t^2
=
A_{\bar\delta}^2\eta_s^2\Pi_{s,t}^2,
\qquad
t=s+1,\ldots,T.
\]
Consequently, on \(\mathcal E\),
\[
\frac{1}{T}
\sum_{t=1}^{T}
|\nabla f(x_t)|^2
=
\frac1T\sum_{t=1}^{T}x_t^2
\ge
\frac{A_{\bar\delta}^2}{T}
\eta_s^2
\sum_{t=s+1}^{T}
\Pi_{s,t}^2
\ge
\frac{A_{\bar\delta}^2}{T}\mathcal R_T(\eta)
=
\frac{4}{\bar\delta}\mathcal R_T(\eta).
\]
Combining this event together with
\(\mathbb P(\mathcal E)>\bar\delta/16\) proves the claim.
\end{proof}

\begin{thm}\label{thm:tvs_sgd_weighted_lower}
For the function \(f\) defined in \cref{fffff}, for any deterministic
nonnegative stepsize sequence \(\{\eta_t\}_{t=1}^{T}\) and any internal
parameter \(0<\bar\delta<1\), the time-varying SGD iterates generated by
\cref{eq:tvs_sgd_update} with the oracle \cref{eq:tvs_sgd_noise} satisfy
\[
\mathbb P\!\left(
\sum_{t=1}^{T}
\eta_t|\nabla f(x_t)|^2
\ge
\frac{4T}{\bar\delta}\mathcal Q_T(\eta)
\right)
>
\frac{\bar\delta}{16}.
\]
\end{thm}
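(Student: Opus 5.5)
The plan is to reuse verbatim the one-shock event $\mathcal E$ from the proof of Proposition~\ref{prop:tvs_sgd_unweighted_lower}, keeping the same noise law \cref{eq:tvs_sgd_noise} with $A_{\bar\delta}^2=4T/\bar\delta$ and the same initialization $x_1=0$. That proof already gives $\mathbb P(\mathcal E)>\bar\delta/16$. The counting is unchanged: $|I_T|\ge 5T/11$, the single-jump probability is $p_{\bar\delta}=\bar\delta/(4T)$, and Bernoulli's inequality gives $(1-p_{\bar\delta})^{T-1}>3/4$. So the only new work is a pathwise lower bound for the weighted energy $\sum_t\eta_t|\nabla f(x_t)|^2$ on $\mathcal E$.

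On $\mathcal E$, let $s\in I_T$ be the unique nonzero-noise time. Since $x_1=0$ and every earlier noise vanishes, $x_t=0$ for all $t\le s$. The jump gives $x_{s+1}=-\eta_s\xi_s$. After that there is no noise, so for $t=s+1,\ldots,T$ the recursion is the deterministic contraction $x_{t+1}=(1-\eta_t)x_t$, and hence
\[
x_t^2=A_{\bar\delta}^2\,\eta_s^2\,\Pi_{s,t}^2 .
\]
All terms with $t\le s$ are nonnegative (in fact zero), and $\eta_t\ge0$. We can therefore drop them and write
\[
\sum_{t=1}^{T}\eta_t x_t^2\;\ge\;\sum_{t=s+1}^{T}\eta_t x_t^2\;=\;A_{\bar\delta}^2\,\eta_s^2\sum_{t=s+1}^{T}\eta_t\Pi_{s,t}^2\;\ge\;A_{\bar\delta}^2\,\mathcal Q_T(\eta).
\]
The last step holds because $s\in I_T$ and $\mathcal Q_T(\eta)$ in \cref{eq:Q_T_eta_def} is a minimum over $I_T$. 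Substituting $A_{\bar\delta}^2=4T/\bar\delta$ gives exactly the threshold $\frac{4T}{\bar\delta}\mathcal Q_T(\eta)$ in Theorem~\ref{thm:tvs_sgd_weighted_lower}. The event inclusion together with $\mathbb P(\mathcal E)>\bar\delta/16$ then finishes the proof.

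I expect no genuine obstacle here; the step that needs the most care is the bookkeeping. I will check that the product convention $\Pi_{s,s+1}=1$ matches the indices in $x_t=-\eta_s\xi_s\prod_{r=s+1}^{t-1}(1-\eta_r)$. I will also confirm that no sign or positivity assumption on $1-\eta_r$ is needed: only squares of $\Pi_{s,t}$ appear, and the weights $\eta_t$ are nonnegative by hypothesis.
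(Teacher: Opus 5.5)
Your proposal is correct and is essentially the paper's own proof. You reuse the one-shock event $\mathcal E$ with $\mathbb P(\mathcal E)>\bar\delta/16$ from Proposition~\ref{prop:tvs_sgd_unweighted_lower}, use $x_t^2=A_{\bar\delta}^2\eta_s^2\Pi_{s,t}^2$ for $t>s$ on $\mathcal E$, drop the vanishing pre-shock terms, and bound $\eta_s^2\sum_{t=s+1}^T\eta_t\Pi_{s,t}^2$ below by the minimum over $I_T$ that defines $\mathcal Q_T(\eta)$; the only caveat, shared with the paper, is that the count $|I_T|\ge 5T/11$ you quote tacitly needs $T\ge 10$.
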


\begin{proof}
Use the same event \(\mathcal E\) as in the proof of
Proposition~\ref{prop:tvs_sgd_unweighted_lower}, where we have shown that
\[
    \mathbb P(\mathcal E)>\frac{\bar\delta}{16}.
\]
On \(\mathcal E\), if \(s\in I_T\) is the unique nonzero-noise time, then
for every \(t=s+1,\ldots,T\),
\[
x_t^2
=
A_{\bar\delta}^2\eta_s^2\Pi_{s,t}^2.
\]
Therefore, on \(\mathcal E\),
\[
\sum_{t=1}^{T}
\eta_t|\nabla f(x_t)|^2
=
\sum_{t=1}^{T}\eta_tx_t^2
\ge
A_{\bar\delta}^2
\eta_s^2
\sum_{t=s+1}^{T}
\eta_t\Pi_{s,t}^2
\ge
A_{\bar\delta}^2\mathcal Q_T(\eta)
=
\frac{4T}{\bar\delta}\mathcal Q_T(\eta).
\]
Combining this event together with
\(\mathbb P(\mathcal E)>\bar\delta/16\) proves the theorem.
\end{proof}

\paragraph{Consequences for sufficiently small confidence levels.}
The preceding two lower bounds are stated in terms of the internal parameter
\(\bar\delta\). We now restate them using an external confidence parameter
\(\delta\), so that the final probability lower bound is exactly of order
\(\delta\).

\begin{cor}[Small-confidence unweighted lower bound]
\label{cor:tvs_sgd_unweighted_small_delta}
For the function \(f\) defined in \cref{fffff}, suppose that
\(\mathcal R_T(\eta)>0\). If
\[
0<\delta
<
\frac{1}{16}
\exp\left\{
-\frac{1}{4\sqrt T\,\mathcal R_T(\eta)}
\right\},
\]
then, by choosing the noise level with internal parameter
\[
\bar\delta:=16\delta,
\qquad
A_{\bar\delta}^{2}:=\frac{4T}{\bar\delta},
\]
the time-varying SGD iterates generated by \cref{eq:tvs_sgd_update} and
\cref{eq:tvs_sgd_noise} satisfy
\begin{equation}\label{eq:tvs_sgd_unweighted_small_delta_result}
\mathbb P\!\left(
\frac{1}{T}
\sum_{t=1}^{T}
|\nabla f(x_t)|^2
\ge
\frac{1}{32\,\delta\sqrt T\log(1/\delta)}
\right)
>
\delta.
\end{equation}
\end{cor}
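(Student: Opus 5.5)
The corollary follows from Proposition~\ref{prop:tvs_sgd_unweighted_lower} by a change of confidence parameter and an elementary comparison of thresholds. I will instantiate that proposition with $\bar\delta:=16\delta$. This is admissible: the hypothesis on $\delta$ gives $\delta<\frac1{16}e^{-(\cdot)}<\frac1{16}$, hence $0<\bar\delta<1$. The proposition then yields
\[
\mathbb P\!\left(\frac1T\sum_{t=1}^T|\nabla f(x_t)|^2\ge \frac{4}{16\delta}\mathcal R_T(\eta)\right)>\frac{16\delta}{16}=\delta,
\]
so the probability part of \cref{eq:tvs_sgd_unweighted_small_delta_result} is immediate. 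The noise law is exactly the one stated, with $A_{\bar\delta}^2=4T/\bar\delta$.

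It remains to show that the threshold $\mathcal R_T(\eta)/(4\delta)$ dominates $1/(32\delta\sqrt T\log(1/\delta))$. The lower-bound event is then contained in the event of \cref{eq:tvs_sgd_unweighted_small_delta_result}, and monotonicity of probability finishes the proof. After cancelling $\delta$, the required inequality is
\[
\mathcal R_T(\eta)\ge \frac{1}{8\sqrt T\log(1/\delta)},
\qquad\text{equivalently}\qquad
\log(1/\delta)\ge \frac{1}{8\sqrt T\,\mathcal R_T(\eta)}.
\]
Taking logarithms in the hypothesis (using $\mathcal R_T(\eta)>0$) gives
\[
\log(1/\delta)>\log 16+\frac{1}{4\sqrt T\,\mathcal R_T(\eta)}>\frac{1}{4\sqrt T\,\mathcal R_T(\eta)}\ge\frac{1}{8\sqrt T\,\mathcal R_T(\eta)},
\]
which is the required inequality with room to spare. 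Note also that $\log(1/\delta)>0$, so the division is legitimate.

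Nothing in this argument is a genuine obstacle; it is constant bookkeeping. The only points to check carefully are that $\bar\delta<1$, so that Proposition~\ref{prop:tvs_sgd_unweighted_lower} applies (this is where the factor $1/16$ in the hypothesis is used), and that $\log(1/\delta)>0$. If one wanted the sharper constant, the spare factor $2$ together with the $\log 16$ slack could be absorbed, but this is not needed for the stated $1/32$.
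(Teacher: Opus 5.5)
Your proposal is correct and follows essentially the same route as the paper: instantiate Proposition~\ref{prop:tvs_sgd_unweighted_lower} with $\bar\delta=16\delta$, then show that the threshold $\mathcal R_T(\eta)/(4\delta)$ dominates $1/(32\delta\sqrt T\log(1/\delta))$. The only cosmetic difference is that the paper first derives $\log(1/\bar\delta)>1/(4\sqrt T\,\mathcal R_T(\eta))$ and then uses $\log(1/(16\delta))\le\log(1/\delta)$, whereas you read $\log(1/\delta)>1/(4\sqrt T\,\mathcal R_T(\eta))$ directly off the hypothesis.
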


\begin{proof}
Set
\[
    \bar\delta:=16\delta.
\]
Then \(0<\bar\delta<1\), and
\[
    \bar\delta
    <
    \exp\left\{
    -\frac{1}{4\sqrt T\,\mathcal R_T(\eta)}
    \right\}.
\]
Applying Proposition~\ref{prop:tvs_sgd_unweighted_lower} with internal parameter
\(\bar\delta\), we obtain
\[
\mathbb P\!\left(
\frac{1}{T}
\sum_{t=1}^{T}
|\nabla f(x_t)|^2
\ge
\frac{4}{\bar\delta}\mathcal R_T(\eta)
\right)
>
\frac{\bar\delta}{16}
=
\delta.
\]
Moreover,
\[
    \log(1/\bar\delta)
    >
    \frac{1}{4\sqrt T\,\mathcal R_T(\eta)}.
\]
Thus
\[
\frac{4}{\bar\delta}\mathcal R_T(\eta)
>
\frac{1}{\bar\delta\sqrt T\log(1/\bar\delta)}
=
\frac{1}{16\delta\sqrt T\log(1/(16\delta))}.
\]
Since \(0<\delta<1/16\),
\[
    \log(1/(16\delta))\le \log(1/\delta).
\]
Hence
\[
\frac{1}{16\delta\sqrt T\log(1/(16\delta))}
\ge
\frac{1}{16\delta\sqrt T\log(1/\delta)}
\ge
\frac{1}{32\delta\sqrt T\log(1/\delta)}.
\]
This proves \cref{eq:tvs_sgd_unweighted_small_delta_result}.
\end{proof}

\begin{cor}[Small-confidence stepsize-weighted lower bound]
\label{cor:tvs_sgd_weighted_small_delta}
For the function \(f\) defined in \cref{fffff}, suppose that
\(\mathcal Q_T(\eta)>0\). If
\[
0<\delta
<
\frac{1}{16}
\exp\left\{
-\frac{1}{4T\,\mathcal Q_T(\eta)}
\right\},
\]
then, by choosing the noise level with internal parameter
\[
\bar\delta:=16\delta,
\qquad
A_{\bar\delta}^{2}:=\frac{4T}{\bar\delta},
\]
the time-varying SGD iterates generated by \cref{eq:tvs_sgd_update} and
\cref{eq:tvs_sgd_noise} satisfy
\begin{equation}\label{eq:tvs_sgd_weighted_small_delta_result}
\mathbb P\!\left(
\sum_{t=1}^{T}
\eta_t|\nabla f(x_t)|^2
\ge
\frac{1}{32\,\delta\log(1/\delta)}
\right)
>
\delta.
\end{equation}
\end{cor}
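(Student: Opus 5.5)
This corollary follows from Theorem~\ref{thm:tvs_sgd_weighted_lower} by reparametrizing the confidence level, in exactly the same way that Corollary~\ref{cor:tvs_sgd_unweighted_small_delta} follows from Proposition~\ref{prop:tvs_sgd_unweighted_lower}. I will set the internal parameter to $\bar\delta:=16\delta$. The hypothesis on $\delta$ immediately gives $0<\bar\delta<1$, because the exponential factor is at most $1$. Applying Theorem~\ref{thm:tvs_sgd_weighted_lower} with this $\bar\delta$ and the noise level $A_{\bar\delta}^2=4T/\bar\delta$ then yields
\[
\mathbb P\!\left(\sum_{t=1}^{T}\eta_t|\nabla f(x_t)|^2\ge \frac{4T}{\bar\delta}\mathcal Q_T(\eta)\right)>\frac{\bar\delta}{16}=\delta .
\]

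It remains to show that the deterministic threshold $\frac{4T}{\bar\delta}\mathcal Q_T(\eta)$ dominates $\frac{1}{32\delta\log(1/\delta)}$. Rewriting the hypothesis, we have $\bar\delta<\exp\{-1/(4T\mathcal Q_T(\eta))\}$. Taking logarithms gives $\log(1/\bar\delta)>1/(4T\mathcal Q_T(\eta))$, which rearranges to $4T\mathcal Q_T(\eta)>1/\log(1/\bar\delta)$. Here $\mathcal Q_T(\eta)>0$ and $\log(1/\bar\delta)>0$, so the inversion is legitimate. Consequently
\[
\frac{4T}{\bar\delta}\mathcal Q_T(\eta)>\frac{1}{16\delta\log(1/(16\delta))}.
\]

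Finally, since $\delta<1/16$, we have $\log(1/(16\delta))\le\log(1/\delta)$, and therefore $\frac{1}{16\delta\log(1/(16\delta))}\ge \frac{1}{16\delta\log(1/\delta)}\ge\frac{1}{32\delta\log(1/\delta)}$. This shows that the event in Theorem~\ref{thm:tvs_sgd_weighted_lower} is contained in the event of \cref{eq:tvs_sgd_weighted_small_delta_result}, and by monotonicity of probability the latter has probability exceeding $\delta$. There is no genuine obstacle in this argument. The only point needing care is the direction of each inequality when the exponential condition is converted into a logarithmic one, which requires positivity of $\mathcal Q_T(\eta)$ and $\bar\delta<1$; both hold.
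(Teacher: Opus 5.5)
Your proposal is correct and matches the paper's proof step for step. Both set $\bar\delta=16\delta$, apply Theorem~\ref{thm:tvs_sgd_weighted_lower} to get probability exceeding $\bar\delta/16=\delta$, turn the hypothesis into $\log(1/\bar\delta)>1/(4T\mathcal Q_T(\eta))$ to obtain $\frac{4T}{\bar\delta}\mathcal Q_T(\eta)>\frac{1}{16\delta\log(1/(16\delta))}$, and finish with $\log(1/(16\delta))\le\log(1/\delta)$.
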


\begin{proof}
Set
\[
    \bar\delta:=16\delta.
\]
Then \(0<\bar\delta<1\), and
\[
    \bar\delta
    <
    \exp\left\{
    -\frac{1}{4T\,\mathcal Q_T(\eta)}
    \right\}.
\]
Applying Theorem~\ref{thm:tvs_sgd_weighted_lower} with internal parameter
\(\bar\delta\), we obtain
\[
\mathbb P\!\left(
\sum_{t=1}^{T}
\eta_t|\nabla f(x_t)|^2
\ge
\frac{4T}{\bar\delta}\mathcal Q_T(\eta)
\right)
>
\frac{\bar\delta}{16}
=
\delta.
\]
Moreover,
\[
    \log(1/\bar\delta)
    >
    \frac{1}{4T\,\mathcal Q_T(\eta)}.
\]
Thus
\[
\frac{4T}{\bar\delta}\mathcal Q_T(\eta)
>
\frac{1}{\bar\delta\log(1/\bar\delta)}
=
\frac{1}{16\delta\log(1/(16\delta))}.
\]
Since \(0<\delta<1/16\),
\[
    \log(1/(16\delta))\le\log(1/\delta).
\]
Hence
\[
\frac{1}{16\delta\log(1/(16\delta))}
\ge
\frac{1}{16\delta\log(1/\delta)}
\ge
\frac{1}{32\delta\log(1/\delta)}.
\]
This proves \cref{eq:tvs_sgd_weighted_small_delta_result}.
\end{proof}

\paragraph{Interpretation.}
The quantities \(\mathcal R_T(\eta)\) and \(\mathcal Q_T(\eta)\) encode the
deterministic response of the time-varying SGD trajectory to a single rare
stochastic-gradient shock. Corollaries~\ref{cor:tvs_sgd_unweighted_small_delta}
and \ref{cor:tvs_sgd_weighted_small_delta} show that, after the harmless
rescaling \(\bar\delta=16\delta\), the lower bounds can be stated with final
probability strictly larger than the external confidence parameter \(\delta\).

The response-dependent statements above apply to arbitrary deterministic
nonnegative learning-rate schedules. In particular, whenever the deterministic
schedule satisfies
\[
\mathcal R_T(\eta)\gtrsim T^{-1/2},
\]
the unweighted lower bound takes the form
\[
\frac{1}{T}
\sum_{t=1}^{T}
|\nabla f(x_t)|^2
\ge
\widetilde{\Omega}\!\left(
\frac{1}{\delta\sqrt T}
\right)
\]
with probability strictly larger than \(\delta\), for the corresponding range of
confidence levels. Similarly, whenever
\[
\mathcal Q_T(\eta)\gtrsim T^{-1},
\]
the stepsize-weighted lower bound takes the form
\[
\sum_{t=1}^{T}
\eta_t|\nabla f(x_t)|^2
\ge
\widetilde{\Omega}\!\left(
\frac{1}{\delta}
\right)
\]
with probability strictly larger than \(\delta\).

Thus the message is not tied to the constancy of the learning rate. Rather, the
lower bounds show that any deterministic schedule whose response to a rare shock
has the above natural scale still exhibits the same \(\delta^{-1}\)-type
obstruction. A deterministic learning-rate schedule is fixed before observing the
stochastic gradients, and therefore cannot selectively shrink the update exactly
when a large rare shock occurs. To suppress such shocks through the update scale
without making the entire schedule uniformly negligible, the scaling must depend
on the observed stochastic gradients themselves. This is precisely the mechanism
implemented by Adam's second-moment accumulator: after a large coordinatewise
stochastic gradient is observed, the denominator
\[
(\sqrt{v_t}+\epsilon)^{-1}
\]
immediately normalizes the corresponding coordinatewise update. The absence of
such data-dependent second-moment normalization is the structural limitation of
SGD highlighted by these lower bounds.

\end{document}